\def\eqref#1{equation~\ref{#1}}
\def\1{\bm{1}}
\def\mX{{\bm{X}}}
\DeclareMathAlphabet{\mathsfit}{\encodingdefault}{\sfdefault}{m}{sl}
\SetMathAlphabet{\mathsfit}{bold}{\encodingdefault}{\sfdefault}{bx}{n}
\newcommand{\E}{\mathbb{E}}
\newcommand{\Var}{\mathrm{Var}}
\newcommand{\xeta}{x}
\newcommand{\by}{{\boldsymbol y}}
\newcommand{\bx}{{\boldsymbol x}}
\newcommand{\bw}{{\boldsymbol w}}
\newcommand{\MX}{{\mathcal{X}}}
\newcommand{\bchi}{{\boldsymbol{\chi}}}
\newcommand{\btheta}{{\boldsymbol{\theta}}}
\newcommand{\bTheta}{{\boldsymbol \Theta}}
\newcommand{\bT}{{\boldsymbol T}}
\definecolor{darkblue}{rgb}{0.0, 0.0, 0.55}
\definecolor{cornellred}{rgb}{0.7, 0.11, 0.11}
\definecolor{darkred}{rgb}{0.55, 0.0, 0.0}
\definecolor{red(ryb)}{rgb}{1.0, 0.15, 0.07}
\definecolor{red}{rgb}{1.0, 0.0, 0.0}
\newtheorem{theorem}{Theorem}
\newtheorem{lemma}{Lemma}
\newtheorem{assump}{Assumption}
\title{A Contour Stochastic Gradient Langevin Dynamics Algorithm for Simulations of Multi-modal Distributions}
\author{%
  Wei Deng \\
  Department of Mathematics\\
  Purdue University\\
  West Lafayette, IN, USA \\
  \texttt{weideng056@gmail.com} \\
  \And
  Guang Lin \\
  Departments of Mathematics \& \\
  School of Mechanical Engineering \\
  Purdue University \\
  West Lafayette, IN, USA \\
  \texttt{guanglin@purdue.edu} \\
  \And
  Faming Liang \thanks{To whom correspondence should be addressed: Faming Liang.}\\
  Departments of Statistics \\
  Purdue University \\
  West Lafayette, IN, USA \\
  \texttt{fmliang@purdue.edu} \\
}
\begin{document}

\maketitle 

\begin{abstract}
We propose an adaptively weighted stochastic gradient Langevin dynamics algorithm (SGLD), so-called contour stochastic gradient Langevin dynamics (CSGLD), for Bayesian learning in big data statistics. The proposed algorithm is essentially a \emph{scalable dynamic importance sampler}, which automatically \emph{flattens} the target distribution such that the simulation for a multi-modal distribution can be greatly facilitated. Theoretically, we prove a stability condition and establish the asymptotic convergence of the self-adapting parameter to a {\it unique fixed-point}, regardless of the non-convexity of the original energy function; we also present an error analysis for the weighted averaging estimators. Empirically, the CSGLD algorithm is tested on multiple benchmark datasets including CIFAR10 and CIFAR100. The numerical results indicate its superiority 
to avoid the local trap problem in training deep neural networks.

\end{abstract}

\section{Introduction}

AI safety has long been an important issue in the deep learning community. A promising solution to the problem is  Markov chain Monte Carlo (MCMC), which leads to asymptotically correct uncertainty quantification for deep neural network (DNN) models. However, traditional MCMC algorithms \citep{Metropolis1953,Hastings1970} 
are not scalable to big datasets that deep learning models rely on, although they have achieved significant successes in many scientific areas such as statistical physics and bioinformatics. 
  It was not until the study of stochastic gradient Langevin dynamics (SGLD) \citep{Welling11} that resolves the scalability issue encountered in Monte Carlo computing for big data problems. Ever since, a variety of scalable 
  stochastic gradient Markov chain Monte Carlo (SGMCMC) 
  algorithms have been developed based on strategies such as Hamiltonian dynamics \citep{Chen14, yian2015, Ding14}, Hessian approximation \citep{Ahn12, Li16, Simsekli2016}, and higher-order numerical schemes \citep{Chen15, Li19}. Despite their theoretical guarantees in statistical inference \citep{Chen15, Teh16, VollmerZW2016} and non-convex optimization \citep{Yuchen17, Maxim17, Xu18}, these algorithms often converge slowly, which makes them hard to be
  used for efficient uncertainty quantification for many AI safety problems.

To develop more efficient SGMCMC algorithms, we 
seek inspirations from traditional MCMC algorithms, such as simulated annealing \citep{Kirkpatrick83optimizationby}, parallel tempering \citep{PhysRevLett86, Geyer91}, and flat histogram algorithms \citep{Berg1991Multicanonical,wang2}. In particular, simulated annealing proposes to decay temperatures to increase the hitting probability to the global optima \citep{Mangoubi18}, which, however, often gets stuck into a local optimum with a fast cooling schedule.
 Parallel tempering proposes to swap positions of neighboring Markov chains according to an acceptance-rejection rule. However, 
 under the mini-batch setting, it often requires a large correction which is known to deteriorate its performance \citep{deng2020}. The flat histogram algorithms, 
such as the multicanonical \citep{Berg1991Multicanonical} 
and Wang-Landau \citep{wang2} algorithms, 
were first proposed to sample discrete states of Ising models by yielding a flat histogram in the energy space, and then extended as a general dynamic importance sampling algorithm, the so-called stochastic approximation Monte Carlo (SAMC) algorithm \citep{liang05, Liang07, LiangPL2009}. Theoretical studies \citep{leli2008, Liang10, Fort15} support the efficiency of the flat histogram algorithms in Monte Carlo computing for small data problems. 
However, it is 
still unclear how to adapt the flat histogram idea to accelerate the convergence of SGMCMC, ensuring 
efficient uncertainty quantification for AI safety problems. 

This paper proposes the so-called contour stochastic gradient Langevin dynamics (CSGLD) algorithm, which successfully extends the flat histogram idea to SGMCMC. 
Like the SAMC algorithm \citep{liang05, Liang07, LiangPL2009}, 
CSGLD works as a dynamic importance sampling algorithm, which adaptively adjusts the target measure at each iteration and accounts for the bias introduced thereby by  importance weights. However, theoretical analysis for the two types of dynamic importance sampling algorithms can be quite different due to the fundamental difference in their 
transition kernels. We proceed by justifying the stability condition for CSGLD based on the perturbation theory, and establishing ergodicity of CSGLD based on newly 
developed theory for the convergence of adaptive SGLD. 
Empirically, we test the performance of CSGLD through a few experiments. It achieves remarkable performance on some 
synthetic data, UCI datasets, and computer vision datasets 
such as CIFAR10 and CIFAR100.

\section{Contour stochastic gradient Langevin dynamics}

Suppose we are interested in sampling from a probability measure $\pi(\bx)$ with the density given by  
 \begin{equation} \label{CSGLDeq1}
 \pi(\bx) \propto \exp(-U(\bx)/\tau), \quad \bx \in \MX,
 \end{equation}
 where $\MX$ denotes the sample space,
 $U(\bx)$ is the energy function, and 
 $\tau$ is the temperature. It is known that when $U(\bx)$ is highly non-convex, SGLD can mix very slowly \citep{Maxim17}. To accelerate the convergence, we exploit the flat histogram idea in SGLD.
 
 Suppose that we have partitioned the sample space 
 $\MX$ into $m$ subregions based on the energy function 
  $U(\bx)$:   $\MX_1=\{\bx: U(\bx) \leq {u}_1\}$, $\MX_2=\{\bx: {u}_1 < U(\bx) \leq {u}_2\}$, $\ldots$, $\MX_{m-1}=\{\bx: {u}_{m-2} < U(\bx) \leq {u}_{m-1} \}$, and
 $\MX_{m}=\{\bx: U(\bx) >{u}_{m-1} \}$,
 where $-\infty <  {u}_1 < {u}_2 < \cdots < {u}_{m-1} <\infty$ are specified by the user. For convenience, we 
 set $u_0=-\infty$ and $u_m=\infty$. Without loss of 
 generality, we assume ${u}_{i+1}-{u}_{i}=\Delta u$ for $i=1,\ldots,m-2$. We propose to simulate from a flattened density 
 \begin{equation} \label{1keq1} 
  \varpi_{\Psi_{\btheta}}(\bx) \propto \frac{\pi(\bx)}{\Psi^{\zeta}_{\btheta}(U(\bx))},
 \end{equation}
 where $\zeta>0$ is a hyperparameter controlling the geometric property of the flatted density (see Figure \ref{fig: 4a} for illustration),  and $\btheta=(\theta(1), \theta(2), \ldots, \theta(m))$ 
 is an unknown vector 
 taking values in the space: 
 \begin{equation}\small
     \bTheta=\left\{\left(\theta(1),\theta(2),\cdots, \theta(m)\right)\big|0<\theta(1),\theta(2),\cdots, \theta(m)<1 \text{ and } \sum_{i=1}^m \theta(i)=1 \right\}.
 \end{equation}
\subsection{A na\"{i}ve contour SGLD} It is known if we set \footnote{$1_{A}$ is an indicator function that takes value $1$ if event $A$ occurs and $0$ otherwise.}
\begin{equation}
\label{proposal_iter}
\begin{split}
     &\text{(i)    }\ \zeta=1 \text{ and }\Psi_{\btheta}(U(\bx))= \sum_{i=1}^m \theta(i) 1_{u_{i-1} < U(\bx) \leq u_i}, \\
      &\text{(ii)   } \theta(i)=\theta_{\star}(i), \text{where }\theta_{\star}(i) =\int_{\bchi_i}\pi(\bx)d\bx \text{ for } i\in\{1,2,\cdots, m\},\\
\end{split}
\end{equation}
the algorithm will act like the SAMC algorithm \citep{Liang07},  
yielding a flat histogram in the space of energy (see the pink curve in Figure \ref{fig: 4b}). Theoretically, 
such a density flattening strategy enables a sharper logarithmic Sobolev inequality and accelerates 
the convergence of simulations \citep{leli2008, Fort15}.
However, such a density flattening setting only works under the framework of the Metropolis algorithm \citep{Metropolis1953}. 
A na\"{i}ve application of the step function in formula (\ref{proposal_iter}(i)) to SGLD results in $\frac{\partial \log \Psi_{{\theta}}(u)}{\partial u}=\frac{1}{\Psi_{\theta}(u)}\frac{\partial \Psi_{{\theta}}(u)}{\partial u}=0$ almost everywhere, which leads to the \emph{vanishing-gradient problem} for SGLD. Calculating the gradient for the na\"{i}ve contour SGLD, we have 
\begin{equation*}
\small{
    \nabla_{x} \log \varpi_{\Psi_{\theta}}(x)=-\left[1+ \zeta \tau\frac{\partial \log{\Psi_{\theta}}(u)}{\partial u} \right] \frac{\nabla_{x} U(x)}{\tau}=-\frac{\nabla_{x} U(x)}{\tau}.}
\end{equation*} 
As such, the na\"{i}ve algorithm behaves like SGLD and fails to simulate from the flattened density (\ref{1keq1}).

\subsection{How to resolve the vanishing gradient} To tackle this issue, we propose to set $\Psi_{\btheta}(u)$ 
as a piecewise continuous function: 
\begin{equation}\label{new_design}
\Psi_{\btheta}(u)= \sum_{i=1}^m \left(\theta(i-1)e^{(\log\theta(i)-\log\theta(i-1)) \frac{u-u_{i-1}}{\Delta u}}\right) 1_{u_{i-1} < u \leq u_i},
\end{equation}
where $\theta(0)$ is fixed to $\theta(1)$ for simplicity. A direct calculation shows that
 \begin{equation}
 \label{marK}
 \small
 \begin{split}
     \nabla_{\bx} \log \varpi_{\Psi_{\btheta}}(\bx)
  &=-\left[1+ \zeta \tau\frac{\partial \log{\Psi_{\btheta}}(u)}{\partial u} \right] 
   \frac{\nabla_{\bx} U(\bx)}{\tau}\\
  &=-\left[1+ \zeta \tau {\frac{\log\theta(J(\bx))-\log\theta((J(\bx)-1)\vee 1)}{\Delta u}} \right] 
   \frac{\nabla_{\bx} U(\bx)}{\tau},
 \end{split}
 \end{equation}
where $J(\bx) \in\{1,2,\cdots, m\}$ denotes the index  that $\bx$ belongs to, i.e., $u_{J(\bx)-1}< U(\bx)\leq u_{J(\bx)}$. \footnote[4]{Eq.(\ref{marK}) shows a practical numerical scheme. An alternative is presented in the supplementary material. } 

\subsection{Estimation via stochastic approximation}
Since $\btheta_{\star}$ is unknown, we propose to estimate it on the fly under the framework of 
stochastic approximation \citep{RobbinsM1951}. 
Provided that a scalable transition kernel $\Pi_{\bm{\theta_{k}}}(\bm{x}_{k}, \cdot)$ is available 
 and the energy function $U(\bx)$ on the full data 
 can be efficiently evaluated, the weighted density  $\varpi_{\Psi_{\btheta}}(\bx)$ can be simulated by 
 iterating between the following steps:
\begin{equation}
\begin{split}
\label{sa_framework}
    &\text{ (i) Simulate $\bm{x}_{k+1}$ from $\Pi_{\bm{\theta_{k}}}(\bm{x}_{k}, \cdot)$, which admits $\varpi_{\bm{\theta}_{k}}(\bm{x})$ as
the invariant distribution,} \\
    &\text{(ii) $\theta_{k+1}(i)={\theta}_{k}(i)+\omega_{k+1}{\theta}_{k}^{\zeta}( J(\bx_{k+1}))\left(1_{i= J(\bx_{k+1})}-{\theta}_{k}(i)\right)$ \text{ for } $i\in\{1,2,\cdots,m\}.$}
\end{split}    
\end{equation}
where $\btheta_k$ denotes a working estimate of $\btheta$ at the $k$-th iteration. We expect that in a long run, such an algorithm can achieve \emph{an optimization-sampling equilibrium} such that $\btheta_{k}$ converges to the fixed point $\btheta_{\star}$ and the random vector $\bx_{k}$ converges weakly to the distribution $\varpi_{\Psi_{\btheta_{\star}}}(\bx)$.

To make the algorithm scalable to big data, we adopt the Langevin transition kernel for drawing samples 
at each iteration, for which a mini-batch of data 
can be used to accelerate computation. In addition, evaluating the energy $U(\bx)$ on the full data can be quite expensive, 
while it is free to obtain the stochastic energy $\widetilde{U}(\bx)$ in evaluating 
the stochastic  gradient 
$\nabla_{\bx} \widetilde{U}(\bx)$ due to the nature 
of auto-differentiation \citep{paszke2017}.
For this reason, we propose a stochastic index $J_{\widetilde U}(\bx)$
\begin{equation}\label{stoch_index}
    J_{\widetilde U}(\bx)=\sum_{i=1}^m i 1_{u_{i-1}<\frac{N}{n} \widetilde U(\bx)\leq u_i},
\end{equation}
where $N$ is the sample size of the full dataset and $n$ is the mini-batch size. Let $\{\epsilon_k\}_{k=1}^{\infty}$ and $\{\omega_k\}_{k=1}^{\infty}$ denote the learning rates and step sizes for SGLD and stochastic approximation, respectively. Given the above notations, the proposed algorithm can be presented in Algorithm \ref{alg:CSGLD},
 which can be viewed as  a \emph{scalable Wang-Landau algorithm} for deep learning and big data problems. 
 
\subsection{Related work} 
Compared to the existing MCMC algorithms, the proposed   algorithm has a few innovations:
 \begin{algorithm}[tb]
   \caption{Contour SGLD Algorithm. One can conduct a resampling step from the pool of importance samples according to the importance weights to obtain the original distribution. For more scalable updates, one can adopt the stochastic approximation scheme Eq.(\ref{novel_SA_scheme}) in \citet{icsgld}.}
   \label{alg:CSGLD}
\begin{algorithmic}
   \STATE {\bfseries [1.] (Data subsampling)} Simulate a mini-batch of data of size $n$ from the whole dataset of size $N$; Compute the stochastic gradient $\nabla_{\bx}\widetilde U(\bx_k)$ and stochastic energy $\widetilde U(\bx_k)$.

   \STATE {\bfseries [2.] (Simulation step)}
   Sample $\bx_{k+1}$ using the SGLD algorithm based on $\bx_k$ and $\btheta_k$, i.e.,
  \begin{equation} \label{SGLDeq6}
  \footnotesize
 \begin{split}
  \small{\bx_{k+1}=\bx_k - \epsilon_{k+1} \frac{N}{n} \underbrace{\left[1+ 
   \zeta\tau\frac{\log {\theta}_{k}(J_{\widetilde U}(\bx_k)) - \log{\theta}_{k}((J_{\widetilde U}(\bx_k)-1)\vee 1)}{\Delta u}  \right]}_{\text{gradient multiplier}}  
    \nabla_{\bx} \widetilde U(\bx_k) +\sqrt{2 \tau \epsilon_{k+1}} \bw_{k+1}}, 
 \end{split}
  \end{equation}
  where $\bw_{k+1} \sim N(0,\bm{I}_d)$, $d$ is the dimension,
  $\epsilon_{k+1}$ is the learning rate,
  and $\tau$ is the temperature. 

  \STATE {\bfseries [3.] (Stochastic approximation)} Update the estimate of $\theta(i)$'s
 for $i=1,2,\ldots,m$ by setting
  \begin{equation} \label{updateeq}
 {\theta}_{k+1}(i)={\theta}_{k}(i)+\omega_{k+1}{\theta}_{k}^{\zeta}(J_{\widetilde U}(\bx_{k+1}))\left(1_{i=J_{\widetilde U}(\bx_{k+1})}-{\theta}_{k}(i)\right), 
 \end{equation} 
 where $1_{i=J_{\widetilde U}(\bx_{k+1})}$ is an indicator function which equals 1 if $i= J_{\widetilde U}(\bx_{k+1})$ and 0 otherwise.
\vspace{-0.04in}
\end{algorithmic}
\end{algorithm}
 
First, CSGLD is an adaptive MCMC algorithm based on the \emph{Langevin transition kernel} instead of the {\it Metropolis transition kernel} \citep{Liang07, Fort15}. As a result, the existing convergence theory for the Wang-Landau algorithm does not apply. 
To resolve this issue, we first prove a stability condition for CSGLD based on the perturbation theory, and then verify regularity conditions for the solution of the Poisson equation so that the fluctuations of the mean-field system induced by CSGLD get controlled, which eventually ensures convergence of CSGLD.

Second, the use of the stochastic index $J_{\widetilde U}(\bx)$ in Eq.(\ref{stoch_index}) avoids the evaluation of $U(\bx)$ on the full data and thus significantly accelerates the computation of the algorithm, although it leads to a small bias, depending on the variance of the energy estimators, in parameter estimation. Compared to other methods, such as using a fixed sub-dataset to estimate $U(\bx)$, the implementation is much simpler. Moreover, combining the variance reduction of the noisy energy estimators \citep{deng_VR}, the bias also decreases to zero asymptotically as $\epsilon\rightarrow 0$. 

Third, unlike the existing SGMCMC algorithms \citep{Welling11, Chen14, yian2015}, CSGLD works as a \emph{dynamic importance sampler} which \emph{flattens} the target distribution and \emph{reduces the energy barriers} for the sampler to traverse between different regions of the energy landscape (see Figure \ref{fig: 4a} for illustration). The sampling bias introduced thereby is accounted for by the importance weight $\theta^{\zeta}(J_{\widetilde U}(\cdot))$. Interestingly, CSGLD possesses a {\it self-adjusting mechanism} to ease escapes from local traps, which is similar to the self-repulsive dynamics \citep{mao_mcmc} and can be explained in Figure \ref{trajectory}. That is, in order to escape from local traps, CSGLD is sometimes forced to \textbf{move toward higher energy regions by adopting negative learning rates}. This is a very attractive feature for simulations of multi-modal distributions.

\section{Theoretical study of the CSGLD algorithm} \label{convergSect}

In this section, we study the convergence of CSGLD algorithm under the framework of stochastic approximation and show the ergodicity property based on weighted averaging estimators.
 
\subsection{Convergence analysis} \label{FMalg}

Following the tradition of stochastic approximation analysis, we rewrite the updating rule (\ref{updateeq}) 
as 
 \begin{equation}
     \btheta_{k+1}=\btheta_k+\omega_{k+1} \widetilde H(\btheta_k,\bx_{k+1}),
 \end{equation}
where $\widetilde H(\btheta,\bx)=(\widetilde H_1(\btheta,\bx), \ldots, 
 \widetilde H_m(\btheta,\bx))$ is a random field function with 
 \begin{equation}
 \label{H_}
     \widetilde H_i(\btheta,\bx)={\theta}^{\zeta}(J_{\widetilde U}(\bx))\left(1_{i= J_{\widetilde U}(\bx)}-{\theta}(i)\right), \quad i=1,2,\ldots,m.
 \end{equation}
Notably, $\widetilde H(\btheta,\bx)$ works under an empirical measure $\varpi_{\btheta}(\bx)$ which approximates the invariant measure $\varpi_{\Psi_{\btheta}}(\bx)\propto\frac{\pi(\bx)}{\Psi^{\zeta}_{\btheta}(U(\bx))}$ asymptotically as $\epsilon\rightarrow 0$ and $n\rightarrow N$. 
As shown in Lemma \ref{convex_main}, 
we have the mean-field equation 
\begin{equation} \label{fixedeq}
h(\btheta)=\int_{\MX}\widetilde H(\btheta,\bx)  \varpi_{\btheta}(\bx) d\bx= Z_{\btheta}^{-1} \left(\btheta_{\star}+\varepsilon \beta(\btheta)-\btheta\right)=0, 
\end{equation}
where $\btheta_{\star}=(\int_{\MX_1}\pi(\bx)d\bx, \int_{\MX_2}\pi(\bx)d\bx, \ldots, \int_{\MX_m}\pi(\bx)d\bx)$, $Z_{\btheta}$ is the normalizing constant, $\beta(\btheta)$ is a perturbation term, $\varepsilon$ is a small error depending on $\epsilon, n$ and $m$. 
The mean-field equation implies that for any 
$\zeta>0$, $\btheta_k$ converges to a small neighbourhood of $\btheta_{\star}$. By applying perturbation theory and setting the Lyapunov function  $\mathbb{V}(\btheta)=\frac{1}{2}\|\btheta_{\star}-\btheta\|^2$, we can establish the stability condition:  
\begin{lemma}[Stability, informal version of Lemma \ref{convex_appendix__}] 
\label{convex_main}
Given a small enough $\epsilon$ (learning rate), a large enough $n$ (batch size) and $m$ (partition number), there is a constant $\phi=\inf_{\btheta} Z_{\btheta}^{-1}>0$ such that the mean-field $h(\btheta)$ satisfies $$\forall \btheta \in \bTheta, \langle h(\btheta), \btheta - \btheta_{\star}\rangle \leq  -\phi\|\btheta - \btheta_{\star}\|^2+\mathcal{O}\left(\epsilon+\frac{1}{m}+\sup_{\bx}\Var(\xi_n(\bx))\right),$$
where $\Var(\xi_n(\bx))$ denotes the variance of the noise of the stochastic energy estimator $\xi_n(\cdot)$ of batch size $n$ and the variance decays to $0$ as $n\rightarrow N$.
\end{lemma}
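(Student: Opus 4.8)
The plan is to compute the mean field $h(\btheta)=\int_{\MX}\widetilde H(\btheta,\bx)\,\varpi_{\btheta}(\bx)\,d\bx$ in closed form — thereby establishing the representation $h(\btheta)=Z_{\btheta}^{-1}\!\left(\btheta_{\star}+\varepsilon\beta(\btheta)-\btheta\right)$ of \eqref{fixedeq} — and then to obtain the stability inequality by simply pairing this representation with $\btheta-\btheta_{\star}$.

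First I would decompose the integral defining $h_i(\btheta)$ according to the value of the index $\tilde J(\bx)\in\{1,\dots,m\}$. Since $\widetilde H_i(\btheta,\bx)=\theta^{\zeta}(\tilde J(\bx))\big(1_{i=\tilde J(\bx)}-\theta(i)\big)$ by \eqref{H_}, this gives
\begin{equation*}
h_i(\btheta)=\theta^{\zeta}(i)\,p_i(\btheta)-\theta(i)\sum_{j=1}^{m}\theta^{\zeta}(j)\,p_j(\btheta),\qquad p_j(\btheta):=\int_{\{\bx:\,\tilde J(\bx)=j\}}\varpi_{\btheta}(\bx)\,d\bx .
\end{equation*}
In the idealized regime where $\tilde J=J$, $\varpi_{\btheta}=\varpi_{\Psi_{\btheta}}$ exactly, and $\Psi_{\btheta}$ is the piecewise-constant weight $\sum_{j}\theta(j)1_{u_{j-1}<U(\cdot)\le u_j}$ considered in \eqref{proposal_iter} (so that $\Psi_{\btheta}(U(\bx))=\theta(j)$ on $\MX_j$), one has $p_j(\btheta)=\theta_{\star}(j)/\!\left(Z_{\btheta}\,\theta^{\zeta}(j)\right)$; substituting this and using $\sum_j\theta_{\star}(j)=1$ collapses the display to $h_i(\btheta)=Z_{\btheta}^{-1}\big(\theta_{\star}(i)-\theta(i)\big)$. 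Three corrections must then be tracked: (a) $\Psi_{\btheta}$ is in fact the continuous interpolant \eqref{new_design}, which on $\MX_j$ deviates from the constant $\theta(j)$ by a factor governed by $|\log\theta(j)-\log\theta(j-1)|$ over an interval of width $\Delta u=\mathcal{O}(1/m)$; (b) the sampler uses the biased index $\tilde J$ rather than $J$, a discrepancy equal to the term $\delta_n(\btheta)$ that vanishes as $n\to N$; (c) $\varpi_{\btheta}$ approximates $\varpi_{\Psi_{\btheta}}$ only up to an $\mathcal{O}(\epsilon)$ Langevin-discretization error. Hence $p_j(\btheta)=\theta_{\star}(j)/\!\left(Z_{\btheta}\,\theta^{\zeta}(j)\right)+\mathcal{O}\!\left(\epsilon+1/m+\delta_n(\btheta)\right)$, and substituting back yields precisely $h(\btheta)=Z_{\btheta}^{-1}\!\left(\btheta_{\star}+\varepsilon\beta(\btheta)-\btheta\right)$ with $\varepsilon=\mathcal{O}\!\left(\epsilon+1/m+\sup_{\btheta}\delta_n(\btheta)\right)$ and a bounded remainder $\beta$.

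Given this representation, the stability inequality would follow at once. Using $\langle\btheta_{\star}-\btheta,\btheta-\btheta_{\star}\rangle=-\|\btheta-\btheta_{\star}\|^2$,
\begin{equation*}
\langle h(\btheta),\btheta-\btheta_{\star}\rangle=Z_{\btheta}^{-1}\!\left(-\|\btheta-\btheta_{\star}\|^2+\varepsilon\,\langle\beta(\btheta),\btheta-\btheta_{\star}\rangle\right)\le-\phi\,\|\btheta-\btheta_{\star}\|^2+\mathcal{O}\!\left(\epsilon+\tfrac1m+\delta_n(\btheta)\right),
\end{equation*}
where the first term uses $Z_{\btheta}^{-1}\ge\phi=\inf_{\btheta}Z_{\btheta}^{-1}$, and the error term uses that $\bTheta$ is bounded (so $\|\btheta-\btheta_{\star}\|\le\sqrt2$ and $\|\beta(\btheta)\|$ is bounded) together with $\sup_{\btheta}Z_{\btheta}^{-1}<\infty$. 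This is exactly the drift condition for the Lyapunov function $\mathbb{V}(\btheta)=\tfrac12\|\btheta_{\star}-\btheta\|^2$, whose gradient is $\btheta-\btheta_{\star}$.

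The hard part is not the algebra above but two analytic estimates that feed into it. The first is estimate (a): quantifying how much the integral $\int\pi/\Psi_{\btheta}^{\zeta}$ changes when the discontinuous step weight of \eqref{proposal_iter} is replaced by the continuous interpolant \eqref{new_design}; this is where the $\mathcal{O}(1/m)$ appears and where the perturbation argument is genuinely used. The second is showing $\phi>0$, equivalently $\sup_{\btheta}Z_{\btheta}<\infty$ on the set of $\btheta$ actually reached by the recursion \eqref{updateeq} — this requires the recursion to keep each $\theta(i)$ bounded away from $0$ so that $\Psi_{\btheta}$ does not degenerate, and it is precisely for this that the hypotheses "$\epsilon$ small, $n$ and $m$ large" are invoked; the companion bound $\sup_{\btheta}Z_{\btheta}^{-1}<\infty$ is by contrast elementary, since $\Psi_{\btheta}$ is uniformly bounded above on the bounded set $\bTheta$.
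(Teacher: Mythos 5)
Your proposal is correct and follows essentially the same route as the paper's proof: compute the leading term of the mean field exactly against the idealized piecewise-constant measure $\varpi_{\widetilde{\Psi}_{\btheta}}$ (obtaining $Z_{\btheta}^{-1}(\theta_{\star}(i)-\theta(i))$ with $Z_{\btheta}=\sum_i \int_{\MX_i}\pi(\bx)d\bx/\theta(i)^{\zeta}$), bound the three corrections --- step-vs-continuous weight function ($\mathcal{O}(1/m)$, the paper's Lemma B4 via the mean-value theorem on $x\mapsto x^{\zeta}$), mini-batch index bias ($\delta_n$), and Langevin discretization error ($\mathcal{O}(\epsilon)$, which the paper cites from the SGLD weak-error literature rather than proving) --- and then conclude by pairing with $\btheta-\btheta_{\star}$. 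The one small discrepancy is your attribution of $\phi=\inf_{\btheta}Z_{\btheta}^{-1}>0$ to the hypotheses on $\epsilon$, $n$, $m$ and to a recurrence property of the recursion: in the paper this positivity follows directly from the compactness assumption $\inf_{\bTheta}\theta(i)>0$ (Assumption A1), while the smallness of $\epsilon$ and largeness of $n,m$ are used only to control the additive error term.
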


Together with the tool of Poisson equation  \citep{Albert90, andrieu05}, which controls the fluctuation of $\widetilde H(\btheta, \bx)-h(\btheta)$, we can 
establish convergence of $\btheta_k$ in Theorem \ref{thm:1}, whose proof is given in the supplementary 
material. 
 
\begin{theorem}[$L^2$ convergence rate, informal version of Theorem \ref{latent_convergence}]
\label{thm:1}
Given standard smoothness and dissipativity assumptions, a small enough learning rate $\epsilon_k$, a large partition number $m$ and a large batch size $n$, $\btheta_k$ converges to $\btheta_{\star}$ such that
 \begin{equation*}
    \E\left[\|\bm{\theta}_{k}-\bm{\theta}_{\star}\|^2\right]=\mathcal{O}\left( \omega_{k}+\sup_{i\geq k_0}\epsilon_i+\frac{1}{m} +\sup_{\bx}\Var(\xi_n(\bx))\right),
\end{equation*}
where $k_0$ is some large enough integer and $\btheta_{\star}=(\int_{\MX_1}\pi(\bx)d\bx, \int_{\MX_2}\pi(\bx)d\bx, \ldots, \int_{\MX_m}\pi(\bx)d\bx)$. 
\end{theorem}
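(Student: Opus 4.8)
The plan is to run the standard stochastic-approximation-with-Markovian-dynamics argument organized around the Lyapunov function $\mathbb{V}(\btheta)=\frac12\|\btheta-\btheta_{\star}\|^2$ already used for Lemma~\ref{convex_main}. First I would expand one step of the recursion $\btheta_{k+1}=\btheta_k+\omega_{k+1}\widetilde H(\btheta_k,\bx_{k+1})$,
\begin{equation*}
\mathbb{V}(\btheta_{k+1})=\mathbb{V}(\btheta_k)+\omega_{k+1}\langle \btheta_k-\btheta_{\star},\widetilde H(\btheta_k,\bx_{k+1})\rangle+\tfrac{\omega_{k+1}^2}{2}\|\widetilde H(\btheta_k,\bx_{k+1})\|^2,
\end{equation*}
and split $\widetilde H(\btheta_k,\bx_{k+1})=h(\btheta_k)+\big(\widetilde H(\btheta_k,\bx_{k+1})-h(\btheta_k)\big)$. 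The mean-field piece is controlled by Lemma~\ref{convex_main}, giving the drift $\langle \btheta_k-\btheta_{\star},h(\btheta_k)\rangle\le-\phi\|\btheta_k-\btheta_{\star}\|^2+\mathcal{O}(\epsilon+\tfrac1m+\delta_n(\btheta_k))$, and the quadratic remainder is $\mathcal{O}(\omega_{k+1}^2)$ because $\widetilde H$ is bounded on $\bTheta$ (each $|\widetilde H_i|\le 1$ since $0<\theta^{\zeta}(\cdot)\le 1$).

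The crux is the Markov-noise term $\langle \btheta_k-\btheta_{\star},\widetilde H(\btheta_k,\bx_{k+1})-h(\btheta_k)\rangle$. Here I would invoke the Poisson equation for the SGLD transition kernel $\Pi_{\btheta}$: let $\mu_{\btheta}$ solve $\mu_{\btheta}(\bx)-\Pi_{\btheta}\mu_{\btheta}(\bx)=\widetilde H(\btheta,\bx)-h(\btheta)$, and decompose $\sum_k\omega_{k+1}\big(\widetilde H(\btheta_k,\bx_{k+1})-h(\btheta_k)\big)$ into (i) a martingale-difference sequence $\omega_{k+1}\big(\mu_{\btheta_k}(\bx_{k+1})-\Pi_{\btheta_k}\mu_{\btheta_k}(\bx_k)\big)$, (ii) a term $\omega_{k+1}\big(\Pi_{\btheta_k}\mu_{\btheta_k}(\bx_k)-\Pi_{\btheta_{k-1}}\mu_{\btheta_{k-1}}(\bx_k)\big)$ controlled by Lipschitz regularity of $\btheta\mapsto\mu_{\btheta}$ together with $\|\btheta_k-\btheta_{k-1}\|=\mathcal{O}(\omega_k)$, and (iii) boundary and step-size-variation terms of order $|\omega_{k+1}-\omega_k|$ and $\omega_k^2$. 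Assumptions 1--5 supply exactly what is needed: a drift/minorization condition and uniform-in-$\btheta$ ergodicity of $\Pi_{\btheta}$, smoothness of $\bx\mapsto\widetilde H$ and $\btheta\mapsto\Pi_{\btheta}$, and the resulting regularity of the Poisson solution, so each of these pieces is dominated after taking expectations.

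Assembling everything, taking total expectation and writing $v_k=\E[\|\btheta_k-\btheta_{\star}\|^2]$, one obtains a recursion of the form
\begin{equation*}
v_{k+1}\le(1-2\phi\omega_{k+1})v_k+C\omega_{k+1}\Big(\sup_{i\ge k_0}\epsilon_i+\tfrac1m+\sup_{i\ge k_0}\delta_n(\btheta_i)\Big)+C\omega_{k+1}^2+C\,\omega_{k+1}\,r_{k+1},
\end{equation*}
where $r_{k+1}$ collects the Markov-noise remainders from step (ii)--(iii) above. I would then apply the standard stochastic-approximation recursion lemma (e.g.\ \citet{andrieu05}) using the step-size conditions in the assumptions ($\omega_k\to 0$, $\sum_k\omega_k=\infty$, and $\omega_k^2$, $|\omega_{k+1}-\omega_k|$ summable relative to $\omega_k$): the contraction factor $1-2\phi\omega_{k+1}$ coming from the stability condition unrolls the recursion and drives $v_k$ down to the stationary level set by the additive perturbations, yielding $\E[\|\btheta_k-\btheta_{\star}\|^2]=\mathcal{O}\big(\omega_k+\sup_{i\ge k_0}\epsilon_i+\tfrac1m+\sup_{i\ge k_0}\delta_n(\btheta_i)\big)$ for $k$ past some $k_0$.

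The hardest part is the Markov-noise control in the second paragraph: establishing that the Poisson equation for $\Pi_{\btheta}$ has a solution with enough regularity \emph{uniformly} over $\btheta\in\bTheta$, despite the non-convex $U$ and the $\btheta$-dependence entering through the gradient multiplier $1+\zeta\tau\frac{\log\theta(\tilde J)-\log\theta((\tilde J-1)\vee 1)}{\Delta u}$ in (\ref{SGLDeq6}), which can be negative and varies with $\btheta$. A secondary subtlety, needed before any of the $\log\theta(\cdot)$ and $\theta^{\zeta}(\cdot)$ bounds make sense, is keeping $\btheta_k$ in a compact subset of $\bTheta$ bounded away from the faces of the simplex (typically via a truncation/projection device or an a priori stability argument), and bounding the bias $\delta_n(\btheta)$ from replacing the index $J$ by its stochastic surrogate $\tilde J$, which requires care with the fluctuations of $\widetilde U$.
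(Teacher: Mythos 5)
Your proposal follows essentially the same route as the paper's proof: the same Lyapunov expansion of $\|\btheta_k-\btheta_{\star}\|^2$, the drift controlled by the stability lemma, the Markov noise handled via the Poisson-equation decomposition into a martingale difference, a $\btheta$-Lipschitz term of order $\omega_k$, and a telescoping remainder, followed by the standard recursion lemma to unroll $(1-2\phi\omega_{k+1})$ down to the $\mathcal{O}(\omega_k+\sup_i\epsilon_i+1/m+\sup_i\delta_n(\btheta_i))$ level. The difficulties you flag (uniform regularity of the Poisson solution and confinement of $\btheta_k$ away from the simplex faces) are exactly what the paper handles via its compactness assumption and its Poisson-regularity lemma.
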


\subsection{Ergodicity and dynamic importance sampler}  \label{FMalg2}

CSGLD belongs to the class of adaptive MCMC algorithms,
but its transition kernel is based on SGLD instead of the Metropolis algorithm. As such, the ergodicity theory for traditional adaptive MCMC algorithms \citep{RobertsRosenthal2007, AndrieuMoulines2006, Fortetal2011, Liang10} is not directly applicable. To tackle this issue, we conduct the following theoretical study. First, rewrite (\ref{SGLDeq6}) as 
  \begin{equation} \label{SGLDeq8}
     \bx_k- \epsilon\left( \nabla_{\bx} 
     \widehat{L}(\bx_k,\btheta_{\star}) + 
     \Upsilon(\bx_k,\btheta_k,\btheta_{\star})\right)+\mathcal{N}({0, 2\epsilon \tau\bm{I}}),
 \end{equation}
where $\nabla_{\bx} \widehat{L}(\bx_k,\btheta_{\star})= \frac{N}{n} \left[1+ 
   \frac{\zeta\tau}{\Delta u}  \left(\log \theta_{\star}({J}(\bx_k))-\log\theta_{\star}(({J}(\bx_k)-1)\vee 1) \right) \right]  
    \nabla_{\bx} \widetilde U(\bx_k)$, the bias term 
    $\Upsilon(\bx_k,\btheta_k,\btheta_{\star})=
    \nabla_{\bx} \widetilde{L}(\bx_k,\btheta_k)-
    \nabla_{\bx} \widehat{L}(\bx_k,\btheta_{\star})$,
     and 
 $\nabla_{\bx} \widetilde{L}(\bx_k,\btheta_{k})= \frac{N}{n} \left[1+ 
   \frac{\zeta\tau}{\Delta u}  \left(\log \theta_{k}(J_{\widetilde U}(\bx_k))-\log\theta_{k}((J_{\widetilde U}(\bx_k)-1)\vee 1) \right) \right]  
    \nabla_{\bx} \widetilde U(\bx_k)$.  The order of the bias is figured out in Lemma C1 in the supplementary material based on the results of Theorem \ref{thm:1}.
    
Next, we show how the empirical mean $\frac{1}{k}\sum_{i=1}^k f(\bx_i)$ deviates from the posterior mean $\int_{\MX}f(\bx)\varpi_{\Psi_{\btheta_{\star}}}(\bx)d\bx$. Note that this is a direct application of Theorem 2 of \citet{Chen15} by treating $\nabla_{\bx} \widehat{L}(\bx,\btheta_{\star})$ as the stochastic gradient of a target distribution and 
 $\Upsilon(\bx,\btheta,\btheta_{\star})$ as the bias of the stochastic gradient. Moreover, considering that $\varpi_{\widetilde \Psi_{\btheta_{\star}}}(\bx)\propto
\frac{\pi(\bx)}{\theta_{\star}^{\zeta}(J(\bx))}\rightarrow\varpi_{ \Psi_{\btheta_{\star}}}$ as $m\rightarrow \infty$ based on Lemma B4 in the supplementary material, we have the following

\begin{lemma}[Convergence of the Averaging Estimators, informal version of Lemma \ref{avg_converge_appendix}]
\label{avg_converge}
Suppose the smoothness, dissipativity and other mild assumptions hold. For any bounded function $f$, we have
\begin{equation*}
\small
\begin{split}
    \left|\E\left[\frac{\sum_{i=1}^k f(\bx_i)}{k}\right]-\int_{\bchi} f(\bx)\varpi_{\widetilde \Psi_{\btheta_{\star}}}(d\bx)\right|&= \mathcal{O}\left(\frac{1}{k\epsilon}+\sqrt{\epsilon}+\sqrt{\frac{\sum_{i=1}^k \omega_k}{k}}+\frac{1}{\sqrt{m}}+\sup_{\bx}\sqrt{\Var(\xi_n(\bx))}\right), \\
\end{split}
\end{equation*}
where $\varpi_{\widetilde \Psi_{\btheta_{\star}}}(\bx)= \frac{1}{Z_{\btheta_{\star}}} 
\frac{\pi(\bx)}{\theta_{\star}^{\zeta}(J(\bx))}$ and $Z_{\btheta_{\star}}=\sum_{i=1}^m \frac{\int_{\MX_i} \pi(\bx)d\bx}{\theta_{\star}(i)^{\zeta}}$.
\end{lemma}

Finally, we consider the problem of estimating the quantity  $\int_{\MX} f(\bx) \pi(\bx) d\bx$. Recall that $\pi(\bx)$ is the target distribution that we would like to make inference for. To estimate this quantity, we naturally 
 consider the weighted averaging estimator $\frac{\sum_{i=1}^k\theta_{i}^{\zeta}( J_{\widetilde U}(\bx_i)) f(\bx_i)}{ 
\sum_{i=1}^k\theta_{i}^{\zeta}( J_{\widetilde U}(\bx_i))}$ by treating $\theta^{\zeta}(J_{\widetilde U}(\bx_i))$ as
the dynamic importance weight of the sample $\bx_i$ 
for $i=1,2,\ldots,k$. The convergence of this
estimator is established in Theorem \ref{wavg_esti}, which can be proved by repeated applying 
 Theorem \ref{thm:1} and Lemma \ref{avg_converge} 
 with the details given in the supplementary material.
  
\begin{theorem}[Convergence of the Weighted Averaging Estimators, informal version of Theorem \ref{w_avg_converge_appendix}]
\label{wavg_esti} Given the smoothness, dissipativity and other mild assumptions, for any bounded function $f$, we have
\label{w_avg_converge}
\begin{equation*}
\footnotesize
\begin{split}
    \left|\E\left[\frac{\sum_{i=1}^k\theta_{i}^{\zeta}
    (J_{\widetilde U}(\bx_i)) f(\bx_i)}{\sum_{i=1}^k \theta_{i}^{\zeta} ( J_{\widetilde U}(\bx_i))}\right]-\int_{\bchi} f(\bx)\pi(d\bx)\right|&= \mathcal{O}\left(\frac{1}{k\epsilon}+\sqrt{\epsilon}+\sqrt{\frac{\sum_{i=1}^k \omega_k}{k}}+\frac{1}{\sqrt{m}}+\sup_{\bx}\sqrt{\Var(\xi_n(\bx))}\right).\\
\end{split}
\end{equation*}
\end{theorem}
The bias of the weighted averaging estimator decreases 
if one applies a larger batch size, a finer sample space partition, a smaller learning rate $\epsilon$, and smaller step sizes $\{\omega_k\}_{k\geq 0}$. Admittedly, the
order of this bias is slightly larger than   $\mathcal{O}\left(\frac{1}{k\epsilon}+\epsilon\right)$
 achieved by the standard SGLD. We note that this is necessary as simulating from the flattened distribution $\varpi_{\Psi_{\btheta_{\star}}}$ often leads to a much faster convergence, see e.g. the green curve v.s. the purple curve in Figure \ref{fig: 4c}.

\paragraph{Discussions on more scalable updates} The stochastic approximation update (\ref{sa_framework})(ii) yields a global stability property but may not be scalable enough in some big data problems. For more scalable updates, one can adopt an elegant stochastic approximation update proposed in \citet{icsgld} 
\begin{equation}
\label{novel_SA_scheme}
    \theta_{k+1}(i)={\theta}_{k}(i)+\omega_{k+1}{\theta}_{k}( J(\bx_{k+1}))\left(1_{i= J(\bx_{k+1})}-{\theta}_{k}(i)\right),
\end{equation}
where $\theta(i)$ converges to a smoother estimate of ${\left(\int_{\MX_i}\pi(\bx)d\bx)\right)}^{\frac{1}{\zeta}}$ instead of the original target $\int_{\MX_i}\pi(\bx)d\bx$. Since high energy region often yields exponentially decreasing probability mass, the exponent $\frac{1}{\zeta}$ given $\zeta\gg 1$ greatly facilitates the estimation tasks.

\section{Numerical studies}

\subsection{Simulations of multi-modal distributions}

\paragraph{A Gaussian mixture distribution} The first numerical study is to test the performance of CSGLD on a Gaussian mixture distribution $\pi(\bx)=0.4 N(-6,1)+0.6 N(4,1)$. In each experiment, the algorithm was  run for $10^7$ iterations. We fix the temperature $\tau=1$ and the learning rate $\epsilon=0.1$. The step size for stochastic approximation follows $\omega_k=\frac{1}{k^{0.6}+100}$. The sample space is partitioned into 50 subregions with $\Delta u=1$. The stochastic gradients are simulated by injecting additional random noises following $N(0,0.01)$ to the exact gradients. For comparison, SGLD is chosen as the baseline algorithm 
 and implemented with the same setup as CSGLD. We repeat the experiments 10 times and report the average and the  
associated standard deviation. 

We first assume that $\btheta_{\star}$ is known and plot the energy functions for both $\pi(\bx)$ and $\varpi_{\Psi_{\btheta_{\star}}}$ with different values
of $\zeta$.  Figure \ref{fig: 4a} shows that the original energy function has a rather large energy barrier which strongly affects the communication between two modes of the distribution. In contrast, CSGLD samples from a modified energy function, which yields a flattened landscape and reduced energy barriers. For example, with $\zeta=0.75$, the energy barrier for this example is {\it greatly reduced from 12 to as small as 2}. Consequently, the local trap problem can be greatly alleviated. Regarding the bizarre 
peaks around $x=4$, we leave the study in the supplementary material.

 \begin{figure*}[htbp]
  \vspace{-0.03in}
    \centering
    \subfigure[Original v.s. trial energies]{
    \begin{minipage}[t]{0.3\linewidth}
    \centering
    \label{fig: 4a}
    \includegraphics[scale=0.24]{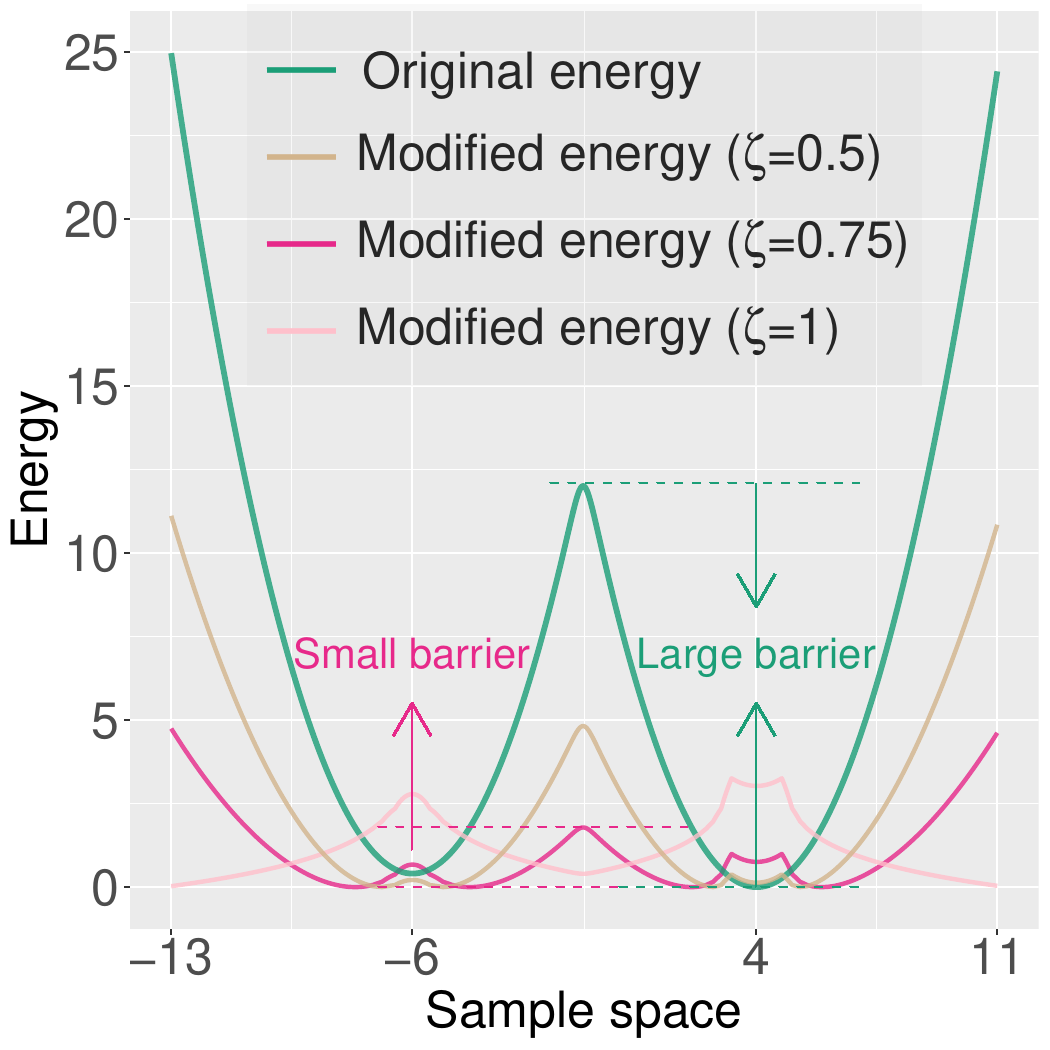}
    \end{minipage}%
    }%
    \subfigure[$\btheta$'s estimates and histograms]{
    \begin{minipage}[t]{0.3\linewidth}
    \centering
    \label{fig: 4b}
    \includegraphics[scale=0.24]{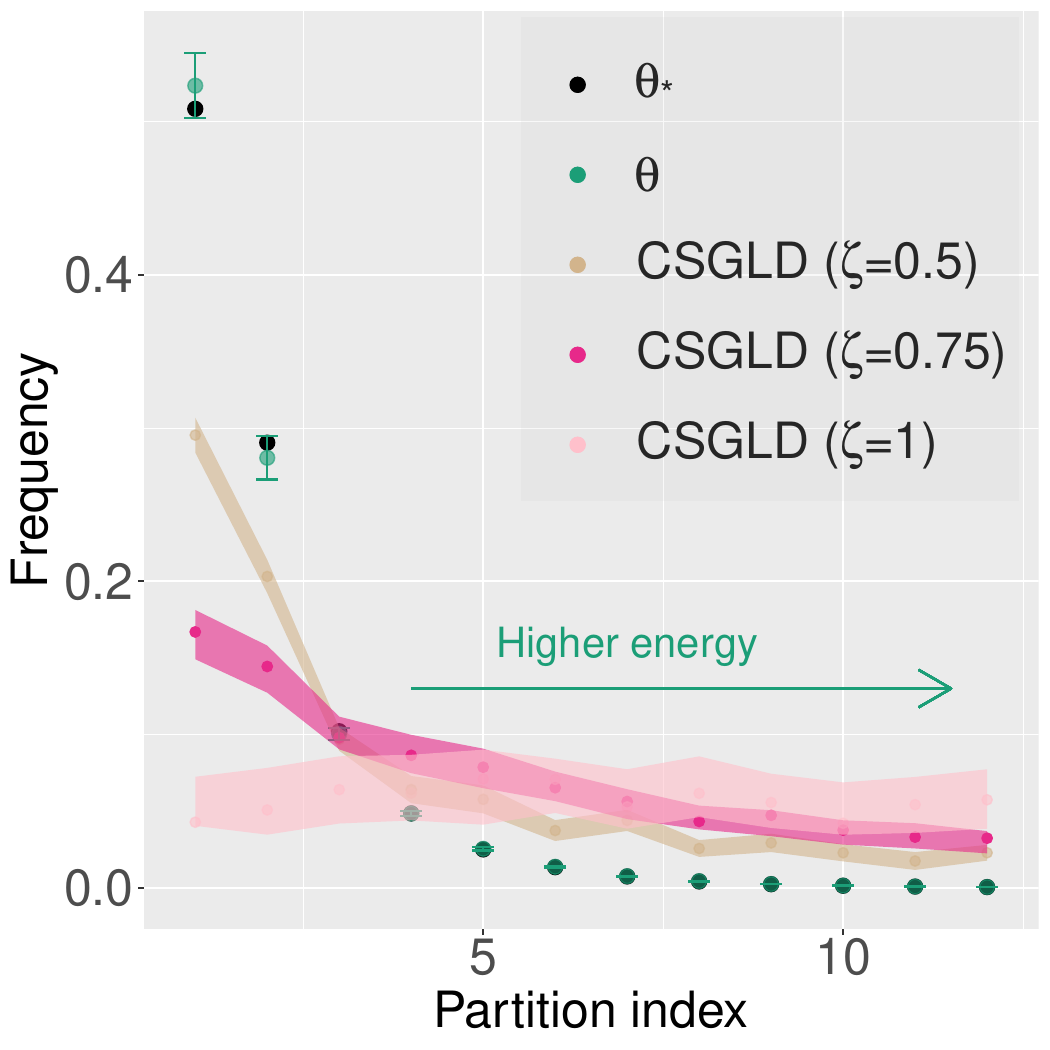}
    \end{minipage}%
    }%
    \subfigure[Estimation errors]{
    \begin{minipage}[t]{0.3\linewidth}
    \centering
    \label{fig: 4c}
    \includegraphics[scale=0.24]{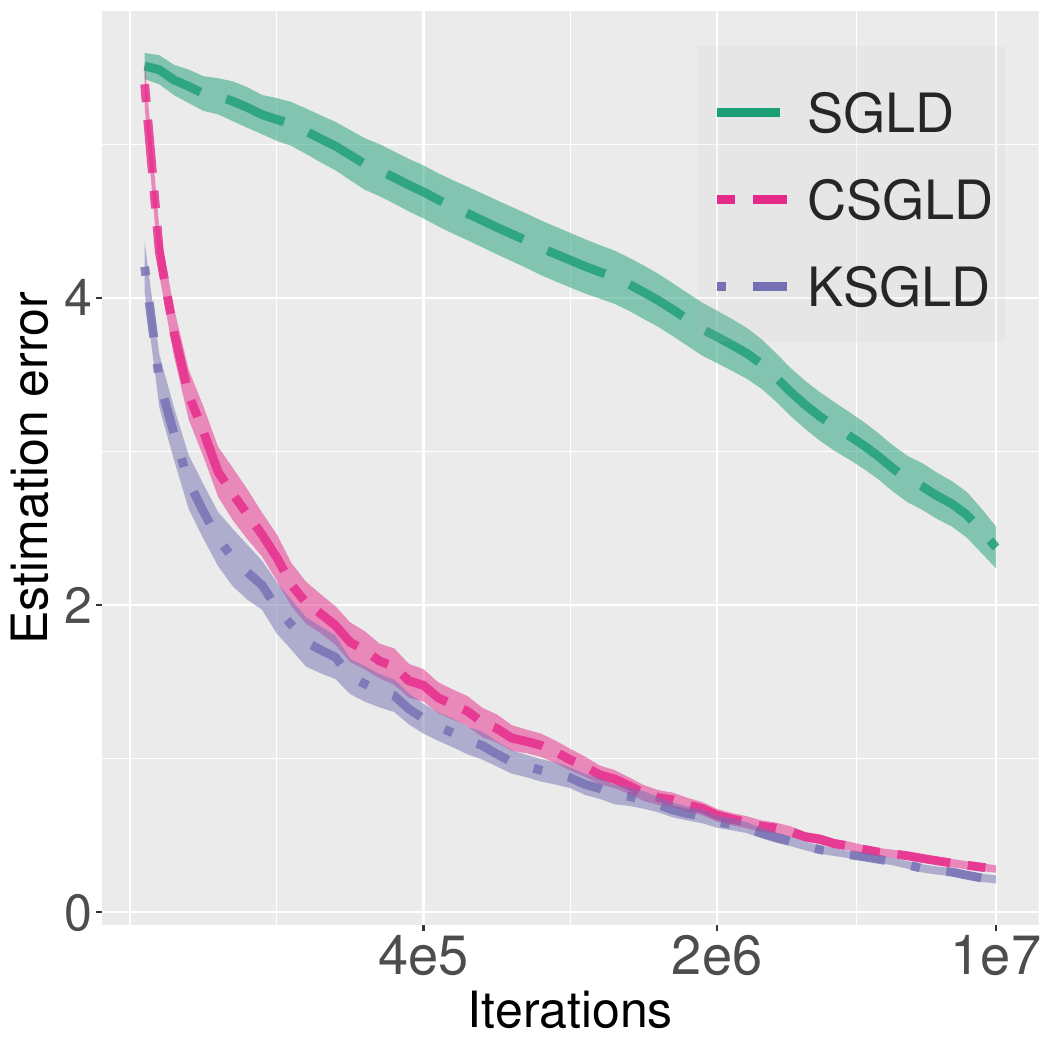}
    \end{minipage}%
    }%
  \vspace{-0.06in}
  \caption{Comparison between SGLD and CSGLD: figure (a) presents only the first 12 partitions for an illustrative purpose; KSGLD in  figure (c) is implemented by assuming $\btheta_{\star}$ is known.}
  \label{CSGLD_stats}
  \vspace{-0.17in}
\end{figure*}


Figure  \ref{fig: 4b} summarizes the estimates of $\btheta_{\star}$ with $\zeta=0.75$, which matches the ground truth value of $\btheta_{\star}$ very well. Notably, we see that $\theta_{\star}(i)$ decays exponentially fast as the partition index $i$ increases, which indicates the exponentially decreasing probability of visiting high energy regions and a severe local trap problem. CSGLD tackles this issue by adaptively updating the transition kernel or, equivalently, the invariant distribution such 
that the sampler moves like a ``random walk'' in the space of energy.  In particular, setting $\zeta=1$ leads to a flat histogram of energy (for the samples produced by CSGLD).

To explore the performance of CSGLD in quantity estimation with the weighed averaging estimator, 
we compare CSGLD ($\zeta=0.75$) with SGLD and KSGLD in estimating \textcolor{black}{the posterior mean $\int_{\MX} \bx \pi(\bx)d\bx$}, where KSGLD was implemented by assuming $\btheta_{\star}$ is known and sampling from $\varpi_{\Psi_{\btheta_{\star}}}$ directly. Each algorithm was run for 10 times, and we recorded the mean absolute 
estimation error along with iterations. As shown in Figure \ref{fig: 4c}, the estimation error of SGLD decays quite slow and rarely converges due to the high energy barrier. On the contrary, KSGLD converges much faster, which shows the advantage of sampling from a flattened distribution $\varpi_{\Psi_{\btheta_{\star}}}$. Admittedly, $\btheta_{\star}$ is unknown in practice. CSGLD instead adaptively updates its invariant distribution while optimizing the parameter ${\btheta}$ until \emph{an optimization-sampling equilibrium} is reached. In the early period of the run, CSGLD converges slightly slower than KSGLD, but soon it becomes as efficient as KSGLD.

Finally, we compare the sample path and learning rate for CSGLD and SGLD. As shown in Figure \ref{fig: 3a}, SGLD tends to be trapped in a deep local optimum for an exponentially long time. CSGLD, in contrast, possesses a {\it self-adjusting mechanism} for escaping from local traps. In the early period of a run, CSGLD might suffer 
from a similar local-trap problem as SGLD (see Figure \ref{fig: 3b}). In this case, the components of $\btheta$ corresponding to the current subregion will  increase very fast, eventually rendering \textcolor{black}{a smaller or even negative
gradient multiplier} which \emph{bounces the sampler back to high energy regions}. To illustrate the process, we plot a bouncy zone and an absorbing zone
in Figure \ref{fig: 3c}. The bouncy zone enables the sampler to ``jump'' over large energy barriers to explore other modes. As the run continues, $\btheta_k$ converges to $\btheta_{\star}$. Figure \ref{fig: 3d} shows that larger bouncy ``jumps'' (in red lines) can potentially be induced in the bouncy zone, which occurs in both local and global optima. Due to the {\it self-adjusting mechanism}, CSGLD has the local trap problem much alleviated.

\begin{figure*}[!ht]
\vspace{-1em}
  \centering
  \subfigure[SGLD paths]{\label{fig: 3a}\includegraphics[scale=0.18]{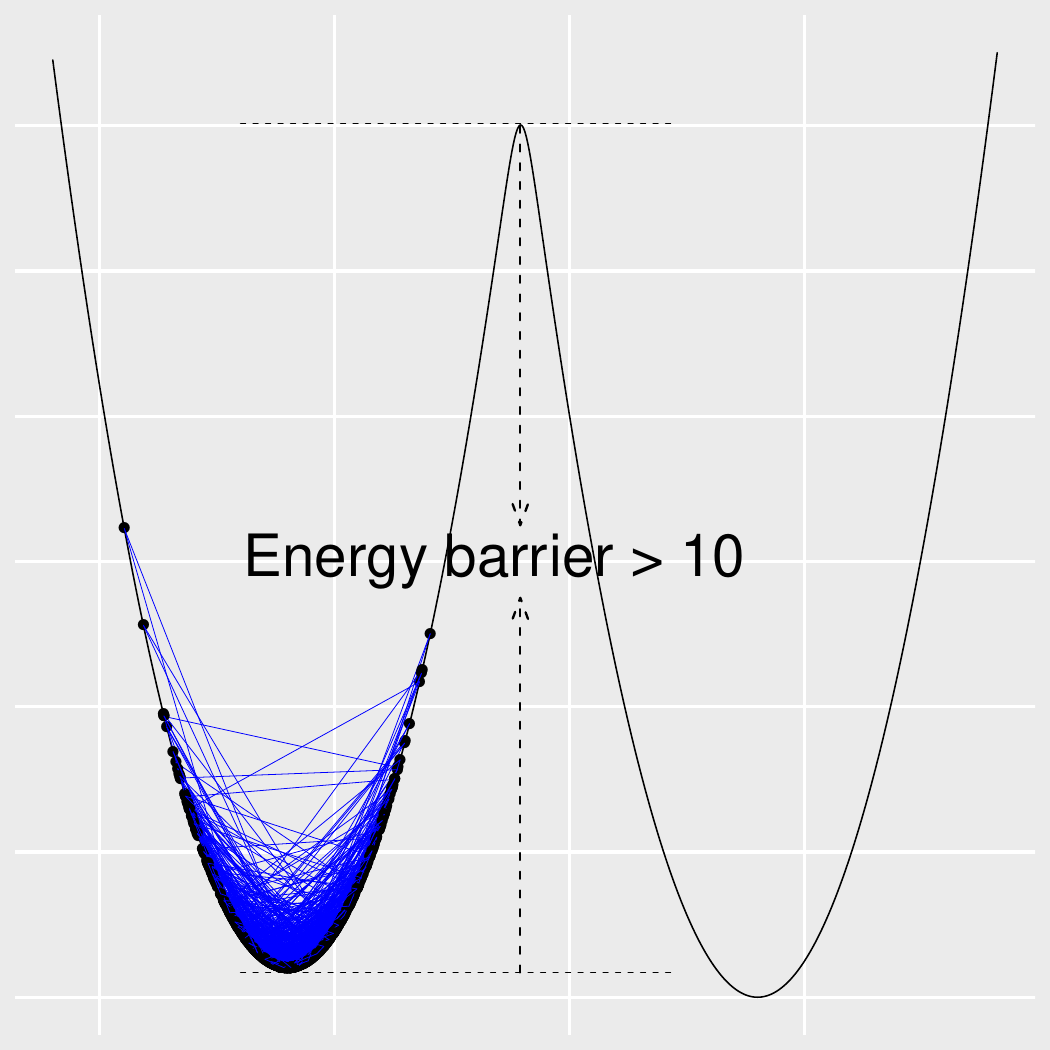}}\quad
  \subfigure[CSGLD paths (early) ]{\label{fig: 3b}\includegraphics[scale=0.18]{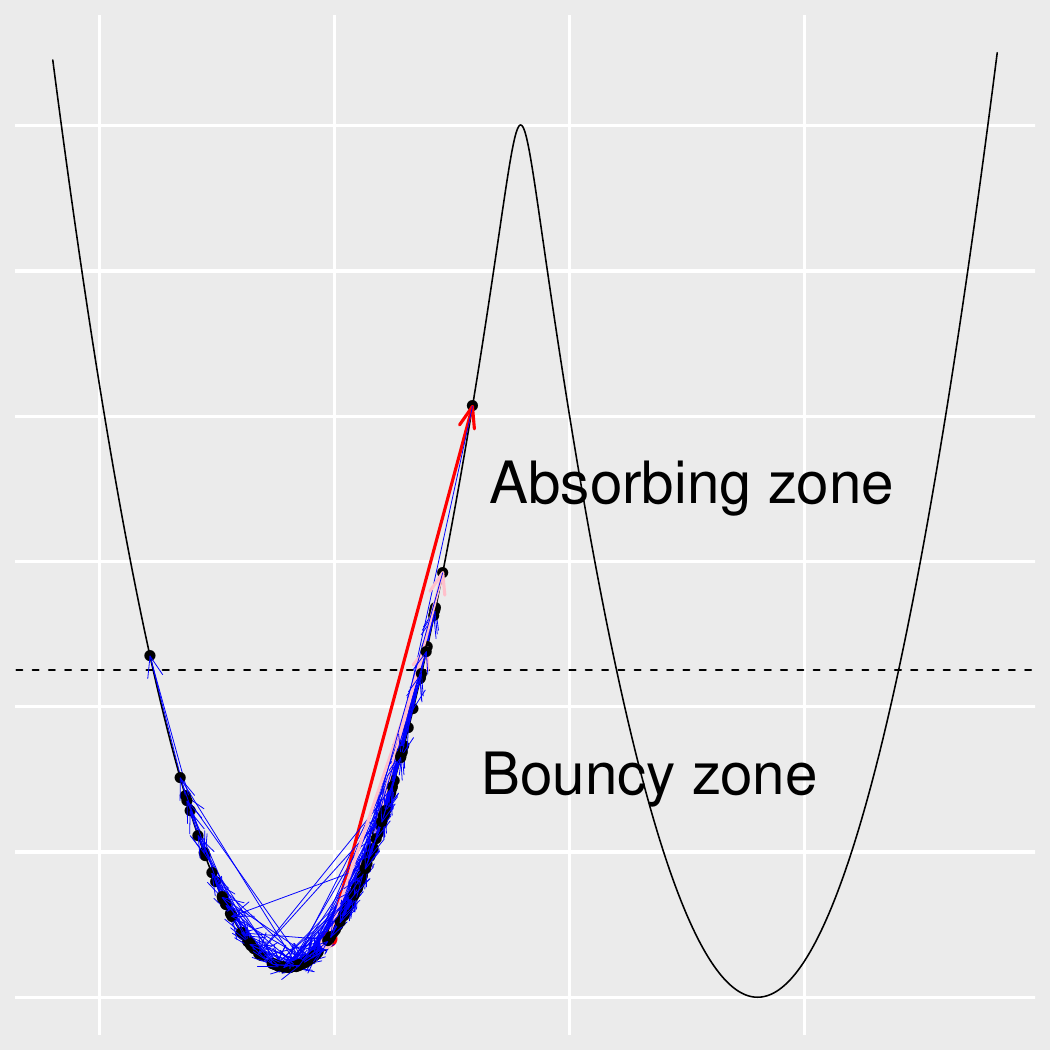}}\quad
  \subfigure[CSGLD paths (mid)]{\label{fig: 3c}\includegraphics[scale=0.18]{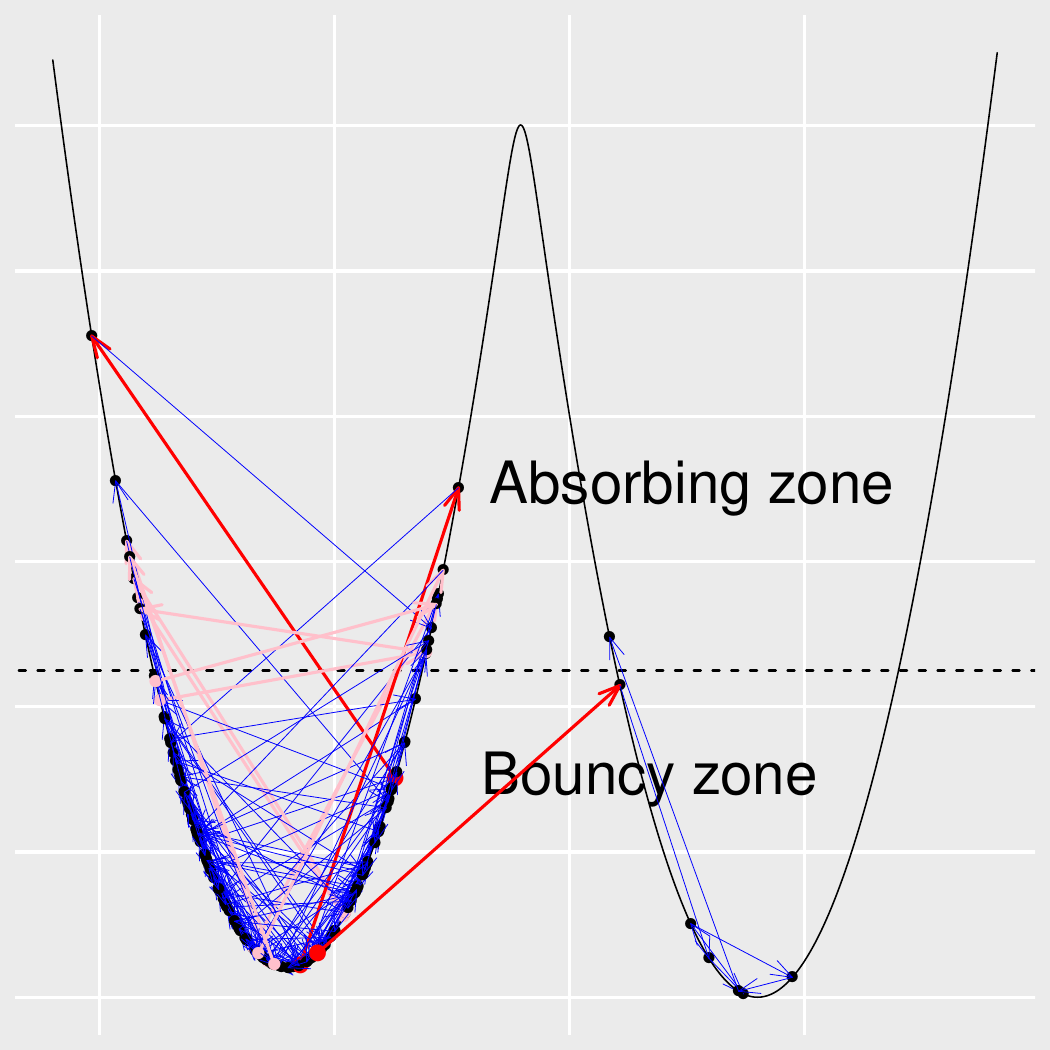}}\quad
  \subfigure[CSGLD paths (late)]{\label{fig: 3d}\includegraphics[scale=0.18]{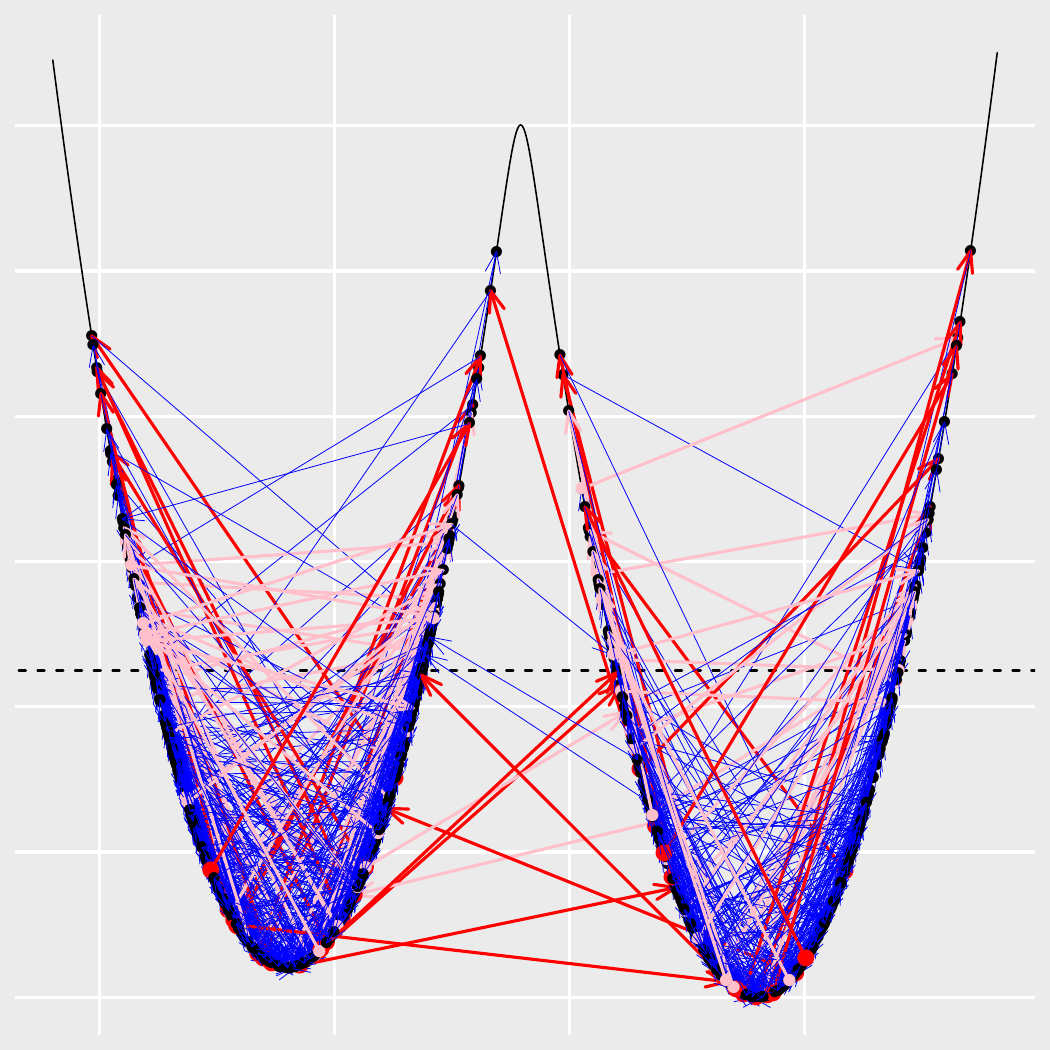}}
  \vspace{-0.6em}
  \caption{Sample trajectories of SGLD and CSGLD: figures (a) and (c) are implemented by 100,000 iterations with a thinning factor 100 and $\zeta=0.75$, while figure (b) utilizes a thinning factor 10.}
  \label{trajectory}
  \vspace{-1.1em}
\end{figure*}

\paragraph{A synthetic multi-modal distribution} We next simulate from a distribution $\pi(\bx)\propto e^{-U(\bx)}$, where $U(\bx)=\sum_{i=1}^2 \frac{x(i)^2-10\cos(1.2\pi x(i))}{3}$ and $\bx=(x(1), x(2))$. We compare CSGLD with SGLD, replica exchange SGLD (reSGLD) \citep{deng2020}, and SGLD with cyclic learning rates (cycSGLD) \citep{ruqi2020} and detail the setups in the supplementary material. Figure \ref{fig: msa} shows that the distribution contains nine important modes, where the center mode has the largest probability mass and the four modes on the corners have the smallest mass. We see in Figure \ref{fig: msb} that SGLD spends too much time in local regions and only identifies three modes. cycSGLD has a better ability to explore the distribution by leveraging large learning rates cyclically. However, as illustrated in Figure \ref{fig: msc}, such a mechanism is still not efficient enough to resolve the local trap issue for this problem. reSGLD proposes to include a high-temperature process to encourage exploration and allows interactions between the two processes via appropriate swaps. We observe in Figure \ref{fig: msd} that reSGLD obtains both the exploration and exploitation abilities and yields a much better result. However, the noisy energy estimator may hinder the swapping efficiency and it becomes difficult to estimate a few modes on the corners. As to our algorithm, CSGLD first simulates the importance samples and recovers the original distribution according to the importance weights. We notice that the samples from CSGLD can traverse freely in the parameter space and eventually achieve a remarkable performance, as shown in Figure \ref{fig: mse}.

\begin{figure*}[!ht]
\vspace{-0.5em}
  \centering
  \subfigure[Ground truth]{\label{fig: msa}\includegraphics[scale=0.24]{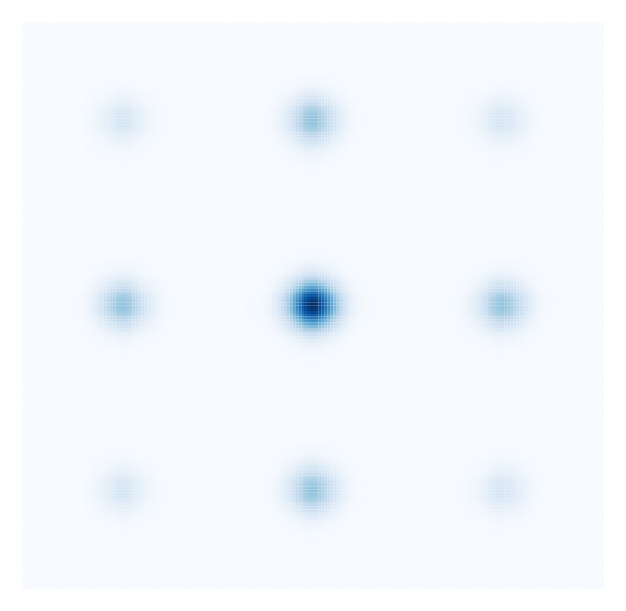}}\;\;
  \subfigure[SGLD]{\label{fig: msb}\includegraphics[scale=0.24]{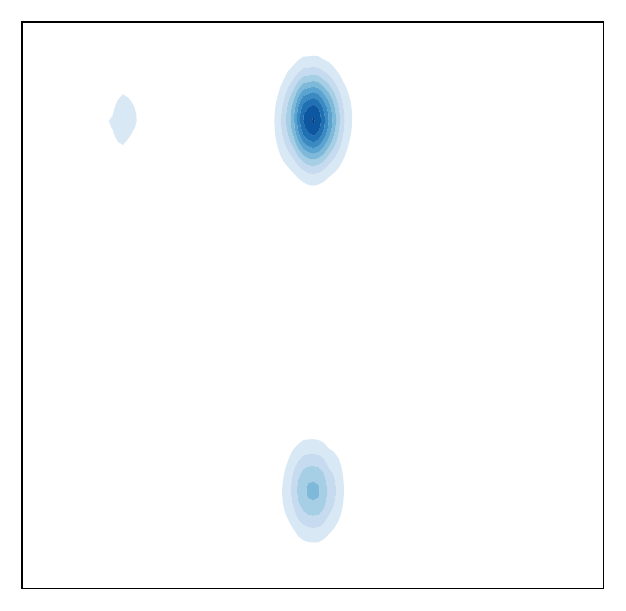}}\;\;
  \subfigure[cycSGLD]{\label{fig: msc}\includegraphics[scale=0.24]{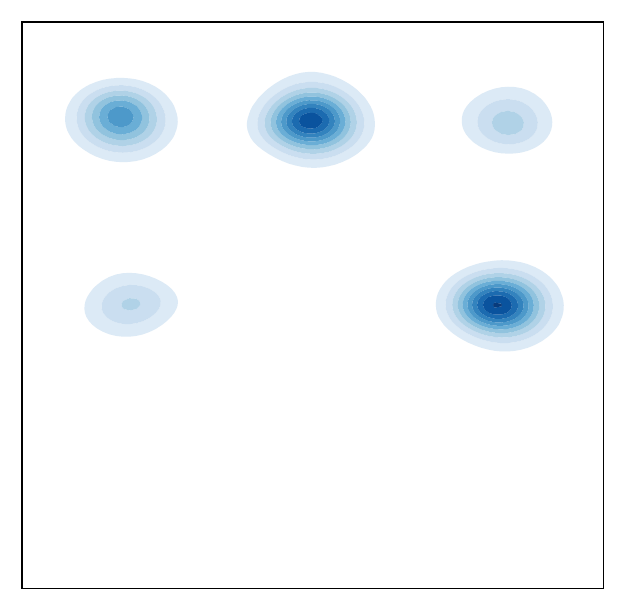}}\;\;
  \subfigure[reSGLD]{\label{fig: msd}\includegraphics[scale=0.24]{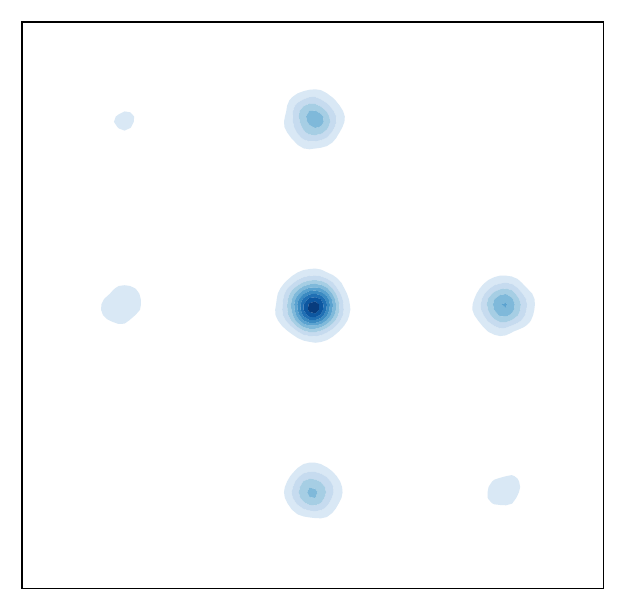}}\;\;
  \subfigure[CSGLD]{\label{fig: mse}\includegraphics[scale=0.24]{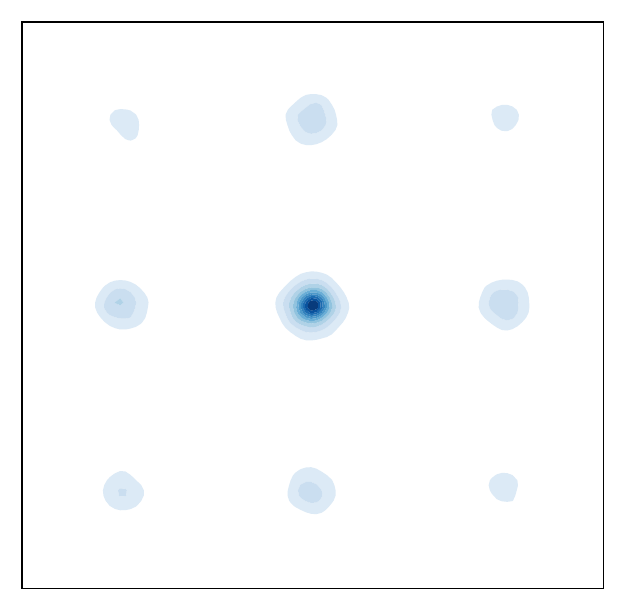}}
  \vspace{-0.09in}
  \caption{Simulations of a multi-modal distribution. A resampling scheme is used for CSGLD.}
  \label{multi-modal_simulations}
  \vspace{-1em}
\end{figure*}

\subsection{UCI data}
\label{reg_UCI}

We tested the performance of CSGLD on the \textbf{UCI} regression datasets. For each dataset, we normalized all features and randomly selected 10\% of the observations for testing. Following \citep{Jose_adam_15}, we modeled the data using a Multi-Layer Perception (MLP) with a single hidden layer of 50 hidden units. We set the mini-batch size $n=50$ and trained the model for 5,000 epochs. The learning rate was set to 5e-6 and the default $L_2$-regularization 
coefficient is 1e-4. For all the datasets, we used the stochastic energy $\frac{N}{n}\widetilde U(\bx)$ to evaluate the partition index. We set the energy bandwidth $\Delta u=100$. We fine-tuned the 
temperature $\tau$ and the hyperparameter $\zeta$. For a fair comparison, each algorithm was run 10 times with fixed seeds for each dataset. 
In each run, the performance of the algorithm was evaluated by averaging over 50 models, where the averaging estimator was used for SGD and SGLD and the weighted averaging estimator was used for CSGLD. As shown in Table~\ref{UCI}, SGLD outperforms the stochastic gradient descent (SGD) 
algorithm for most datasets 
due to the advantage of a sampling algorithm in obtaining more informative modes. Since all these datasets are small, there is only very limited potential for improvement. Nevertheless, CSGLD still consistently outperforms all the baselines including SGD and SGLD. 

The contour strategy proposed in the paper can be naturally extended to SGHMC \citep{Chen14, yian2015} without affecting the theoretical results. In what follows, we adopted a numerical method proposed by \citet{Saatci17} to avoid extra hyperparameter tuning. We set the momentum term to 0.9 and simply inherited all the other parameter settings 
used in the above experiments. In such a case, we compare the contour SGHMC (CSGHMC) with the baselines, including M-SGD (Momentum SGD) and SGHMC. The comparison indicates that some improvements can be achieved by including the momentum.

\begin{table}[!htb]
  \centering
  \vspace{-0.3em}
\caption{Algorithm evaluation using average root-mean-square error and its standard deviation.}
  \begin{tabular}{c|ccccccc}
    \toprule
    Dataset &   Energy &  Concrete &   Yacht  & Wine  \\
    Hyperparameters ($\tau/\zeta$) &   1/1  & 5/1  & 1/2.5 & 5/10 \\
    \midrule
     SGD  & 1.13$\pm$0.07  & 4.60$\pm$0.14 & 0.81$\pm$0.08 &  0.65$\pm$0.01 \\
    SGLD  & 1.08$\pm$0.07  & 4.12$\pm$0.10 & 0.72$\pm$0.07 & 0.63$\pm$0.01 \\
     CSGLD  & \textbf{1.02$\pm$0.06} & \textbf{3.98$\pm$0.11} & \textbf{0.69$\pm$0.06} &  \textbf{0.62$\pm$0.01} \\
      \midrule
      M-SGD & 0.95$\pm$0.07 & 4.32$\pm$0.27 & 0.73$\pm$0.08 &  0.71$\pm$0.02 \\
      SGHMC & 0.77$\pm$0.06 & 4.25$\pm$0.19 & \textbf{0.66$\pm$0.07} &  0.67$\pm$0.02 \\
     CSGHMC & \textbf{0.76$\pm$0.06} & \textbf{4.15$\pm$0.20} & 0.72$\pm$0.09 & \textbf{0.65$\pm$0.01} \\
    \bottomrule
  \end{tabular}
  \label{UCI}
  \vspace{-0.2in}
\end{table}

\subsection{Computer vision data} \label{cv_cifar}
\vspace{-0.05in}
This section compares only CSGHMC with M-SGD and SGHMC due to the popularity of momentum in accelerating computation for computer vision datasets. We keep partitioning the sample space according to the stochastic energy $\frac{N}{n} \widetilde U(\bx)$, where a mini-batch data of size $n$ is randomly chosen from the full dataset of size $N$ at each iteration. Notably, such a strategy significantly accelerates the computation of CSGHMC. As a result, CSGHMC has almost the same computational cost as SGHMC and SGD. To reduce the bias associated with the stochastic energy, we choose a large batch size $n=1,000$. For more discussions on the hyperparameter settings, we refer readers to section \ref{more_scalable_target} in the supplementary material.

{\bf CIFAR10} is a standard computer vision dataset with 10 classes and 60,000 images, for which 50,000 images were used for training and the rest for testing. We modeled the data using a Resnet of 20 layers (Resnet20) \citep{kaiming15}. In particular, for CSGHMC, we considered a partition of the energy space in 200 subregions, where the energy bandwidth was set to $\Delta {u}=1000$.  We trained the model for a total of 1000 epochs and evaluated the model every ten epochs based on two criteria, namely, best point estimate (BPE) and Bayesian model average (BMA). We repeated each experiment 10 times and reported in Table \ref{full_cifar} the average prediction accuracy and the standard deviation. 
  
In the first set of experiments, all the algorithms utilized a fixed learning rate $\epsilon=2e-7$ and a fixed temperature $\tau=0.01$ under the Bayesian setting.
SGHMC performs quite similarly to M-SGD, both obtaining around 90\% accuracy in BPE and 92\% in BMA. Notably, in this case, simulated annealing is not applied to any of the algorithms and achieving the state-of-the-art is quite difficult. However, BMA still consistently outperforms  BPE, implying the great potential of advanced MCMC techniques in deep learning. Instead of simulating from $\pi(\bx)$ directly, CSGHMC adaptively simulates from a flattened distribution $\varpi_{\btheta_{\star}}$ and adjusts the sampling bias by dynamic importance weights. As a result, the weighted averaging estimators obtain an improvement by as large as 0.8\% on BMA. In addition, the flattened distribution facilitates optimization and the increase in BPE is quite significant. 

In the second set of experiments, we employed a decaying schedule on both learning rates and temperatures (if applicable) to obtain simulated annealing effects. For the learning rate, we fix it at $2\times 10^{-6}$ in the first 400 epochs and then decayed it by a factor of $1.01$ at each epoch. For the temperature, we consistently decayed it by a factor of $1.01$ at each epoch. We call the resulting algorithms by saM-SGD, saSGHMC, and saCSGHMC, respectively. Table \ref{full_cifar} shows that the performances of all 
algorithms are increased quite significantly, where the fine-tuned baselines already obtained the state-of-the-art results. Nevertheless, saCSGHMC further improves BPE by 0.25\% and slightly improve the highly optimized BMA by nearly 0.1\%.

{\bf CIFAR100} dataset has 100 classes, each of which contains 500 training images and 100 testing images. We follow a similar setup as CIFAR10, except that $\Delta u$ is set to 5000. For M-SGD, BMA can be better than BPE by as large as 5.6\%. CSGHMC has led to an improvement of 3.5\% on BPE and 2\% on BMA, which further demonstrates the superiority of advanced MCMC techniques. Table \ref{full_cifar} also shows that with the help of both simulated annealing and importance sampling, saCSGHMC can outperform the highly optimized baselines by almost 1\% accuracy on BPE and 0.7\% on BMA. The significant improvements show the advantage of the proposed method in training DNNs.

\begin{table}[htbp]
\begin{center}
\caption{Experiments on CIFAR10 \& 100 using Resnet20, where BPE and BMA are short for best point estimate and Bayesian model average, respectively. 
}\label{full_cifar}
\begin{tabular}{ccccc}
\hline
\multirow{2}{4em}{Algorithms} & \multicolumn{2}{c}{CIFAR10} & \multicolumn{2}{c}{CIFAR100} \\
 & BPE & BMA & BPE & BMA \\ 
\hline
 M-SGD & 90.02$\pm$0.06  & 92.03$\pm$0.08  & 61.41$\pm$0.15  & 67.04$\pm$0.12  \\
SGHMC  & 90.01$\pm$0.07 & 91.98$\pm$0.05 & 61.46$\pm$0.14  & 66.43$\pm$0.11  \\
{CSGHMC}  & \textbf{90.87}$\pm$\textbf{0.04} & \textbf{92.85}$\pm$\textbf{0.05} & \textbf{63.97$\pm$0.21} & \textbf{68.94$\pm$0.23} \\
\hline
 saM-SGD & 93.83$\pm$0.07  & 94.25$\pm$0.04 & 69.18$\pm$0.13 & 71.83$\pm$0.12 \\
saSGHMC  & 93.80$\pm$0.06 & 94.24$\pm$0.06 & 69.24$\pm$0.11 & 71.98$\pm$0.10 \\
{saCSGHMC}  & \textbf{94.06$\pm$0.07} & 94.33$\pm$0.07 & \textbf{70.18$\pm$0.15} & \textbf{72.67$\pm$0.15} \\
\hline
\end{tabular}
\vspace{-0.3in}
\end{center}
\end{table}

\section{Conclusion}
We have proposed CSGLD as a general scalable Monte Carlo algorithm for both simulation and optimization tasks. 
CSGLD automatically adjusts the invariant distribution during simulations to facilitate escaping from local traps and traversing over the entire energy landscape. 
The sampling bias introduced thereby is accounted for by 
dynamic importance weights. 
We  proved a stability condition for the mean-field system induced by CSGLD together with the convergence of its self-adapting parameter $\btheta$ to a unique fixed point $\btheta_{\star}$. We established the convergence of a weighted averaging estimator for CSGLD. The bias of the estimator decreases as we employ a finer partition, a larger mini-batch size, and smaller learning rates and step sizes. We tested CSGLD and its variants on a few examples, which show their great potential in deep learning and big data computing. 

\section*{Broader Impact}

Our algorithm shows a potential to achieve free mode explorations in complex systems such as deep neural networks and greatly avoid the local trap problem. 
It is an extension of the flat histogram algorithms from the Metropolis kernel to the Langevin kernel and paves the way for future research in various dynamic importance samplers and adaptive biasing force (ABF) techniques for big data problems. The Bayesian community and the researchers in the area of Monte Carlo methods will enjoy the benefit of our work. 

\section*{Acknowledgment} 
Liang's research was supported in part by the grants DMS-2015498, R01-GM117597 and R01-GM126089. Lin acknowledges the support from NSF (DMS-1555072, DMS-1736364), BNL Subcontract 382247, W911NF-15-1-0562, and DE-SC0021142.



\bibliography{mybib}
\bibliographystyle{plainnat}


\appendix
\newpage
$\newline$
\appendix
\begin{large}
\begin{center}
    \textbf{Supplimentary Material for ``A Contour Stochastic Gradient Langevin Dynamics Algorithm for Simulations of Multi-modal Distributions''}
\end{center}
\end{large}

\renewcommand\thesection{\Alph{section}}
\def\qed{ \ \vrule width.2cm height.2cm depth0cm\smallskip}
\newcommand\myeq{\stackrel{\mathclap{\normalfont\mbox{A}}}{=}}

\newcommand{\la}{\langle}
\def \proof{{\noindent \bf Proof\quad}}

The supplementary material is organized as follows: Section \ref{review} provides a  review 
for the related methodologies, Section \ref{convergence} proves the stability condition and convergence of the self-adapting parameter, Section \ref{ergodicity} establishes the ergodicity of the contour stochastic gradient Langevin dynamics (CSGLD) algorithm,  and Section \ref{ext} provides more discussions for the algorithm. 

\section{Background on stochastic approximation and Poisson equation}
\label{review}

\subsection{Stochastic approximation}
Stochastic approximation \citep{Albert90} provides a standard framework for the development of adaptive algorithms. Given a random field function $\widetilde H(\bm{\btheta}, \bm{\bx})$, the goal of the stochastic approximation algorithm is to find the solution to the  mean-field equation $h(\btheta)=0$, i.e., solving
\begin{equation*}
\begin{split}
\label{sa00}
h(\btheta)&=\int_{\MX} \widetilde H(\bm{\theta}, \bm{\bx}) \varpi_{\bm{\theta}}(d\bm{\bx})=0,
\end{split}
\end{equation*}
where $\bx\in \MX \subset \mathbb{R}^d$, $\btheta\in\bTheta \subset \mathbb{R}^{m}$, $\widetilde H(\btheta,\bx)$ is a random field 
function and $\varpi_{\btheta}(\bx)$ is a distribution function of $\bx$ depending on the parameter $\btheta$. The stochastic approximation  algorithm works by repeating the following iterations
\begin{itemize}
\item[(1)] Draw $\bm{x}_{k+1}\sim\Pi_{\bm{\theta_{k}}}(\bm{x}_{k}, \cdot)$, where $\Pi_{\bm{\theta_{k}}}(\bm{x}_{k}, \cdot)$ is a transition kernel that admits $ \varpi_{\bm{\theta}_{k}}(\bm{x})$ as
the invariant distribution,

\item[(2)] Update $\bm{\theta}_{k+1}=\bm{\theta}_{k}+\omega_{k+1} \widetilde H(\bm{\theta}_{k}, \bm{x}_{k+1})+\omega_{k+1}^2 \rho(\bm{\theta}_{k}, \bm{x}_{k+1}),$
where $\rho(\cdot,\cdot)$ denotes a bias term. 
\end{itemize}

The algorithm differs from the Robbins–Monro algorithm \citep{RobbinsM1951} in that $\bx$ is simulated from a transition kernel $\Pi_{\bm{\theta_{k}}}(\cdot, \cdot)$ instead of the exact distribution $\varpi_{\bm{\theta}_{k}}(\cdot)$. As a result, a Markov state-dependent noise $\widetilde H(\btheta_k, \bx_{k+1})-h(\btheta_k)$ is generated, which requires some regularity conditions to control the fluctuation $\sum_k \Pi_{\btheta}^k (\widetilde H(\btheta, \bx)-h(\btheta))$. Moreover, it supports a more general form where a bounded bias term $\rho(\cdot,\cdot)$ is allowed without affecting the theoretical properties of the algorithm.

\subsection{Poisson equation}

Stochastic approximation generates a nonhomogeneous Markov chain $\{(\bx_k, \btheta_k)\}_{k=1}^{\infty}$, for which the convergence theory can be studied based on the Poisson equation 
\begin{equation*}
    \mu_{\btheta}(\bm{x})-\mathrm{\Pi}_{\bm{\theta}}\mu_{\bm{\theta}}(\bm{x})=\widetilde H(\bm{\theta}, \bm{x})-h(\bm{\theta}),
\end{equation*}
where $\Pi_{\bm{\theta}}(\bm{x}, A)$ is the transition kernel for any Borel subset $A\subset \MX$ and $\mu_{\btheta}(\cdot)$ is a function on $\MX$.
The solution to the Poisson equation exists when 
the following series converges:
\begin{equation*}
    \mu_{\btheta}(\bx):=\sum_{k\geq 0} \Pi_{\btheta}^k (\widetilde H(\btheta, \bx)-h(\btheta)).
\end{equation*}
That is, the consistency of the estimator $\btheta$ can be established by controlling the perturbations of $\sum_{k \geq 0} \Pi_{\btheta}^k (\widetilde H(\btheta, \bx)-h(\btheta))$ via imposing some regularity conditions on $\mu_{\btheta}(\cdot)$. Towards this goal, \citet{Albert90} gave 
the following regularity conditions on $\mu_{\btheta}(\cdot)$ to ensure the convergence of the adaptive algorithm:

{\it There exist a function  $V: \MX \to [1,\infty)$, and a constant $C$ such that for all $\bm{\theta}, \bm{\theta}'\in \bm{\bTheta}$,}
\begin{equation*}
\begin{split}
\|\mathrm{\Pi}_{\bm{\theta}}\mu_{\btheta}(\bx)\|&\leq C V(\bx),\quad
\|\mathrm{\Pi}_{\bm{\theta}}\mu_{\bm{\theta}}(\bx)-\mathrm{\Pi}_{\bm{\theta'}}\mu_{\bm{\theta'}}(\bx)\|\leq C\|\bm{\theta}-\bm{\theta}'\| V(\bx),  \quad 
\E[V(\bx)]\leq \infty,\\
\end{split}
\end{equation*}
which requires only the first order smoothness. In contrast, the ergodicity theory by \citet{mattingly10} and \citet{VollmerZW2016} 
relies on the much stronger 4th order smoothness.


\section{Stability and convergence analysis for CSGLD} \label{convergence}

\subsection{CSGLD algorithm} \label{Alg:app}

To make the theory more general, we slightly extend CSGLD by allowing a higher order bias term. The resulting algorithm works by iterating between the following two steps:
\begin{itemize}
\item[(1)] Sample $\bm{x}_{k+1}=\bx_k- \epsilon_k\nabla_{\bx} \widetilde L(\bx_k, \btheta_k)+\mathcal{N}({0, 2\epsilon_k \tau\bm{I}}), \ \ \ \ \ \ \ \ \ \ \ \ \ \ \ \ \ \ \ \ \ \ \ \ \ \ \ \ \ \ \ \ \ \ \ \ \ \ \ \ \ \ \ \ \ \ \ \ \ \ \ \ \ \ \ \ \ \ \ \ (\text{S}_1)$

\item[(2)] Update $\bm{\theta}_{k+1}=\bm{\theta}_{k}+\omega_{k+1} \widetilde H(\bm{\theta}_{k}, \bm{x}_{k+1})
+\omega_{k+1}^2 \rho(\bm{\theta}_{k}, \bm{x}_{k+1}),
\ \ \ \ \ \ \ \ \ \ \ \ \ \ \ \ \ \ \ \ \ \ \ \ \ \ \ \ \ \ \ \ \ \ \ \ \ \  (\text{S}_2)$
\end{itemize}
where $\epsilon_k$ is the learning rate, $\omega_{k+1}$ is the step size, $\nabla_{\bx} \widetilde L(\bx, \btheta)$ is the 
stochastic gradient given by 
\begin{equation}
    \nabla_{\bx} \widetilde{L}(\bx,\btheta)= \frac{N}{n} \left[1+ 
   \frac{\zeta\tau}{\Delta u}  \left(\textcolor{black}{\log \theta(J_{\widetilde U}(\bx))-\log\theta((J_{\widetilde U}(\bx)-1)\vee 1)} \right) \right]  
    \nabla_{\bx} \widetilde U(\bx),
\end{equation}
$\widetilde H(\btheta,\bx)=(\widetilde H_1(\btheta,\bx), \ldots, 
 \widetilde H_m(\btheta,\bx))$ is a random field function with
\begin{equation}
\label{def_tilde_H}
     \widetilde H_i(\btheta,\bx)={\theta}^{\zeta}(J_{\widetilde U}(\bx))\left(1_{i= J_{\widetilde U}(\bx)}-{\theta}(i)\right), \quad i=1,2,\ldots,m,
\end{equation}
for some constant $\zeta>0$, and $\rho(\btheta_k,\bx_{k+1})$ is a bias term.

\subsection{Convergence of parameter estimation} 
\label{App:convergence}

To establish the convergence of $\btheta_k$, we make the following assumptions:

\begin{assump}[Compactness] \label{ass2a} 
The space $\Theta$ is compact such that $\inf_{\Theta} \theta(i) >0$ for any  $i\in \{1,2,\ldots,m\}$; the perturbation term $\rho(\btheta, \bx)$ is also uniformly bounded for any $\bx \in \MX$ and $\btheta \in \bTheta$.
\end{assump}

To simplify the proof, we consider a slightly stronger assumption such that $\inf_{\Theta} \theta(i)>0$ holds for any $i \in \{1,2,\ldots,m\}$. To relax this assumption, we refer interested readers to \citet{Fort15} where the recurrence property was proved for the sequence $\{\btheta_k\}_{k\geq 1}$ of a similar algorithm. Such a property guarantees $\btheta_k$ to visit often 
enough to a desired compact space, rendering the convergence of the sequence. 

By Assumption \ref{ass2a} and Eq.(\ref{def_tilde_H}), it is easy to conclude that there exists a constant $Q>0$ such that for any $\btheta\in \bTheta$ and $\bx \in \MX$, 
\begin{equation}
\label{compactness}
     \|\btheta\|\leq Q, \quad  
     \|\widetilde H(\btheta, \bx)\|\leq Q, \quad 
     \|\rho(\btheta, \bx)\|\leq Q.
\end{equation}


\textcolor{black}{\begin{assump}[Smoothness]
\label{ass2}
$U(\bm{\xeta})$ is $M$-smooth; that is, there exists a constant $M>0$ such that for any $\bx, \bx'\in \MX$,
\begin{equation}
\label{ass_2_1_eq}
\begin{split}
\|\nabla_{\bx} U(\bx)-\nabla_{\bx} U(\bm{\bx}')\| & \leq M\|\bx-\bx'\|. \\
\end{split}
\end{equation}
\end{assump}}

Smoothness is a standard assumption in the study of convergence of SGLD, see e.g. \citet{Maxim17,Xu18}.

\begin{assump}[Dissipativity]
\label{ass3}
 There exist constants $\tilde{m}>0$ and $\tilde{b}\geq 0$ such that for any $\bx \in \MX$ and $\btheta \in \bTheta$, 
\label{ass_dissipative}
\begin{equation}
\label{eq:01}
\langle \nabla_{\bx} L(\bx, \btheta), \bx\rangle\leq \tilde{b}-\tilde{m}\|\bx\|^2.
\end{equation}
\end{assump}
This assumption ensures samples to move towards the origin regardless the initial point, 
which is standard in proving the geometric ergodicity of dynamical systems, see e.g. \citet{mattingly02, Maxim17, Xu18}.

\begin{assump}[Gradient noise] 
\label{ass4}
The stochastic gradient is unbiased, that is, 
\begin{equation*}
\E[\nabla_{\bx}\widetilde U(\bx_{k})-\nabla_{\bx} U(\bx_{k})]=0;
\end{equation*}
in addition, there exist some constants $M>0$ and 
$B>0$ such that
\begin{equation*} 
\E [ \|\nabla_{\bx}\widetilde U(\bx_{k})-\nabla_{\bx} U(\bx_{k})\|^2 ] \leq M^2 \|\bx\|^2+B^2,
\end{equation*}
where the expectation $\E[\cdot]$ is taken with respect to the distribution of the noise component in $\nabla_{\bx} \widetilde{U}(\bx)$.
\end{assump}

Lemma \ref{convex_appendix__} establishes a stability condition for CSGLD, which implies potential 
convergence of $\btheta_k$.

\begin{lemma}[Stability, restatement of Lemma \ref{convex_main}]
\label{convex_appendix__} 
Suppose that Assumptions  \ref{ass2a}-\ref{ass4}  hold. 
For any $\btheta \in \bTheta$, $\langle h(\btheta), \btheta - \btheta_{\star}\rangle \leq  -\phi\|\btheta - \btheta_{\star}\|^2+\mathcal{O}\left(\sup_{\bx} \Var(\xi_n(\bx))+\epsilon+\frac{1}{m}\right)$, where $\phi=\inf_{\btheta} Z_{\btheta}^{-1}>0$, $\theta_{\star}=(\int_{\MX_1}\pi(\bx)d\bx,\int_{\MX_2}\pi(\bx)d\bx,\ldots,\int_{\MX_m}\pi(\bx)d\bx)$, $\Var(\xi_n(\cdot))$ denotes the variance of the noise of the stochastic energy estimator $\xi_n(\cdot)$ of batch size $n$ and it 
decays to $0$ as $n\rightarrow N$.
\end{lemma}

\begin{proof}
 Let $\varpi_{\Psi_{\btheta}}(\bx)\propto\frac{\pi(\bx)}{\Psi^{\zeta}_{\btheta}(U(\bx))}$ denote a theoretical invariant measure of SGLD, where 
  $\Psi_{\btheta}(u)$ is a fixed
  piecewise continuous function given by 
\begin{equation}\label{new_design_appendix}
\Psi_{\btheta}(u)= \sum_{i=1}^m \left(\theta(i-1)e^{(\log\theta(i)-\log\theta(i-1)) \frac{u-u_{i-1}}{\Delta u}}\right) 1_{u_{i-1} < u \leq u_i},
\end{equation}
 the full data is used 
 in determining the indexes of subregions, and the learning rate converges to zero. 
 In addition, we 
 define a piece-wise constant function 
 \[
 \widetilde{\Psi}_{\btheta}=\sum_{i=1}^m \theta(i) 1_{u_{i-1} < u \leq u_{i}},
 \]
 and a theoretical measure 
 $\varpi_{\widetilde{\Psi}_{\btheta}}(\bx) \propto \frac{\pi(\bx)}{\theta^{\zeta}(J(\bx))}$. 
 Obviously, as the sample space partition becomes 
 fine and fine, i.e., $u_1 \to u_{\min}$, $u_{m-1}\to u_{\max}$ and $m \to \infty$, we have  
 $\|\widetilde{\Psi}_{\btheta}-\Psi_{\btheta}\|\to 0$ and $\| 
 \varpi_{\widetilde{\Psi}_{\btheta}}(\bx)- 
 \varpi_{\Psi_{\btheta}}(\bx) \|\to 0$, where 
 $u_{\min}$ and $u_{\max}$ denote the minimum and maximum of $U(\bx)$, respectively. 
 Without loss of generality, we assume $u_{\max}<\infty$. Otherwise, $u_{\max}$ can be set to a value such that $\pi(\{\bx: U(\bx)>u_{\max}\})$ is sufficiently small.


For each $i \in \{1,2,\ldots,m\}$, the random field $\widetilde H_i(\btheta,\bx)={\theta}^{\zeta}(J_{\widetilde U}(\bx))\left(1_{i\geq J_{\widetilde U}(\bx)}-{\theta}(i)\right)$ is a biased estimator of $ H_i(\btheta,\bx)={\theta}^{\zeta}( J(\bx))\left(1_{i\geq J(\bx)}-{\theta}(i)\right)$. By Lemma 4 of \citep{icsgld}, the bias caused by the mini-batch energy estimator follows $\E[\widetilde{H}(\btheta,\bx)-H(\btheta,\bx)]=O\left(\sup_{\bx} \Var(\xi_n(\bx))\right)$, where $\xi_n(\cdot)$ is the noisy energy estimator of batch size $n$ and $\Var(\cdot)$ denotes the variance.

First, let's compute the mean-field $h(\btheta)$ with respect to the empirical measure $\varpi_{\btheta}(\bx)$:
\begin{equation}
\small
\label{iiii}
\begin{split} 
        h_i(\btheta)&=\int_{\MX} \widetilde H_i(\btheta,\bx) 
         \varpi_{\btheta}(\bx) d\bx
         =\int_{\MX} H_i(\btheta,\bx) 
         \varpi_{\btheta}(\bx) d\bx+O\left(\sup_{\bx} \Var(\xi_n(\bx))\right)\\
         &=\ \int_{\MX} H_i(\btheta,\bx) \left( \underbrace{\varpi_{\widetilde{\Psi}_\btheta}(\bx)}_{\text{I}_1} \underbrace{-\varpi_{\widetilde{\Psi}_\btheta}(\bx)+\varpi_{\Psi_{\btheta}}(\bx)}_{\text{I}_2: \text{piece-wise approximation}}\underbrace{-\varpi_{\Psi_{\btheta}}(\bx)+\varpi_{\btheta}(\bx)}_{\text{I}_3: \text{discretization}}\right) d\bx+O\left(\sup_{\bx} \Var(\xi_n(\bx))\right).\\
\end{split}
\end{equation}

(i) For the term $\text{I}_1$, we have
\begin{equation}
\begin{split}
\label{i_1}
    \int_{\MX} H_i(\btheta,\bx) 
     \varpi_{\widetilde{\Psi}_\btheta}(\bx) d\bx&=\frac{1}{Z_{\btheta}} \int_{\MX} {\theta}^{\zeta}(J(\bx))\left(1_{i= J(\bx)}-{\theta}(i)\right) \frac{\pi(\bx)}{\theta^{\zeta}(J(\bx))} d\bx\\
    &=Z_{\btheta}^{-1}\left[\sum_{k=1}^m \int_{\MX_k} 
     \pi(\bx) 1_{k=i} d\bx -\theta(i)\sum_{k=1}^m\int_{\MX_k} \pi(\bx)d\bx \right] \\
    &=Z_{\btheta}^{-1} \left[\theta_{\star}(i)-\theta(i)\right],
\end{split}
\end{equation}
where $Z_{\btheta}=\sum_{i=1}^m \frac{\int_{\MX_i} \pi(\bx)d\bx}{\theta(i)^{\zeta}}$ denotes the normalizing constant 
of $\varpi_{\widetilde{\Psi}_\btheta}(\bx)$.

(ii) As to the integral $\text{I}_2$. By Lemma \ref{partition_order} and the boundedness of $H(\btheta,\bx)$, we have 
\begin{equation} \label{biasI2}
\int_{\MX} H_i(\btheta,\bx) (-\varpi_{\widetilde{\Psi}_{\btheta}}(\bx)+\varpi_{\Psi_{\btheta}}(\bx)) d\bx= \mathcal{O}\left(\frac{1}{m}\right).
\end{equation}
(iii) Regarding the term $\text{I}_3$, we have for any fixed $\btheta$,
\begin{equation}\label{iiii_2}
    \int_{\MX} H_i(\btheta,\bx) \left(-\varpi_{\Psi_{\btheta}}(\bx)+\varpi_{\btheta}(\bx)\right) d\bx=\mathcal{O}(\epsilon),
\end{equation}
where  the order of $\mathcal{O}(\epsilon)$ follows from Theorem 6 of \citet{Sato2014ApproximationAO}.

Plugging (\ref{i_1}), (\ref{biasI2}) and  \textcolor{black}{(\ref{iiii_2})} into (\ref{iiii}), we have
\begin{equation}\label{h_i_theta}
     h_i(\btheta)=Z_{\btheta}^{-1} \left[\varepsilon\beta_i(\btheta)+\theta_{\star}(i)-\theta(i)\right],
\end{equation}
where $\varepsilon=\mathcal{O}\left(\sup_{\bx} \Var(\xi_n(\bx))+\epsilon+\frac{1}{m}\right)$ and $\beta_i(\btheta)$ is a bounded term such that $Z_{\btheta}^{-1}\varepsilon\beta_i(\btheta)=\mathcal{O}\left(\sup_{\bx} \Var(\xi_n(\bx))+\epsilon+\frac{1}{m}\right)$.

To solve the ODE system with small disturbances, we consider standard techniques in perturbation theory.
According to the fundamental theorem of
perturbation theory \citep{Eric}, we can obtain the solution to 
the mean field equation $h(\btheta)=0$: 
\begin{equation}
    \theta(i)=\theta_{\star}(i)+\varepsilon\beta_i(\btheta_{\star}) +\mathcal{O}(\varepsilon^2), \quad i=1,2,\ldots,m,
\end{equation}
which is a stable point in a small neighbourhood of $\btheta_{\star}$.

Considering the positive definite function $\mathbb{V}(\btheta)=\frac{1}{2}\| \btheta_{\star}-\btheta\|^2$ for the mean-field system $h(\btheta)=Z_{\btheta}^{-1} (\varepsilon\beta_i(\btheta)+\btheta_{\star}-\btheta)=Z_{\btheta}^{-1} (\btheta_{\star}-\btheta)+\mathcal{O}(\varepsilon)$, we have
\begin{equation*}
\begin{split}
    \langle h(\btheta), \mathbb{V}(\btheta)\rangle&=\langle h(\btheta), \btheta - \btheta_{\star}\rangle \\
    &= -Z_{\btheta}^{-1}\|\btheta - \btheta_{\star}\|^2+\mathcal{O}(\varepsilon)\\
    &\leq -\phi\|\btheta - \btheta_{\star}\|^2+\mathcal{O}\left(\sup_{\bx} \Var(\xi_n(\bx))+\epsilon+\frac{1}{m}\right),
\end{split}
\end{equation*}
where $\phi=\inf_{\btheta} Z_{\btheta}^{-1}>0$ by
the compactness assumption \ref{ass2a}. This concludes the proof.
\end{proof}



\textcolor{black}{The following is a restatement of Lemma 3.2 \citep{Maxim17}, which holds for any $\btheta$ in the compact space $\bTheta$. Similar results have been shown in \citet{deng2019}.
\begin{lemma}[Uniform $L^2$ bounds]
\label{lemma:1}
Suppose Assumptions \ref{ass2a}, \ref{ass3} and \ref{ass4} hold.  Given a small enough learning rate, then 
$\sup_{k\geq 1} \E[\|\bm{\xeta}_{k}\|^2] < \infty$.
\end{lemma}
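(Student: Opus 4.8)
The plan is to derive a one-step recursion for $a_k:=\E[\|\bx_k\|^2]$ of the form $a_{k+1}\le(1-c\,\epsilon_k)\,a_k+C\,\epsilon_k$, with constants $c\in(0,1)$ and $C>0$ independent of $k$, and then conclude by an elementary induction. Conditioning on $\mathcal{F}_k:=\sigma(\bx_1,\dots,\bx_k,\btheta_1,\dots,\btheta_k)$ and using $(\mathrm{S}_1)$, the injected Gaussian increment $\mathcal{N}(0,2\epsilon_k\tau\bm{I})$ is independent of $\mathcal{F}_k$ and of the mini-batch, so its cross term vanishes in expectation and
\begin{equation*}
\begin{split}
\E\!\left[\|\bx_{k+1}\|^2\,\middle|\,\mathcal{F}_k\right]
=\ &\|\bx_k\|^2-2\epsilon_k\big\langle\bx_k,\ \E[\nabla_{\bx}\widetilde L(\bx_k,\btheta_k)\mid\mathcal{F}_k]\big\rangle\\
&+\epsilon_k^2\,\E\!\left[\|\nabla_{\bx}\widetilde L(\bx_k,\btheta_k)\|^2\,\middle|\,\mathcal{F}_k\right]+2\epsilon_k\tau d .
\end{split}
\end{equation*}
Thus it suffices to control the drift inner product and the conditional second moment of the stochastic gradient.

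For the second moment: by the compactness Assumption \ref{ass2a}, $\btheta$ lies in a compact set with $\inf_{\bTheta}\theta(i)>0$, so every $|\log\theta(i)|$ is uniformly bounded and the stochastic-gradient multiplier $\frac{N}{n}\big|1+\frac{\zeta\tau}{\Delta u}(\log\theta(\tilde J(\bx))-\log\theta((\tilde J(\bx)-1)\vee1))\big|$ is bounded by a constant $\Gamma$; hence $\|\nabla_{\bx}\widetilde L(\bx,\btheta)\|\le\Gamma\,\|\nabla_{\bx}\widetilde U(\bx)\|$. Combining the gradient-noise bound of Assumption \ref{ass4} with the linear growth $\|\nabla_{\bx}U(\bx)\|\le M\|\bx\|+\|\nabla_{\bx}U(\vzero)\|$ (which follows from Assumption \ref{ass2}, the $M$-smoothness of $U$) yields $\E[\|\nabla_{\bx}\widetilde L(\bx_k,\btheta_k)\|^2\mid\mathcal{F}_k]\le A_1\|\bx_k\|^2+A_2$, so the $\epsilon_k^2$ term is at most $\epsilon_k^2(A_1\|\bx_k\|^2+A_2)$.

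For the drift: write $\nabla_{\bx}\widetilde L(\bx_k,\btheta_k)=\nabla_{\bx}L(\bx_k,\btheta_k)+r_k$, where $r_k$ collects the discrepancy from replacing the full gradient/energy and the exact index by their mini-batch counterparts. The gradient-noise part of $r_k$ has zero conditional mean (Assumption \ref{ass4}), and by Assumption \ref{ass2a} the biased-index part has conditional mean of small norm, of order $\sqrt{\delta_n}\,(\|\bx_k\|^2+1)^{1/2}$ with $\delta_n\to0$ as $n\to N$. The dissipativity Assumption \ref{ass3} controls the main part, $-2\epsilon_k\langle\bx_k,\nabla_{\bx}L(\bx_k,\btheta_k)\rangle\le-2\tilde m\,\epsilon_k\|\bx_k\|^2+2\tilde b\,\epsilon_k$, while Cauchy--Schwarz and Young's inequality absorb the $r_k$ contribution into $2\eta'\epsilon_k\|\bx_k\|^2+\mathcal{O}(\epsilon_k)$, where $\eta'$ can be kept strictly below $\tilde m$ (a sufficiently large batch keeps $\sqrt{\delta_n}$ small). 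Assembling the three pieces,
\begin{equation*}
a_{k+1}\le\big(1-2(\tilde m-\eta')\,\epsilon_k+A_1\epsilon_k^2\big)\,a_k+C\,\epsilon_k ,
\end{equation*}
and for a learning rate small enough that $A_1\epsilon_k\le\tilde m-\eta'$ the bracket is at most $1-(\tilde m-\eta')\epsilon_k=:1-c\,\epsilon_k$ with $c>0$. A one-line induction then gives $a_k\le\max\{a_1,\,C/c\}$ for all $k$, hence $\sup_{k\ge1}\E[\|\bx_k\|^2]<\infty$.

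The main obstacle is the drift estimate: unlike plain SGLD, CSGLD multiplies the stochastic gradient by a state-dependent factor that can even change sign, so one must check this factor cannot destroy the dissipative contraction. This is exactly what the uniform two-sided control of $\btheta$ in Assumption \ref{ass2a} (which keeps the multiplier bounded) together with the dissipativity of $L$ in Assumption \ref{ass3} provide, the mini-batch index bias being handled by the same $\delta_n$ argument used elsewhere in the paper; the remaining steps are the routine SGLD moment computations.
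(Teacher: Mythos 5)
Your recursion argument is correct in outline, but it is worth knowing that the paper does not actually prove this lemma at all: it is dispatched in one line as ``a restatement of Lemma 1 of Deng et al.\ (2019)'', valid uniformly over the compact set $\bTheta$. What you have written is essentially the standard Raginsky-style moment bound that underlies that cited result, adapted to CSGLD: expand $\|\bx_{k+1}\|^2$, kill the Gaussian cross term, bound the second moment of the drift by $A_1\|\bx_k\|^2+A_2$ using the boundedness of the contour multiplier (which follows from $\inf_{\bTheta}\theta(i)>0$ in Assumption~\ref{ass2a}), extract contraction from dissipativity, and close with a discrete Gronwall/induction step. This buys a self-contained proof where the paper offers only a pointer, and it makes explicit exactly which feature of CSGLD could break the usual SGLD argument --- the state-dependent, possibly sign-changing gradient multiplier --- and why compactness of $\bTheta$ neutralizes it.

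Two caveats. First, your linear-growth bound $\|\nabla_{\bx}U(\bx)\|\le M\|\bx\|+\|\nabla_{\bx}U(\vzero)\|$ invokes the smoothness Assumption~\ref{ass2}, which is not among the hypotheses listed in the lemma (only \ref{ass2a}, \ref{ass3}, \ref{ass4} are); this is almost certainly an omission in the paper's statement rather than a flaw in your proof, since without it the $\epsilon_k^2$ term cannot be controlled, but you should flag that you are using it. Second, your claim that ``the gradient-noise part of $r_k$ has zero conditional mean'' is not exact: the multiplier depends on $\tilde J(\bx_k)$, which is computed from the \emph{same} mini-batch as $\nabla_{\bx}\widetilde U(\bx_k)$, so the product of the multiplier and the noise need not be conditionally centered. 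This correlation has to be folded into the same $\delta_n$-type bias term you use for the index, or avoided altogether by reading Assumption~\ref{ass3} as dissipativity of the actual drift $\E[\nabla_{\bx}\widetilde L(\bx,\btheta)\mid\bx]$ uniformly in $\btheta$ (which is how the paper implicitly uses it, since $L(\bx,\btheta)$ is never defined more precisely). With either repair the recursion $a_{k+1}\le(1-c\epsilon_k)a_k+C\epsilon_k$ holds and the conclusion follows as you state.
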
}

The ensure the perturbations of the stochastic approximation process decays sufficiently fast, we present the following result on the regularity properties.
\begin{lemma}[Solution of Poisson equation]
\label{lyapunov}
Suppose that Assumptions  \ref{ass2a}-\ref{ass4}  hold. 
There is a solution $\mu_{\btheta}(\cdot)$ on $\MX$ to the Poisson equation 
\begin{equation}
    \label{poisson_eqn}
    \mu_{\btheta}(\bm{x})-\mathrm{\Pi}_{\bm{\theta}}\mu_{\bm{\theta}}(\bm{x})=\widetilde H(\bm{\theta}, \bm{x})-h(\bm{\theta}).
\end{equation}
In addition, for all $\bm{\theta}, \bm{\theta}'\in \bm{\bTheta}$, 
there exists a constant $C$ such that
\begin{equation}
\begin{split}
\label{poisson_reg}
\E[\|\mathrm{\Pi}_{\bm{\theta}}\mu_{\btheta}(\bx)\|]&\leq C,\\
\E[\|\mathrm{\Pi}_{\bm{\theta}}\mu_{\bm{\theta}}(\bx)-\mathrm{\Pi}_{\bm{\theta}'}\mu_{\bm{\theta'}}(\bx)\|]&\leq C\|\bm{\theta}-\bm{\theta}'\|.\\
\end{split}
\end{equation}
\end{lemma}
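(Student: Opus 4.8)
The plan is to construct $\mu_{\btheta}$ explicitly as the resolvent series
$\mu_{\btheta}(\bx)=\sum_{k\ge 0}\mathrm{\Pi}_{\btheta}^{k}\big(\widetilde H(\btheta,\bx)-h(\btheta)\big)$
and to show this series converges geometrically, uniformly over $\btheta\in\bTheta$. The backbone is a uniform-in-$\btheta$ geometric ergodicity estimate for the SGLD transition kernel $\mathrm{\Pi}_{\btheta}$ associated with step $(\text{S}_1)$. To obtain it I would take the Lyapunov function $V(\bx)=1+\|\bx\|^{2}$ and verify the two ingredients of a Harris-type theorem: (i) a one-step drift condition $\mathrm{\Pi}_{\btheta}V\le \lambda V+b$ with $\lambda\in(0,1)$ and $b<\infty$ \emph{both independent of $\btheta$} — this follows from the dissipativity Assumption~\ref{ass3} together with the smoothness Assumption~\ref{ass2} and the gradient-noise bound Assumption~\ref{ass4} exactly as in the standard one-step analysis behind Lemma~\ref{lemma:1}, the bounded drift multiplier $1+\tfrac{\zeta\tau}{\Delta u}(\log\theta(\tilde J(\bx))-\log\theta((\tilde J(\bx)-1)\vee 1))$ being uniformly controlled because $\inf_{\Theta}\theta(i)>0$ (Assumption~\ref{ass2a}); and (ii) a minorization condition on level sets $\{V\le R\}$, which holds since the Gaussian increment $\mathcal N(0,2\epsilon\tau \mathbf I)$ yields a strictly positive transition density bounded below on compacts, uniformly in $\btheta$ by the same boundedness of the drift. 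These two conditions furnish a common $\rho\in(0,1)$ and constant $C$ with $\|\mathrm{\Pi}_{\btheta}^{k}g-\varpi_{\btheta}(g)\|_{V}\le C\rho^{k}\|g\|_{V}$ for every $\btheta$, where $\|g\|_{V}:=\sup_{\bx}\|g(\bx)\|/V(\bx)$ and $\varpi_{\btheta}$ is the invariant measure of $\mathrm{\Pi}_{\btheta}$.

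Given this, existence of $\mu_{\btheta}$ and the first bound are routine. Since $\widetilde H(\btheta,\cdot)$ is bounded by $Q$ (Assumption~\ref{ass2a}) and $h(\btheta)=\varpi_{\btheta}(\widetilde H(\btheta,\cdot))$ is therefore also bounded, the centered field $g_{\btheta}:=\widetilde H(\btheta,\cdot)-h(\btheta)$ satisfies $\varpi_{\btheta}(g_{\btheta})=0$ and $\|g_{\btheta}\|_{V}<\infty$, so $\|\mathrm{\Pi}_{\btheta}^{k}g_{\btheta}\|_{V}\le C\rho^{k}\|g_{\btheta}\|_{V}$ and the series for $\mu_{\btheta}$ converges in $\|\cdot\|_{V}$, giving $\|\mu_{\btheta}(\bx)\|\le C'V(\bx)$ with $C'$ independent of $\btheta$; it clearly solves \eqref{poisson_eqn}. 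Applying $\mathrm{\Pi}_{\btheta}$ once and using $\mathrm{\Pi}_{\btheta}V\le\lambda V+b$ yields $\|\mathrm{\Pi}_{\btheta}\mu_{\btheta}(\bx)\|\le C''V(\bx)$, and then $\E[\|\mathrm{\Pi}_{\btheta}\mu_{\btheta}(\bx)\|]\le C''\sup_{k}\E[V(\bx_{k})]<\infty$ by the uniform $L^{2}$ bound of Lemma~\ref{lemma:1}.

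For the Lipschitz-in-$\btheta$ estimate I would use a telescoping/resolvent identity. Write $\mathrm{\Pi}_{\btheta}\mu_{\btheta}-\mathrm{\Pi}_{\btheta'}\mu_{\btheta'}=\sum_{k\ge 1}\big(\mathrm{\Pi}_{\btheta}^{k}g_{\btheta}-\mathrm{\Pi}_{\btheta'}^{k}g_{\btheta'}\big)$ and split each summand as $\mathrm{\Pi}_{\btheta}^{k}(g_{\btheta}-g_{\btheta'})+(\mathrm{\Pi}_{\btheta}^{k}-\mathrm{\Pi}_{\btheta'}^{k})g_{\btheta'}$. The first piece is handled by Lipschitz continuity of $\btheta\mapsto g_{\btheta}$: $\widetilde H(\btheta,\bx)$ is Lipschitz in $\btheta$ because $\theta\mapsto\theta^{\zeta}(\cdot)$ and $\theta\mapsto\theta(i)$ are Lipschitz on $\bTheta$ (using $\inf_{\Theta}\theta(i)>0$), and $h$ inherits this; geometric ergodicity then sums $\sum_{k}C\rho^{k}$ to a constant times $\|\btheta-\btheta'\|$. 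The second piece uses the one-step bound $\|\mathrm{\Pi}_{\btheta}g-\mathrm{\Pi}_{\btheta'}g\|_{V}\le C\|\btheta-\btheta'\|\,\|g\|_{V}$ — valid because the sole $\btheta$-dependence of $\mathrm{\Pi}_{\btheta}$ is through the above Lipschitz drift multiplier — together with $\mathrm{\Pi}_{\btheta}^{k}-\mathrm{\Pi}_{\btheta'}^{k}=\sum_{j=0}^{k-1}\mathrm{\Pi}_{\btheta}^{j}(\mathrm{\Pi}_{\btheta}-\mathrm{\Pi}_{\btheta'})\mathrm{\Pi}_{\btheta'}^{k-1-j}$; inserting the geometric contraction on the outer factors gives a bound of order $k\rho^{k}\|\btheta-\btheta'\|$, and $\sum_{k}k\rho^{k}<\infty$. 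Taking expectations and invoking Lemma~\ref{lemma:1} again converts the $V(\bx)$ factors into constants, yielding $\E[\|\mathrm{\Pi}_{\btheta}\mu_{\btheta}(\bx)-\mathrm{\Pi}_{\btheta'}\mu_{\btheta'}(\bx)\|]\le C\|\btheta-\btheta'\|$.

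The main obstacle I anticipate is the uniform-in-$\btheta$ geometric ergodicity of $\mathrm{\Pi}_{\btheta}$ — in particular checking that the drift and minorization constants can be taken independent of $\btheta$ (the key is that $\inf_{\Theta}\theta(i)>0$ keeps the drift coefficient uniformly bounded), and the compatible one-step Lipschitz estimate $\|\mathrm{\Pi}_{\btheta}g-\mathrm{\Pi}_{\btheta'}g\|_{V}\le C\|\btheta-\btheta'\|\,\|g\|_{V}$, so that the telescoped series $\sum_{k}k\rho^{k}$ closes. The remaining manipulations are standard resolvent bookkeeping.
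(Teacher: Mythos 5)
Your proposal is correct in outline, but it takes a genuinely different and far more self-contained route than the paper. The paper's entire proof of this lemma is a one-line citation: it asserts that the conditions of Theorem 13 of \citet{VollmerZW2016} "can be easily verified" given Assumptions A1--A4 and Lemma \ref{lemma:1}, and omits all details. You instead construct $\mu_{\btheta}$ explicitly as the resolvent series and derive the two regularity bounds from a uniform-in-$\btheta$ Harris-type geometric ergodicity estimate (drift with $V(\bx)=1+\|\bx\|^2$ from dissipativity, minorization from the Gaussian increment) plus a telescoping identity for the $\btheta$-perturbation --- this is the classical Benveniste--M\'etivier--Priouret / Andrieu--Moulines style argument, and it is the argument that the cited external theorem itself encapsulates. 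What your route buys is transparency: one sees exactly where $\inf_{\Theta}\theta(i)>0$ enters (uniform boundedness and Lipschitzness of the drift multiplier and of $\btheta\mapsto\widetilde H(\btheta,\cdot)$), and the $\sum_k k\rho^k$ bookkeeping for the Lipschitz bound is spelled out rather than hidden. Two caveats worth recording if you write this up in full: first, the geometric contraction rate $\rho$ of the one-step kernel degrades as the learning rate $\epsilon_k\to 0$ (the kernel tends to the identity), so the constants $C,\rho$ are really $\epsilon$-dependent and the claim of a single constant $C$ is only clean at a fixed learning rate --- a wrinkle the paper's citation-based proof silently shares; second, the Lipschitz continuity of $h(\btheta)=\varpi_{\btheta}(\widetilde H(\btheta,\cdot))$ requires not just Lipschitzness of $\widetilde H$ in $\btheta$ but also Lipschitz dependence of the invariant measure $\varpi_{\btheta}$ on $\btheta$, which you should note follows from the same one-step kernel perturbation bound combined with geometric ergodicity rather than merely saying "$h$ inherits this."
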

\begin{proof} The proof hinges on verifying drift conditions proposed in Section 6 of \citep{andrieu05} and the details have been given in Lemma 6 of \citet{icsgld}.

\end{proof}

Now we are ready to prove the first main result on the 
convergence of $\btheta_k$.
The technique lemmas are listed 
in Section \ref{Lemmasection}.

\begin{assump}[Learning rate and step size]
\label{ass1}
The learning rate $\{\epsilon_k\}_{k \in \mathrm{N}}$ is a positive non-increasing sequence of real numbers satisfying the conditions 
\[
\lim_k \epsilon_k=0, \quad \sum_{k=1}^{\infty} \epsilon_k=\infty.
\]
The step size $\{\omega_{k}\}_{k\in \mathrm{N}}$ is a positive decreasing sequence of real numbers such that
\begin{equation} \label{a1}
\omega_{k}\rightarrow 0, \ \ \sum_{k=1}^{\infty} \omega_{k}=+\infty,\ \  \lim_{k\rightarrow \infty} \inf 2\phi  \dfrac{\omega_{k}}{\omega_{k+1}}+\dfrac{\omega_{k+1}-\omega_{k}}{\omega^2_{k+1}}>0.
\end{equation}
According to \citet{Albert90}, we can choose $\omega_{k}:=\frac{A}{k^{\alpha}+B}$ for some $\alpha \in (\frac{1}{2}, 1]$ and some suitable constants 
 $A>0$ and $B>0$. 
 \end{assump}

\begin{theorem}[$L^2$ convergence rate, restatement of Theorem \ref{thm:1}]
\label{latent_convergence}
Suppose Assumptions $\ref{ass2a}$-$\ref{ass1}$ hold. For a sufficiently
large value of $m$, a sufficiently small learning rate sequence  $\{\epsilon_k\}_{k=1}^{\infty}$,  and a sufficiently small
 step  size sequence $\{\omega_k\}_{k=1}^{\infty}$, 
$\{\btheta_k\}_{k=0}^{\infty}$ converges to
 $\btheta_{\star}$ in $L_2$-norm  such that
\begin{equation*}
    \E\left[\|\bm{\theta}_{k}-\btheta_{\star}\|^2\right]=\mathcal{O}\left( \omega_{k}+
    \sup_{i\geq k_0}\epsilon_i+
\frac{1}{m} +\sup_{\bx} \Var(\xi_n(\bx))\right),
\end{equation*}
where $k_0$ is a sufficiently large constant, $\xi_n$ is the noisy energy estimator of batch size $n$, and $\Var(\cdot)$ denotes the variance.
\end{theorem}
\begin{proof}
Consider the iterations 
\begin{equation*}
    \bm{\theta}_{k+1}=\bm{\theta}_{k}+\omega_{k+1} \left(\widetilde H(\bm{\theta}_{k}, \bm{x}_{k+1})+\omega_{k+1} \rho(\btheta_k, \bx_{k+1})\right).
\end{equation*}
Define $\bm{T}_{k}=\bm{\theta}_{k}-\btheta_{\star}$. By subtracting $\btheta_{\star}$ from both sides and taking the square and $L_2$ norm,  we have
\begin{equation*}
\small
\begin{split}
    \|\bT_{k+1}\|^2&=\|\bT_k\|^2 +\omega_{k+1}^2 \|\widetilde H(\btheta_k, \bx_{k+1}) + \omega_{k+1}\rho(\btheta_k, \bx_{k+1})\|^2+2\omega_{k+1}\underbrace{\langle \bT_k,  \widetilde H(\bx_{k+1})+\omega_{k+1}\rho(\btheta_k, \bx_{k+1})\rangle}_{\text{D}}.
\end{split}
\end{equation*}

First, by Lemma \ref{convex_property}, there exists a constant $G=4Q^2(1+Q^2)$ such that
\begin{equation}
\label{first_term}
    \| \widetilde H(\btheta_k, \bx_{k+1}) + \omega_{k+1}\rho(\btheta_k, \bx_{k+1})\|^2 \leq G (1+\|\bT_k\|^2).
\end{equation}

Next, by the Poisson equation (\ref{poisson_eqn}), we have
\begin{equation*}
\begin{split}
   \text{D}&=\langle \bT_k,  \widetilde H(\btheta_k, \bx_{k+1})+\omega_{k+1}\rho(\btheta_k, \bx_{k+1}) \rangle\\
   &=\langle \bT_k,  h(\btheta_k)+\mu_{\btheta_k}(\bm{x}_{k+1})-\mathrm{\Pi}_{\bm{\theta}_k}\mu_{\bm{\theta}_k}(\bm{x}_{k+1})+\omega_{k+1}\rho(\btheta_k, \bx_{k+1}) \rangle\\
   &=\underbrace{\langle \bT_k,  h(\btheta_k)\rangle}_{\text{D}_{1}} +\underbrace{\langle\bT_k, \mu_{\btheta_k}(\bm{x}_{k+1})-\mathrm{\Pi}_{\bm{\theta}_k}\mu_{\bm{\theta}_k}(\bm{x}_{k+1})\rangle}_{\text{D}_{2}}+\underbrace{\langle \bT_k, \omega_{k+1}\rho(\btheta_k, \bx_{k+1})\rangle}_{{\text{D}_{3}}}.
\end{split}
\end{equation*}

For the term $\text{D}_1$, by Lemma \ref{convex_appendix__}, we have
\begin{align*}
\E\left[\langle \bm{T}_{k}, h(\bm{\theta}_{k})\rangle\right] &\leq - \phi\E[\|\bm{T}_{k}\|^2]+\mathcal{O}(\sup_{\bx} \Var(\xi_n(\bx))+\epsilon_k+\frac{1}{m}).
\end{align*}
For convenience, in the following context, we denote $\mathcal{O}(\sup_{\bx} \Var(\xi_n(\bx))+\epsilon_k+\frac{1}{m})$ by $\Delta_k$.

To deal with the term $\text{D}_2$, we make the following decomposition 
\begin{equation*}
\begin{split}
\text{D}_2 &=\underbrace{\langle \bT_k, \mu_{\bm{\theta}_{k}}(\bm{\xeta}_{k+1})-\mathrm{\Pi}_{\bm{\theta}_{k}}\mu_{\bm{\theta}_{k}}(\bm{\bx}_{k})\rangle}_{\text{D}_{21}} \\
&+ \underbrace{\langle \bT_k,\mathrm{\Pi}_{\bm{\theta}_{k}}\mu_{\bm{\theta}_{k}}(\bm{x}_{k})- \mathrm{\Pi}_{\bm{\theta}_{k-1}}\mu_{\bm{\theta}_{k-1}}(\bm{x}_{k})\rangle}_{\text{D}_{22}}
+ \underbrace{\langle \bT_k,\mathrm{\Pi}_{\bm{\theta}_{k-1}}\mu_{\bm{\theta}_{k-1}}(\bm{x}_{k})- \mathrm{\Pi}_{\bm{\theta}_{k}}\mu_{\bm{\theta}_{k}}(\bm{\xeta}_{k+1})\rangle}_{\text{D}_{23}}.\\
\end{split}
\end{equation*}

(\text{i})  From the Markov property, $\mu_{\bm{\theta}_{k}}(\bm{\xeta}_{k+1})-\mathrm{\Pi}_{\bm{\theta}_{k}}\mu_{\bm{\theta}_{k}}(\bm{x}_{k})$ forms a martingale difference sequence 
$$\E\left[\langle \bT_k, \mu_{\bm{\theta}_{k}}(\bm{\xeta}_{k+1})-\mathrm{\Pi}_{\bm{\theta}_{k}}\mu_{\bm{\theta}_{k}}(\bm{x}_{k})\rangle |\mathcal{F}_{k}\right]=0, \eqno{(\text{D}_{21})}$$
where  $\mathcal{F}_k$ is a $\sigma$-filter formed by $\{\btheta_0, \bx_1, \btheta_1, \bx_2, \cdots, \bx_k,\btheta_k\}$.

(\text{ii})  By the regularity of the solution of Poisson equation in (\ref{poisson_reg}) and Lemma \ref{theta_lip}, we have 
\begin{equation}
\label{theta_delta}
\E[\|\mathrm{\Pi}_{\bm{\theta}_{k}}\mu_{\bm{\theta}_{k}}(\bm{x}_{k})- \mathrm{\Pi}_{\bm{\theta}_{k-1}}
 \mu_{\bm{\theta}_{k-1}}(\bm{x}_{k})\|]\leq C \|\btheta_k-\btheta_{k-1}\|\leq 2Q C\omega_k.
\end{equation}
Using Cauchy–Schwarz inequality, (\ref{theta_delta}) and the compactness of $\Theta$ in Assumption \ref{ass2a}, we have
$$\small{\E[\langle\bm{T}_{k},\mathrm{\Pi}_{\bm{\theta}_{k}}\mu_{\bm{\theta}_{k}}(\bm{x}_{k})- \mathrm{\Pi}_{\bm{\theta}_{k-1}}\mu_{\bm{\theta}_{k-1}}(\bm{x}_{k})\rangle]\leq \E[\|\bT_k\|]\cdot 2Q C\omega_k\leq 4Q^2 C\omega_{k}\leq 5Q^2 C\omega_{k+1}}   \eqno{(\text{D}_{22})},$$
where the last inequality follows from assumption \ref{ass1} and holds for a large enough $k$.

(\text{iii})  For the last term of $\text{D}_{2}$, 
\begin{equation*}
\begin{split}
\small
&\langle \bm{T}_{k},\mathrm{\Pi}_{\bm{\theta}_{k-1}}\mu_{\bm{\theta}_{k-1}}(\bm{x}_{k})- \mathrm{\Pi}_{\bm{\theta}_{k}}\mu_{\bm{\theta}_{k}}(\bm{\xeta}_{k+1})\rangle\\
=&\left(\langle \bm{T}_{k}, \mathrm{\Pi}_{\bm{\theta}_{k-1}}\mu_{\bm{\theta}_{k-1}}(\bm{x}_{k}) \rangle- \langle \bm{T}_{k+1}, \mathrm{\Pi}_{\bm{\theta}_{k}}\mu_{\bm{\theta}_{k}}(\bm{\xeta}_{k+1})\rangle\right)\\
&\ \ \ +\left(\langle \bm{T}_{k+1}, \mathrm{\Pi}_{\bm{\theta}_{k}}\mu_{\bm{\theta}_{k}}(\bm{\xeta}_{k+1})\rangle-\langle \bm{T}_{k}, \mathrm{\Pi}_{\bm{\theta}_{k}}\mu_{\bm{\theta}_{k}}(\bm{\xeta}_{k+1})\rangle\right)\\
=&{({z}_{k}-{z}_{k+1})}+{\langle \bm{T}_{k+1}-\bm{T}_{k}, \mathrm{\Pi}_{\bm{\theta}_{k}}\mu_{\bm{\theta}_{k}}(\bm{\xeta}_{k+1})\rangle},\\
\end{split}
\end{equation*}
where ${z}_{k}=\langle \bm{T}_{k}, \mathrm{\Pi}_{\bm{\theta}_{k-1}}\mu_{\bm{\theta}_{k-1}}(\bm{x}_{k})\rangle$. By the regularity assumption (\ref{poisson_reg}) and Lemma \ref{theta_lip}, 
$$\E\langle \bm{T}_{k+1}-\bm{T}_{k}, \mathrm{\Pi}_{\bm{\theta}_{k}}\mu_{\bm{\theta}_{k}}(\bm{\xeta}_{k+1})\rangle\leq   \E[\|\bm{\theta}_{k+1}-\bm{\theta}_{k}\|] \cdot \E[\|\mathrm{\Pi}_{\bm{\theta}_{k}}\mu_{\bm{\theta}_{k}}(\bm{\xeta}_{k+1})\|] \leq 2Q C \omega_{k+1}.\eqno{(\text{D}_{23})}$$

Regarding $\text{D}_3$, since $\rho(\btheta_k, \bx_{k+1})$ is bounded, applying Cauchy–Schwarz inequality gives
$${\E[\langle \bT_k, \omega_{k+1}\rho(\btheta_k, \bx_{k+1}))]\leq 2Q^2\omega_{k+1}} \eqno{(\text{D}_{3})}$$

Finally, adding (\ref{first_term}), $\text{D}_1$, $\text{D}_{21}$, $\text{D}_{22}$, $\text{D}_{23}$ and $\text{D}_3$ together, it follows that for a constant $C_0 = G+10Q^2C+4QC+4Q^2$,
\begin{equation}
\begin{split}
\label{key_eqn}
\E\left[\|\bm{T}_{k+1}\|^2\right]&\leq (1-2\omega_{k+1}\phi+G\omega^2_{k+1} )\E\left[\|\bm{T}_{k}\|^2\right]+C_0\omega^2_{k+1} +2\Delta_k\omega_{k+1} +2\E[z_{k}-z_{k+1}]\omega_{k+1}.
\end{split}
\end{equation}
Moreover, from (\ref{compactness}) and (\ref{poisson_reg}), $\E[|z_{k}|]$ is upper bounded by
\begin{equation}
\begin{split}
\label{condition:z}
\E[|z_{k}|]=\E[\langle \bm{T}_{k}, \mathrm{\Pi}_{\bm{\theta}_{k-1}}\mu_{\bm{\theta}_{k-1}}(\bm{x}_{k})\rangle]\leq \E[\|\bT_k\|]\E[\|\mathrm{\Pi}_{\bm{\theta}_{k-1}}\mu_{\bm{\theta}_{k-1}}(\bm{x}_{k})\|]\leq 2QC.
\end{split}
\end{equation}

According to Lemma $\ref{lemma:4}$, we can choose $\lambda_0$ and $k_0$ such that 
\begin{align*}
\E[\|\bm{T}_{k_0}\|^2]\leq \psi_{k_0}=\lambda_0 \omega_{k_0}+\frac{1}{\phi}\sup_{i\geq k_0}\Delta_{i},
\end{align*}
which satisfies the conditions ($\ref{lemma:3-a}$) and ($\ref{lemma:3-b}$) of Lemma $\ref{lemma:3-all}$. Applying Lemma $\ref{lemma:3-all}$ leads to
\begin{equation}
\begin{split}
\label{eqn:9}
\E\left[\|\bm{T}_{k}\|^2\right]\leq \psi_{k}+\E\left[\sum_{j=k_0+1}^{k}\Lambda_j^k \left(z_{j-1}-z_{j}\right)\right],
\end{split}
\end{equation}
where $\psi_{k}=\lambda_0 \omega_{k}+\frac{1}{\phi}\sup_{i\geq k_0}\Delta_{i}$ for all $k>k_0$. Based on ($\ref{condition:z}$) and the increasing condition of $\Lambda_{j}^k$ in Lemma $\ref{lemma:2}$, we have
\begin{equation}
\small
\begin{split}
\label{eqn:10}
&\E\left[\left|\sum_{j=k_0+1}^{k} \Lambda_j^k\left(z_{j-1}-z_{j}\right)\right|\right]
=\E\left[\left|\sum_{j=k_0+1}^{k-1}(\Lambda_{j+1}^k-\Lambda_j^k)z_j-2\omega_{k}z_{k}+\Lambda_{k_0+1}^k z_{k_0}\right|\right]\\
\leq& \sum_{j=k_0+1}^{k-1}2(\Lambda_{j+1}^k-\Lambda_j^k)QC+\E[|2\omega_{k} z_{k}|]+2\Lambda_k^k QC\\
\leq& 2(\Lambda_k^k-\Lambda_{k_0}^k)QC+2\Lambda_k^k QC+2\Lambda_k^k QC\\
\leq& 6\Lambda_k^k QC.
\end{split}
\end{equation}

Given $\psi_{k}=\lambda_0 \omega_{k}+\frac{1}{\phi}\sup_{i\geq k_0}\Delta_{i}$ which satisfies the conditions ($\ref{lemma:3-a}$) and ($\ref{lemma:3-b}$) of Lemma $\ref{lemma:3-all}$,
it follows from (\ref{eqn:9}) and (\ref{eqn:10}) 
that  the following inequality holds for any $k>k_0$,
 
\begin{equation*}
\E[\|\bm{T}_{k}\|^2]\leq \psi_{k}+6\Lambda_k^k QC=\left(\lambda_0+12QC\right)\omega_{k}+\frac{1}{\phi}\sup_{i\geq k_0}\Delta_{i}=\lambda \omega_{k}+\frac{1}{\phi}\sup_{i\geq k_0}\Delta_{i},
\end{equation*}
where $\lambda=\lambda_0+12QC$, $\lambda_0=\frac{2G\sup_{i\geq k_0} \Delta_i + 2C_0\phi}{C_1\phi}$, $\small{C_1=\lim \inf 2\phi \dfrac{\omega_{k}}{\omega_{k+1}}+\dfrac{\omega_{k+1}-\omega_{k}}{\omega^2_{k+1}}>0}$, $C_0=G+5Q^2C+2QC+2Q^2$ and $G=4 Q^2(1+Q^2)$.
\end{proof}

\subsection{Technical lemmas} \label{Lemmasection}

\begin{lemma}\label{partition_order}
Suppose Assumption \ref{ass2a} holds, and $u_1$ and 
$u_{m-1}$ are fixed such that $\Psi(u_1)>\nu$ and $\Psi(u_{m-1})>1-\nu$ for some small constant $\nu>0$. For any bounded function $f(\bx)$, we have 
\label{m_order}
\begin{equation}\label{i_2}
    \int_{\MX} f(\bx)\left(\varpi_{\Psi_{\btheta}}(\bx)-\varpi_{\widetilde\Psi_{\btheta}}(\bx)\right) d\bx=\mathcal{O}\left(\frac{1}{m}\right).
\end{equation}
\end{lemma}

\begin{proof}
Recall that $\varpi_{\widetilde\Psi_{\btheta}}(\bx)= \frac{1}{Z_{\btheta}} 
\frac{\pi(\bx)}{\theta^{\zeta}(J(\bx))}$ and $\varpi_{\Psi_{\btheta}}(\bx)=\frac{1}{Z_{\Psi_{\btheta}}}\frac{\pi(\bx)}{\Psi^{\zeta}_{\btheta}(U(\bx))}$. Since $f(\bx)$ is bounded, 
it suffices to show 
\begin{equation}
\begin{split}
    &\int_{\MX} \frac{1}{Z_{\btheta}} 
\frac{\pi(\bx)}{\theta^{\zeta}(J(\bx))}-\frac{1}{Z_{\Psi_{\btheta}}}\frac{\pi(\bx)}{\Psi^{\zeta}_{\btheta}(U(\bx))} d\bx\\
\leq &\int_{\MX} \left|\frac{1}{Z_{\btheta}} 
\frac{\pi(\bx)}{\theta^{\zeta}(J(\bx))}-\frac{1}{Z_{\btheta}}\frac{\pi(\bx)}{\Psi^{\zeta}_{\btheta}(U(\bx))}\right|d\bx+\int_{\MX}\left|\frac{1}{Z_{\btheta}}\frac{\pi(\bx)}{\Psi^{\zeta}_{\btheta}(U(\bx))}-\frac{1}{Z_{\Psi_{\btheta}}}\frac{\pi(\bx)}{\Psi^{\zeta}_{\btheta}(U(\bx))}\right| d\bx\\
=&\underbrace{\frac{1}{Z_{\btheta}}\sum_{i=1}^m \int_{\MX_i} \left| 
\frac{\pi(\bx)}{\theta^{\zeta}(i)}-\frac{\pi(\bx)}{\Psi^{\zeta}_{\btheta}(U(\bx))}\right|d\bx}_{\text{I}_1}+\underbrace{\sum_{i=1}^m\left|\frac{1}{Z_{\btheta}}-\frac{1}{Z_{\Psi_{\btheta}}}\right|\int_{\MX_i}\frac{\pi(\bx)}{\Psi^{\zeta}_{\btheta}(U(\bx))} d\bx}_{\text{I}_2}=\mathcal{O}\left(\frac{1}{m}\right),\\
\end{split}
\end{equation}
where $Z_{\btheta}=\sum_{i=1}^m \int_{\mX_i} \frac{\pi(\bx)}{\theta(i)^{\zeta}}d\bx$, $Z_{\Psi_{\btheta}}=\sum_{i=1}^{m}\int_{\mX_i} \frac{\pi(\bx)}{\Psi^{\zeta}_{\btheta}(U(\bx))}d\bx$, and $\Psi_{\btheta}(u)$ is a piecewise continuous function defined in (\ref{new_design_appendix}).

By Assumption \ref{ass2a}, $\inf_{\Theta} \theta(i)>0$ for any $i$. 
Further, by the mean-value theorem, which implies $|x^{\zeta}-y^{\zeta}|\lesssim |x-y| z^{\zeta}$ for any $\zeta>0, x\leq y$ and $z\in[x, y]\subset [u_1, \infty)$, we have 
\begin{equation*}
\small
\begin{split}
    \text{I}_1&=\frac{1}{Z_{\btheta}}\sum_{i=1}^m \int_{\MX_i} \left| 
\frac{\theta^{\zeta}(i)-\Psi^{\zeta}_{\btheta}(U(\bx))}{\theta^{\zeta}(i)\Psi^{\zeta}_{\btheta}(U(\bx))}\right|\pi(\bx)d\bx\lesssim \frac{1}{Z_{\btheta}}\sum_{i=1}^m \int_{\MX_i} 
\frac{|\Psi_{\btheta}(u_{i-1})-\Psi_{\btheta}(u_i)|}{\theta^{\zeta}(i)}\pi(\bx)d\bx\\
&\leq \max_i |\Psi_{\btheta}(u_{i}-\Delta u)-\Psi_{\btheta}(u_i)|  \frac{1}{Z_{\btheta}}\sum_{i=1}^m \int_{\MX_i} 
\frac{\pi(\bx)}{\theta^{\zeta}(i)}d\bx=\max_i |\Psi_{\btheta}(u_{i}-\Delta u)-\Psi_{\btheta}(u_i)|\lesssim \Delta u=\mathcal{O}\left(\frac{1}{m}\right),
\end{split}
\end{equation*}
where the last inequality follows by Taylor expansion, 
and the last equality follows as $u_1$ and $u_{m-1}$ 
are fixed. Similarly, we have 
\begin{equation*}
    \begin{split}
        \text{I}_2= \left|\frac{1}{Z_{\btheta}}-\frac{1}{Z_{\Psi_{\btheta}}}\right|Z_{\Psi_{\btheta}}=\frac{ |Z_{\Psi_{\btheta}}-Z_{\btheta}|}{Z_{\btheta}}\leq \frac{1}{Z_{\btheta}}\sum_{i=1}^m \int_{\MX_i} \left|\frac{\pi(\bx)}{\theta^{\zeta}(i)}-\frac{\pi(\bx)}{\Psi^{\zeta}_{\btheta}(U(\bx))}\right|d\bx=\text{I}_1=\mathcal{O}\left(\frac{1}{m}\right).
    \end{split}
\end{equation*}
The proof can then be concluded by combining the orders of $\text{I}_1$ and $\text{I}_2$. 
\end{proof}

\begin{lemma}
\label{convex_property}
Given $\sup\{\omega_k\}_{k=1}^{\infty}\leq 1$, there exists a constant $G=4 Q^2(1+Q^2)$ such that
\begin{equation} \label{bound2}
\| \widetilde H(\bm{\theta}_k, \bm{\xeta}_{k+1})+\omega_{k+1}\rho(\btheta_k, \bx_{k+1})\|^2 \leq G (1+\|\bm{\theta}_k-\btheta_{\star}\|^2). 
\end{equation}
\end{lemma}
\begin{proof}

According to the compactness condition in Assumption \ref{ass2a}, we have
\begin{equation}
\label{mid_1}
\|\widetilde H(\bm{\theta}_k, \bm{\xeta}_{k+1})\|^2\leq Q^2 (1+\|\bm{\theta}_k\|^2) = 
 Q^2 (1+\|\bm{\theta}_k-\btheta_{\star}+\btheta_{\star}\|^2)\leq Q^2 (1+2\|\bm{\theta}_k-\btheta_{\star}\|^2+2Q^2).
\end{equation}

Therefore, using (\ref{mid_1}), we can show that for a constant $G=4Q^2(1+Q^2)$
\begin{equation*}
\small
\begin{split}
    &\ \ \ \| \widetilde H(\bm{\theta}_k, \bm{\xeta}_{k+1})+\omega_{k+1}\rho(\btheta_k, \bx_{k+1})\|^2 \\
    &\leq 2\|\widetilde H(\bm{\theta}_k, \bm{\xeta}_{k+1})\|^2 + 2\omega_{k+1}^2 \|\rho(\btheta_k, \bx_{k+1})\|^2\\
    &\leq 2Q^2 (1+2\|\bm{\theta}_k-\btheta_{\star}\|^2+2Q^2) + 2Q^2\\
    &\leq 2Q^2 (2+2Q^2+(2+2Q^2)\|\bm{\theta}_k-\btheta_{\star}\|^2)\\
    &\leq G (1+\|\bm{\theta}_k-\btheta_{\star}\|^2).
\end{split}
\end{equation*}
\end{proof}

\begin{lemma}
\label{theta_lip}Given $\sup\{\omega_k\}_{k=1}^{\infty}\leq 1$, we have that
\begin{equation}
\label{lip_theta}
    \|\btheta_{k}-\btheta_{k-1}\|\leq 2\omega_{k} Q
\end{equation}
\end{lemma}

\begin{proof}
Following the update $\btheta_k-\btheta_{k-1}=\omega_k \widetilde H(\bm{\theta}_{k-1}, \bm{x}_{k})+\omega_{k}^2 \rho(\btheta_{k-1}, \bx_{k})$, we have that
$$\|\btheta_{k}-\btheta_{k-1}\|= \|\omega_k \widetilde H(\bm{\theta}_{k-1}, \bm{x}_{k})+\omega_{k}^2 \rho(\btheta_{k-1}, \bx_{k})\|\leq \omega_k\| \widetilde H(\bm{\theta}_{k-1},\bm{x}_{k})\|+\omega_{k}^2\| \rho(\btheta_{k-1}, \bx_{k})\|.$$
By the compactness condition in Assumption \ref{ass2a} and $\sup\{\omega_k\}_{k=1}^{\infty}\leq 1$, (\ref{lip_theta}) can be derived.
\end{proof}

\begin{lemma}
\label{lemma:4}
There exist constants $\lambda_0$ and $k_0$ such that $\forall \lambda\geq\lambda_0$ and $\forall k> k_0$, the sequence $\{\psi_{k}\}_{k=1}^{\infty}$, where $\psi_{k}=\lambda\omega_{k}+\frac{1}{\phi} \sup_{i\geq k_0}\Delta_i$, satisfies
\begin{equation}
\begin{split}
\label{key_ieq}
\psi_{k+1}\geq& (1-2\omega_{k+1}\phi+G\omega_{k+1}^2)\psi_{k}+C_0\omega_{k+1}^2  +2\Delta_k\omega_{k+1}.
\end{split}
\end{equation}
\begin{proof}
By replacing $\psi_{k}$ with $\lambda\omega_{k}+\frac{1}{\phi} \sup_{i\geq k_0}\Delta_i$ in ($\ref{key_ieq}$), it suffices to show
\begin{equation*}
\small
\begin{split}
\label{lemma:loss_control}
\lambda \omega_{k+1}+\frac{1}{\phi} \sup_{i\geq k_0}\Delta_i\geq& (1-2\omega_{k+1}\phi+G\omega_{k+1}^2)\left(\lambda \omega_{k}+\frac{1}{\phi} \sup_{i\geq k_0}\Delta_i\right)+C_0\omega_{k+1}^2 + 2\Delta_k\omega_{k+1}.
\end{split}
\end{equation*}

which is equivalent to proving
\begin{equation*}
\small
\begin{split}
&\lambda (\omega_{k+1}-\omega_k+2\omega_k\omega_{k+1}\phi-G\omega_k\omega_{k+1}^2)\geq  \frac{1}{\phi}\sup_{i\geq k_0}\Delta_i(-2\omega_{k+1}\phi+G\omega_{k+1}^2 )+C_0\omega_{k+1}^2+ 2\Delta_k\omega_{k+1}.
\end{split}
\end{equation*}

Given the step size condition in ($\ref{a1}$), we have $$\small{\omega_{k+1}-\omega_{k}+2 \omega_{k}\omega_{k+1}\phi \geq C_1 \omega_{k+1}^2},$$ 
where $\small{C_1=\lim \inf 2\phi  \dfrac{\omega_{k}}{\omega_{k+1}}+\dfrac{\omega_{k+1}-\omega_{k}}{\omega^2_{k+1}}>0}$. Combining $-\sup_{i\geq k_0}\Delta_i\leq \Delta_k$, it suffices to prove
\begin{equation}
\begin{split}
\label{loss_control-2}
\lambda \left(C_1-G\omega_{k}\right)\omega^2_{k+1}\geq  \left(\frac{G}{\phi} \sup_{i\geq k_0}\Delta_i+C_0\right)\omega^2_{k+1}.
\end{split}
\end{equation}

It is clear that for a large enough $k_0$ and $\lambda_0$ such that $\omega_{k_0}\leq \frac{C_1}{2G}$, $\lambda_0=\frac{2G\sup_{i\geq k_0} \Delta_i + 2C_0\phi}{C_1\phi}$, the desired conclusion ($\ref{loss_control-2}$) holds for all such $k\geq k_0$ and $\lambda\geq \lambda_0$.
\end{proof}
\end{lemma}

The following lemma is a restatement of Lemma 25 (page 247) from \citet{Albert90}.
\begin{lemma}
\label{lemma:2}
Suppose $k_0$ is an integer satisfying
$\inf_{k> k_0} \dfrac{\omega_{k+1}-\omega_{k}}{\omega_{k}\omega_{k+1}}+2\phi-G\omega_{k+1}>0$ 
for some constant $G$. 
Then for any $k>k_0$, the sequence $\{\Lambda_k^K\}_{k=k_0, \ldots, K}$ defined below is increasing and uppered bounded by $2\omega_{k}$
\begin{equation}  
\Lambda_k^K=\left\{  
             \begin{array}{lr}  
             2\omega_{k}\prod_{j=k}^{K-1}(1-2\omega_{j+1}\phi+G\omega_{j+1}^2) & \text{if $k<K$},   \\  
              & \\
             2\omega_{k} &  \text{if $k=K$}.
             \end{array}  
\right.  
\end{equation} 
\end{lemma}

\begin{lemma}
\label{lemma:3-all}
Let $\{\psi_{k}\}_{k> k_0}$ be a series that satisfies the following inequality for all $k> k_0$
\begin{equation}
\begin{split}
\label{lemma:3-a}
\psi_{k+1}\geq &\psi_{k}\left(1-2\omega_{k+1}\phi+G\omega^2_{k+1}\right)+C_0\omega^2_{k+1} + 2 \Delta_k\omega_{k+1},
\end{split}
\end{equation}
and assume there exists such $k_0$ that 
\begin{equation}
\begin{split}
\label{lemma:3-b}
\E\left[\|\bm{T}_{k_0}\|^2\right]\leq \psi_{k_0}.
\end{split}
\end{equation}
Then for all $k> k_0$, we have
\begin{equation}
\begin{split}
\label{result}
\E\left[\|\bm{T}_{k}\|^2\right]\leq \psi_{k}+\sum_{j=k_0+1}^{k}\Lambda_j^k (z_{j-1}-z_j).
\end{split}
\end{equation}
\end{lemma}

\begin{proof}
We prove by the induction method. Assuming (\ref{result}) is true and applying (\ref{key_eqn}), we have that 
\begin{equation*}
\small
\begin{split}
    \E\left[\|\bm{T}_{k+1}\|^2\right]&\leq (1-2\omega_{k+1}\phi+\omega^2_{k+1} G)(\psi_{k}+\sum_{j=k_0+1}^{k}\Lambda_j^k (z_{j-1}-z_j))\\
    &\ \ \ \ \ \ \ \ +C_0\omega^2_{k+1} +2 \Delta_k\omega_{k+1}+2\omega_{k+1}\E[z_{k}-z_{k+1}]\\
\end{split}
\end{equation*}

Combining (\ref{key_ieq}) and Lemma.\ref{lemma:2}, respectively, we have
\begin{equation*}
\small
\begin{split}
    \E\left[\|\bm{T}_{k+1}\|^2\right]&\leq  \psi_{k+1}+(1-2\omega_{k+1}\phi+\omega^2_{k+1} G)\sum_{j=k_0+1}^{k}\Lambda_j^k (z_{j-1}-z_j)+2\omega_{k+1}\E[z_{k}-z_{k+1}]\\
    & \leq \psi_{k+1}+\sum_{j=k_0+1}^{k}\Lambda_j^{k+1} (z_{j-1}-z_j)+\Lambda_{k+1}^{k+1}\E[z_{k}-z_{k+1}]\\
    & \leq \psi_{k+1}+\sum_{j=k_0+1}^{k+1}\Lambda_j^{k+1} (z_{j-1}-z_j).\\
\end{split}
\end{equation*}
\end{proof}

\section{Ergodicity and dynamic importance sampler}

\label{ergodicity}
Our interest is to analyze the deviation between the weighted averaging estimator $\frac{1}{k}\sum_{i=1}^k\theta_{i}^{\zeta}( J_{\widetilde U}(\bx_i)) f(\bx_i)$ and posterior expectation $\int_{\MX}f(\bx)\pi(d\bx)$ for a \textcolor{black}{bounded} function $f$. To accomplish this analysis, we first study the convergence of the 
posterior sample mean $\frac{1}{k}\sum_{i=1}^k f(\bx_i)$ 
to the posterior expectation $\bar{f}=\int_{\MX}f(\bx)\varpi_{\Psi_{\btheta_{\star}}}(\bx)(d\bx)$ and then extend it to $\int_{\MX}f(\bx)\varpi_{\widetilde{\Psi}_{\btheta_{\star}}}(\bx)(d\bx)$. The key tool for establishing the ergodic theory is still the Poisson equation 
which is used to characterize the fluctuation 
between $f(\bx)$ and $\bar f$: 
\begin{equation}
    \mathcal{L}g(\bx)=f(\bx)-\bar f,
\end{equation}
where $g(\bx)$ is the solution of the Poisson equation, and $\mathcal{L}$ is the infinitesimal generator of the Langevin diffusion 
\begin{equation*}
    \mathcal{L}g:=\langle\nabla g, \nabla L(\cdot, \btheta_{\star})\rangle+\textcolor{black}{\tau}\nabla^2g.
\end{equation*}

To control the perturbations of $\frac{1}{k}\sum_{i=1}^k f(\bx_i)-\bar f$ and enables convergence of the weighted averaging estimate, \citet{Chen15} impose the following regularity conditions on the solution $g(\bx)$:
\paragraph{Regularity} Given a sufficiently smooth function $g(\bx)$ and a function $\mathcal{V}(\bx)$ such that $\|D^k g\|\lesssim \mathcal{V}^{p_k}(\bx)$ for some 
constants $p_k>0$, where $k\in\{0,1,2,3\}$. In addition, $\mathcal{V}^p$ has a bounded expectation, i.e., $\sup_{\bx} \E[\mathcal{V}^p(\bx)]<\infty$; and $\mathcal{V}$ is smooth, i.e. $\sup_{s\in\{0, 1\}} \mathcal{V}^p(s\bx+(1-s)\by)\lesssim \mathcal{V}^p(\bx)+\mathcal{V}^p(\by)$ for all $\bx,\by\in\MX$ and $p\leq 2\max_k\{p_k\}$. 

For stronger but verifiable conditions, we refer readers to \citet{VollmerZW2016}. For further verifications, \citet{Mackey18} showed that the 0th, 1st and 2nd order of the regularity can be verified given standard smoothness and dissipative assumptions. In what follows, we present a lemma, which is majorly adapted from Theorem 2 of \citet{Chen15} with a fixed learning rate $\epsilon$. 


\begin{lemma}[Convergence of the Averaging Estimators, restatement of Lemma \ref{avg_converge}]
\label{avg_converge_appendix}
Suppose Assumptions \ref{ass2a}-\ref{ass1} hold.
For any bounded function $f$, 
\begin{equation*}
\small
\begin{split}
    \left|\E\left[\frac{\sum_{i=1}^k f(\bx_i)}{k}\right]-\int_{\MX}f(\bx)\varpi_{\widetilde{\Psi}_{\btheta_{\star}}}(\bx)d\bx\right|&=
    \mathcal{O}\left(\frac{1}{k\epsilon}+\sqrt{\epsilon}+\sqrt{\frac{\sum_{i=1}^k \omega_i}{k}}+ \frac{1}{\sqrt{m}}+
    \sqrt{\sup_{\bx} \Var(\xi_n(\bx))}\right), \\
\end{split}
\end{equation*}
where $k_0$ is a sufficiently large constant, $\varpi_{\widetilde{\Psi}_{\btheta_{\star}}}(\bx) \propto  
\frac{\pi(\bx)}{\theta_{\star}^{\zeta}(J(\bx))}$, and $\frac{\sum_{i=1}^k \omega_i}{k} =o(\frac{1}{\sqrt{k}})$ as implied by Assumption \ref{ass1}. 
\end{lemma}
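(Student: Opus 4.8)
The plan is to carry over the Poisson-equation argument of \citet{Chen15}, specialized to a constant step size $\epsilon$ and adapted to the fact that the drift in step $(\text{S}_1)$ uses the running estimate $\btheta_k$ rather than the limit $\btheta_{\star}$. Write $\bar f=\int_{\MX}f(\bx)\varpi_{\Psi_{\btheta_{\star}}}(\bx)d\bx$ and let $g$ solve $\mathcal{L}g(\bx)=f(\bx)-\bar f$, where $\mathcal{L}$ is the infinitesimal generator of the Langevin diffusion associated with $L(\cdot,\btheta_{\star})$ (invariant measure $\varpi_{\Psi_{\btheta_{\star}}}$). Assumption \ref{ass6} supplies the smoothness and polynomial-growth bounds on $g$ and $D^kg$ needed below, while Lemma \ref{lemma:1} supplies the uniform moment bound $\sup_k\E[\mathcal{V}^p(\bx_k)]<\infty$ that makes those bounds integrable along the trajectory.

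First I would do a one-step Taylor expansion of $g(\bx_{k+1})$ around $\bx_k$ using $\bx_{k+1}=\bx_k-\epsilon\nabla_{\bx}\widetilde L(\bx_k,\btheta_k)+\sqrt{2\epsilon\tau}\,\bw_{k+1}$. The deterministic first-order term together with the Gaussian second-order term reproduces $\epsilon\,\mathcal{L}_kg(\bx_k)$, where $\mathcal{L}_k$ is the generator attached to the drift $-\nabla_{\bx}\widetilde L(\cdot,\btheta_k)$, the remaining quadratic and cubic contributions being $\mathcal{O}(\epsilon^{3/2})$ in $L^1$ by Assumption \ref{ass6}. Splitting $\mathcal{L}_k=\mathcal{L}+(\mathcal{L}_k-\mathcal{L})$, the discrepancy $(\mathcal{L}_k-\mathcal{L})g(\bx_k)$ is controlled by $\|\nabla_{\bx}\widetilde L(\bx_k,\btheta_k)-\nabla_{\bx}L(\bx_k,\btheta_{\star})\|$; since $\log\theta(\cdot)$ is Lipschitz on the compact set $\bTheta$ (Assumption \ref{ass2a}) and using Assumption \ref{ass4} and the $\mathcal{O}(\epsilon)$ stochastic-gradient bias quantified via \citet{Sato2014ApproximationAO}, this is $\mathcal{O}(\|\btheta_k-\btheta_{\star}\|)+\mathcal{O}(\epsilon)$ in expectation. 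Telescoping $\sum_{i=1}^{k}(g(\bx_{i+1})-g(\bx_i))$ and rearranging yields
\[
\frac{1}{k}\sum_{i=1}^k\big(f(\bx_i)-\bar f\big)=\frac{g(\bx_{k+1})-g(\bx_1)}{k\epsilon}-\frac{1}{k}\sum_{i=1}^k M_i+\mathcal{O}(\sqrt{\epsilon})+\frac{1}{k}\sum_{i=1}^k\mathcal{O}\big(\|\btheta_i-\btheta_{\star}\|\big),
\]
where $M_i$ collects the martingale-difference noise terms (zero conditional expectation) and $\mathcal{O}(\sqrt\epsilon)$ absorbs both the Euler discretization error $\mathcal{O}(\epsilon)$ and the $\mathcal{O}(\epsilon^{3/2})$ remainder divided by $\epsilon$.

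Next I would take expectations termwise. Boundedness of $g$ in expectation gives $\E|g(\bx_{k+1})-g(\bx_1)|/(k\epsilon)=\mathcal{O}(1/(k\epsilon))$; the martingale sum vanishes; the $\mathcal{O}(\sqrt\epsilon)$ stays. For the last sum, apply Jensen's inequality and then Theorem \ref{latent_convergence}: for $i\ge k_0$, $\E\|\btheta_i-\btheta_{\star}\|\le\sqrt{\E\|\btheta_i-\btheta_{\star}\|^2}=\mathcal{O}\big(\sqrt{\omega_i}+\sqrt{\sup_{j\ge k_0}\epsilon_j}+1/\sqrt m+\sup_{j\ge k_0}\sqrt{\delta_n(\btheta_j)}\big)$, the finitely many $i<k_0$ contributing only $\mathcal{O}(1/k)$; averaging and using Cauchy--Schwarz gives $\frac1k\sum_{i=1}^k\sqrt{\omega_i}\le\sqrt{\frac1k\sum_{i=1}^k\omega_i}$. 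This establishes the bound with $\varpi_{\Psi_{\btheta_{\star}}}$ in place of $\varpi_{\widetilde\Psi_{\btheta_{\star}}}$; Lemma \ref{partition_order} then swaps the two measures at the price of an extra $\mathcal{O}(1/m)\subset\mathcal{O}(1/\sqrt m)$, which closes the proof.

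The main obstacle I expect is the bookkeeping in the last step: converting the $L^2$ control of $\btheta_k$ from Theorem \ref{latent_convergence} into a clean $\mathcal{O}(\sqrt{\sum_i\omega_i/k})$ contribution after averaging, while simultaneously checking that every Poisson-equation remainder (third derivatives of $g$ against $\epsilon^{3/2}$, the $\mathcal{L}_k-\mathcal{L}$ drift discrepancy, and the stochastic-gradient bias) is genuinely dominated by the claimed rate rather than hiding an extra factor. This is where Assumption \ref{ass6} and the uniform moment bound of Lemma \ref{lemma:1} do the real work, and where care is needed so that the constant-$\epsilon$ version of \citet{Chen15}'s decreasing-step-size argument actually goes through.
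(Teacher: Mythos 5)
Your proof is correct and follows essentially the same route as the paper: the paper simply invokes Theorem 2 of Chen et al.\ (2015) as a black box for the ergodic average under a biased drift, then bounds the drift bias $\Upsilon(\bx_k,\btheta_k,\btheta_{\star})$ by $\sqrt{\E\|\btheta_k-\btheta_{\star}\|^2}+\mathcal{O}(\delta_n(\btheta_{\star}))$ via Theorem \ref{latent_convergence} and Jensen, applies $\sum_i\sqrt{\omega_i}\leq\sqrt{k\sum_i\omega_i}$, and swaps $\varpi_{\Psi_{\btheta_{\star}}}$ for $\varpi_{\widetilde{\Psi}_{\btheta_{\star}}}$ using Lemma \ref{partition_order} --- exactly your steps, with your Taylor/telescoping expansion of $g$ being the internals of the cited theorem. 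The only slip is that your intermediate estimate of the drift discrepancy ($\mathcal{O}(\|\btheta_k-\btheta_{\star}\|)+\mathcal{O}(\epsilon)$) omits the $\mathcal{O}(\delta_n(\btheta_{\star}))$ contribution coming from the mini-batch index $\tilde J$ replacing $J$, but since that term reappears in your final bound (and also enters through $\E\|\btheta_i-\btheta_{\star}\|^2$) it does not affect the claimed rate.
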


\begin{proof}
We rewrite the CSGLD algorithm as follows:
\begin{equation*}
\begin{split}
    \bm{x}_{k+1}&=\bx_k- \epsilon_k\nabla_{\bx} \widetilde{L}(\bx_k, \btheta_k)+\mathcal{N}({0, 2\epsilon_k \tau\bm{I}})\\
    &=\bx_k- \epsilon_k\left(\nabla_{\bx} 
    \widehat{L}(\bx_k, \btheta_{\star})+{\Upsilon}(\bx_k, \btheta_k, \btheta_{\star})\right)+\mathcal{N}({0, 2\epsilon_k \tau\bm{I}}),
\end{split}
\end{equation*}
where    
$\nabla_{\bx} \widehat{L}(\bx,\btheta)= \frac{N}{n} \left[1+  \frac{\zeta\tau}{\Delta u}  \left(\textcolor{black}{\log \theta({J}(\bx))-\log\theta(({J}(\bx)-1)\vee 1)} \right) \right]  \nabla_{\bx} \widetilde U(\bx)$,  $\nabla_{\bx} \widetilde{L}(\bx,\btheta)$ is as 
defined in Section \ref{Alg:app},
 and the bias term is given by ${\Upsilon}(\bx_k,\btheta_k,\btheta_{\star})=\nabla_{\bx} \widetilde{L}(\bx_k,\btheta_k)-\nabla_{\bx} \widehat{L}(\bx_k,\btheta_{\star})$.


By Assumption \ref{ass2}, we have $\|\nabla_{\bx} U(\bx)\|=\|\nabla_{\bx} U(\bx)-\nabla_{\bx} U(\bx_{\star})\|\lesssim \|\bx-\bx_{\star}\|\leq \|\bx\|+\|\bx_{\star}\|$ for some optimum. Then the $L^2$ upper bound in Lemma \ref{lemma:1} implies that $\nabla_{\bx} U(\bx)$ has a bounded second moment. Combining Assumption \ref{ass4}, we have $\E\left[\|\nabla_{\bx} \widetilde U(\bx)\|^2\right]<\infty$. Further by Eve’s law (i.e., the
variance decomposition formula), it is easy to derive that $\E \left[\| \nabla_{\bx} \widetilde{U}(\bx) \|\right]<\infty$.
Then, by the triangle inequality and Jensen's inequality, 
\begin{equation}
\label{latent_bias}
\small
\begin{split}
    \|\E[\Upsilon(\bx_k,\btheta_k,\btheta_{\star})]\|&\leq 
    \E[\|\nabla_{\bx} \widetilde{L}(\bx_k, \btheta_k)-\nabla_{\bx} \widetilde{L}(\bx_k, \btheta_{\star})\|] + \E[\|\nabla_{\bx} \widetilde{L}(\bx_k, \btheta_{\star})-\nabla_{\bx} \widehat{L}(\bx_k, \btheta_{\star})\|] \\
    &\lesssim  \E[\|\btheta_k-\btheta_{\star}\|]+\mathcal{O}(\sup_{\bx} \Var(\xi_n(\bx)))\leq \sqrt{\E[\|\btheta_k-\btheta_{\star}\|^2]}+\mathcal{O}(\sup_{\bx} \Var(\xi_n(\bx)))\\
    &\leq \mathcal{O}\left( \sqrt{\omega_{k}+\epsilon+
\frac{1}{m} +\sup_{\bx} \Var(\xi_n(\bx)))}\right),
\end{split}
\end{equation}
where Assumption \ref{ass2a} and Theorem \ref{latent_convergence} are used to derive the smoothness of $\nabla_{\bx} \tilde{L}(\bx, \btheta)$ with respect to $\btheta$, and  $\sup_{\bx} \Var(\xi_n(\bx))$ is the bias caused by the mini-batch evaluation of $U(\bx)$.

The ergodic average based on biased gradients and a fixed learning rate is a direct result of Theorem 2 of  \citet{Chen15}. By simulating from $\varpi_{\Psi_{\btheta_{\star}}}(\bx)\propto\frac{\pi(\bx)}{\Psi^{\zeta}_{\btheta_{\star}}(U(\bx))}$ and combining (\ref{latent_bias}) and Theorem \ref{latent_convergence}, we have 
\begin{equation*}
\small
\begin{split}
    \left|\E\left[\frac{\sum_{i=1}^k f(\bx_i)}{k}\right]-\int_{\MX}f(\bx) \varpi_{\Psi_{\btheta_{\star}}}(\bx)d\bx\right|&\leq \mathcal{O}\left(\frac{1}{k\epsilon}+\epsilon+\frac{\sum_{i=1}^k \|\E[\Upsilon(\bx_k,\btheta_k,\btheta_{\star})]\|}{k}\right)\\
    &\lesssim \mathcal{O}\left(\frac{1}{k\epsilon}+\epsilon+\frac{\sum_{i=1}^k \sqrt{\omega_i+\epsilon+\frac{1}{m}+\sup_{\bx} \Var(\xi_n(\bx))}}{k}\right) \\
    &\leq \mathcal{O}\left(\frac{1}{k\epsilon}+\sqrt{\epsilon}+\sqrt{\frac{\sum_{i=1}^k \omega_i}{k}}+
    \frac{1}{\sqrt{m}}+\sqrt{\sup_{\bx} \Var(\xi_n(\bx))}\right),
\end{split}
\end{equation*}
where the last inequality follows by repeatedly applying the inequality $\sqrt{a+b}\leq \sqrt{a}+\sqrt{b}$ and 
the inequality $\sum_{i=1}^k \sqrt{\omega_i}\leq \sqrt{k\sum_{i=1}^k \omega_i}$.

For any a bounded function $f(\bx)$, we have $|\int_{\MX}f(\bx) \varpi_{\Psi_{\btheta_{\star}}}(\bx)d\bx -  \int_{\MX}f(\bx) \varpi_{\widetilde{\Psi}_{\btheta_{\star}}}(\bx)d\bx|= \mathcal{O}(\frac{1}{m})$ by Lemma \ref{partition_order}. By the triangle inequality, we have 
\begin{equation*}
\small
\begin{split}
    \left|\E\left[\frac{\sum_{i=1}^k f(\bx_i)}{k}\right]-\int_{\MX}f(\bx) \varpi_{\widetilde{\Psi}_{\btheta_{\star}}}(\bx)d\bx\right|\leq  \mathcal{O}\left(\frac{1}{k\epsilon}+\sqrt{\epsilon}+\sqrt{\frac{\sum_{i=1}^k \omega_i}{k}}+ \frac{1}{\sqrt{m}}+
    \sqrt{\sup_{\bx} \Var(\xi_n(\bx))}\right),
\end{split}
\end{equation*}
which concludes the proof. 
\end{proof}

Finally, we are ready to show the convergence of the weighted averaging estimator $\frac{\sum_{i=1}^k\theta_{i}
     ^{\zeta}(J_{\widetilde U}(\bx_i)) f(\bx_i)}{\sum_{i=1}^k\theta_{i}^{\zeta}( 
      J_{\widetilde U}(\bx_i))}$ to the posterior mean $\int_{\MX}f(\bx)\pi(d\bx)$.
\begin{theorem}[Convergence of the Weighted Averaging Estimators, restatement of Theorem \ref{w_avg_converge}] Assume Assumptions \ref{ass2a}-\ref{ass1} hold. For any bounded function $f$, we have that 
\label{w_avg_converge_appendix}
\begin{equation*}
\small
\begin{split}
    \left|\E\left[\frac{\sum_{i=1}^k\theta_{i}
     ^{\zeta}(J_{\widetilde U}(\bx_i)) f(\bx_i)}{\sum_{i=1}^k\theta_{i}^{\zeta}( 
      J_{\widetilde U}(\bx_i))}\right]-\int_{\MX}f(\bx)\pi(d\bx)\right|&= \mathcal{O}\left(\frac{1}{k\epsilon}+\sqrt{\epsilon}+\sqrt{\frac{\sum_{i=1}^k \omega_i}{k}}+\frac{1}{\sqrt{m}}+\sqrt{\sup_{\bx} \Var(\xi_n(\bx))}\right). \\
\end{split}
\end{equation*}
\end{theorem}

\begin{proof}

Applying triangle inequality and $|\E[x]|\leq \E[|x|]$, we have
\begin{equation*}
\footnotesize
    \begin{split}
        &\left|\E\left[\frac{\sum_{i=1}^k\theta_{i}
        ^{\zeta}( J_{\widetilde U}(\bx_i)) f(\bx_i)}{\sum_{i=1}^k\theta_{i}^{\zeta}(
         J_{\widetilde U}(\bx_i))}\right]-\int_{\MX}f(\bx)\pi(d\bx)\right|\\
        \leq &\underbrace{\E\left[\left|\frac{\sum_{i=1}^k\theta_{i}^{\zeta}
         (J_{\widetilde U}(\bx_i))f(\bx_i)}
         {\sum_{i=1}^k\theta_{i}^{\zeta}(J_{\widetilde U}(\bx_i)) }-\frac{\sum_{i=1}^k\theta_{i}^{\zeta}
         ({J}(\bx_i))f(\bx_i)}
         {\sum_{i=1}^k\theta_{i}^{\zeta}({J}(\bx_i)) }\right|\right]}_{\text{I}_1}\\
         &\ \ +\underbrace{\E\left[\left|\frac{\sum_{i=1}^k\theta_{i}^{\zeta}
         ({J}(\bx_i))f(\bx_i)}
         {\sum_{i=1}^k\theta_{i}^{\zeta}({J}(\bx_i)) }-\frac{Z_{\btheta_{\star}}\sum_{i=1}^k\theta_{i}^{\zeta} ({J}(\bx_i)) f(\bx_i)}{k}\right|\right]}_{\text{I}_2}\\
        \ \ + &\underbrace{\E\left[\frac{Z_{\btheta_{\star}}}{k}\sum_{i=1}^k\left|\theta_i^{\zeta} ({J}(\bx_i))-\theta_{\star}^{\zeta}
      ({J}(\bx_i))  \right| \cdot |f(\bx_i)|\right]}_{\text{I}_3} +\underbrace{\left|\E\left[\frac{Z_{\btheta_{\star}}}{k}\sum_{i=1}^k\theta_{\star}^{\zeta}
     ({J}(\bx_i)) f(\bx_i)\right]-\int_{\MX}f(\bx)\pi(d\bx)\right|}_{\text{I}_4}.
    \end{split}
\end{equation*}

For the term $\text{I}_1$, consider the bias $\E[\widetilde H(\btheta, \bx)- H(\btheta, \bx)]=O\left(\sup_{\bx} \Var(\xi_n(\bx))\right)$ as defined in the proof of Lemma 
\ref{convex_appendix__}. By applying mean-value theorem, we have 
\begin{equation}
\footnotesize
\begin{split}
    \text{I}_1&=\E\left[\left|\frac{\left(\sum_{i=1}^k\theta_{i}^{\zeta}(
         J_{\widetilde U}(\bx_i))f(\bx_i)\right)\left(\sum_{i=1}^k\theta_{i}^{\zeta}(
         {J}(\bx_i))\right)-\left(\sum_{i=1}^k\theta_{i}^{\zeta}(
         {J}(\bx_i))f(\bx_i)\right)\left(\sum_{i=1}^k\theta_{i}^{\zeta}(
         J_{\widetilde U}(\bx_i))\right)}
         {\left(\sum_{i=1}^k\theta_{i}^{\zeta}(
         J_{\widetilde U}(\bx_i))\right)\left(\sum_{i=1}^k\theta_{i}^{\zeta}(
         {J}(\bx_i))\right)}\right|\right]\\
         &\lesssim O\left(\sup_{\bx} \Var(\xi_n(\bx))\right) \E\left[\frac{\left(\sum_{i=1}^k\theta_{i}
     ^{\zeta}({J}(\bx_i)) f(\bx_i) \left(\sum_{i=1}^k\theta_{i}^{\zeta}(
         {J}(\bx_i))\right)\right)}{\left(\sum_{i=1}^k\theta_{i}^{\zeta}(
         {J}(\bx_i))\right)\left(\sum_{i=1}^k\theta_{i}^{\zeta}(
         {J}(\bx_i))\right)}\right]
         =O\left(\sup_{\bx} \Var(\xi_n(\bx))\right).
\end{split}
\end{equation}

For the term $\text{I}_2$, 
by the boundedness of $\bTheta$ and $f$ and the assumption  $\inf_{\Theta}\theta^{\zeta}(i)>0$, we have
\begin{equation*}
\small
\begin{split}
    \text{I}_2=&\E\left[\left|\frac{\sum_{i=1}^k\theta_{i}^{\zeta}({J}(\bx_i))  f(\bx_i)}{\sum_{i=1}^k\theta_{i}^{\zeta}
    ({J}(\bx_i))
    }\left(1-\sum_{i=1}^k\frac{\theta_i^{\zeta}
    ({J}(\bx_i))
    }{k}Z_{\btheta_{\star}}\right)\right|\right]\\
    \lesssim & \E\left[\left|Z_{\btheta_{\star}}\frac{{\sum_{i=1}^k\theta_{i}^{\zeta}
    ({J}(\bx_i))
    }}{k}-1\right|\right]\\
    =&\E\left[\left|Z_{\btheta_{\star}}\sum_{i=1}^m \frac{\sum_{j=1}^k\left( \theta_j^{\zeta}(i)-\theta_{\star}^{\zeta}(i)+\theta_{\star}^{\zeta}(i)\right)1_{
    {J}(\bx_j)=i}}{k}-1\right|\right]\\
    \leq & \underbrace{\E\left[Z_{\btheta_{\star}}\sum_{i=1}^m \frac{\sum_{j=1}^k\left| \theta_j^{\zeta}(i)-\theta_{\star}^{\zeta}(i)\right| 1_{{J}(\bx_j)=i}}{k} \right]}_{\text{I}_{21}} + \underbrace{\E\left[\left| Z_{\btheta_{\star}}\sum_{i=1}^m \frac{\theta_{\star}^{\zeta}(i)\sum_{j=1}^k  1_{{J}(\bx_j)=i}}{k}-1\right|
    \right]}_{\text{I}_{22}}.\\
\end{split}
\end{equation*}

For $\text{I}_{21}$, by first applying the inequality $|x^{\zeta}-y^{\zeta}|\leq \zeta |x-y| z^{\zeta-1}$ for any $\zeta>0, x\leq y$ and $z\in[x, y]$ based on the mean-value theorem and then applying the Cauchy–Schwarz inequality, we have 
\begin{equation}\label{ii_21}
    \text{I}_{21}\lesssim \frac{1}{k}\E\left[ \sum_{j=1}^k\sum_{i=1}^m\left| \theta_j^{\zeta}(i)-\theta_{\star}^{\zeta}(i)\right| \right]\lesssim  \frac{1}{k}\E\left[ \sum_{j=1}^k\sum_{i=1}^m\left| \theta_j(i)-\theta_{\star}(i)\right| \right]\lesssim  \frac{1}{k}\sqrt{\sum_{j=1}^k\E\left[\left\| \btheta_j-\btheta_{\star}\right\|^2\right]},
\end{equation}
where the compactness of $\Theta$ has been 
used in deriving the second inequality. 

For $\text{I}_{22}$, considering the following relation $$
    1=\sum_{i=1}^m\int_{\MX_i} \pi(\bx)d\bx=\sum_{i=1}^m\int_{\MX_i} \theta_{\star}^{\zeta}(i) \frac{\pi(\bx)}{\theta_{\star}^{\zeta}(i)}d\bx
    =Z_{\btheta_{\star}}\int_{\MX} \sum_{i=1}^m \theta_{\star}^{\zeta}(i) 1_{{J}(\bx)=i}\varpi_{\widetilde{\Psi}_{\btheta_{\star}}}(\bx)d\bx,$$ then we have
\begin{equation}
\begin{split}
    \text{I}_{22}&=\E\left[\left| Z_{\btheta_{\star}}\sum_{i=1}^m \frac{\theta_{\star}^{\zeta}(i)\sum_{j=1}^k  1_{{J}(\bx_j)=i}}{k}-Z_{\btheta_{\star}}
    \int_{\MX} \sum_{i=1}^m \theta_{\star}^{\zeta}(i) 1_{{J}(\bx)=i}\varpi_{\widetilde{\Psi}_{\btheta_{\star}}}(\bx)d\bx\right|\right]\\
    &=Z_{\btheta_{\star}} \E\left[\left| \frac{1}{k}\sum_{j=1}^k \left(\sum_{i=1}^m\theta_{\star}^{\zeta}(i)  1_{{J}(\bx_j)=i}\right)-\int_{\MX} \left(\sum_{i=1}^m \theta_{\star}^{\zeta}(i) 1_{{J}(\bx)=i}\right)\varpi_{\widetilde{\Psi}_{\btheta_{\star}}}(\bx)d\bx\right|\right]\\
    &= \mathcal{O}\left(\frac{1}{k\epsilon}+\sqrt{\epsilon}+\sqrt{\frac{\sum_{i=1}^k \omega_i}{k}}+\frac{1}{\sqrt{m}}+  \sqrt{\sup_{\bx} \Var(\xi_n(\bx))} \right),
\end{split}
\end{equation}
where the last equality follows  from Lemma \ref{avg_converge_appendix} as the \textcolor{black}{step 
function $\sum_{i=1}^m \theta_{\star}^{\zeta}(i) 1_{{J}(\bx)=i}$} is bounded.

For $\text{I}_3$, by the boundedness of $f$,  
the mean value theorem and Cauchy-Schwarz inequality, 
we have 
\begin{equation}\label{ii_3}
\small
    \begin{split}
        \text{I}_3&\lesssim \E\left[\frac{1}{k}\sum_{i=1}^k\left|\theta_{i}
        ^{\zeta}({J}(\bx_i)) -\theta_{\star}^{\zeta}(
        {J}(\bx_i))\right|\right]\lesssim  \frac{1}{k}\E\biggl[ \sum_{j=1}^k\sum_{i=1}^m\bigl| \theta_j(i)-\theta_{\star}(i)\bigr| \biggr]\lesssim  \frac{1}{k}\sqrt{\sum_{j=1}^k\E\left[\left\| \btheta_j-\btheta_{\star}\right\|^2\right]}.\\
    \end{split}
\end{equation}

For the last term $\text{I}_4$, we first decompose $\int_{\MX} f(\bx) \pi(d\bx)$ into $m$ disjoint regions to facilitate the analysis
\begin{equation}
\label{split_posterior}
\footnotesize
\begin{split}
      \int_{\MX} f(\bx) \pi(d\bx)&=\int_{\cup_{j=1}^m \MX_j}  f(\bx) \pi(d\bx)=\sum_{j=1}^m\int_{\MX_j}\theta_{\star}^{\zeta}(j)  f(\bx) \frac{\pi(d\bx)}{\theta_{\star}^{\zeta}(j)}\\
      &=Z_{\btheta_{\star}}\int_{\MX} \sum_{j=1}^m \theta_{\star}(j)^{\zeta}f(\bx) 1_{
        {J}(\bx_i)=j 
        }\varpi_{\widetilde{\Psi}_{\btheta_{\star}}}(\bx)(d\bx).\\
\end{split}
\end{equation}

Plugging (\ref{split_posterior}) into the last term $\text{I}_4$, we have
\begin{equation}
\label{final_i2}
\small
    \begin{split}
        \text{I}_4&=\left|\E\left[\frac{Z_{\btheta_{\star}}}{k}\sum_{i=1}^k\sum_{j=1}^m\theta_{\star}(j)^{\zeta} f(\bx_i)1_{  {J}(\bx_i)=j
        }\right]-\int_{\MX}f(\bx)\pi(d\bx)\right|\\
        &= Z_{\btheta_{\star}}\left|\E\left[\frac{1}{k}\sum_{i=1}^k \left(\sum_{j=1}^m\theta_{\star}^{\zeta}(j) f(\bx_i)1_{
        {J}(\bx_i)=j 
        }\right)\right]-\int_{\MX}  \left(\sum_{j=1}^m\theta_{\star}^{\zeta}(j) f(\bx_i)1_{
        {J}(\bx_i)=j 
        }\right) \varpi_{\widetilde{\Psi}_{\btheta_{\star}}}(\bx)(d\bx)\right|\\
    \end{split}
\end{equation}

Applying the function \textcolor{black}{$\sum_{j=1}^m\theta_{\star}^{\zeta}(j) f(\bx_i)1_{
        {J}(\bx_i)=j 
        }$ }
to Lemma \ref{avg_converge_appendix} yields
\begin{equation}
\label{almost_i2}
\small
\begin{split}
      \left|\E\left[\frac{1}{k}\sum_{i=1}^k f(\bx_i)\right]-\int_{\MX}  f(\bx) \varpi_{\widetilde{\Psi}_{\btheta_{\star}}}(\bx)(d\bx)\right| = \mathcal{O}\left(\frac{1}{k\epsilon}+\sqrt{\epsilon}+\sqrt{\frac{\sum_{i=1}^k \omega_i}{k}}+\frac{1}{\sqrt{m}}+\sqrt{\sup_{\bx} \Var(\xi_n(\bx))} \right).\\
\end{split}
\end{equation}

Plugging (\ref{almost_i2}) into (\ref{final_i2}) and combining $\text{I}_{1}$, $\text{I}_{21}$, $\text{I}_{22}$, $\text{I}_3$ and Theorem \ref{latent_convergence}, we have
\begin{equation*}
\small
\begin{split}
      \left|\E\left[\frac{\sum_{i=1}^k\theta_{i}
     ^{\zeta}(J_{\widetilde U}(\bx_i)) f(\bx_i)}{\sum_{i=1}^k\theta_{i}^{\zeta}( 
      J_{\widetilde U}(\bx_i))}\right]-\int_{\MX}f(\bx)\pi(d\bx)\right| = \mathcal{O}\left(\frac{1}{k\epsilon}+\sqrt{\epsilon}+\sqrt{\frac{\sum_{i=1}^k \omega_i}{k}}+\frac{1}{\sqrt{m}}+\sqrt{\sup_{\bx} \Var(\xi_n(\bx))} \right),\\
\end{split}
\end{equation*}
which concludes the proof of the theorem.

\end{proof}

\section{More discussions on the algorithm}
\label{ext}

\subsection{An alternative numerical scheme}
\label{alternative_scheme}
In addition to the numerical scheme used in (6) and (8) in the main body, we can also consider the following numerical scheme 
  \begin{equation*} \label{alternative_SGLDeq6}
  \footnotesize
 \begin{split}
  \small{\bx_{k+1}=\bx_k - \epsilon_{k+1} \frac{N}{n} \left[1+ 
   \zeta\tau\frac{\log {\theta}_{k}\big(J_{\widetilde U}(\bx_k) \wedge m\big) - \log{\theta}_{k}\big(J_{\widetilde U}(\bx_k)\big)}{\Delta u}  \right]  
    \nabla_{\bx} \widetilde U(\bx_k) +\sqrt{2 \tau \epsilon_{k+1}} \bw_{k+1}}.
 \end{split}
  \end{equation*}

Such a scheme leads to a similar theoretical result and a better treatment of $\Psi_{\btheta}(\cdot)$ for the subregions that contains stationary points.

\subsection{Bizarre peaks in the Gaussian mixture distribution}

A bizarre peak always indicates that there is a stationary point of the same energy in somewhere of the sample space, as the sample space is partitioned according to the energy function in CSGLD. For example, we study a mixture distribution with asymmetric modes $\pi(x)=1/6 N(-6,1)+5/6 N(4,1)$. Figure \ref{bizzae} shows a bizarre peak at $x$. Although $x$ is not a local minimum, it has the same energy as ``-6'' which is a local minimum. Note that in CSGLD, $x$ and ``-6'' belongs to the same subregion.

\begin{figure}[ht]
\vspace{-0.05in}
  \centering
  \includegraphics[scale=0.27]{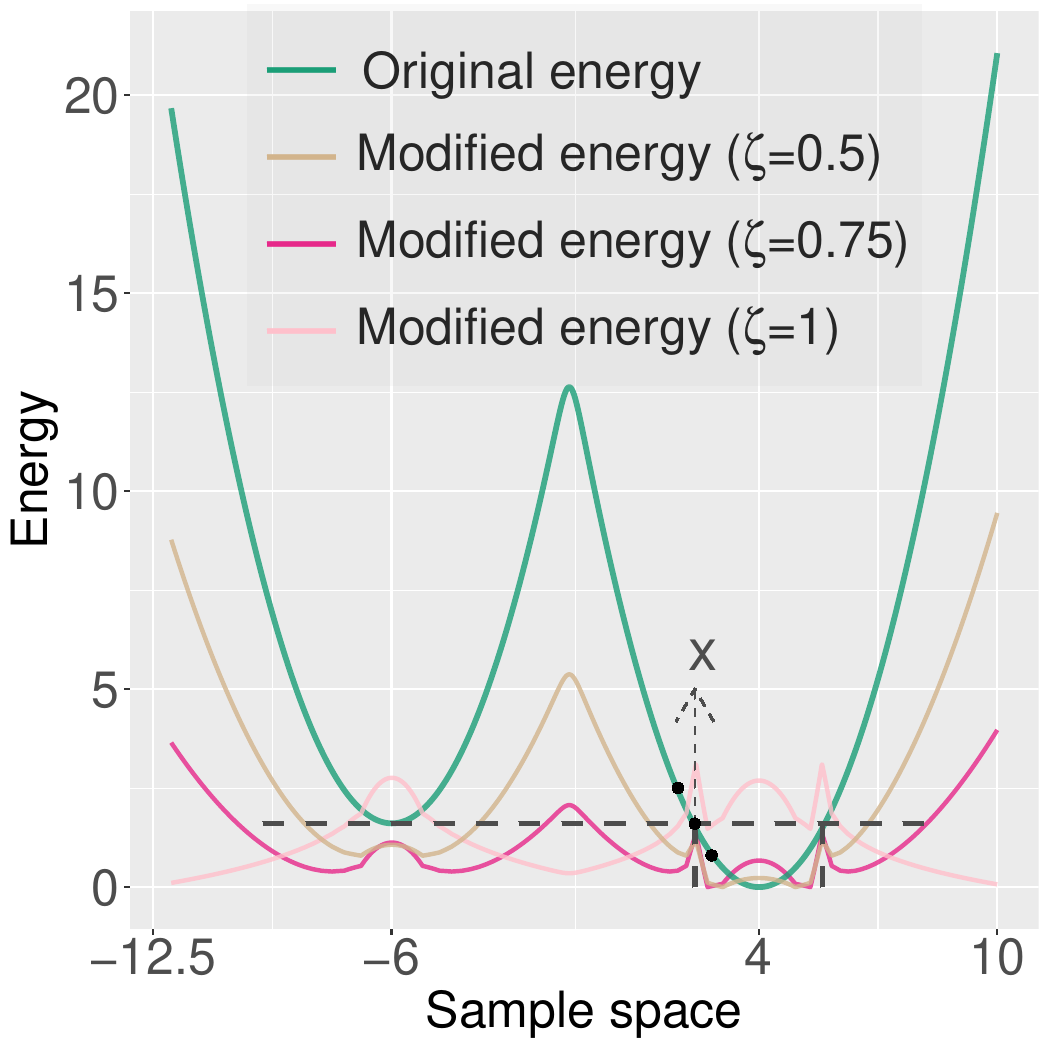}
  \caption{Explanation of bizarre peaks.}
  \label{bizzae}
  \vspace{-0.05in}
\end{figure}

\subsection{Simulations of multi-modal distributions}

We run all the algorithms with 200,000 iterations and assume the energy and gradient follow the Gaussian distribution with a variance of 0.1. We include an additional quadratic regularizer $(\|\bx\|^2-7)1_{\|\bx\|^2>7}$ to limit the samples to the center region. We use a constant learning rate 0.001 for SGLD, reSGLD, and CSGLD; We adopt the cyclic cosine learning rates with initial learning rate 0.005 and 20 cycles for cycSGLD. The temperature is fixed at 1 for all the algorithms, excluding the high-temperature process of reSGLD, which employs a temperature of 3.  In particular for CSGLD, we choose the step size $\omega_k=\min\{0.003, 10/(k^{0.8}+100)\}$ for learning the latent vector. We fix 100 partitions and each energy bandwidth is set to $0.25$. We choose $\zeta=0.75$.

\subsection{Extension to the scenarios with high-$\zeta$}
\label{more_scalable_target}


In some complex big data experiments, the first subregion may dominate the probability mass and estimating $\theta(i)$'s for the high energy subregions can be quite difficult due to the limitation of floating points. If  a small value of $\zeta$ is used, the gradient multiplier $1+ 
   \zeta\tau\frac{\log {\theta}_{\star}(i) - \log{\theta}_{\star}((i-1)\vee 1)}{\Delta u}$ is close to $1$ for any $i$ and the algorithm will perform similarly to SGLD, except with different weights. When a large value of $\zeta$ is used, the stochastic approximation update in Eq.(\ref{def_tilde_H}) is slow \textcolor{red}{since $\theta^{\zeta}$ is close to 0}. To tackle this issue, one solution is to include a high-order bias item in the stochastic approximation:
\begin{equation} \label{regularizer_cslgd}
{\theta}_{k+1}(i)={\theta}_{k}(i)+\omega_{k+1}\left({\theta}_{k}^{\textcolor{red}{\zeta}}(J_{\widetilde U}(\bx_{k+1})+\omega_{k+1} 1_{i\geq J_{\widetilde U}(\bx_{k+1})}\rho)\right)\left(1_{i= J_{\widetilde U}(\bx_{k+1})}-{\theta}_{k}(i)\right), 
\end{equation} 
for $i=1,2,\ldots,m$, where $\rho$ is a constant. As shown early, our convergence theory allows inclusion of such a high-order bias term.
In simulations, the high-order bias term $\omega_{k+1}^2 1_{i\geq J_{\widetilde U}(\bx_{k+1})}\rho$ penalized more on the higher energy regions, and thus accelerates the convergence 
in the early period.

In the computer vision examples, we set the momentum coefficient  to 0.9 and the weight decay to $25$. For CSGHMC and saCSGHMC, we set $\omega_k=\frac{10}{k^{0.75}+1000}$ and $\rho=1$ in (\ref{regularizer_cslgd}) for both CIFAR10 and CIFAR100,  and set $\zeta=1\times 10^6$ for CIFAR10 and $3\times 10^6$ for CIFAR100. 

\paragraph{Can we make the algorithm more scalable:} The additional term $\rho$ makes the algorithm less appealing in practice. To tackle that issue and make it more scalable to big datasets, we can adopt a more scalable and elegant stochastic approximation scheme proposed in \citet{icsgld}
\begin{equation*} \label{novel_cslgd}
{\theta}_{k+1}(i)={\theta}_{k}(i)+\omega_{k+1} {\theta}_{k}(J_{\widetilde U}(\bx_{k+1})\left(1_{i= J_{\widetilde U}(\bx_{k+1})}-{\theta}_{k}(i)\right).
\end{equation*}

Compared to Eq.(\ref{regularizer_cslgd}), the dependence of $\rho$ is no longer required. Moreover, since $\theta(i)<1$ for any $i\in\{1,2,\cdots, m\}$, $\theta(i)^{\zeta}\approx 0$ when $\zeta\gg 1$, which makes Eq.(\ref{regularizer_cslgd}) hard to update. By contrast, the new scheme doesn't have this issue. Theoretically, a local stability shows that $\theta_k(i)$ converges to a much smoother estimate $(\int_{\MX_i}\pi(\bx)d\bx)^{\frac{1}{\zeta}}$ instead of the original $\int_{\MX_i}\pi(\bx)d\bx$ for $i\in\{1,2,\cdots, m\}$ and $\zeta>1$. Notably, the new target is proven to be easier to estimate for high-energy regions.

\subsection{Number of partitions}
A fine partition will lead to a smaller discretization error, but it may increase the risk in stability. In particular, it leads to large bouncy jumps around optima (a large negative learning rate, i.e., $\frac{\log\theta(2)-\log\theta(1)}{\Delta u}\ll 0$ in formula (8) may be caused there). Empirically, we suggest to partition the sample space into a moderate number of subregions, e.g. 10-1000, to balance between stability and discretization error.

\end{document}


\maketitle

In this supplementary material, we review the related methodologies in $\S$\ref{review}, show the convergence in $\S$\ref{convergence} and prove the technical lemmas in $\S$\ref{technique}.
\section{Background on Stochastic Approximation and Poisson Equation}
\label{review}

\subsection{Robbins–Monro algorithm}
Robbins–Monro algorithm \citep{Robbins51} aims to solve the root finding problem. Given a random field of $H(\bm{\btheta}, \bm{\bx})$ with respect to $\bm{\bx}$, our goal is to find the equilibrium $\btheta$ for the mean field function $h(\btheta)$ such that
\begin{equation*}
\begin{split}
\label{sa00}
h(\btheta)&=\int H(\bm{\theta}, \bm{\bx})\varpi_{\bm{\theta}}(d\bm{\bx})=0,
\end{split}
\end{equation*}
where $\bx\in \bX \subset \mathbb{R}^d$ and $\btheta\in\bTheta \subset \mathbb{R}^{d_{\btheta}}$.
The algorithm is implemented as follows:
\begin{itemize}
\item[(1)] Sample $\bm{x}_{k+1}$ from the invariant distribution $\varpi_{\bm{\theta}_{k}}(\bm{x})$,

\item[(2)] Update $\bm{\theta}_{k+1}=\bm{\theta}_{k}+\omega_{k+1} H(\bm{\theta}_{k}, \bm{x}_{k+1}).$
\end{itemize}

\subsection{General stochastic approximation}
The stochastic approximation algorithm \citep{Albert90} is a generalization of the Robbins–Monro algorithm, which consists of the following steps:
\begin{itemize}
\item[(1)] Sample $\bm{x}_{k+1}$ from the transition kernel  $\Pi_{\bm{\theta_{k}}}(\bm{x}_{k}, \cdot)$, which admits $\varpi_{\bm{\theta}_{k}}(\bm{x})$ as
the invariant distribution,

\item[(2)] Update $\bm{\theta}_{k+1}=\bm{\theta}_{k}+\omega_{k+1} H(\bm{\theta}_{k}, \bm{x}_{k+1})+\omega_{k+1}^2 \rho(\bm{\theta}_{k}, \bm{x}_{k+1})$.
\end{itemize}

In contrast to the Robbins-Monro algorithm, stochastic approximation samples $\bx$ from a transition kernel $\Pi_{\bm{\theta_{k}}}(\cdot, \cdot)$ instead of a distribution $\varpi_{\bm{\theta}_{k}}(\cdot)$, which leads to a Markov state-dependent noise $H(\btheta_k, \bx_{k+1})-h(\btheta_k)$. In addition, it allows small oscillations without affecting the convergence. 

\subsection{Poisson equation}

The coupled process $\{(\bx_k, \btheta_k)\}_{k=1}^{\infty}$ forms a nonhomogeneous Markov chain. Let $\Pi_{\bm{\theta}}(\bm{x}, A)$ be the transition kernel for any Borel subset $A\subset \bX$ and let a function $\mu_{\btheta}(\cdot)$ on $\bX$ solve the following Poisson equation 
\begin{equation*}
    \mu_{\btheta}(\bm{x})-\mathrm{\Pi}_{\bm{\theta}}\mu_{\bm{\theta}}(\bm{x})=H(\bm{\theta}, \bm{x})-h(\bm{\theta}).
\end{equation*}
The solution of Poisson equation can be formulated as follows when the series converge.
\begin{equation*}
    \mu_{\btheta}(\bx):=\sum_k \Pi_{\btheta}^k (H(\btheta, \bx)-h(\btheta)).
\end{equation*}
By imposing regularity conditions on $\mu_{\btheta}(\cdot)$, we can control the perturbations over time from $\int H(\btheta_k, \bx)\Pi_{\btheta_k}(\bx_k, d\bx)$ to $h(\btheta)$ and guarantee the consistency of the estimator $\btheta$. In particular, \citet{Albert90} has reduced the study of individual algorithms to the verification of the following regularity assumption on $\mu_{\btheta}(\cdot)$ and bounded moment of certain degree on $V(\bx)$.

\textbf{Assumption}
There exist a function  $V: \mX \to [1,\infty)$, and a constant $C$ such that for all $\bm{\theta}, \bm{\theta}'\in \bm{\bTheta}$, we have
\begin{equation*}
\begin{split}
\|\mathrm{\Pi}_{\bm{\theta}}\mu_{\btheta}(\bx)\|&\leq C V(\bx),\ \|\mathrm{\Pi}_{\bm{\theta}}\mu_{\bm{\theta}}(\bx)-\mathrm{\Pi}_{\bm{\theta'}}\mu_{\bm{\theta'}}(\bx)\|\leq C\|\bm{\theta}-\bm{\theta}'\| V(\bx), \E[V(\bx)]\leq \infty.\\
\end{split}
\end{equation*}

Notably, only 1st-order smoothness is required for the convergence of the adaptive algorithms \citep{Albert90, andrieu06}, which is much weaker than the 4th-order smoothness used in the ergodicity theory \citep{mattingly10, VollmerZW2016}.


\section{Convergence Analysis}\label{convergence}

\subsection{Our algorithm}

Our algorithm falls into the class of the stochastic approximation algorithm, which follows
\begin{itemize}
\item[(1)] Sample $\bm{x}_{k+1}=\bx_k- \epsilon_k\nabla_{\bx} \widetilde L(\bx_k, \btheta_k)+\mathcal{N}({0, 2\epsilon_k \tau\bm{I}})$,

\item[(2)] Update $\bm{\theta}_{k+1}=\bm{\theta}_{k}+\omega_{k+1} \widetilde H(\bm{\theta}_{k}, \bm{x}_{k+1})+\omega_{k+1}^2 \rho(\bm{\theta}_{k}, \bm{x}_{k+1})$.
\end{itemize}
where $\epsilon_k$ is the learning rate, $\omega_k$ is the step size, $\nabla_{\bx} \widetilde L(\bx_k, \btheta_k)$ is the stochastic gradient for $\nabla_{\bx} L(\bx_k, \btheta_k)$. The stochastic random field $\widetilde H(\bm{\theta}_{k}, \bm{x}_{k+1})$ is an estimator of the random field $ H(\bm{\theta}_{k}, \bm{x}_{k+1})$ caused by the mini-batch evaluations. Without loss of generality, we assume 
\begin{equation}
    \label{tildeh}
    \widetilde H(\bm{\theta}_{k}, \bm{x}_{k+1})= H(\bm{\theta}_{k}, \bm{x}_{k+1})+\delta(\btheta_{k}, \bx_{k+1}),
\end{equation}
where the bias $\delta(\btheta_{k}, \bx_{k+1})$ is a random vector, and is usually biased due to the non-linearity of $H(\bm{\theta}_{k}, \bm{x}_{k+1})$. For example, when $H(\bm{\theta}_{k}, \bm{x}_{k+1})=L^2(\bx_{k+1}, \btheta_k)$ and $\widetilde H(\bm{\theta}_{k}, \bm{x}_{k+1})=(L(\bx_{k+1}, \btheta_k)+\mathcal{N}(0,\sigma^2))^2$, it is clear that $\E[\widetilde H(\bm{\theta}_{k}, \bm{x}_{k+1})]=L^2(\bx_{k+1}, \btheta_k)+\sigma^2\neq H(\bm{\theta}_{k}, \bm{x}_{k+1})$, which induced a fixed bias term $\sigma^2$. As such, we know that the bias becomes smaller given a larger batch size and vanishes when the full dataset is used. 

\subsection{Convergence analysis of stochastic approximation}

In order to show the convergence analysis, we first lay out the following assumptions:

\begin{assump}[Step size]
\label{ass1}
$\{\omega_{k}\}_{k\in \mathrm{N}}$ is a positive decreasing sequence of real numbers such that
\begin{equation} \label{a1}
\omega_{k}\rightarrow 0, \ \ \sum_{k=1}^{\infty} \omega_{k}=+\infty,\ \  \lim_{k\rightarrow \infty} \inf 2  \dfrac{\omega_{k}}{\omega_{k+1}}+\dfrac{\omega_{k+1}-\omega_{k}}{\omega^2_{k+1}}>0.
\end{equation}
According to \citet{Albert90}, we can choose $\omega_{k}:=\frac{A}{k^{\alpha}+B},\ \ \text{where} \ \alpha \in (0, 1]\ \text{and}\ A>\frac{\alpha}{2}.$

\end{assump}

\begin{assump}[Compactness] \label{ass2a} 
We study $\btheta, H(\btheta, \bx)$, $\widetilde H(\btheta, \bx)$, $\rho(\btheta, \bx)$ and $\delta(\btheta, \bx)$ in a compact space, and there exists a constant $Q>0$ such that for $\forall \btheta\in \bTheta$, $\bx \in \bX$ and $k \in \mathbb{N}$.
\begin{equation}
\label{compactness}
     \|\btheta\|\leq Q, \|\rho(\btheta, \bx)\|\leq Q, \|\delta(\btheta, \bx)\|\leq Q, \|H(\btheta, \bx)\|\leq Q,\|\widetilde H(\btheta, \bx)\|\leq Q.
\end{equation}
\end{assump}

\begin{assump}[Smoothness]
\label{ass2}
$L(\bm{\xeta}, \bm{\theta})$ is $M$-smooth, namely, for any $\bx, \bx'\in \mX$, $\bm{\theta}, \bm{\theta}'\in \bTheta$.
\begin{equation}
\begin{split}
\label{ass_2_1_eq}
\|\nabla_{\bx} L(\bx, \btheta)-\nabla_{\bx} L(\bm{\bx}', \btheta')\|\leq M\|\bx-\bx'\|+M\|\btheta-\btheta'\|. 
\end{split}
\end{equation}
\end{assump}

\begin{assump}[Dissipativity]
\label{ass3}
 There exist constants $m>0, b\geq 0$, such that for any $\bx \in \mX$ and $\btheta \in \bTheta$, 
\label{ass_dissipative}
\begin{equation}
\label{eq:01}
\langle \nabla_{\bx} L(\bx, \btheta), \bx\rangle\leq b-m\|\bx\|^2.
\end{equation}
\end{assump}
This assumption has been widely used in proving the geometric ergodicity of dynamical systems \citep{mattingly02, Maxim17, Xu18}. It ensures a particle to move towards the origin regardless of the starting position.

\begin{assump}[Gradient noise] 
\label{ass4}
The stochastic noise follows
\begin{equation*}
\E[\nabla_{\bx}\widetilde L(\bx_{k},
 \btheta_{k})-\nabla_{\bx} L(\bx_{k}, \btheta_{k})]=0.
\end{equation*}
There exists some constants $M, B>0$ such that the second moment of the noise is bounded by
\begin{equation*} 
\E [\|\nabla_{\bx}\widetilde L(\bx_{k},
 \btheta_{k})-\nabla_{\bx} L(\bx_{k}, \btheta_{k})\|^2]\leq M^2 \|\bx\|^2+B^2. 
\end{equation*}

\end{assump}

 
 



\begin{lemma}[Stability]
\label{convex}
The mean field function $h(\btheta)$ satisfies that $\forall \btheta \in \bTheta$, $\langle h(\btheta) , \btheta -\btheta_{\star}\rangle \leq  -\|\btheta - \btheta_{\star}\|^2$. The mean field system $\dot{\btheta}=-h(\btheta)$ is globally asymptotically stable and $\btheta_{\star}$ is the globally asymptotically stable equilibrium.
\end{lemma}

\begin{proof}

Given the random field $H(\btheta, \bx) = \btheta_{ \mathcal{I}(\bx)}^{\zeta}\bm{1}_{\mathcal{I}(\bx)} - \btheta$, the mean field function $h(\btheta)$ under probability $\varpi_{\bm{\theta}}(\bx)$ follows
\begin{equation*}
\small
    h(\btheta)=\int H(\btheta, \bx) \varpi_{\bm{\theta}}(d\bx)=\int \left(\btheta_{\mathcal{I}(\bx)}^{\zeta}\bm{1}_{\mathcal{I}(\bx)} - \btheta\right) \dfrac{\pi(\bx)}{\btheta_{\mathcal{I}(\bx)}^{\zeta}} d\bx=\int \bm{1}_{\mathcal{I}(\bx)} \pi(\bx)d\bx-\btheta=\btheta_{\star}-\btheta.
\end{equation*}

Thus,
\begin{equation*}
    \langle h(\btheta), \btheta -\btheta_{\star}\rangle = -\|\btheta - \btheta_{\star}\|^2\leq  -\|\btheta - \btheta_{\star}\|^2.
\end{equation*}

Consider a positive definite Lyapunov function $L(\btheta)=\frac{1}{2}\left(\btheta_{\star}-\btheta\right)^2$ for the mean field system $\dot{\btheta}=-\left(\btheta_{\star}-\btheta\right)$. It is clear that $\dot{L}=\frac{\partial L(\btheta)}{\partial \btheta} \dot{\btheta}=-\left(\btheta_{\star}-\btheta\right)^2<0$ for $\forall \btheta \neq \btheta_{\star}$. This shows that the mean field system is globally asymptotically stable and $\btheta_{\star}$ is the globally asymptotically stable equilibrium.
\end{proof}


The following lemma is a restatement of Lemma 1 in \citet{deng2019}.
\begin{lemma}[Uniform $L^2$ bounds]
\label{lemma:1}
Suppose Assumptions \ref{ass1}-\ref{ass4} holds.  Given a small enough learning rate
 $\ 0<\epsilon<\operatorname{Re}(\tfrac{m-\sqrt{m^2-3M^2}}{3M^2})\wedge 1$,  then 
$\sup_{k\geq 1} \E[\|\bm{\xeta}_{k}\|^2] \leq \E [\|\bm{\xeta}_0\|^2]+ \tfrac{1}{2m}(2b+3 B^2+2\tau d)$.
\end{lemma}

\begin{lemma}[Solution of Poisson equation]
\label{lyapunov}
There is a solution $\mu_{\btheta}(\cdot)$ on $\bX$ to the Poisson equation 
\begin{equation}
    \label{poisson_eqn}
    \mu_{\btheta}(\bm{x})-\mathrm{\Pi}_{\bm{\theta}}\mu_{\bm{\theta}}(\bm{x})=H(\bm{\theta}, \bm{x})-h(\bm{\theta}).
\end{equation}
such that for all $\bm{\theta}, \bm{\theta}'\in \bm{\bTheta}$ and a function  $V(\bx)=1+\|\bx\|^2$, there exists a constant $C$ such that
\begin{equation}
\begin{split}
\label{poisson_reg}
\E[\|\mathrm{\Pi}_{\bm{\theta}}\mu_{\btheta}(\bx)\|]&\leq C,\\
\E[\|\mathrm{\Pi}_{\bm{\theta}}\mu_{\bm{\theta}}(\bx)-\mathrm{\Pi}_{\bm{\theta}'}\mu_{\bm{\theta'}}(\bx)\|]&\leq C\|\bm{\theta}-\bm{\theta}'\|.\\
\end{split}
\end{equation}
\end{lemma}

\begin{proof}
According to Assumption 12 in \citet{VollmerZW2016}, we can easily show that A.7 and A.8 in \citet{VollmerZW2016} is satisfied given a Lyapunov function $V(\bx)=1+\|\bx\|^2$, the dissipitivity condition (\ref{ass3}) and the gradient noise condition (\ref{ass4})\footnote{Adaptive algorithms only require 1st-order smoothness of the solution of Poisson equation instead of 4th-order smoothness as used in proving the ergodicity average \citep{mattingly10, VollmerZW2016}. Therefore, as shown in Lemma.15 \citep{VollmerZW2016}, a much weaker bound such as (\ref{ass4}) can be applied. Further improvement on the results of \citet{Pardoux01} goes beyond of this paper.}. In what follows, Theorem 13 (\citet{Pardoux01}) in \citet{VollmerZW2016} holds, which shows that for any function $H(\btheta, \bx)\lesssim V(\bx)$, we have $\mu_{\btheta}\lesssim V(\bx)$ and $\nabla \mu_{\btheta}\lesssim V(\bx)$. This implies that there exists a constant $\overline{C}$ such that 
\begin{equation}
\begin{split}
\label{first_reg_poisson}
\|\mu_{\btheta}(\bx)\|&\leq \overline{C} V(\bx),\\
\|\mu_{\bm{\theta}}(\bx)-\mu_{\bm{\theta'}}(\bx)\|&\leq \overline{C}\|\bm{\theta}-\bm{\theta}'\| V(\bx).\\
\end{split}
\end{equation}

Together with the triangle inequality, it follows that
\begin{equation*}
    \|\mathrm{\Pi}_{\bm{\theta}}\mu_{\bm{\theta}}(\bx)-\mathrm{\Pi}_{\bm{\theta}'}\mu_{\bm{\theta'}}(\bx)\|\leq \|\mathrm{\Pi}_{\bm{\theta}}\mu_{\bm{\theta}}(\bx)-\mathrm{\Pi}_{\bm{\theta}}\mu_{\bm{\theta'}}(\bx)\|+\|\mathrm{\Pi}_{\bm{\theta}}\mu_{\bm{\theta'}}(\bx)-\mathrm{\Pi}_{\bm{\theta}'}\mu_{\bm{\theta'}}(\bx)\|.
\end{equation*}

Combining Lemma.\ref{lemma:1} to ensure the bounded Lyapunov function  $V(\bx)$ in expectation, we can obtain the desired result
\begin{equation*}
\begin{split}
\E[\|\mathrm{\Pi}_{\bm{\theta}}\mu_{\btheta}(\bx)\|]&\leq C,\\
\E[\|\mathrm{\Pi}_{\bm{\theta}}\mu_{\bm{\theta}}(\bx)-\mathrm{\Pi}_{\bm{\theta}'}\mu_{\bm{\theta'}}(\bx)\|]&\leq C \|\bm{\theta}-\bm{\theta}'\|.\\
\end{split}
\end{equation*}
by applying (\ref{first_reg_poisson}) and the smoothness condition in \ref{ass2} for some constant $C$.
\end{proof}

Now we are ready to present our first main result, where the technique lemmas are shown in $\S$\ref{technique}.

\begin{theorem}[$L^2$ convergence rate]
\label{latent_converge}
Assume Assumptions $\ref{ass1}$-$\ref{ass4}$ hold. For a large enough $k_0$, small enough learning rates $\{\epsilon_k\}_{k=1}^{\infty}$ and step sizes $\{\omega_k\}_{k=1}^{\infty}$, there exists a globally asymptotically equilibrium $\btheta_{\star}$ such that
\begin{equation*}
    \E\left[\|\bm{\theta}_{k}-\bm{\theta}_{\star}\|^2\right]=\mathcal{O}( \omega_{k})+Q\sup_{n\geq k_0}\E[\|\delta(\btheta_{n}, \bx_{n+1})\|].
\end{equation*}
\end{theorem}
\begin{proof}

Consider the following iterates 
\begin{equation*}
    \bm{\theta}_{k+1}=\bm{\theta}_{k}+\omega_{k+1} \left(\widetilde H(\bm{\theta}_{k}, \bm{x}_{k+1})+\omega_{k+1} \rho(\btheta_k, \bx_{k+1})\right).
\end{equation*}

Denote $\bm{T}_{k}=\bm{\theta}_{k}-\bm{\theta}_{\star}$. By subtracting $\btheta_{\star}$ on both sides and taking the square and $L_2$ norm,  we have
\begin{equation*}
\begin{split}
    \|\bT_{k+1}^2\|&=\|\bT_k^2\| +\omega_{k+1}^2 \|\widetilde H(\btheta_k, \bx_{k+1}) + \omega_{k+1}\rho(\btheta_k, \bx_{k+1})\|^2\\
    &\ \ \ +2\omega_{k+1}\underbrace{\langle \bT_k, \widetilde H(\btheta_k, \bx_{k+1})+\omega_{k+1}\rho(\btheta_k, \bx_{k+1})\rangle}_{\text{D}}.
\end{split}
\end{equation*}

First, using Lemma.\ref{convex_property} in $\S$\ref{technique}, there exists a constant $G=9 Q^2(1+Q^2)$ such that
\begin{equation}
\label{first_term}
    \|\widetilde H(\btheta_k, \bx_{k+1}) + \omega_{k+1}\rho(\btheta_k, \bx_{k+1})\|^2 \leq G (1+\|\bT_k\|^2).
\end{equation}

Next, according to the definition of $\widetilde H(\btheta_k, \bx_{k+1})$ in (\ref{tildeh}) and the Poisson equation (\ref{poisson_eqn}), we have
\begin{equation*}
\begin{split}
   \text{D}&=\langle \bT_k,  H(\btheta_k, \bx_{k+1})+\delta(\btheta_{k}, \bx_{k+1})+\omega_{k+1}\rho(\btheta_k, \bx_{k+1}) \rangle\\
   &=\langle \bT_k,  h(\btheta_k)+\mu_{\btheta_k}(\bm{x}_{k+1})-\mathrm{\Pi}_{\bm{\theta}_k}\mu_{\bm{\theta}_k}(\bm{x}_{k+1})+\delta(\btheta_{k}, \bx_{k+1})+\omega_{k+1}\rho(\btheta_k, \bx_{k+1}) \rangle\\
   &=\underbrace{\langle \bT_k,  h(\btheta_k)\rangle}_{\text{D}_{1}} +\underbrace{\langle\bT_k, \mu_{\btheta_k}(\bm{x}_{k+1})-\mathrm{\Pi}_{\bm{\theta}_k}\mu_{\bm{\theta}_k}(\bm{x}_{k+1})\rangle}_{\text{D}_{2}}+\underbrace{\langle \bT_k, \delta(\btheta_{k}, \bx_{k+1})+\omega_{k+1}\rho(\btheta_k, \bx_{k+1})\rangle}_{{\text{D}_{3}}}.
\end{split}
\end{equation*}

Using the stability property of the equilibrium in Lemma \ref{convex}, we have 
\begin{align*}
\langle \bm{T}_{k}, h(\bm{\theta}_{k})\rangle &\leq - \|\bm{T}_{k}\|^2. \tag{$\text{D}_1$}
\end{align*}
To deal with the error $\text{D}_2$, we make the following decomposition 
\begin{equation*}
\begin{split}
\text{D} &=\underbrace{\langle \bT_k, \mu_{\bm{\theta}_{k}}(\bm{\xeta}_{k+1})-\mathrm{\Pi}_{\bm{\theta}_{k}}\mu_{\bm{\theta}_{k}}(\bm{\bx}_{k})\rangle}_{\text{D}_{21}} \\
&+ \underbrace{\langle \bT_k,\mathrm{\Pi}_{\bm{\theta}_{k}}\mu_{\bm{\theta}_{k}}(\bm{x}_{k})- \mathrm{\Pi}_{\bm{\theta}_{k-1}}\mu_{\bm{\theta}_{k-1}}(\bm{x}_{k})\rangle}_{\text{D}_{22}}
+ \underbrace{\langle \bT_k,\mathrm{\Pi}_{\bm{\theta}_{k-1}}\mu_{\bm{\theta}_{k-1}}(\bm{x}_{k})- \mathrm{\Pi}_{\bm{\theta}_{k}}\mu_{\bm{\theta}_{k}}(\bm{\xeta}_{k+1})\rangle}_{\text{D}_{23}}.\\
\end{split}
\end{equation*}

(i) From the Markov property, $\mu_{\bm{\theta}_{k}}(\bm{\xeta}_{k+1})-\mathrm{\Pi}_{\bm{\theta}_{k}}\mu_{\bm{\theta}_{k}}(\bm{x}_{k})$ forms a martingale difference sequence 
$$\E\left[\langle \bT_k, \mu_{\bm{\theta}_{k}}(\bm{\xeta}_{k+1})-\mathrm{\Pi}_{\bm{\theta}_{k}}\mu_{\bm{\theta}_{k}}(\bm{x}_{k})\rangle |\mathcal{F}_{k}\right]=0. \eqno{(\text{D}_{21})}$$

(ii) By the regularity of the solution of Poisson equation in (\ref{poisson_reg}) and Lemma.\ref{theta_lip} in $\S$\ref{technique}, it leads to
\begin{equation}
\label{theta_delta}
\E[\|\mathrm{\Pi}_{\bm{\theta}_{k}}\mu_{\bm{\theta}_{k}}(\bm{x}_{k})- \mathrm{\Pi}_{\bm{\theta}_{k-1}}
 \mu_{\bm{\theta}_{k-1}}(\bm{x}_{k})\|]\leq C \|\btheta_k-\btheta_{k-1}\|\leq 2Q C\omega_k.
\end{equation}
Using Cauchy–Schwarz inequality, (\ref{theta_delta}) and the compactness of $\btheta$ in Assumption \ref{ass2a}, it follows that
$$\small{\E[\langle\bm{T}_{k},\mathrm{\Pi}_{\bm{\theta}_{k}}\mu_{\bm{\theta}_{k}}(\bm{x}_{k})- \mathrm{\Pi}_{\bm{\theta}_{k-1}}\mu_{\bm{\theta}_{k-1}}(\bm{x}_{k})\rangle]\leq \E[\|\bT_k\|]\cdot 2Q C\omega_k\leq 4Q^2 C\omega_{k}\leq 5Q^2 C\omega_{k+1}}   \eqno{(\text{D}_{22})},$$
where the last inequality follows from the step size assumption \ref{ass1} and holds given a large enough $k$.

(iii) \begin{equation*}
\begin{split}
\small
\langle \bm{T}_{k},\mathrm{\Pi}_{\bm{\theta}_{k-1}}\mu_{\bm{\theta}_{k-1}}(\bm{x}_{k})- \mathrm{\Pi}_{\bm{\theta}_{k}}\mu_{\bm{\theta}_{k}}(\bm{\xeta}_{k+1})\rangle
&=\left(\langle \bm{T}_{k}, \mathrm{\Pi}_{\bm{\theta}_{k-1}}\mu_{\bm{\theta}_{k-1}}(\bm{x}_{k}) \rangle- \langle \bm{T}_{k+1}, \mathrm{\Pi}_{\bm{\theta}_{k}}\mu_{\bm{\theta}_{k}}(\bm{\xeta}_{k+1})\rangle\right)\\
&\ \ \ +\left(\langle \bm{T}_{k+1}, \mathrm{\Pi}_{\bm{\theta}_{k}}\mu_{\bm{\theta}_{k}}(\bm{\xeta}_{k+1})\rangle-\langle \bm{T}_{k}, \mathrm{\Pi}_{\bm{\theta}_{k}}\mu_{\bm{\theta}_{k}}(\bm{\xeta}_{k+1})\rangle\right)\\
&={({z}_{k}-{z}_{k+1})}+{\langle \bm{T}_{k+1}-\bm{T}_{k}, \mathrm{\Pi}_{\bm{\theta}_{k}}\mu_{\bm{\theta}_{k}}(\bm{\xeta}_{k+1})\rangle},\\
\end{split}
\end{equation*}
where ${z}_{k}=\langle \bm{T}_{k}, \mathrm{\Pi}_{\bm{\theta}_{k-1}}\mu_{\bm{\theta}_{k-1}}(\bm{x}_{k})\rangle$. Using the regularity assumption in (\ref{poisson_reg}) and Lemma.\ref{theta_lip}, we have
$$\E\langle \bm{T}_{k+1}-\bm{T}_{k}, \mathrm{\Pi}_{\bm{\theta}_{k}}\mu_{\bm{\theta}_{k}}(\bm{\xeta}_{k+1})\rangle\leq   \E[\|\bm{\theta}_{k+1}-\bm{\theta}_{k}\|] \cdot \E[\|\mathrm{\Pi}_{\bm{\theta}_{k}}\mu_{\bm{\theta}_{k}}(\bm{\xeta}_{k+1})\|] \leq 2Q C \omega_{k+1}.\eqno{(\text{D}_{23})}$$

Furthermore, denote $\E[\|\delta(\btheta_{k}, \bx_{k+1})\|]$ by $\Delta_k$. Since $\rho(\btheta_k, \bx_{k+1})$ is compact, it follows that
\begin{equation*}
    \E[\|\delta(\btheta_{k}, \bx_{k+1})+\omega_{k+1}\rho(\btheta_k, \bx_{k+1}))\|]\leq \Delta_k + \omega_{k+1} Q.
\end{equation*}
Applying Cauchy–Schwarz inequality gives
$${\E[\langle \bT_k, \delta(\btheta_{k}, \bx_{k+1})+\omega_{k+1}\rho(\btheta_k, \bx_{k+1}))]\leq 2Q(\Delta_k+\omega_{k+1} Q)} \eqno{(\text{D}_{3})}$$

Finally, adding (\ref{first_term}), $\text{D}_1$, $\text{D}_{2}$ and $\text{D}_3$ together, it follows that for a constant 
     $$C_0 = G+5Q^2C+2QC+2Q^2,$$
we have
\begin{equation}
\begin{split}
\label{key_eqn}
\E\left[\|\bm{T}_{k+1}\|^2\right]&\leq (1-2\omega_{k+1}+G\omega^2_{k+1} )\E\left[\|\bm{T}_{k}\|^2\right]+C_0\omega^2_{k+1} +2Q\Delta_k\omega_{k+1} +2\E[z_{k}-z_{k+1}]\omega_{k+1}.
\end{split}
\end{equation}
Moreover, from (\ref{compactness}) and (\ref{poisson_reg}), $\E[|z_{k}|]$ is upper bounded by
\begin{equation}
\begin{split}
\label{condition:z}
\E[|z_{k}|]=\E[\langle \bm{T}_{k}, \mathrm{\Pi}_{\bm{\theta}_{k-1}}\mu_{\bm{\theta}_{k-1}}(\bm{x}_{k})\rangle]\leq \E[\|\bT_k\|]\E[\|\mathrm{\Pi}_{\bm{\theta}_{k-1}}\mu_{\bm{\theta}_{k-1}}(\bm{x}_{k})\|]\leq 2QC.
\end{split}
\end{equation}

According to Lemma $\ref{lemma:4}$ in $\S$\ref{technique}, we can choose $\lambda_0$ and $k_0$ such that 
\begin{align*}
\E[\|\bm{T}_{k_0}\|^2]\leq \psi_{k_0}=\lambda_0 \omega_{k_0}+Q\sup_{n\geq k_0}\Delta_{n},
\end{align*}
which satisfies the conditions ($\ref{lemma:3-a}$) and ($\ref{lemma:3-b}$) of Lemma $\ref{lemma:3-all}$ in $\S$\ref{technique}. Applying Lemma $\ref{lemma:3-all}$ leads to
\begin{equation}
\begin{split}
\label{eqn:9}
\E\left[\|\bm{T}_{k}\|^2\right]\leq \psi_{k}+\E\left[\sum_{j=k_0+1}^{k}\Lambda_j^k \left(z_{j-1}-z_{j}\right)\right],
\end{split}
\end{equation}
where $\psi_{k}=\lambda_0 \omega_{k}+Q\sup_{n\geq k_0}\Delta_{n}$ for all $k>k_0$. Based on ($\ref{condition:z}$) and the increasing condition of $\Lambda_{j}^k$ in Lemma $\ref{lemma:2}$ in $\S$\ref{technique}, we have
\begin{equation}
\small
\begin{split}
\label{eqn:10}
&\E\left[\left|\sum_{j=k_0+1}^{k} \Lambda_j^k\left(z_{j-1}-z_{j}\right)\right|\right]
=\E\left[\left|\sum_{j=k_0+1}^{k-1}(\Lambda_{j+1}^k-\Lambda_j^k)z_j-2\omega_{k}z_{k}+\Lambda_{k_0+1}^k z_{k_0}\right|\right]\\
\leq& \sum_{j=k_0+1}^{k-1}2(\Lambda_{j+1}^k-\Lambda_j^k)QC+\E[|2\omega_{k} z_{k}|]+2\Lambda_k^k QC\\
\leq& 2(\Lambda_k^k-\Lambda_{k_0}^k)QC+2\Lambda_k^k QC+2\Lambda_k^k QC\\
\leq& 6\Lambda_k^k QC.
\end{split}
\end{equation}

Therefore, given $\psi_{k}=\lambda_0 \omega_{k}+Q\sup_{n\geq k_0}\Delta_{n}$ that satisfies the conditions ($\ref{lemma:3-a}$), ($\ref{lemma:3-b}$) of Lemma $\ref{lemma:3-all}$, for any $k>k_0$, from ($\ref{eqn:9}$) and ($\ref{eqn:10}$), we have
\begin{equation*}
\E[\|\bm{T}_{k}\|^2]\leq \psi_{k}+6\Lambda_k^k QC=\left(\lambda_0+12QC\right)\omega_{k}+Q\sup_{n\geq k_0}\Delta_{n}=\lambda \omega_{k}+Q\sup_{n\geq k_0}\Delta_{n},
\end{equation*}
where $\lambda=\lambda_0+12QC$, $\lambda_0=\frac{2GQ\sup_{n\geq k_0} \Delta_n + 2C_0}{C_1}$, $\small{C_1=\lim \inf 2  \dfrac{\omega_{k}}{\omega_{k+1}}+\dfrac{\omega_{k+1}-\omega_{k}}{\omega^2_{k+1}}>0}$, $C_0=G+5Q^2C+2QC+2Q^2$ and $G=9 Q^2(1+Q^2)$.
\end{proof}

\subsection{Ergodicity and weighted averaging estimators}

Our interest is to analyze the deviation between the weighted averaging estimator $\frac{1}{k}\sum_{i=1}^k\btheta_{i, \mathcal{I}(\bx_i)}^{\zeta} f(\bx_i)$ and posterior average $\int f(\bx)\pi(d\bx)$ for a test function $f$. To obtain the desired error analysis, we first need to study the convergence of the empirical mean $\frac{1}{k}\sum_{i=1}^k f(\bx_i)$ to the posterior average $\bar f=\int f(\bx)\varpi_{\btheta_{\star}}(d\bx)$. The key tool for ergodic theory is still the Poisson equation to characterize the fluctuations between $f(\bx)$ and $\bar f$, which is defined as follows:
\begin{equation}
    \mathcal{L}\phi(\bx)=f(\bx)-\bar f,
\end{equation}
where $\phi(\bx)$ is the solution of the Poisson equation, and $\mathcal{L}$ is the infinitesimal generator of the Langevin diffusion 
\begin{equation*}
    \mathcal{L}\phi:=\nabla \phi \nabla L(\cdot, \btheta_{\star})+\tau\nabla^2\phi.
\end{equation*}

By imposing regularity conditions on the function $\phi(\bx)$, we can control the perturbations of $\frac{1}{k}\sum_{i=1}^k f(\bx_i)-\bar f$. Now, we present the first result, which is majorly adapted from Theorem 2 \citep{Chen15} with a fixed learning rate $\epsilon$. Similar conclusions have also been achieved in \citet{Teh16, VollmerZW2016, Dalalyan18}.
\begin{lemma}[Convergence of the Averaging Estimators]
\label{avg_converge}
Assume Assumptions $\ref{ass1}$-$\ref{ass4}$ hold. Given a sufficiently smooth function $\phi(\bx)$ and a function $\mathcal{V}(\bx)$, such that $\|D^k \phi\|\lesssim \mathcal{V}^{p_k}(\bx)$ and $p_k>0$ for $k\in\{0,1,2,3\}$. In addition, $\mathcal{V}^p$ has a bounded expectation: $\sup_{\bx} \E[\mathcal{V}^p(\bx)]<\infty$ and $\mathcal{V}$ is smooth, i.e. $\sup_{s\in\{0, 1\}} \mathcal{V}^p(s\bx+(1-s)\by)\lesssim \mathcal{V}^p(\bx)+\mathcal{V}^p(\by)$ for all $\bx,\by\in\bX$ and $p\leq 2\max_k\{p_k\}$. For any integrable function $f^2$, we have
\begin{equation*}
\begin{split}
    \left|\E\left[\frac{\sum_{i=1}^k f(\bx_i)}{k}\right]-\int f(\bx)\varpi_{\bm{\theta}_{\star}}(d\bx)\right|&= \mathcal{O}\left(\frac{1}{k\epsilon}+\epsilon+\sqrt{\frac{\sum_{i=1}^k \omega_k}{k}}+\sup_{n\geq k_0}\E[\|\delta(\btheta_{n}, \bx_{n+1})\|]^{0.5}\right). \\
\end{split}
\end{equation*}
\end{lemma}

\begin{proof}

To study the ergodic average, we can view the adaptive algorithm as a standard sampling algorithm with fixed latent variable $\btheta_{\star}$ and biased gradients, where the bias term is denoted by $$\bbeta_k=\nabla_{\bx} L(\bx, \btheta_k)-\nabla_{\bx}  L(\bx, \btheta_{\star}).$$

According to the smoothness assumption \ref{ass2} and Jensen's inequality, we have
\begin{equation}
\label{latent_bias}
\small
    \|\E[\bbeta_k]\|\leq \E[\|\bbeta_k\|]=\E[\|\nabla_{\bx} L(\bx, \btheta_k)-\nabla_{\bx}  L(\bx, \btheta_{\star})\|] \leq M\E[\|\btheta_k-\btheta_{\star}\|] \leq M\sqrt{\E[\|\btheta_k-\btheta_{\star}\|^2]}.
\end{equation}

As a result, we can reformulate the original adaptive algorithm as
\begin{equation*}
\begin{split}
    \bm{x}_{k+1}&=\bx_k- \epsilon_k\nabla_{\bx} \widetilde L(\bx_k, \btheta_k)+\mathcal{N}({0, 2\epsilon_k \tau\bm{I}})\\
    &=\bx_k- \epsilon_k\left(\nabla_{\bx} \widetilde L(\bx_k, \btheta_{\star})+\bbeta_k\right)+\mathcal{N}({0, 2\epsilon_k \tau\bm{I}}).
\end{split}
\end{equation*}
The ergodic average based on biased gradients and a fixed learning rate is a direct result of Theorem 2 \citep{Chen15}. Given regularity conditions on the solution of the Poisson equation, (\ref{latent_bias}) and Theorem \ref{latent_converge}, we know that 
\begin{equation*}
\small
\begin{split}
    \left|\E\left[\frac{\sum_{i=1}^k f(\bx_i)}{k}\right]-\int f(\bx)\varpi_{\bm{\theta}_{\star}}(d\bx)\right|&\leq \mathcal{O}\left(\frac{1}{k\epsilon}+\epsilon+\frac{\sum_{i=1}^k \|\E[\bbeta_k]\|}{k}\right)\\
    &\lesssim \mathcal{O}\left(\frac{1}{k\epsilon}+\epsilon+\frac{\sum_{i=1}^k \left(\omega_k+\sup_{n\geq k_0}\E[\|\delta(\btheta_{n}, \bx_{n+1})\|]\right)^{0.5}}{k}\right), \\
    &\leq \mathcal{O}\left(\frac{1}{k\epsilon}+\epsilon+\sqrt{\frac{\sum_{i=1}^k \omega_k}{k}}+\sup_{n\geq k_0}\E[\|\delta(\btheta_{n}, \bx_{n+1})\|]^{0.5}\right)
\end{split}
\end{equation*}
where the last inequality follows from $\small{(\omega_k+\Delta)^{0.5}\leq \omega_k^{0.5}+\Delta^{0.5}}$ and $\small{\sum_{i=1}^k \omega_i^{0.5}\leq \sqrt{k\sum_{i=1}^k \omega_i}}$.
\end{proof}


Now we are ready to show the convergence of the weighted averaging estimator $\frac{1}{k}\sum_{i=1}^k\btheta_{i, \mathcal{I}(\bx_i)}^{\zeta} f(\bx_i)$ to the posterior average $\int f(\bx)\pi(d\bx)$.
\begin{theorem}[Convergence of the Weighted Averaging Estimators] Assume Assumptions $\ref{ass1}$-$\ref{ass4}$ hold. Given a sufficiently smooth function $\phi(\bx)$ and a function $\mathcal{V}(\bx)$, such that $\|D^k \phi\|\lesssim \mathcal{V}^{p_k}(\bx)$ and $p_k>0$ for $k\in\{0,1,2,3\}$. In addition, $\mathcal{V}^p$ has a bounded expectation: $\sup_{\bx} \E[\mathcal{V}^p(\bx)]<\infty$ and $\mathcal{V}$ is smooth, i.e. $\sup_{s\in\{0, 1\}} \mathcal{V}^p(s\bx+(1-s)\by)\lesssim \mathcal{V}^p(\bx)+\mathcal{V}^p(\by)$ for all $\bx,\by\in\bX$ and $p\leq 2\max_k\{p_k\}$. For any integrable function $f^2$, we have that 
\label{w_avg_converge}
\begin{equation*}
\begin{split}
    \left|\E\left[\frac{1}{k}\sum_{i=1}^k\btheta_{i, \mathcal{I}(\bx_i)}^{\zeta} f(\bx_i)\right]-\int f(\bx)\pi(d\bx)\right|&= \mathcal{O}\left(\frac{1}{k\epsilon}+\epsilon+\sqrt{\frac{\sum_{i=1}^k \omega_k}{k}}+\sup_{n\geq k_0}\E[\|\delta(\btheta_{n}, \bx_{n+1})\|]^{0.5}\right). \\
\end{split}
\end{equation*}
\end{theorem}

\begin{proof}

Applying triangle inequality and $|\E[x]|\leq \E[|x|]$, we have
\begin{equation*}
\small
    \begin{split}
        &\left|\E\left[\frac{1}{k}\sum_{i=1}^k\btheta_{i, \mathcal{I}(\bx_i)}^{\zeta} f(\bx_i)\right]-\int f(\bx)\pi(d\bx)\right|\\
        \leq &\underbrace{\E\left[\frac{1}{k}\sum_{i=1}^k\left|\btheta_{i, \mathcal{I}(\bx_i)}^{\zeta}-\btheta_{\star, \mathcal{I}(\bx_i)}^{\zeta}\right| \cdot |f(\bx_i)|\right]}_{\text{I}_1} +\underbrace{\left|\E\left[\frac{1}{k}\sum_{i=1}^k\btheta_{\star, \mathcal{I}(\bx_i)}^{\zeta} f(\bx_i)\right]-\int f(\bx)\pi(d\bx)\right|}_{\text{I}_2}.
    \end{split}
\end{equation*}

For the first term $\text{I}_1$, consider the Mean value theorem for $t(x)=x^{\zeta}$
\begin{equation}
    \begin{split}
    \label{mvt}
        |\btheta_{i, \mathcal{I}(\bx)}^{\zeta}(\mathbb{A})-\btheta_{\star, \mathcal{I}(\bx)}^{\zeta}|\leq |\btheta_{i, \mathcal{I}(\bx)}(\mathbb{A})-\btheta_{\star, \mathcal{I}(\bx)}| \cdot \widetilde\btheta^{\zeta}\lesssim |\btheta_{i, \mathcal{I}(\bx)}(\mathbb{A})-\btheta_{\star, \mathcal{I}(\bx)}|,
    \end{split}
\end{equation}
where the first inequality holds for any $\bx\in\bX$, any $i\in\{1,2,...,m\}$, any $\sigma$-algebra $\mathbb{A}$ for the stochastic variable $\btheta_{i, \mathcal{I}(\bx)}$ and some $\widetilde\btheta \leq  \btheta_{i, \mathcal{I}(\bx)}\vee \btheta_{\star, \mathcal{I}(\bx)}$; the last inequality follows because we only consider $\btheta$ in a compact set. By Cauchy-Schwarz inequality, (\ref{mvt}) and Theorem \ref{latent_converge}, it follows that
\begin{equation*}
\small
    \begin{split}
        \text{I}_1&\lesssim \sqrt{\E\left[\frac{\sum_{i=1}^k\left(\btheta_{i, \mathcal{I}(\bx_i)}-\btheta_{\star, \mathcal{I}(\bx_i)}\right)^2}{k} \right]\E\left[\sum_{i=1}^k\frac{f^2(\bx_i)}{k}\right]}\\
        &\lesssim \sqrt{\sum_{i=1}^k\dfrac{\E[\|\btheta_i-\btheta_{\star}\|^2]}{k}}\lesssim \sqrt{\frac{\sum_{i=1}^k\omega_{i}}{k}+\sup_{n\geq k_0}\E[\|\delta(\btheta_{n}, \bx_{n+1})\|]}\lesssim \sqrt{\frac{\sum_{i=1}^k\omega_{i}}{k}}+\sup_{n\geq k_0}\E[\|\delta(\btheta_{n}, \bx_{n+1})\|]^{0.5},
    \end{split}
\end{equation*}
where the second inequality holds because of the integrability of $f^2(\bx)$ and the last inequality follows from $\sqrt{x+y}\leq \sqrt{x}+\sqrt{y}$.

Before we study $\text{I}_2$, we first decompose $\int  f(\bx) \pi(d\bx)$ into $m$ disjoint regions to facilitate the analysis
\begin{equation}
\label{split_posterior}
\small
\begin{split}
      \int  f(\bx) \pi(d\bx)=\int_{\cup_{j=1}^m E_j}  f(\bx) \pi(d\bx)=\sum_{j=1}^m\btheta_{\star, j}^{\zeta}\int_{E_j}  f(\bx) \frac{\pi(d\bx)}{\btheta_{\star, j}^{\zeta}}=\sum_{j=1}^m\btheta_{\star, j}^{\zeta}\int_{E_j}  f(\bx) \varpi_{\bm{\theta}_{\star}}(d\bx).\\
\end{split}
\end{equation}

Plugging (\ref{split_posterior}) into the second term $\text{I}_2$, we have
\begin{equation}
\label{final_i2}
\small
    \begin{split}
        \text{I}_2&=\left|\E\left[\frac{1}{k}\sum_{i=1}^k\sum_{j=1}^m\btheta_{\star, j}^{\zeta} f(\bx_i)1_{\bx_i\in E_j}\right]-\int f(\bx)\pi(d\bx)\right|\\
        &=\left|\sum_{j=1}^m\btheta_{\star, j}^{\zeta}\E\left[\frac{1}{k}\sum_{i=1}^k f(\bx_i)1_{\bx_i\in E_j}\right]-\sum_{j=1}^m\btheta_{\star, j}^{\zeta}\int_{E_j}  f(\bx) \varpi_{\bm{\theta}_{\star}}(d\bx)\right|\\
        &\leq \sum_{j=1}^m\btheta_{\star, j}^{\zeta}\left|\E\left[\frac{1}{k}\sum_{i=1}^k f(\bx_i)1_{\bx_i\in E_j}\right]-\int_{E_j}  f(\bx) \varpi_{\bm{\theta}_{\star}}(d\bx)\right|.\\
    \end{split}
\end{equation}

Given any $j\in \{1,2,...,m\}$, applying the function $f(\bx)1_{\bx\in E_j}$ to Theorem \ref{avg_converge} leads to
\begin{equation}
\label{almost_i2}
\small
\begin{split}
      \left|\E\left[\frac{1}{k}\sum_{i=1}^k f(\bx_i)1_{\bx_i\in E_j}\right]-\int_{E_j}  f(\bx) \varpi_{\bm{\theta}_{\star}}(d\bx)\right|\leq \mathcal{O}\left(\frac{1}{k\epsilon}+\epsilon+\sqrt{\frac{\sum_{i=1}^k \omega_k}{k}}+\sup_{n\geq k_0}\E[\|\delta(\btheta_{n}, \bx_{n+1})\|]^{0.5}\right).\\
\end{split}
\end{equation}

Plugging (\ref{almost_i2}) into (\ref{final_i2}) and combining $\text{I}_1$, we have
\begin{equation}
\small
\begin{split}
      \left|\E\left[\frac{1}{k}\sum_{i=1}^k\btheta_{i, \mathcal{I}(\bx_i)}^{\zeta} f(\bx_i)\right]-\int f(\bx)\pi(d\bx)\right|\leq \mathcal{O}\left(\frac{1}{k\epsilon}+\epsilon+\sqrt{\frac{\sum_{i=1}^k \omega_k}{k}}+\sup_{n\geq k_0}\E[\|\delta(\btheta_{n}, \bx_{n+1})\|]^{0.5}\right).\\
\end{split}
\end{equation}


\end{proof}

\section{Technical Lemmas}
\label{technique}

\begin{lemma}
\label{convex_property}
Given $\sup\{\omega_k\}_{k=1}^{\infty}\leq 1$, there exists a constant $G=9 Q^2(1+Q^2)$ such that
\begin{equation} \label{bound2}
\| \widetilde H(\bm{\theta}_k, \bm{\xeta}_{k+1})+\omega_{k+1}\rho(\btheta_k, \bx_{k+1})\|^2 \leq G (1+\|\bm{\theta}_k-\bm{\theta}_*\|^2). 
\end{equation}
\end{lemma}
\begin{proof}

According to the compactness condition in Assumption \ref{ass2a}, we have
\begin{equation*}
\|H(\bm{\theta}_k, \bm{\xeta}_{k+1})\|^2\leq Q^2 (1+\|\bm{\theta}_k\|^2) = 
 Q^2 (1+\|\bm{\theta}_k-\bm{\theta}_*+\bm{\theta}_*\|^2)\leq Q^2 (1+2\|\bm{\theta}_k-\bm{\theta}_*\|^2+2Q^2).
\end{equation*}

Therefore, we can show that for a constant $G=9Q^2(1+Q^2)$
\begin{equation*}
\small
\begin{split}
    \|\widetilde H(\bm{\theta}, \bm{\xeta})+\omega_{k+1}\rho(\btheta_k, \bx_{k+1})\|^2 &\leq 3\|H(\bm{\theta}_k, \bm{\xeta}_{k+1})\|^2 + 3\|\bdelta(\theta_k)\|^2+3\omega_{k+1}^2 \|\rho(\btheta_k, \bx_{k+1})\|\\
    &\leq 3Q^2 (1+2\|\bm{\theta}_k-\bm{\theta}_*\|^2+2Q^2) + 6Q^2\\
    &\leq 3Q^2 (3+3Q^2+(3+3Q^2)\|\bm{\theta}_k-\bm{\theta}_*\|^2)\\
    &\leq G (1+\|\bm{\theta}_k-\bm{\theta}_*\|^2).
\end{split}
\end{equation*}
\end{proof}

\begin{lemma}
\label{theta_lip}Given $\sup\{\omega_k\}_{k=1}^{\infty}\leq 1$, we have that
\begin{equation}
\label{lip_theta}
    \|\btheta_{k}-\btheta_{k-1}\|\leq 2\omega_{k} Q
\end{equation}
\end{lemma}

\begin{proof}
Following the update $\btheta_k-\btheta_{k-1}=\omega_k \widetilde H(\bm{\theta}_{k-1}, \bm{x}_{k})+\omega_{k}^2 \brho_{k}$, we have that
$$\|\btheta_{k}-\btheta_{k-1}\|= \|\omega_k \widetilde H(\bm{\theta}_{k-1}, \bm{x}_{k})+\omega_{k}^2 \brho_{k}\|\leq \omega_k\| \widetilde H(\bm{\theta}_{k-1},\bm{x}_{k})\|+\omega_{k}^2\| \brho_{k}\|.$$
By the compactness condition in Assumption \ref{ass2a} and $\sup\{\omega_k\}_{k=1}^{\infty}\leq 1$, (\ref{lip_theta}) can be derived.
\end{proof}

\begin{lemma}
\label{lemma:4}
There exist constants $\lambda_0$ and $k_0$ such that $\forall \lambda\geq\lambda_0$ and $\forall k> k_0$, the sequence $\{\psi_{k}\}_{k=1}^{\infty}$, where $\psi_{k}=\lambda\omega_{k}+Q \sup_{n\geq k_0}\Delta_n$, satisfies
\begin{equation}
\begin{split}
\label{key_ieq}
\psi_{k+1}\geq& (1-2\omega_{k+1}+G\omega_{k+1}^2)\psi_{k}+C_0\omega_{k+1}^2  +2Q \Delta_k\omega_{k+1}.
\end{split}
\end{equation}
\begin{proof}
By replacing $\psi_{k}$ with $\lambda\omega_{k}+Q \sup_{n\geq k_0}\Delta_n$ in ($\ref{key_ieq}$), it suffices to show
\begin{equation*}
\small
\begin{split}
\label{lemma:loss_control}
\lambda \omega_{k+1}+Q \sup_{n\geq k_0}\Delta_n\geq& (1-2\omega_{k+1}+G\omega_{k+1}^2)\left(\lambda \omega_{k}+Q \sup_{n\geq k_0}\Delta_n\right)+C_0\omega_{k+1}^2 + 2Q\Delta_k\omega_{k+1}.
\end{split}
\end{equation*}

which is equivalent to proving
\begin{equation*}
\small
\begin{split}
&\lambda (\omega_{k+1}-\omega_k+2\omega_k\omega_{k+1}-G\omega_k\omega_{k+1}^2)\geq  Q\sup_{n\geq k_0}\Delta_n(-2\omega_{k+1}+G\omega_{k+1}^2 )+C_0\omega_{k+1}^2+ 2Q\Delta_k\omega_{k+1}.
\end{split}
\end{equation*}

Given the step size condition in ($\ref{a1}$), we have $\small{\omega_{k+1}-\omega_{k}+2 \omega_{k}\omega_{k+1} \geq C_1 \omega_{k+1}^2}$, where $\small{C_1=\lim \inf 2  \dfrac{\omega_{k}}{\omega_{k+1}}+\dfrac{\omega_{k+1}-\omega_{k}}{\omega^2_{k+1}}>0}$. Together with the fact that $-\sup_{n\geq k_0}\Delta_n\leq \Delta_k$, it suffices to prove
\begin{equation}
\begin{split}
\label{loss_control-2}
\lambda \left(C_1-G\omega_{k}\right)\omega^2_{k+1}\geq  \left(GQ \sup_{n\geq k_0}\Delta_n+C_0\right)\omega^2_{k+1}.
\end{split}
\end{equation}

It is clear that for a large enough $k_0$ and $\lambda_0$ such that $\omega_{k_0}\leq \frac{C_1}{2G}$, $\lambda_0=\frac{2GQ\sup_{n\geq k_0} \Delta_n + 2C_0}{C_1}$, the desired conclusion ($\ref{loss_control-2}$) holds for all such $k\geq k_0$ and $\lambda\geq \lambda_0$.
\end{proof}
\end{lemma}

\begin{lemma}
\label{lemma:3-all}
Let $\{\psi_{k}\}_{k> k_0}$ be a series that satisfies the following inequality for all $k> k_0$
\begin{equation}
\begin{split}
\label{lemma:3-a}
\psi_{k+1}\geq &\psi_{k}\left(1-2\omega_{k+1}+G\omega^2_{k+1}\right)+C_0\omega^2_{k+1} + 2Q \Delta_k\omega_{k+1},
\end{split}
\end{equation}
and assume there exists such $k_0$ that 
\begin{equation}
\begin{split}
\label{lemma:3-b}
\E\left[\|\bm{T}_{k_0}\|^2\right]\leq \psi_{k_0}.
\end{split}
\end{equation}
Then for all $k> k_0$, we have
\begin{equation}
\begin{split}
\label{result}
\E\left[\|\bm{T}_{k}\|^2\right]\leq \psi_{k}+\sum_{j=k_0+1}^{k}\Lambda_j^k (z_{j-1}-z_j).
\end{split}
\end{equation}
\end{lemma}

\begin{proof}
We prove by the induction method. Assuming (\ref{result}) is true and combining (\ref{key_eqn}), (\ref{lemma:3-a}) and Lemma.\ref{lemma:2}, we have that 
\begin{equation*}
\small
\begin{split}
    \E\left[\|\bm{T}_{k+1}\|^2\right]&\leq (1-2\omega_{k+1}+\omega^2_{k+1} G)(\psi_{k}+\sum_{j=k_0+1}^{k}\Lambda_j^k (z_{j-1}-z_j))\\
    &\ \ +\omega^2_{k+1} C_0+2Q \delta(\btheta_{k}, \bx_{k+1})\omega_{k+1}+2\omega_{k+1}\E[z_{k}-z_{k+1}]\\
    & \leq \psi_{k+1}+(1-2\omega_{k+1}+\omega^2_{k+1} G)\sum_{j=k_0+1}^{k}\Lambda_j^k (z_{j-1}-z_j)+2\omega_{k+1}\E[z_{k}-z_{k+1}]\\
    & \leq \psi_{k+1}+\sum_{j=k_0+1}^{k}\Lambda_j^{k+1} (z_{j-1}-z_j)+\Lambda_{k+1}^{k+1}\E[z_{k}-z_{k+1}]\\
    & \leq \psi_{k+1}+\sum_{j=k_0+1}^{k+1}\Lambda_j^{k+1} (z_{j-1}-z_j)\\
\end{split}
\end{equation*}
\end{proof}

The following lemma is a restatement of Lemma 25 (page 247) from \citet{Albert90}.
\begin{lemma}
\label{lemma:2}
Suppose $k_0$ is an integer satisfying
$\inf_{k> k_0} \dfrac{\omega_{k+1}-\omega_{k}}{\omega_{k}\omega_{k+1}}+2-G\omega_{k+1}>0$ 
for some constant $G$. 
Then for any $k>k_0$, the sequence $\{\Lambda_k^K\}_{k=k_0, \ldots, K}$ defined below is increasing and uppered bounded by $2\omega_{k}$
\begin{equation}  
\Lambda_k^K=\left\{  
             \begin{array}{lr}  
             2\omega_{k}\prod_{j=k}^{K-1}(1-2\omega_{j+1}+G\omega_{j+1}^2) & \text{if $k<K$},   \\  
              & \\
             2\omega_{k} &  \text{if $k=K$}.
             \end{array}  
\right.  
\end{equation} 
\end{lemma}

\bibliography{mybib}
\bibliographystyle{plainnat}


\maketitle

In this supplementary material, we review the related methodologies in $\S$\ref{review}, show the convergence in $\S$\ref{convergence} and prove the technical lemmas in $\S$\ref{technique}.
\section{Background on Stochastic Approximation and Poisson Equation}
\label{review}

\subsection{Robbins–Monro algorithm}
Robbins–Monro algorithm \citep{Robbins51} aims to solve the root finding problem. Given a random field of $H(\bm{\btheta}, \bm{\bx})$ with respect to $\bm{\bx}$, our goal is to find the equilibrium $\btheta$ for the mean field function $h(\btheta)$ such that
\begin{equation*}
\begin{split}
\label{sa00}
h(\btheta)&=\int H(\bm{\theta}, \bm{\bx})\varpi_{\bm{\theta}}(d\bm{\bx})=0,
\end{split}
\end{equation*}
where $\bx\in \bX \subset \mathbb{R}^d$ and $\btheta\in\bTheta \subset \mathbb{R}^{d_{\btheta}}$.
The algorithm is implemented as follows:
\begin{itemize}
\item[(1)] Sample $\bm{x}_{k+1}$ from the invariant distribution $\varpi_{\bm{\theta}_{k}}(\bm{x})$,

\item[(2)] Update $\bm{\theta}_{k+1}=\bm{\theta}_{k}+\omega_{k+1} H(\bm{\theta}_{k}, \bm{x}_{k+1}).$
\end{itemize}

\subsection{General stochastic approximation}
The stochastic approximation algorithm \citep{Albert90} is a generalization of the Robbins–Monro algorithm, which consists of the following steps:
\begin{itemize}
\item[(1)] Sample $\bm{x}_{k+1}$ from the transition kernel  $\Pi_{\bm{\theta_{k}}}(\bm{x}_{k}, \cdot)$, which admits $\varpi_{\bm{\theta}_{k}}(\bm{x})$ as
the invariant distribution,

\item[(2)] Update $\bm{\theta}_{k+1}=\bm{\theta}_{k}+\omega_{k+1} H(\bm{\theta}_{k}, \bm{x}_{k+1})+\omega_{k+1}^2 \rho(\bm{\theta}_{k}, \bm{x}_{k+1})$.
\end{itemize}

In contrast to the Robbins-Monro algorithm, stochastic approximation samples $\bx$ from a transition kernel $\Pi_{\bm{\theta_{k}}}(\cdot, \cdot)$ instead of a distribution $\varpi_{\bm{\theta}_{k}}(\cdot)$, which leads to a Markov state-dependent noise $H(\btheta_k, \bx_{k+1})-h(\btheta_k)$. In addition, it allows small oscillations $\rho$ without affecting the convergence. 

\subsection{Poisson equation}

The coupled process $\{(\bx_k, \btheta_k)\}_{k=1}^{\infty}$ forms a nonhomogeneous Markov chain. Let $\Pi_{\bm{\theta}}(\bm{x}, A)$ be the transition kernel for any Borel subset $A\subset \bX$ and let a function $\mu_{\btheta}(\cdot)$ on $\bX$ solve the following Poisson equation 
\begin{equation*}
    \mu_{\btheta}(\bm{x})-\mathrm{\Pi}_{\bm{\theta}}\mu_{\bm{\theta}}(\bm{x})=H(\bm{\theta}, \bm{x})-h(\bm{\theta}).
\end{equation*}
The solution of Poisson equation can be formulated as follows when the series converge.
\begin{equation*}
    \mu_{\btheta}(\bx):=\sum_k \Pi_{\btheta}^k (H(\btheta, \bx)-h(\btheta)).
\end{equation*}
By imposing regularity conditions on $\mu_{\btheta}(\cdot)$, we can control the perturbations over time from $\int H(\btheta_k, \bx)\Pi_{\btheta_k}(\bx_k, d\bx)$ to $h(\btheta)$ and guarantee the consistency of the estimator $\btheta$. In particular, \citet{Albert90} has reduced the study of individual algorithms to the verification of the following regularity assumption on $\mu_{\btheta}(\cdot)$ and bounded moment of certain degree on $V(\bx)$.

\textbf{Assumption}
There exist a function  $V: \mX \to [1,\infty)$, and a constant $C$ such that for all $\bm{\theta}, \bm{\theta}'\in \bm{\bTheta}$, we have
\begin{equation*}
\begin{split}
\|\mathrm{\Pi}_{\bm{\theta}}\mu_{\btheta}(\bx)\|&\leq C V(\bx),\ \|\mathrm{\Pi}_{\bm{\theta}}\mu_{\bm{\theta}}(\bx)-\mathrm{\Pi}_{\bm{\theta'}}\mu_{\bm{\theta'}}(\bx)\|\leq C\|\bm{\theta}-\bm{\theta}'\| V(\bx), \E[V(\bx)]\leq \infty.\\
\end{split}
\end{equation*}

Notably, only 1st-order smoothness is required for the convergence of the adaptive algorithms \citep{Albert90, andrieu06}, which is much weaker than the 4th-order smoothness used in the ergodicity theory \citep{mattingly10, VollmerZW2016}.


\section{Convergence Analysis}\label{convergence}

\subsection{Our algorithm}

Our algorithm falls into the class of the stochastic approximation algorithm, which follows
\begin{itemize}
\item[(1)] Sample $\bm{x}_{k+1}=\bx_k- \epsilon_k\nabla_{\bx} \widetilde L(\bx_k, \btheta_k)+\mathcal{N}({0, 2\epsilon_k \tau\bm{I}}), \ \ \ \ \ \ \ \ \ \ \ \ \ \ \ \ \ \ \ \ \ \ \ \ \ \ \ \ \ \ \ \ \ \ \ \ \ \ \ \ \ \ \ \ \ \ \ \ (\text{S}_1)$

\item[(2)] Update $\bm{\theta}_{k+1}=\bm{\theta}_{k}+\omega_{k+1} \widetilde H(\epsilon_k, \bm{\theta}_{k}, \bm{x}_{k+1})+\omega_{k+1}^2 \rho(\bm{\theta}_{k}, \bm{x}_{k+1}).\ \ \ \ \ \ \ \ \ \ \ \ \ \ \ \ \ \ \ \ \ \ \ \ \ \ \ \ \ \ \ \ \ \ \ \ \ \ \ \ \ \ \ \ \ \ \ \ (\text{S}_2)$
\end{itemize}
where $\epsilon_k$ is the learning rate, $\omega_k$ is the step size, $\nabla_{\bx} \widetilde L(\bx_k, \btheta_k)$ is the stochastic gradient for $\nabla_{\bx} L(\bx_k, \btheta_k)$. The stochastic random field $\widetilde H(\epsilon_k, \bm{\theta}_{k}, \bm{x}_{k+1})$ is an estimator of the random field $ H(\bm{\theta}_{k}, \bm{x}_{k+1})$ from discretization and mini-batch evaluations. Without loss of generality, we assume 
\begin{equation}
    \label{tildeh}
    \widetilde H(\epsilon_k, \bm{\theta}_{k}, \bm{x}_{k+1})= H(\bm{\theta}_{k}, \bm{x}_{k+1})+\delta(\epsilon_k, \btheta_{k}, \bx_{k+1}),
\end{equation}
where the bias $\delta(\epsilon_k, \btheta_{k}, \bx_{k+1})$ is a random vector generated from two aspects: the discretization error of SGLD ($\text{S}_1$) in approximating the underlying Langevin diffusion \citep{Issei14, Maxim17}; the estimation error from approximating $H(\bm{\theta}_{k}, \bm{x}_{k+1})$ using a randomized $\widetilde H(\epsilon_k, \bm{\theta}_{k}, \bm{x}_{k+1})$. Take an example for the latter, when $H(\bm{\theta}_{k}, \bm{x}_{k+1})=L^2(\bx_{k+1}, \btheta_k)$ and $\widetilde H(\epsilon_k, \bm{\theta}_{k}, \bm{x}_{k+1})=(L(\bx_{k+1}, \btheta_k)+\mathcal{N}(0,\sigma^2))^2$, it is clear that $\E[\widetilde H(\epsilon_k, \bm{\theta}_{k}, \bm{x}_{k+1})]=L^2(\bx_{k+1}, \btheta_k)+\sigma^2\neq H(\bm{\theta}_{k}, \bm{x}_{k+1})$, which induced a fixed bias term $\sigma^2$. As such, we know that the bias becomes smaller given a larger batch size and vanishes when the full dataset is used. 

\subsection{Convergence analysis of stochastic approximation}

The convergence analysis rests on the following assumptions:

\begin{assump}[Step size]
\label{ass1}
$\{\omega_{k}\}_{k\in \mathrm{N}}$ is a positive decreasing sequence of real numbers such that
\begin{equation} \label{a1}
\omega_{k}\rightarrow 0, \ \ \sum_{k=1}^{\infty} \omega_{k}=+\infty,\ \  \lim_{k\rightarrow \infty} \inf 2  \dfrac{\omega_{k}}{\omega_{k+1}}+\dfrac{\omega_{k+1}-\omega_{k}}{\omega^2_{k+1}}>0.
\end{equation}
According to \citet{Albert90}, we can choose $\omega_{k}:=\frac{A}{k^{\alpha}+B},\ \ \text{where} \ \alpha \in (0, 1]\ \text{and}\ A>\frac{\alpha}{2}.$

\end{assump}

\begin{assump}[Compactness] \label{ass2a} 
We study $\btheta, H(\btheta, \bx)$, $\widetilde H(\epsilon, \btheta, \bx)$, $\rho(\btheta, \bx)$ and $\delta(\btheta, \bx)$ in a compact space, and there exists a constant $Q>0$ such that for $\forall \btheta\in \bTheta$, $\bx \in \bX$ and $k \in \mathbb{N}$.
\begin{equation}
\label{compactness}
     \|\btheta\|\leq Q, \|\rho(\btheta, \bx)\|\leq Q, \|\delta(\btheta, \bx)\|\leq Q, \|H(\btheta, \bx)\|\leq Q,\|\widetilde H(\epsilon, \btheta, \bx)\|\leq Q.
\end{equation}
\end{assump}

\begin{assump}[Smoothness]
\label{ass2}
$L(\bm{\xeta}, \bm{\theta})$ is $M$-smooth, namely, for any $\bx, \bx'\in \mX$, $\bm{\theta}, \bm{\theta}'\in \bTheta$.
\begin{equation}
\begin{split}
\label{ass_2_1_eq}
\|\nabla_{\bx} L(\bx, \btheta)-\nabla_{\bx} L(\bm{\bx}', \btheta')\|\leq M\|\bx-\bx'\|+M\|\btheta-\btheta'\|. 
\end{split}
\end{equation}
\end{assump}

\begin{assump}[Dissipativity]
\label{ass3}
 There exist constants $m>0, b\geq 0$, such that for any $\bx \in \mX$ and $\btheta \in \bTheta$, 
\label{ass_dissipative}
\begin{equation}
\label{eq:01}
\langle \nabla_{\bx} L(\bx, \btheta), \bx\rangle\leq b-m\|\bx\|^2.
\end{equation}
\end{assump}
This assumption has been widely used in proving the geometric ergodicity of dynamical systems \citep{mattingly02, Maxim17, Xu18}. It ensures a particle to move towards the origin regardless of the starting position.

\begin{assump}[Gradient noise] 
\label{ass4}
The stochastic noise follows
\begin{equation*}
\E[\nabla_{\bx}\widetilde L(\bx_{k},
 \btheta_{k})-\nabla_{\bx} L(\bx_{k}, \btheta_{k})]=0.
\end{equation*}
There exists some constants $M, B>0$ such that the second moment of the noise is bounded by
\begin{equation*} 
\E [\|\nabla_{\bx}\widetilde L(\bx_{k},
 \btheta_{k})-\nabla_{\bx} L(\bx_{k}, \btheta_{k})\|^2]\leq M^2 \|\bx\|^2+B^2. 
\end{equation*}

\end{assump}

 
 



\begin{lemma}[Stability]
\label{convex}
The mean field function $h(\btheta)$ satisfies that $\forall \btheta \in \bTheta$, $\langle h(\btheta) , \btheta -\btheta_{\star}\rangle \leq  -\|\btheta - \btheta_{\star}\|^2$. The mean field system $\dot{\btheta}=-h(\btheta)$ is globally asymptotically stable and $\btheta_{\star}$ is the globally asymptotically stable equilibrium.
\end{lemma}

\begin{proof}

Given the random field $H(\btheta, \bx) = \btheta_{ \mathcal{I}(\bx)}^{\zeta}\bm{1}_{\mathcal{I}(\bx)} - \btheta$, the mean field function $h(\btheta)$ under probability $\varpi_{\bm{\theta}}(\bx)$ follows
\begin{equation*}
\small
    h(\btheta)=\int H(\btheta, \bx) \varpi_{\bm{\theta}}(d\bx)=\int \left(\btheta_{\mathcal{I}(\bx)}^{\zeta}\bm{1}_{\mathcal{I}(\bx)} - \btheta\right) \dfrac{\pi(\bx)}{\btheta_{\mathcal{I}(\bx)}^{\zeta}} d\bx=\int \bm{1}_{\mathcal{I}(\bx)} \pi(\bx)d\bx-\btheta=\btheta_{\star}-\btheta.
\end{equation*}

Thus,
\begin{equation*}
    \langle h(\btheta), \btheta -\btheta_{\star}\rangle = -\|\btheta - \btheta_{\star}\|^2\leq  -\|\btheta - \btheta_{\star}\|^2.
\end{equation*}

Consider a positive definite Lyapunov function $L(\btheta)=\frac{1}{2}\left(\btheta_{\star}-\btheta\right)^2$ for the mean field system $\dot{\btheta}=-\left(\btheta_{\star}-\btheta\right)$. Clearly, $\dot{L}=\frac{\partial L(\btheta)}{\partial \btheta} \dot{\btheta}=-\left(\btheta_{\star}-\btheta\right)^2<0$ for $\forall \btheta \neq \btheta_{\star}$. This shows that the mean field system is globally asymptotically stable and $\btheta_{\star}$ is the globally asymptotically stable equilibrium.
\end{proof}


The following lemma is a restatement of Lemma 1 in \citet{deng2019}.
\begin{lemma}[Uniform $L^2$ bounds]
\label{lemma:1}
Suppose Assumptions \ref{ass1}-\ref{ass4} holds.  Given a small enough learning rate
 $\ 0<\epsilon<\operatorname{Re}(\tfrac{m-\sqrt{m^2-3M^2}}{3M^2})\wedge 1$,  then 
$\sup_{k\geq 1} \E[\|\bm{\xeta}_{k}\|^2] \leq \E [\|\bm{\xeta}_0\|^2]+ \tfrac{1}{2m}(2b+3 B^2+2\tau d)$.
\end{lemma}

\begin{lemma}[Solution of Poisson equation]
\label{lyapunov}
There is a solution $\mu_{\btheta}(\cdot)$ on $\bX$ to the Poisson equation 
\begin{equation}
    \label{poisson_eqn}
    \mu_{\btheta}(\bm{x})-\mathrm{\Pi}_{\bm{\theta}}\mu_{\bm{\theta}}(\bm{x})=H(\bm{\theta}, \bm{x})-h(\bm{\theta}).
\end{equation}
such that for all $\bm{\theta}, \bm{\theta}'\in \bm{\bTheta}$ and a function  $V(\bx)=1+\|\bx\|^2$, there exists a constant $C$ such that
\begin{equation}
\begin{split}
\label{poisson_reg}
\E[\|\mathrm{\Pi}_{\bm{\theta}}\mu_{\btheta}(\bx)\|]&\leq C,\\
\E[\|\mathrm{\Pi}_{\bm{\theta}}\mu_{\bm{\theta}}(\bx)-\mathrm{\Pi}_{\bm{\theta}'}\mu_{\bm{\theta'}}(\bx)\|]&\leq C\|\bm{\theta}-\bm{\theta}'\|.\\
\end{split}
\end{equation}
\end{lemma}

\begin{proof}
According to Assumption 12 in \citet{VollmerZW2016}, we can easily show that A.7 and A.8 in \citet{VollmerZW2016} is satisfied given a Lyapunov function $V(\bx)=1+\|\bx\|^2$, the dissipitivity condition (\ref{ass3}) and the gradient noise condition (\ref{ass4})\footnote{Adaptive algorithms only require 1st-order smoothness of the solution of Poisson equation instead of 4th-order smoothness as used in proving the ergodicity average \citep{mattingly10, VollmerZW2016}. Therefore, as shown in Lemma.15 \citep{VollmerZW2016}, a much weaker bound such as (\ref{ass4}) can be applied. Further improvement on the results of \citet{Pardoux01} goes beyond of this paper.}. In what follows, Theorem 13 (\citet{Pardoux01}) in \citet{VollmerZW2016} holds, which shows that for any function $H(\btheta, \bx)\lesssim V(\bx)$, we have $\mu_{\btheta}\lesssim V(\bx)$ and $\nabla \mu_{\btheta}\lesssim V(\bx)$. This implies that there exists a constant $\overline{C}$ such that 
\begin{equation}
\begin{split}
\label{first_reg_poisson}
\|\mu_{\btheta}(\bx)\|&\leq \overline{C} V(\bx),\\
\|\mu_{\bm{\theta}}(\bx)-\mu_{\bm{\theta'}}(\bx)\|&\leq \overline{C}\|\bm{\theta}-\bm{\theta}'\| V(\bx).\\
\end{split}
\end{equation}

Together with the triangle inequality, it follows that
\begin{equation*}
    \|\mathrm{\Pi}_{\bm{\theta}}\mu_{\bm{\theta}}(\bx)-\mathrm{\Pi}_{\bm{\theta}'}\mu_{\bm{\theta'}}(\bx)\|\leq \|\mathrm{\Pi}_{\bm{\theta}}\mu_{\bm{\theta}}(\bx)-\mathrm{\Pi}_{\bm{\theta}}\mu_{\bm{\theta'}}(\bx)\|+\|\mathrm{\Pi}_{\bm{\theta}}\mu_{\bm{\theta'}}(\bx)-\mathrm{\Pi}_{\bm{\theta}'}\mu_{\bm{\theta'}}(\bx)\|.
\end{equation*}

Combining Lemma.\ref{lemma:1} to ensure the bounded Lyapunov function  $V(\bx)$ in expectation, we can obtain the desired result
\begin{equation*}
\begin{split}
\E[\|\mathrm{\Pi}_{\bm{\theta}}\mu_{\btheta}(\bx)\|]&\leq C,\\
\E[\|\mathrm{\Pi}_{\bm{\theta}}\mu_{\bm{\theta}}(\bx)-\mathrm{\Pi}_{\bm{\theta}'}\mu_{\bm{\theta'}}(\bx)\|]&\leq C \|\bm{\theta}-\bm{\theta}'\|.\\
\end{split}
\end{equation*}
by applying (\ref{first_reg_poisson}) and the smoothness condition in \ref{ass2} for some constant $C$.
\end{proof}

Now we are ready to present our first main result, where the technique lemmas are shown in $\S$\ref{technique}.

\begin{theorem}[$L^2$ convergence rate]
\label{latent_converge}
Assume Assumptions $\ref{ass1}$-$\ref{ass4}$ hold. For a large enough $k_0$, small enough learning rates $\{\epsilon_k\}_{k=1}^{\infty}$ and step sizes $\{\omega_k\}_{k=1}^{\infty}$, there exists a globally asymptotically equilibrium $\btheta_{\star}$ such that
\begin{equation*}
    \E\left[\|\bm{\theta}_{k}-\bm{\theta}_{\star}\|^2\right]=\mathcal{O}( \omega_{k})+Q\sup_{n\geq k_0}\E[\|\delta(\epsilon_n, \btheta_{n}, \bx_{n+1})\|].
\end{equation*}
\end{theorem}
\begin{proof}

Consider the following iterates 
\begin{equation*}
    \bm{\theta}_{k+1}=\bm{\theta}_{k}+\omega_{k+1} \left(\widetilde H(\epsilon_k, \bm{\theta}_{k}, \bm{x}_{k+1})+\omega_{k+1} \rho(\btheta_k, \bx_{k+1})\right).
\end{equation*}

Denote $\bm{T}_{k}=\bm{\theta}_{k}-\bm{\theta}_{\star}$. By subtracting $\btheta_{\star}$ on both sides and taking the square and $L_2$ norm,  we have
\begin{equation*}
\begin{split}
    \|\bT_{k+1}^2\|&=\|\bT_k^2\| +\omega_{k+1}^2 \|\widetilde H(\epsilon_k, \btheta_k, \bx_{k+1}) + \omega_{k+1}\rho(\btheta_k, \bx_{k+1})\|^2\\
    &\ \ \ +2\omega_{k+1}\underbrace{\langle \bT_k, \widetilde H(\epsilon_k, \btheta_k, \bx_{k+1})+\omega_{k+1}\rho(\btheta_k, \bx_{k+1})\rangle}_{\text{D}}.
\end{split}
\end{equation*}

First, using Lemma.\ref{convex_property} in $\S$\ref{technique}, there exists a constant $G=9 Q^2(1+Q^2)$ such that
\begin{equation}
\label{first_term}
    \|\widetilde H(\epsilon_k, \btheta_k, \bx_{k+1}) + \omega_{k+1}\rho(\btheta_k, \bx_{k+1})\|^2 \leq G (1+\|\bT_k\|^2).
\end{equation}

Next, according to the definition of $\widetilde H(\epsilon_k, \btheta_k, \bx_{k+1})$ in (\ref{tildeh}) and the Poisson equation (\ref{poisson_eqn}), we have
\begin{equation*}
\begin{split}
   \text{D}&=\langle \bT_k,  H(\btheta_k, \bx_{k+1})+\delta(\epsilon_k, \btheta_{k}, \bx_{k+1})+\omega_{k+1}\rho(\btheta_k, \bx_{k+1}) \rangle\\
   &=\langle \bT_k,  h(\btheta_k)+\mu_{\btheta_k}(\bm{x}_{k+1})-\mathrm{\Pi}_{\bm{\theta}_k}\mu_{\bm{\theta}_k}(\bm{x}_{k+1})+\delta(\epsilon_k, \btheta_{k}, \bx_{k+1})+\omega_{k+1}\rho(\btheta_k, \bx_{k+1}) \rangle\\
   &=\underbrace{\langle \bT_k,  h(\btheta_k)\rangle}_{\text{D}_{1}} +\underbrace{\langle\bT_k, \mu_{\btheta_k}(\bm{x}_{k+1})-\mathrm{\Pi}_{\bm{\theta}_k}\mu_{\bm{\theta}_k}(\bm{x}_{k+1})\rangle}_{\text{D}_{2}}+\underbrace{\langle \bT_k, \delta(\epsilon_k, \btheta_{k}, \bx_{k+1})+\omega_{k+1}\rho(\btheta_k, \bx_{k+1})\rangle}_{{\text{D}_{3}}}.
\end{split}
\end{equation*}

Using the stability property of the equilibrium in Lemma \ref{convex}, we have 
\begin{align*}
\langle \bm{T}_{k}, h(\bm{\theta}_{k})\rangle &\leq - \|\bm{T}_{k}\|^2. \tag{$\text{D}_1$}
\end{align*}
To deal with the error $\text{D}_2$, we make the following decomposition 
\begin{equation*}
\begin{split}
\text{D} &=\underbrace{\langle \bT_k, \mu_{\bm{\theta}_{k}}(\bm{\xeta}_{k+1})-\mathrm{\Pi}_{\bm{\theta}_{k}}\mu_{\bm{\theta}_{k}}(\bm{\bx}_{k})\rangle}_{\text{D}_{21}} \\
&+ \underbrace{\langle \bT_k,\mathrm{\Pi}_{\bm{\theta}_{k}}\mu_{\bm{\theta}_{k}}(\bm{x}_{k})- \mathrm{\Pi}_{\bm{\theta}_{k-1}}\mu_{\bm{\theta}_{k-1}}(\bm{x}_{k})\rangle}_{\text{D}_{22}}
+ \underbrace{\langle \bT_k,\mathrm{\Pi}_{\bm{\theta}_{k-1}}\mu_{\bm{\theta}_{k-1}}(\bm{x}_{k})- \mathrm{\Pi}_{\bm{\theta}_{k}}\mu_{\bm{\theta}_{k}}(\bm{\xeta}_{k+1})\rangle}_{\text{D}_{23}}.\\
\end{split}
\end{equation*}

(i) From the Markov property, $\mu_{\bm{\theta}_{k}}(\bm{\xeta}_{k+1})-\mathrm{\Pi}_{\bm{\theta}_{k}}\mu_{\bm{\theta}_{k}}(\bm{x}_{k})$ forms a martingale difference sequence 
$$\E\left[\langle \bT_k, \mu_{\bm{\theta}_{k}}(\bm{\xeta}_{k+1})-\mathrm{\Pi}_{\bm{\theta}_{k}}\mu_{\bm{\theta}_{k}}(\bm{x}_{k})\rangle |\mathcal{F}_{k}\right]=0. \eqno{(\text{D}_{21})}$$

(ii) By the regularity of the solution of Poisson equation in (\ref{poisson_reg}) and Lemma.\ref{theta_lip} in $\S$\ref{technique}, it leads to
\begin{equation}
\label{theta_delta}
\E[\|\mathrm{\Pi}_{\bm{\theta}_{k}}\mu_{\bm{\theta}_{k}}(\bm{x}_{k})- \mathrm{\Pi}_{\bm{\theta}_{k-1}}
 \mu_{\bm{\theta}_{k-1}}(\bm{x}_{k})\|]\leq C \|\btheta_k-\btheta_{k-1}\|\leq 2Q C\omega_k.
\end{equation}
Using Cauchy–Schwarz inequality, (\ref{theta_delta}) and the compactness of $\btheta$ in Assumption \ref{ass2a}, it follows that
$$\small{\E[\langle\bm{T}_{k},\mathrm{\Pi}_{\bm{\theta}_{k}}\mu_{\bm{\theta}_{k}}(\bm{x}_{k})- \mathrm{\Pi}_{\bm{\theta}_{k-1}}\mu_{\bm{\theta}_{k-1}}(\bm{x}_{k})\rangle]\leq \E[\|\bT_k\|]\cdot 2Q C\omega_k\leq 4Q^2 C\omega_{k}\leq 5Q^2 C\omega_{k+1}}   \eqno{(\text{D}_{22})},$$
where the last inequality follows from the step size assumption \ref{ass1} and holds given a large enough $k$.

(iii) \begin{equation*}
\begin{split}
\small
\langle \bm{T}_{k},\mathrm{\Pi}_{\bm{\theta}_{k-1}}\mu_{\bm{\theta}_{k-1}}(\bm{x}_{k})- \mathrm{\Pi}_{\bm{\theta}_{k}}\mu_{\bm{\theta}_{k}}(\bm{\xeta}_{k+1})\rangle
&=\left(\langle \bm{T}_{k}, \mathrm{\Pi}_{\bm{\theta}_{k-1}}\mu_{\bm{\theta}_{k-1}}(\bm{x}_{k}) \rangle- \langle \bm{T}_{k+1}, \mathrm{\Pi}_{\bm{\theta}_{k}}\mu_{\bm{\theta}_{k}}(\bm{\xeta}_{k+1})\rangle\right)\\
&\ \ \ +\left(\langle \bm{T}_{k+1}, \mathrm{\Pi}_{\bm{\theta}_{k}}\mu_{\bm{\theta}_{k}}(\bm{\xeta}_{k+1})\rangle-\langle \bm{T}_{k}, \mathrm{\Pi}_{\bm{\theta}_{k}}\mu_{\bm{\theta}_{k}}(\bm{\xeta}_{k+1})\rangle\right)\\
&={({z}_{k}-{z}_{k+1})}+{\langle \bm{T}_{k+1}-\bm{T}_{k}, \mathrm{\Pi}_{\bm{\theta}_{k}}\mu_{\bm{\theta}_{k}}(\bm{\xeta}_{k+1})\rangle},\\
\end{split}
\end{equation*}
where ${z}_{k}=\langle \bm{T}_{k}, \mathrm{\Pi}_{\bm{\theta}_{k-1}}\mu_{\bm{\theta}_{k-1}}(\bm{x}_{k})\rangle$. Using the regularity assumption in (\ref{poisson_reg}) and Lemma.\ref{theta_lip} leads to
$$\E\langle \bm{T}_{k+1}-\bm{T}_{k}, \mathrm{\Pi}_{\bm{\theta}_{k}}\mu_{\bm{\theta}_{k}}(\bm{\xeta}_{k+1})\rangle\leq   \E[\|\bm{\theta}_{k+1}-\bm{\theta}_{k}\|] \cdot \E[\|\mathrm{\Pi}_{\bm{\theta}_{k}}\mu_{\bm{\theta}_{k}}(\bm{\xeta}_{k+1})\|] \leq 2Q C \omega_{k+1}.\eqno{(\text{D}_{23})}$$

For convenience, we denote $\E[\|\delta(\epsilon_k, \btheta_{k}, \bx_{k+1})\|]$ by $\Delta_k$. Since $\rho(\btheta_k, \bx_{k+1})$ is compact, we have
\begin{equation*}
    \E[\|\delta(\epsilon_k, \btheta_{k}, \bx_{k+1})+\omega_{k+1}\rho(\btheta_k, \bx_{k+1}))\|]\leq \Delta_k + \omega_{k+1} Q.
\end{equation*}
Applying Cauchy–Schwarz inequality gives
$${\E[\langle \bT_k, \delta(\epsilon_k, \btheta_{k}, \bx_{k+1})+\omega_{k+1}\rho(\btheta_k, \bx_{k+1}))]\leq 2Q(\Delta_k+\omega_{k+1} Q)} \eqno{(\text{D}_{3})}$$

Finally, adding (\ref{first_term}), $\text{D}_1$, $\text{D}_{2}$ and $\text{D}_3$ together, it follows that for a constant 
     $$C_0 = G+5Q^2C+2QC+2Q^2,$$
we have
\begin{equation}
\begin{split}
\label{key_eqn}
\E\left[\|\bm{T}_{k+1}\|^2\right]&\leq (1-2\omega_{k+1}+G\omega^2_{k+1} )\E\left[\|\bm{T}_{k}\|^2\right]+C_0\omega^2_{k+1} +2Q\Delta_k\omega_{k+1} +2\E[z_{k}-z_{k+1}]\omega_{k+1}.
\end{split}
\end{equation}
Moreover, from (\ref{compactness}) and (\ref{poisson_reg}), $\E[|z_{k}|]$ is upper bounded by
\begin{equation}
\begin{split}
\label{condition:z}
\E[|z_{k}|]=\E[\langle \bm{T}_{k}, \mathrm{\Pi}_{\bm{\theta}_{k-1}}\mu_{\bm{\theta}_{k-1}}(\bm{x}_{k})\rangle]\leq \E[\|\bT_k\|]\E[\|\mathrm{\Pi}_{\bm{\theta}_{k-1}}\mu_{\bm{\theta}_{k-1}}(\bm{x}_{k})\|]\leq 2QC.
\end{split}
\end{equation}

According to Lemma $\ref{lemma:4}$ in $\S$\ref{technique}, we can choose $\lambda_0$ and $k_0$ such that 
\begin{align*}
\E[\|\bm{T}_{k_0}\|^2]\leq \psi_{k_0}=\lambda_0 \omega_{k_0}+Q\sup_{n\geq k_0}\Delta_{n},
\end{align*}
which satisfies the conditions ($\ref{lemma:3-a}$) and ($\ref{lemma:3-b}$) of Lemma $\ref{lemma:3-all}$ in $\S$\ref{technique}. Applying Lemma $\ref{lemma:3-all}$ leads to
\begin{equation}
\begin{split}
\label{eqn:9}
\E\left[\|\bm{T}_{k}\|^2\right]\leq \psi_{k}+\E\left[\sum_{j=k_0+1}^{k}\Lambda_j^k \left(z_{j-1}-z_{j}\right)\right],
\end{split}
\end{equation}
where $\psi_{k}=\lambda_0 \omega_{k}+Q\sup_{n\geq k_0}\Delta_{n}$ for all $k>k_0$. Based on ($\ref{condition:z}$) and the increasing condition of $\Lambda_{j}^k$ in Lemma $\ref{lemma:2}$ in $\S$\ref{technique}, we have
\begin{equation}
\small
\begin{split}
\label{eqn:10}
&\E\left[\left|\sum_{j=k_0+1}^{k} \Lambda_j^k\left(z_{j-1}-z_{j}\right)\right|\right]
=\E\left[\left|\sum_{j=k_0+1}^{k-1}(\Lambda_{j+1}^k-\Lambda_j^k)z_j-2\omega_{k}z_{k}+\Lambda_{k_0+1}^k z_{k_0}\right|\right]\\
\leq& \sum_{j=k_0+1}^{k-1}2(\Lambda_{j+1}^k-\Lambda_j^k)QC+\E[|2\omega_{k} z_{k}|]+2\Lambda_k^k QC\\
\leq& 2(\Lambda_k^k-\Lambda_{k_0}^k)QC+2\Lambda_k^k QC+2\Lambda_k^k QC\\
\leq& 6\Lambda_k^k QC.
\end{split}
\end{equation}

Therefore, given $\psi_{k}=\lambda_0 \omega_{k}+Q\sup_{n\geq k_0}\Delta_{n}$ that satisfies the conditions ($\ref{lemma:3-a}$), ($\ref{lemma:3-b}$) of Lemma $\ref{lemma:3-all}$, for any $k>k_0$, from ($\ref{eqn:9}$) and ($\ref{eqn:10}$), we have
\begin{equation*}
\E[\|\bm{T}_{k}\|^2]\leq \psi_{k}+6\Lambda_k^k QC=\left(\lambda_0+12QC\right)\omega_{k}+Q\sup_{n\geq k_0}\Delta_{n}=\lambda \omega_{k}+Q\sup_{n\geq k_0}\Delta_{n},
\end{equation*}
where $\lambda=\lambda_0+12QC$, $\lambda_0=\frac{2GQ\sup_{n\geq k_0} \Delta_n + 2C_0}{C_1}$, $\small{C_1=\lim \inf 2  \dfrac{\omega_{k}}{\omega_{k+1}}+\dfrac{\omega_{k+1}-\omega_{k}}{\omega^2_{k+1}}>0}$, $C_0=G+5Q^2C+2QC+2Q^2$ and $G=9 Q^2(1+Q^2)$.
\end{proof}

\subsection{Ergodicity and weighted averaging estimators}

Our interest is to analyze the deviation between the weighted averaging estimator $\frac{1}{k}\sum_{i=1}^k\btheta_{i, \mathcal{I}(\bx_i)}^{\zeta} f(\bx_i)$ and posterior average $\int f(\bx)\pi(d\bx)$ for a test function $f$. To obtain the desired error analysis, we first study the convergence of the empirical mean $\frac{1}{k}\sum_{i=1}^k f(\bx_i)$ to the posterior average $\bar f=\int f(\bx)\varpi_{\btheta_{\star}}(d\bx)$. The key tool for ergodic theory is still the Poisson equation to characterize the fluctuations between $f(\bx)$ and $\bar f$, which is defined as follows:
\begin{equation}
    \mathcal{L}\phi(\bx)=f(\bx)-\bar f,
\end{equation}
where $\phi(\bx)$ is the solution of the Poisson equation, and $\mathcal{L}$ is the infinitesimal generator of the Langevin diffusion 
\begin{equation*}
    \mathcal{L}\phi:=\nabla \phi \nabla L(\cdot, \btheta_{\star})+\tau\nabla^2\phi.
\end{equation*}

By imposing regularity conditions on the function $\phi(\bx)$, we can control the perturbations of $\frac{1}{k}\sum_{i=1}^k f(\bx_i)-\bar f$. Now, we present a lemma, which is majorly adapted from Theorem 2 \citep{Chen15} with a fixed learning rate $\epsilon$. Similar conclusions have also been achieved in \citet{Teh16, VollmerZW2016, Dalalyan18}.
\begin{lemma}[Convergence of the Averaging Estimators]
\label{avg_converge}
Assume Assumptions $\ref{ass1}$-$\ref{ass4}$ hold. Given a sufficiently smooth function $\phi(\bx)$ and a function $\mathcal{V}(\bx)$, such that $\|D^k \phi\|\lesssim \mathcal{V}^{p_k}(\bx)$ and $p_k>0$ for $k\in\{0,1,2,3\}$. In addition, $\mathcal{V}^p$ has a bounded expectation: $\sup_{\bx} \E[\mathcal{V}^p(\bx)]<\infty$ and $\mathcal{V}$ is smooth, i.e. $\sup_{s\in\{0, 1\}} \mathcal{V}^p(s\bx+(1-s)\by)\lesssim \mathcal{V}^p(\bx)+\mathcal{V}^p(\by)$ for all $\bx,\by\in\bX$ and $p\leq 2\max_k\{p_k\}$. For any integrable function $f^2$, we have
\begin{equation*}
\small
\begin{split}
    \left|\E\left[\frac{\sum_{i=1}^k f(\bx_i)}{k}\right]-\int f(\bx)\varpi_{\bm{\theta}_{\star}}(d\bx)\right|&= \mathcal{O}\left(\frac{1}{k\epsilon}+\epsilon+\sqrt{\frac{\sum_{i=1}^k \omega_k}{k}}+\sup_{n\geq k_0}\E[\|\delta(\epsilon, \btheta_{n}, \bx_{n+1})\|]^{0.5}\right). \\
\end{split}
\end{equation*}
\end{lemma}

\begin{proof}

To study the ergodic average, we can view the adaptive algorithm as a standard sampling algorithm with fixed latent variable $\btheta_{\star}$ and biased gradients, where the bias term is denoted by $$\bbeta_k=\nabla_{\bx} L(\bx, \btheta_k)-\nabla_{\bx}  L(\bx, \btheta_{\star}).$$

According to the smoothness assumption \ref{ass2} and Jensen's inequality, we have
\begin{equation}
\label{latent_bias}
\small
    \|\E[\bbeta_k]\|\leq \E[\|\bbeta_k\|]=\E[\|\nabla_{\bx} L(\bx, \btheta_k)-\nabla_{\bx}  L(\bx, \btheta_{\star})\|] \leq M\E[\|\btheta_k-\btheta_{\star}\|] \leq M\sqrt{\E[\|\btheta_k-\btheta_{\star}\|^2]}.
\end{equation}

As a result, we can reformulate the original adaptive algorithm as
\begin{equation*}
\begin{split}
    \bm{x}_{k+1}&=\bx_k- \epsilon_k\nabla_{\bx} \widetilde L(\bx_k, \btheta_k)+\mathcal{N}({0, 2\epsilon_k \tau\bm{I}})\\
    &=\bx_k- \epsilon_k\left(\nabla_{\bx} \widetilde L(\bx_k, \btheta_{\star})+\bbeta_k\right)+\mathcal{N}({0, 2\epsilon_k \tau\bm{I}}).
\end{split}
\end{equation*}
The ergodic average based on biased gradients and a fixed learning rate is a direct result of Theorem 2 \citep{Chen15}. Together with regularity conditions on the solution of the Poisson equation, (\ref{latent_bias}) and Theorem \ref{latent_converge}, we know that 
\begin{equation*}
\small
\begin{split}
    \left|\E\left[\frac{\sum_{i=1}^k f(\bx_i)}{k}\right]-\int f(\bx)\varpi_{\bm{\theta}_{\star}}(d\bx)\right|&\leq \mathcal{O}\left(\frac{1}{k\epsilon}+\epsilon+\frac{\sum_{i=1}^k \|\E[\bbeta_k]\|}{k}\right)\\
    &\lesssim \mathcal{O}\left(\frac{1}{k\epsilon}+\epsilon+\frac{\sum_{i=1}^k \left(\omega_k+\sup_{n\geq k_0}\E[\|\delta(\epsilon, \btheta_{n}, \bx_{n+1})\|]\right)^{0.5}}{k}\right), \\
    &\leq \mathcal{O}\left(\frac{1}{k\epsilon}+\epsilon+\sqrt{\frac{\sum_{i=1}^k \omega_k}{k}}+\sup_{n\geq k_0}\E[\|\delta(\epsilon, \btheta_{n}, \bx_{n+1})\|]^{0.5}\right)
\end{split}
\end{equation*}
where the last inequality follows from $\small{(\omega_k+\Delta)^{0.5}\leq \omega_k^{0.5}+\Delta^{0.5}}$ and $\small{\sum_{i=1}^k \omega_i^{0.5}\leq \sqrt{k\sum_{i=1}^k \omega_i}}$.
\end{proof}


Now we are ready to show the convergence of the weighted averaging estimator $\frac{1}{k}\sum_{i=1}^k\btheta_{i, \mathcal{I}(\bx_i)}^{\zeta} f(\bx_i)$ to the posterior average $\int f(\bx)\pi(d\bx)$.
\begin{theorem}[Convergence of the Weighted Averaging Estimators] Assume Assumptions $\ref{ass1}$-$\ref{ass4}$ hold. Given a sufficiently smooth function $\phi(\bx)$ and a function $\mathcal{V}(\bx)$, such that $\|D^k \phi\|\lesssim \mathcal{V}^{p_k}(\bx)$ and $p_k>0$ for $k\in\{0,1,2,3\}$. In addition, $\mathcal{V}^p$ has a bounded expectation: $\sup_{\bx} \E[\mathcal{V}^p(\bx)]<\infty$ and $\mathcal{V}$ is smooth, i.e. $\sup_{s\in\{0, 1\}} \mathcal{V}^p(s\bx+(1-s)\by)\lesssim \mathcal{V}^p(\bx)+\mathcal{V}^p(\by)$ for all $\bx,\by\in\bX$ and $p\leq 2\max_k\{p_k\}$. For any integrable function $f^2$, we have that 
\label{w_avg_converge}
\begin{equation*}
\small
\begin{split}
    \left|\E\left[\frac{1}{k}\sum_{i=1}^k\btheta_{i, \mathcal{I}(\bx_i)}^{\zeta} f(\bx_i)\right]-\int f(\bx)\pi(d\bx)\right|&= \mathcal{O}\left(\frac{1}{k\epsilon}+\epsilon+\sqrt{\frac{\sum_{i=1}^k \omega_k}{k}}+\sup_{n\geq k_0}\E[\|\delta(\epsilon, \btheta_{n}, \bx_{n+1})\|]^{0.5}\right). \\
\end{split}
\end{equation*}
\end{theorem}

\begin{proof}

Applying triangle inequality and $|\E[x]|\leq \E[|x|]$, we have
\begin{equation*}
\small
    \begin{split}
        &\left|\E\left[\frac{1}{k}\sum_{i=1}^k\btheta_{i, \mathcal{I}(\bx_i)}^{\zeta} f(\bx_i)\right]-\int f(\bx)\pi(d\bx)\right|\\
        \leq &\underbrace{\E\left[\frac{1}{k}\sum_{i=1}^k\left|\btheta_{i, \mathcal{I}(\bx_i)}^{\zeta}-\btheta_{\star, \mathcal{I}(\bx_i)}^{\zeta}\right| \cdot |f(\bx_i)|\right]}_{\text{I}_1} +\underbrace{\left|\E\left[\frac{1}{k}\sum_{i=1}^k\btheta_{\star, \mathcal{I}(\bx_i)}^{\zeta} f(\bx_i)\right]-\int f(\bx)\pi(d\bx)\right|}_{\text{I}_2}.
    \end{split}
\end{equation*}

For the first term $\text{I}_1$, consider the mean value theorem for $t(x)=x^{\zeta}$
\begin{equation}
\small
    \begin{split}
    \label{mvt}
        |t(\btheta_{i, \mathcal{I}(\bx)})-t(\btheta_{\star, \mathcal{I}(\bx)})|=|\btheta_{i, \mathcal{I}(\bx)}^{\zeta}-\btheta_{\star, \mathcal{I}(\bx)}^{\zeta}|= |\btheta_{i, \mathcal{I}(\bx)}-\btheta_{\star, \mathcal{I}(\bx)}| \cdot t(\widetilde\btheta)\lesssim |\btheta_{i, \mathcal{I}(\bx)}-\btheta_{\star, \mathcal{I}(\bx)}|,
    \end{split}
\end{equation}
where the first inequality holds for any $\bx\in\bX$, any $i\in\{1,2,...,m\}$and some $\widetilde\btheta \leq  \btheta_{i, \mathcal{I}(\bx)}\vee \btheta_{\star, \mathcal{I}(\bx)}$; the last inequality follows because we only consider $\btheta$ in a compact set. By Cauchy-Schwarz inequality, (\ref{mvt}) and Theorem \ref{latent_converge}, it follows that
\begin{equation*}
\small
    \begin{split}
        \text{I}_1&\lesssim \sqrt{\E\left[\frac{\sum_{i=1}^k\left(\btheta_{i, \mathcal{I}(\bx_i)}-\btheta_{\star, \mathcal{I}(\bx_i)}\right)^2}{k} \right]\E\left[\sum_{i=1}^k\frac{f^2(\bx_i)}{k}\right]}\\
        &\lesssim \sqrt{\sum_{i=1}^k\dfrac{\E[\|\btheta_i-\btheta_{\star}\|^2]}{k}}\\
        &\lesssim \sqrt{\frac{\sum_{i=1}^k\omega_{i}}{k}+\sup_{n\geq k_0}\E[\|\delta(\epsilon_n, \btheta_{n}, \bx_{n+1})\|]}\\
        &\leq \sqrt{\frac{\sum_{i=1}^k\omega_{i}}{k}}+\sup_{n\geq k_0}\E[\|\delta(\epsilon, \btheta_{n}, \bx_{n+1})\|]^{0.5},
    \end{split}
\end{equation*}
where the second inequality holds because of the integrability of $f^2(\bx)$ and the last inequality follows from $\sqrt{x+y}\leq \sqrt{x}+\sqrt{y}$.

Before we study $\text{I}_2$, we first decompose $\int  f(\bx) \pi(d\bx)$ into $m$ disjoint regions to facilitate the analysis
\begin{equation}
\label{split_posterior}
\small
\begin{split}
      \int  f(\bx) \pi(d\bx)=\int_{\cup_{j=1}^m E_j}  f(\bx) \pi(d\bx)=\sum_{j=1}^m\btheta_{\star, j}^{\zeta}\int_{E_j}  f(\bx) \frac{\pi(d\bx)}{\btheta_{\star, j}^{\zeta}}=\sum_{j=1}^m\btheta_{\star, j}^{\zeta}\int_{E_j}  f(\bx) \varpi_{\bm{\theta}_{\star}}(d\bx).\\
\end{split}
\end{equation}

Plugging (\ref{split_posterior}) into the second term $\text{I}_2$, we have
\begin{equation}
\label{final_i2}
\small
    \begin{split}
        \text{I}_2&=\left|\E\left[\frac{1}{k}\sum_{i=1}^k\sum_{j=1}^m\btheta_{\star, j}^{\zeta} f(\bx_i)1_{\bx_i\in E_j}\right]-\int f(\bx)\pi(d\bx)\right|\\
        &=\left|\sum_{j=1}^m\btheta_{\star, j}^{\zeta}\E\left[\frac{1}{k}\sum_{i=1}^k f(\bx_i)1_{\bx_i\in E_j}\right]-\sum_{j=1}^m\btheta_{\star, j}^{\zeta}\int_{E_j}  f(\bx) \varpi_{\bm{\theta}_{\star}}(d\bx)\right|\\
        &\leq \sum_{j=1}^m\btheta_{\star, j}^{\zeta}\left|\E\left[\frac{1}{k}\sum_{i=1}^k f(\bx_i)1_{\bx_i\in E_j}\right]-\int_{E_j}  f(\bx) \varpi_{\bm{\theta}_{\star}}(d\bx)\right|.\\
    \end{split}
\end{equation}

Given any $j\in \{1,2,...,m\}$, applying the function $f(\bx)1_{\bx\in E_j}$ to Theorem \ref{avg_converge} yields
\begin{equation}
\label{almost_i2}
\small
\begin{split}
      \left|\E\left[\frac{1}{k}\sum_{i=1}^k f(\bx_i)1_{\bx_i\in E_j}\right]-\int_{E_j}  f(\bx) \varpi_{\bm{\theta}_{\star}}(d\bx)\right|\leq \mathcal{O}\left(\frac{1}{k\epsilon}+\epsilon+\sqrt{\frac{\sum_{i=1}^k \omega_k}{k}}+\sup_{n\geq k_0}\E[\|\delta(\epsilon, \btheta_{n}, \bx_{n+1})\|]^{0.5}\right).\\
\end{split}
\end{equation}

Plugging (\ref{almost_i2}) into (\ref{final_i2}) and combining $\text{I}_1$, we have
\begin{equation}
\small
\begin{split}
      \left|\E\left[\frac{1}{k}\sum_{i=1}^k\btheta_{i, \mathcal{I}(\bx_i)}^{\zeta} f(\bx_i)\right]-\int f(\bx)\pi(d\bx)\right|\leq \mathcal{O}\left(\frac{1}{k\epsilon}+\epsilon+\sqrt{\frac{\sum_{i=1}^k \omega_k}{k}}+\sup_{n\geq k_0}\E[\|\delta(\epsilon, \btheta_{n}, \bx_{n+1})\|]^{0.5}\right).\\
\end{split}
\end{equation}


\end{proof}

\section{Technical Lemmas}
\label{technique}

\begin{lemma}
\label{convex_property}
Given $\sup\{\omega_k\}_{k=1}^{\infty}\leq 1$, there exists a constant $G=9 Q^2(1+Q^2)$ such that
\begin{equation} \label{bound2}
\| \widetilde H(\epsilon_k, \bm{\theta}_k, \bm{\xeta}_{k+1})+\omega_{k+1}\rho(\btheta_k, \bx_{k+1})\|^2 \leq G (1+\|\bm{\theta}_k-\bm{\theta}_*\|^2). 
\end{equation}
\end{lemma}
\begin{proof}

According to the compactness condition in Assumption \ref{ass2a}, we have
\begin{equation*}
\|H(\bm{\theta}_k, \bm{\xeta}_{k+1})\|^2\leq Q^2 (1+\|\bm{\theta}_k\|^2) = 
 Q^2 (1+\|\bm{\theta}_k-\bm{\theta}_*+\bm{\theta}_*\|^2)\leq Q^2 (1+2\|\bm{\theta}_k-\bm{\theta}_*\|^2+2Q^2).
\end{equation*}

Therefore, we can show that for a constant $G=9Q^2(1+Q^2)$
\begin{equation*}
\small
\begin{split}
    \|\widetilde H(\epsilon, \bm{\theta}, \bm{\xeta})+\omega_{k+1}\rho(\btheta_k, \bx_{k+1})\|^2 &\leq 3\|H(\bm{\theta}_k, \bm{\xeta}_{k+1})\|^2 + 3\|\bdelta(\theta_k)\|^2+3\omega_{k+1}^2 \|\rho(\btheta_k, \bx_{k+1})\|\\
    &\leq 3Q^2 (1+2\|\bm{\theta}_k-\bm{\theta}_*\|^2+2Q^2) + 6Q^2\\
    &\leq 3Q^2 (3+3Q^2+(3+3Q^2)\|\bm{\theta}_k-\bm{\theta}_*\|^2)\\
    &\leq G (1+\|\bm{\theta}_k-\bm{\theta}_*\|^2).
\end{split}
\end{equation*}
\end{proof}

\begin{lemma}
\label{theta_lip}Given $\sup\{\omega_k\}_{k=1}^{\infty}\leq 1$, we have that
\begin{equation}
\label{lip_theta}
    \|\btheta_{k}-\btheta_{k-1}\|\leq 2\omega_{k} Q
\end{equation}
\end{lemma}

\begin{proof}
Following the update $\btheta_k-\btheta_{k-1}=\omega_k \widetilde H(\bm{\theta}_{k-1}, \bm{x}_{k})+\omega_{k}^2 \brho_{k}$, we have that
$$\|\btheta_{k}-\btheta_{k-1}\|= \|\omega_k \widetilde H(\epsilon_{k-1}, \bm{\theta}_{k-1}, \bm{x}_{k})+\omega_{k}^2 \brho_{k}\|\leq \omega_k\| \widetilde H(\epsilon_k, \bm{\theta}_{k-1},\bm{x}_{k})\|+\omega_{k}^2\| \brho_{k}\|.$$
By the compactness condition in Assumption \ref{ass2a} and $\sup\{\omega_k\}_{k=1}^{\infty}\leq 1$, (\ref{lip_theta}) can be derived.
\end{proof}

\begin{lemma}
\label{lemma:4}
There exist constants $\lambda_0$ and $k_0$ such that $\forall \lambda\geq\lambda_0$ and $\forall k> k_0$, the sequence $\{\psi_{k}\}_{k=1}^{\infty}$, where $\psi_{k}=\lambda\omega_{k}+Q \sup_{n\geq k_0}\Delta_n$, satisfies
\begin{equation}
\begin{split}
\label{key_ieq}
\psi_{k+1}\geq& (1-2\omega_{k+1}+G\omega_{k+1}^2)\psi_{k}+C_0\omega_{k+1}^2  +2Q \Delta_k\omega_{k+1}.
\end{split}
\end{equation}
\begin{proof}
By replacing $\psi_{k}$ with $\lambda\omega_{k}+Q \sup_{n\geq k_0}\Delta_n$ in ($\ref{key_ieq}$), it suffices to show
\begin{equation*}
\small
\begin{split}
\label{lemma:loss_control}
\lambda \omega_{k+1}+Q \sup_{n\geq k_0}\Delta_n\geq& (1-2\omega_{k+1}+G\omega_{k+1}^2)\left(\lambda \omega_{k}+Q \sup_{n\geq k_0}\Delta_n\right)+C_0\omega_{k+1}^2 + 2Q\Delta_k\omega_{k+1}.
\end{split}
\end{equation*}

which is equivalent to proving
\begin{equation*}
\small
\begin{split}
&\lambda (\omega_{k+1}-\omega_k+2\omega_k\omega_{k+1}-G\omega_k\omega_{k+1}^2)\geq  Q\sup_{n\geq k_0}\Delta_n(-2\omega_{k+1}+G\omega_{k+1}^2 )+C_0\omega_{k+1}^2+ 2Q\Delta_k\omega_{k+1}.
\end{split}
\end{equation*}

Given the step size condition in ($\ref{a1}$), we have $\small{\omega_{k+1}-\omega_{k}+2 \omega_{k}\omega_{k+1} \geq C_1 \omega_{k+1}^2}$, where $\small{C_1=\lim \inf 2  \dfrac{\omega_{k}}{\omega_{k+1}}+\dfrac{\omega_{k+1}-\omega_{k}}{\omega^2_{k+1}}>0}$. Together with the fact that $-\sup_{n\geq k_0}\Delta_n\leq \Delta_k$, it suffices to prove
\begin{equation}
\begin{split}
\label{loss_control-2}
\lambda \left(C_1-G\omega_{k}\right)\omega^2_{k+1}\geq  \left(GQ \sup_{n\geq k_0}\Delta_n+C_0\right)\omega^2_{k+1}.
\end{split}
\end{equation}

It is clear that for a large enough $k_0$ and $\lambda_0$ such that $\omega_{k_0}\leq \frac{C_1}{2G}$, $\lambda_0=\frac{2GQ\sup_{n\geq k_0} \Delta_n + 2C_0}{C_1}$, the desired conclusion ($\ref{loss_control-2}$) holds for all such $k\geq k_0$ and $\lambda\geq \lambda_0$.
\end{proof}
\end{lemma}

\begin{lemma}
\label{lemma:3-all}
Let $\{\psi_{k}\}_{k> k_0}$ be a series that satisfies the following inequality for all $k> k_0$
\begin{equation}
\begin{split}
\label{lemma:3-a}
\psi_{k+1}\geq &\psi_{k}\left(1-2\omega_{k+1}+G\omega^2_{k+1}\right)+C_0\omega^2_{k+1} + 2Q \Delta_k\omega_{k+1},
\end{split}
\end{equation}
and assume there exists such $k_0$ that 
\begin{equation}
\begin{split}
\label{lemma:3-b}
\E\left[\|\bm{T}_{k_0}\|^2\right]\leq \psi_{k_0}.
\end{split}
\end{equation}
Then for all $k> k_0$, we have
\begin{equation}
\begin{split}
\label{result}
\E\left[\|\bm{T}_{k}\|^2\right]\leq \psi_{k}+\sum_{j=k_0+1}^{k}\Lambda_j^k (z_{j-1}-z_j).
\end{split}
\end{equation}
\end{lemma}

\begin{proof}
We prove by the induction method. Assuming (\ref{result}) is true and combining (\ref{key_eqn}), (\ref{lemma:3-a}) and Lemma.\ref{lemma:2}, we have that 
\begin{equation*}
\small
\begin{split}
    \E\left[\|\bm{T}_{k+1}\|^2\right]&\leq (1-2\omega_{k+1}+\omega^2_{k+1} G)(\psi_{k}+\sum_{j=k_0+1}^{k}\Lambda_j^k (z_{j-1}-z_j))\\
    &\ \ +\omega^2_{k+1} C_0+2Q \delta(\epsilon_k, \btheta_{k}, \bx_{k+1})\omega_{k+1}+2\omega_{k+1}\E[z_{k}-z_{k+1}]\\
    & \leq \psi_{k+1}+(1-2\omega_{k+1}+\omega^2_{k+1} G)\sum_{j=k_0+1}^{k}\Lambda_j^k (z_{j-1}-z_j)+2\omega_{k+1}\E[z_{k}-z_{k+1}]\\
    & \leq \psi_{k+1}+\sum_{j=k_0+1}^{k}\Lambda_j^{k+1} (z_{j-1}-z_j)+\Lambda_{k+1}^{k+1}\E[z_{k}-z_{k+1}]\\
    & \leq \psi_{k+1}+\sum_{j=k_0+1}^{k+1}\Lambda_j^{k+1} (z_{j-1}-z_j)\\
\end{split}
\end{equation*}
\end{proof}

The following lemma is a restatement of Lemma 25 (page 247) from \citet{Albert90}.
\begin{lemma}
\label{lemma:2}
Suppose $k_0$ is an integer satisfying
$\inf_{k> k_0} \dfrac{\omega_{k+1}-\omega_{k}}{\omega_{k}\omega_{k+1}}+2-G\omega_{k+1}>0$ 
for some constant $G$. 
Then for any $k>k_0$, the sequence $\{\Lambda_k^K\}_{k=k_0, \ldots, K}$ defined below is increasing and uppered bounded by $2\omega_{k}$
\begin{equation}  
\Lambda_k^K=\left\{  
             \begin{array}{lr}  
             2\omega_{k}\prod_{j=k}^{K-1}(1-2\omega_{j+1}+G\omega_{j+1}^2) & \text{if $k<K$},   \\  
              & \\
             2\omega_{k} &  \text{if $k=K$}.
             \end{array}  
\right.  
\end{equation} 
\end{lemma}

\bibliography{mybib}
\bibliographystyle{plainnat}


\maketitle

In this supplementary material, we review the related methodologies in $\S$\ref{review}, show the convergence of latent variable in $\S$\ref{convergence} and prove the ergodicity and weighted averaging estimators in $\S$\ref{ergodic}.
\section{Background on Stochastic Approximation and Poisson Equation}
\label{review}

\subsection{Stochastic approximation}
Stochastic approximation \citep{RobbinsM1951, Albert90} is a standard framework for formulating adaptive algorithms. Given a random field of $H(\bm{\btheta}, \bm{\bx})$ with respect to $\bm{\bx}$, the goal is to find the equilibrium $\btheta$ for the mean-field function $h(\btheta)$ such that
\begin{equation*}
\begin{split}
\label{sa00}
h(\btheta)&=\int_{\bchi} H(\bm{\theta}, \bm{\bx})\varpi_{\bm{\theta}}(d\bm{\bx})=0,
\end{split}
\end{equation*}
where $\bx\in \bchi \subset \mathbb{R}^d$, $\btheta\in\bTheta \subset \mathbb{R}^{m}$, 
$\varpi_{\btheta}(\bx)$ denotes a distribution parametrized by 
$\btheta$, and $H(\btheta,\bx)$ denotes a random field 
function. The algorithm works by iterating 
between the following two steps:
\begin{itemize}
\item[(1)] Simulate $\bm{x}_{k+1}$ from the transition kernel  $\Pi_{\bm{\theta_{k}}}(\bm{x}_{k}, \cdot)$, which admits $\varpi_{\bm{\theta}_{k}}(\bm{x})$ as
the invariant distribution,

\item[(2)] Update $\btheta_k$ by setting $\bm{\theta}_{k+1}=\bm{\theta}_{k}+\omega_{k+1} H(\bm{\theta}_{k}, \bm{x}_{k+1})+\omega_{k+1}^2 \rho(\bm{\theta}_{k}, \bm{x}_{k+1}),$
where $\rho(\cdot,\cdot)$ denotes a bias or oscillation term. 
\end{itemize}

The algorithm differs from the Robbins–Monro algorithm in that we simulate $\bx$ from a transition kernel $\Pi_{\bm{\theta_{k}}}(\cdot, \cdot)$ instead of a distribution $\varpi_{\bm{\theta}_{k}}(\cdot)$. As a result, a Markov state-dependent noise $H(\btheta_k, \bx_{k+1})-h(\btheta_k)$ is generated, which requires some regularity conditions to control the fluctuations $\sum_k \Pi_{\btheta}^k (H(\btheta, \bx)-h(\btheta))$. Moreover, it supports a more general form where bounded oscillations $\rho(\cdot,\cdot)$ are allowed without affecting the theoretical properties. 

\subsection{Poisson equation}

Stochastic approximation generates a nonhomogeneous Markov chain $\{(\bx_k, \btheta_k)\}_{k=1}^{\infty}$, of which the convergence theory can be studied by the Poisson equation 
\begin{equation*}
    \mu_{\btheta}(\bm{x})-\mathrm{\Pi}_{\bm{\theta}}\mu_{\bm{\theta}}(\bm{x})=H(\bm{\theta}, \bm{x})-h(\bm{\theta}).
\end{equation*}
where $\Pi_{\bm{\theta}}(\bm{x}, A)$ is the transition kernel for any Borel subset $A\subset \bchi$ and $\mu_{\btheta}(\cdot)$ is a function on $\bchi$.
The existence of the solution of Poisson equation can be identified when the series converges.
\begin{equation*}
    \mu_{\btheta}(\bx):=\sum_k \Pi_{\btheta}^k (H(\btheta, \bx)-h(\btheta)).
\end{equation*}
In other words, the consistency of the estimator $\btheta$ can be established by controlling the perturbations of $\sum_k \Pi_{\btheta}^k (H(\btheta, \bx)-h(\btheta))$ via some regularity conditions on $\mu_{\btheta}(\cdot)$. To avoid studying the individual algorithms, \citet{Albert90} has simplified the work to the justification of the regularity conditions on $\mu_{\btheta}(\cdot)$:

There exist a function  $V: \mX \to [1,\infty)$, and a constant $C$ such that for all $\bm{\theta}, \bm{\theta}'\in \bm{\bTheta}$, we have
\begin{equation*}
\begin{split}
\|\mathrm{\Pi}_{\bm{\theta}}\mu_{\btheta}(\bx)\|&\leq C V(\bx),\quad
\|\mathrm{\Pi}_{\bm{\theta}}\mu_{\bm{\theta}}(\bx)-\mathrm{\Pi}_{\bm{\theta'}}\mu_{\bm{\theta'}}(\bx)\|\leq C\|\bm{\theta}-\bm{\theta}'\| V(\bx),  \quad 
\E[V(\bx)]\leq \infty.\\
\end{split}
\end{equation*}

In particular, only the first order smoothness is required for the convergence of the adaptive algorithms \citep{Albert90}. By contrast, we see that the ergodicity theory \citep{mattingly10, VollmerZW2016} relies on a much stronger 4th-order smoothness.


\section{Convergence Analysis for CSGLD} \label{convergence}

For notational simplicity, we define  
$\nabla_{\bx} \tilde{L}(\bx,\btheta)= \frac{N}{n} \left[1+ 
   \frac{\zeta\tau}{\Delta u}  \left( \frac{\theta((\tilde{J}(\bx)+1)\wedge m)}{\theta(\tilde{J}(\bx))} -1 \right) \right]  
    \nabla_{\bx} \widetilde U(\bx)$. To make 
 our theory more general, 
 we generalize Algorithm a little bit by 
 including a higher order oscillation term in 
 parameter updating. The resulting algorithm works as follows:
\begin{itemize}
\item[(1)] Sample $\bm{x}_{k+1}=\bx_k- \epsilon_k\nabla_{\bx} \widetilde L(\bx_k, \btheta_k)+\mathcal{N}({0, 2\epsilon_k \tau\bm{I}}), \ \ \ \ \ \ \ \ \ \ \ \ \ \ \ \ \ \ \ \ \ \ \ \ \ \ \ \ \ \ \ \ \ \ \ \ \ \ \ \ \ \ \ \ \ \ \ \ \ \ \ \ \ \ \ \ \ \ \ \ (\text{S}_1)$

\item[(2)] Update $\bm{\theta}_{k+1}=\bm{\theta}_{k}+\omega_{k+1} \widetilde H(m,n, \epsilon_k, \bm{\theta}_{k}, \bm{x}_{k+1})
+\omega_{k+1}^2 \rho(\bm{\theta}_{k}, \bm{x}_{k+1}),
\ \ \ \ \ \ \ \ \ \ \ \ \ \ \ \ \ \ \ \ \ \ \ \  (\text{S}_2)$
\end{itemize}
where $\epsilon_k$ is the learning rate, $\omega_{k+1}$ is the step size, $\nabla_{\bx} \widetilde L(\bx_k, \btheta_k)$ is an unbiased estimate of $\nabla_{\bx} L(\bx_k, \btheta_k)= \left[1+
   \frac{\zeta\tau}{\Delta u}  \left( \frac{\theta_k((\tilde{J}(\bx_k)+1)\wedge m)}{\theta_k(\tilde{J}(\bx_k))} -1 \right) \right]  
    \nabla_{\bx} U(\bx_k)$ for any given $\btheta_k$, 
    and $\rho(\btheta_k,\bx_{k+1})$ is an oscillation term. 
    The stochastic random field $\widetilde H(m, n, \epsilon_k, \bm{\theta}_{k}, \bm{x}_{k+1})$ is an estimator of the random field $ H(\bm{\theta}_{k}, \bm{x}_{k+1})$  
    and it is subject to 
    discretization and mini-batch approximation 
    errors. Formally, we can decompose 
    $\widetilde H(m,n, \epsilon_k, \bm{\theta}_{k}, \bm{x}_{k+1})$ as follows:
\begin{equation}
    \label{tildeh}
    \widetilde H(m,n, \epsilon_k, \bm{\theta}_{k}, \bm{x}_{k+1})= H(\bm{\theta}_{k}, \bm{x}_{k+1})+\delta(m,n, \epsilon_k, \btheta_{k}, \bx_{k+1}),
\end{equation}
where the bias $\delta(m, n, \epsilon_k, \btheta_{k}, \bx_{k+1})$ is a random vector generated from two aspects: the discretization error $\delta_1(\frac{1}{m}, \epsilon_k)$ of SGLD ($\text{S}_1$) due to approximating the underlying Langevin diffusion \citep{Issei14, Maxim17}, which depends on $\epsilon_k$ and $m$; the estimation error $\delta_2(n, \btheta_k, \bx_{k+1})$ due to  approximating $H(\bm{\theta}_{k}, \bm{x}_{k+1})$ using a mini-batch of data, which mainly depends on the batch size $n$. Notably, the bias $\delta_2$ becomes smaller given a larger batch size and vanishes when the full data is used.


\subsection{Convergence of parameter estimation} 

The convergence analysis rests on the following assumptions:

\begin{assump}[Compactness] \label{ass2a} 
The space $\Theta$ is compact, $\inf \theta(i) >0$ for any $i\in \{1,2,\ldots,m\}$, 
and there exists a constant $Q>0$ such that for any  $\btheta\in \bTheta$ and $\bx \in \mX$, 
\begin{equation}
\label{compactness}
     \|\btheta\|\leq Q, \quad  
     \|H(\btheta, \bx)\|\leq Q, \quad 
     \|\widetilde H(m,n, \epsilon, \btheta, \bx)\|\leq Q, \quad 
     \|\rho(\btheta, \bx)\|\leq Q.
\end{equation}
\end{assump}

\begin{assump}[Smoothness]
\label{ass2}
$L(\bm{\xeta}, \bm{\theta})$ is $M$-smooth, namely, for any $\bx, \bx'\in \mX$, $\bm{\theta}, \bm{\theta}'\in \bTheta$,
\begin{equation}
\begin{split}
\label{ass_2_1_eq}
\|\nabla_{\bx} L(\bx, \btheta)-\nabla_{\bx} L(\bm{\bx}', \btheta')\|\leq M\|\bx-\bx'\|+M\|\btheta-\btheta'\|. 
\end{split}
\end{equation}
\end{assump}

\begin{assump}[Dissipativity]
\label{ass3}
 There exist constants $\tilde{m}>0$ and $\tilde{b}\geq 0$ such that for any $\bx \in \mX$ and $\btheta \in \bTheta$, 
\label{ass_dissipative}
\begin{equation}
\label{eq:01}
\langle \nabla_{\bx} L(\bx, \btheta), \bx\rangle\leq \tilde{b}-\tilde{m}\|\bx\|^2.
\end{equation}
\end{assump}
This assumption has been widely used in proving the geometric ergodicity of dynamical systems \citep{mattingly02, Maxim17, Xu18}. It ensures the sampler to move towards the origin regardless of the starting position.

\begin{assump}[Gradient noise] 
\label{ass4}
The stochastic noise follows
\begin{equation*}
\E[\nabla_{\bx}\widetilde L(\bx_{k},
 \btheta_{k})-\nabla_{\bx} L(\bx_{k}, \btheta_{k})]=0.
\end{equation*}
There exists some constants $M, B>0$ such that the second moment of the noise is bounded by
\begin{equation*} 
\E [\|\nabla_{\bx}\widetilde L(\bx_{k},
 \btheta_{k})-\nabla_{\bx} L(\bx_{k}, \btheta_{k})\|^2]\leq M^2 \|\bx\|^2+B^2. 
\end{equation*}

\end{assump}

 
 



We proceed by proving that $\mathbb{V}(\btheta)=\frac{1}{2}\|\btheta_{\star}-\btheta\|^2$ is a Lyapunov function for the mean-field $h(\btheta)$.

\begin{lemma}[Stability] \label{convex_appendix}
The mean-field $h(\btheta)$ satisfies $\forall \btheta \in \bTheta$, $\langle h(\btheta), \btheta - \btheta_{\star}\rangle \leq  -\phi\|\btheta - \btheta_{\star}\|^2$ for a constant $\phi>0$, where $\theta_*=(\Psi(u_1),\Psi(u_2),\ldots,\Psi(u_m))$.
In addition, $\{\btheta\in\bTheta:\langle h(\btheta), \nabla \mathbb{V}(\btheta)\rangle=0\}=\{\btheta_{\star}\}$, i.e., 
$\btheta_*$ is the only stable point of the mean-field system.
\end{lemma}

\begin{proof}

Given the random field $H_i(\btheta,\bx)={\theta}^{\zeta}(J(\bx))\left(1_{i\geq J(\bx)}-{\theta}(i)\right)$ for $i\in\{1,2,....m\}$, the mean-field function $h(\btheta)$ under the density 
$\varpi_{\btheta}(\bx)$ given in XX  follows
\begin{equation}
\begin{split}
    h_i(\btheta)&=\int_{\mX} H_i(\btheta,\bx) 
     \varpi_{\btheta}(\bx) d\bx
    =\int_{\mX} {\theta}^{\zeta}(J(\bx))\left(1_{i\geq J(\bx)}-{\theta}(i)\right) \varpi_{\btheta}(\bx) d\bx\\
    &=Z_{\btheta}^{-1}\left[ \sum_{k=1}^m \int_{\mX_k} 
     \pi(\bx) 1_{k\leq i} d\bx -\theta(i)\sum_{k=1}^m\int_{\mX_k} \pi(\bx)d\bx \right] \\
    &=  Z_{\btheta}^{-1}[\Psi(u_i)-\theta(i)] =
    Z_{\btheta}^{-1} \left[ \theta_{\star}(i)-\theta(i) \right].
\end{split}
\end{equation}
Considering the positive definite Lyapunov function $\mathbb{V}(\btheta)=\frac{1}{2}\|\btheta_{\star}-\btheta\|^2$ for the mean-field system $h(\btheta)=Z_{\btheta}^{-1} (\btheta_{\star}-\btheta)$, we have
\begin{equation*}
    \langle h(\btheta), \mathbb{V}(\btheta)\rangle=\langle h(\btheta), \btheta -\btheta_{\star}\rangle = -Z_{\btheta}^{-1}\|\btheta - \btheta_{\star}\|^2\leq  -\phi\|\btheta - \btheta_{\star}\|^2,
\end{equation*}
where $\phi=\inf_{\btheta} Z_{\btheta}^{-1}>0$ by
the compactness assumption \ref{ass2a}.  Therefore, $\langle h(\btheta), \mathbb{V}(\btheta)\rangle=\langle h(\btheta), \btheta -\btheta_{\star}\rangle\leq-\phi\left(\hat\btheta_{\star}-\btheta\right)^2<0$ holds for any $\btheta \neq \hat\btheta_{\star}$. This shows that the mean-field system is 
stable and $\btheta_{\star}$ is the only stable point.
\end{proof}


\begin{assump}[Step size]
\label{ass1}
$\{\omega_{k}\}_{k\in \mathrm{N}}$ is a positive decreasing sequence of real numbers such that
\begin{equation} \label{a1}
\omega_{k}\rightarrow 0, \ \ \sum_{k=1}^{\infty} \omega_{k}=+\infty,\ \  \lim_{k\rightarrow \infty} \inf 2\phi  \dfrac{\omega_{k}}{\omega_{k+1}}+\dfrac{\omega_{k+1}-\omega_{k}}{\omega^2_{k+1}}>0.
\end{equation}
According to \citet{Albert90}, we can choose $\omega_{k}:=\frac{A}{k^{\alpha}+B}$ for some $\alpha \in (\frac{1}{2}, 1]$ and some suitable constants 
 $A>0$ and $B>0$. 
 \end{assump}

The following lemma is a restatement of Lemma 1 in \citet{deng2019}, for which the required conditions are 
clearly satisfied due to the compactness assumption \ref{ass2a}.
\begin{lemma}[Uniform $L^2$ bounds]
\label{lemma:1}
Suppose Assumptions \ref{ass2a}-\ref{ass1} holds.  Given a small enough learning rate
 $\ 0<\epsilon<\operatorname{Re}(\tfrac{\tilde{m}-\sqrt{\tilde{m}^2-3M^2}}{3M^2})\wedge 1$,  then 
$\sup_{k\geq 1} \E[\|\bm{\xeta}_{k}\|^2] < \infty$.
\end{lemma}

\begin{lemma}[Solution of Poisson equation]
\label{lyapunov}
Suppose that Assumptions A2-A4 hold. 
There is a solution $\mu_{\btheta}(\cdot)$ on $\mX$ to the Poisson equation 
\begin{equation}
    \label{poisson_eqn}
    \mu_{\btheta}(\bm{x})-\mathrm{\Pi}_{\bm{\theta}}\mu_{\bm{\theta}}(\bm{x})=H(\bm{\theta}, \bm{x})-h(\bm{\theta}).
\end{equation}
such that for all $\bm{\theta}, \bm{\theta}'\in \bm{\bTheta}$ and a function  $V(\bx)=1+\|\bx\|^2$, there exists a constant $C$ such that
\begin{equation}
\begin{split}
\label{poisson_reg}
\E[\|\mathrm{\Pi}_{\bm{\theta}}\mu_{\btheta}(\bx)\|]&\leq C,\\
\E[\|\mathrm{\Pi}_{\bm{\theta}}\mu_{\bm{\theta}}(\bx)-\mathrm{\Pi}_{\bm{\theta}'}\mu_{\bm{\theta'}}(\bx)\|]&\leq C\|\bm{\theta}-\bm{\theta}'\|.\\
\end{split}
\end{equation}
\end{lemma}

\begin{proof}
The conditions of Theorem 13 of  \citet{VollmerZW2016} can be 
easily verified for AWSGLD given the assumptions A1-A4 and 
Lemma \ref{lemma:1}. The details are omitted. 
\end{proof}

Now we are ready to prove the first main result on the 
convergence of $\btheta_k$.
The technique lemmas are listed 
in Section \ref{Lemmasection}. 


\begin{theorem}[$L^2$ convergence rate]
\label{latent_convergence}
Suppose Assumptions $\ref{ass2a}$-$\ref{ass1}$ (given in the Appendix) hold. For a sufficiently
large value of $k_0$, a sufficiently small learning rate sequence  $\{\epsilon_k\}_{k=1}^{\infty}$,  and a sufficiently small
 step  size sequence $\{\omega_k\}_{k=1}^{\infty}$, 
 $\{\btheta_k\}_{k=1}^{\infty}$ converges to
 $\btheta_{\star}$ in $L_2$-norm  such that
\begin{equation*}
    \E\left[\|\bm{\theta}_{k}-\btheta_{\star}\|^2\right]=\mathcal{O}( \omega_{k}+\sup_{i\geq k_0}\E[\|\delta(m,n, \epsilon_i, \btheta_{i}, \bx_{i+1})\|]),
\end{equation*}
where $\delta(m,n, \epsilon_i, \btheta_{i},\bx_{i+1})$ denotes a bias term.
\end{theorem}
\begin{proof}
Consider the iterates 
\begin{equation*}
    \bm{\theta}_{k+1}=\bm{\theta}_{k}+\omega_{k+1} \left(\widetilde H(m,n, \epsilon_k, \bm{\theta}_{k}, \bm{x}_{k+1})+\omega_{k+1} \rho(\btheta_k, \bx_{k+1})\right).
\end{equation*}
Define $\bm{T}_{k}=\bm{\theta}_{k}-\btheta_{\star}$. By subtracting $\btheta_{\star}$ from both sides and taking the square and $L_2$ norm,  we have
\begin{equation*}
\small
\begin{split}
    \|\bT_{k+1}^2\|&=\|\bT_k^2\| +\omega_{k+1}^2 \|\widetilde H(m,n,\epsilon_k, \btheta_k, \bx_{k+1}) + \omega_{k+1}\rho(\btheta_k, \bx_{k+1})\|^2\\
    & +2\omega_{k+1}\underbrace{\langle \bT_k, \widetilde H(m,n,\epsilon_k, \btheta_k, \bx_{k+1})+\omega_{k+1}\rho(\btheta_k, \bx_{k+1})\rangle}_{\text{D}}.
\end{split}
\end{equation*}

First, by Lemma \ref{convex_property}, there exists a constant $G=9 Q^2(1+Q^2)$ such that
\begin{equation}
\label{first_term}
    \|\widetilde H(m,n,\epsilon_k, \btheta_k, \bx_{k+1}) + \omega_{k+1}\rho(\btheta_k, \bx_{k+1})\|^2 \leq G (1+\|\bT_k\|^2).
\end{equation}

Next, by the definition of $\widetilde H(m,n, \epsilon_k, \btheta_k, \bx_{k+1})$ in (\ref{tildeh}) and the Poisson equation (\ref{poisson_eqn}), we have
\begin{equation*}
\begin{split}
   \text{D}&=\langle \bT_k,  H(\btheta_k, \bx_{k+1})+\delta(m, n,\epsilon_k, \btheta_{k}, \bx_{k+1})+\omega_{k+1}\rho(\btheta_k, \bx_{k+1}) \rangle\\
   &=\langle \bT_k,  h(\btheta_k)+\mu_{\btheta_k}(\bm{x}_{k+1})-\mathrm{\Pi}_{\bm{\theta}_k}\mu_{\bm{\theta}_k}(\bm{x}_{k+1})+\delta(m,n,\epsilon_k, \btheta_{k}, \bx_{k+1})+\omega_{k+1}\rho(\btheta_k, \bx_{k+1}) \rangle\\
   &=\underbrace{\langle \bT_k,  h(\btheta_k)\rangle}_{\text{D}_{1}} +\underbrace{\langle\bT_k, \mu_{\btheta_k}(\bm{x}_{k+1})-\mathrm{\Pi}_{\bm{\theta}_k}\mu_{\bm{\theta}_k}(\bm{x}_{k+1})\rangle}_{\text{D}_{2}}+\underbrace{\langle \bT_k, \delta(m,n,\epsilon_k, \btheta_{k}, \bx_{k+1})+\omega_{k+1}\rho(\btheta_k, \bx_{k+1})\rangle}_{{\text{D}_{3}}}.
\end{split}
\end{equation*}

Using the stability property of the equilibrium in Lemma \ref{convex_appendix}, we have 
\begin{align*}
\langle \bm{T}_{k}, h(\bm{\theta}_{k})\rangle &\leq - \phi\|\bm{T}_{k}\|^2. \tag{$\text{D}_1$}
\end{align*}
To deal with the error $\text{D}_2$, we make the following decomposition 
\begin{equation*}
\begin{split}
\text{D}_2 &=\underbrace{\langle \bT_k, \mu_{\bm{\theta}_{k}}(\bm{\xeta}_{k+1})-\mathrm{\Pi}_{\bm{\theta}_{k}}\mu_{\bm{\theta}_{k}}(\bm{\bx}_{k})\rangle}_{\text{D}_{21}} \\
&+ \underbrace{\langle \bT_k,\mathrm{\Pi}_{\bm{\theta}_{k}}\mu_{\bm{\theta}_{k}}(\bm{x}_{k})- \mathrm{\Pi}_{\bm{\theta}_{k-1}}\mu_{\bm{\theta}_{k-1}}(\bm{x}_{k})\rangle}_{\text{D}_{22}}
+ \underbrace{\langle \bT_k,\mathrm{\Pi}_{\bm{\theta}_{k-1}}\mu_{\bm{\theta}_{k-1}}(\bm{x}_{k})- \mathrm{\Pi}_{\bm{\theta}_{k}}\mu_{\bm{\theta}_{k}}(\bm{\xeta}_{k+1})\rangle}_{\text{D}_{23}}.\\
\end{split}
\end{equation*}

\begin{itemize}
\item[(i)] From the Markov property, $\mu_{\bm{\theta}_{k}}(\bm{\xeta}_{k+1})-\mathrm{\Pi}_{\bm{\theta}_{k}}\mu_{\bm{\theta}_{k}}(\bm{x}_{k})$ forms a martingale difference sequence 
$$\E\left[\langle \bT_k, \mu_{\bm{\theta}_{k}}(\bm{\xeta}_{k+1})-\mathrm{\Pi}_{\bm{\theta}_{k}}\mu_{\bm{\theta}_{k}}(\bm{x}_{k})\rangle |\mathcal{F}_{k}\right]=0, \eqno{(\text{D}_{21})}$$
where  $\mF_k$ is a $\sigma$-filter formed by 
$\{\btheta_0,\bx_1, \btheta_1, \bx_2, \cdots, \bx_k,\btheta_k\}$.

\item[(ii)] By the regularity of the solution of Poisson equation in (\ref{poisson_reg}) and Lemma.\ref{theta_lip}, it leads to
\begin{equation}
\label{theta_delta}
\E[\|\mathrm{\Pi}_{\bm{\theta}_{k}}\mu_{\bm{\theta}_{k}}(\bm{x}_{k})- \mathrm{\Pi}_{\bm{\theta}_{k-1}}
 \mu_{\bm{\theta}_{k-1}}(\bm{x}_{k})\|]\leq C \|\btheta_k-\btheta_{k-1}\|\leq 2Q C\omega_k.
\end{equation}
Using Cauchy–Schwarz inequality, (\ref{theta_delta}) and the compactness of $\btheta$ in Assumption \ref{ass2a}, it follows that
$$\small{\E[\langle\bm{T}_{k},\mathrm{\Pi}_{\bm{\theta}_{k}}\mu_{\bm{\theta}_{k}}(\bm{x}_{k})- \mathrm{\Pi}_{\bm{\theta}_{k-1}}\mu_{\bm{\theta}_{k-1}}(\bm{x}_{k})\rangle]\leq \E[\|\bT_k\|]\cdot 2Q C\omega_k\leq 4Q^2 C\omega_{k}\leq 5Q^2 C\omega_{k+1}}   \eqno{(\text{D}_{22})},$$
where the last inequality follows from the step size assumption \ref{ass1} and holds for a sufficiently large value of $k$.

\item[(iii)] For the last term of $D_2$, 
\begin{equation*}
\begin{split}
\small
\langle \bm{T}_{k},\mathrm{\Pi}_{\bm{\theta}_{k-1}}\mu_{\bm{\theta}_{k-1}}(\bm{x}_{k})- \mathrm{\Pi}_{\bm{\theta}_{k}}\mu_{\bm{\theta}_{k}}(\bm{\xeta}_{k+1})\rangle
&=\left(\langle \bm{T}_{k}, \mathrm{\Pi}_{\bm{\theta}_{k-1}}\mu_{\bm{\theta}_{k-1}}(\bm{x}_{k}) \rangle- \langle \bm{T}_{k+1}, \mathrm{\Pi}_{\bm{\theta}_{k}}\mu_{\bm{\theta}_{k}}(\bm{\xeta}_{k+1})\rangle\right)\\
&\ \ \ +\left(\langle \bm{T}_{k+1}, \mathrm{\Pi}_{\bm{\theta}_{k}}\mu_{\bm{\theta}_{k}}(\bm{\xeta}_{k+1})\rangle-\langle \bm{T}_{k}, \mathrm{\Pi}_{\bm{\theta}_{k}}\mu_{\bm{\theta}_{k}}(\bm{\xeta}_{k+1})\rangle\right)\\
&={({z}_{k}-{z}_{k+1})}+{\langle \bm{T}_{k+1}-\bm{T}_{k}, \mathrm{\Pi}_{\bm{\theta}_{k}}\mu_{\bm{\theta}_{k}}(\bm{\xeta}_{k+1})\rangle},\\
\end{split}
\end{equation*}
where ${z}_{k}=\langle \bm{T}_{k}, \mathrm{\Pi}_{\bm{\theta}_{k-1}}\mu_{\bm{\theta}_{k-1}}(\bm{x}_{k})\rangle$. By the regularity assumption (\ref{poisson_reg}) and Lemma \ref{theta_lip}, 
$$\E\langle \bm{T}_{k+1}-\bm{T}_{k}, \mathrm{\Pi}_{\bm{\theta}_{k}}\mu_{\bm{\theta}_{k}}(\bm{\xeta}_{k+1})\rangle\leq   \E[\|\bm{\theta}_{k+1}-\bm{\theta}_{k}\|] \cdot \E[\|\mathrm{\Pi}_{\bm{\theta}_{k}}\mu_{\bm{\theta}_{k}}(\bm{\xeta}_{k+1})\|] \leq 2Q C \omega_{k+1}.\eqno{(\text{D}_{23})}$$
\end{itemize}

For convenience, we denote $\E[\|\delta(m,n,\epsilon_k, \btheta_{k}, \bx_{k+1})\|]$ by $\Delta_k$. Since $\rho(\btheta_k, \bx_{k+1})$ is bounded, we have
\begin{equation*}
    \E[\|\delta(m,n,\epsilon_k, \btheta_{k}, \bx_{k+1})+\omega_{k+1}\rho(\btheta_k, \bx_{k+1}))\|]\leq \Delta_k + \omega_{k+1} Q.
\end{equation*}
Applying Cauchy–Schwarz inequality gives
$${\E[\langle \bT_k, \delta(m,n,\epsilon_k, \btheta_{k}, \bx_{k+1})+\omega_{k+1}\rho(\btheta_k, \bx_{k+1}))]\leq 2Q(\Delta_k+\omega_{k+1} Q)} \eqno{(\text{D}_{3})}$$

Finally, adding (\ref{first_term}), $\text{D}_1$, $\text{D}_{2}$ and $\text{D}_3$ together, it follows that for a constant $C_0 = G+10Q^2C+4QC+4Q^2$,
\begin{equation}
\begin{split}
\label{key_eqn}
\E\left[\|\bm{T}_{k+1}\|^2\right]&\leq (1-2\omega_{k+1}\phi+G\omega^2_{k+1} )\E\left[\|\bm{T}_{k}\|^2\right]+C_0\omega^2_{k+1} +4Q\Delta_k\omega_{k+1} +2\E[z_{k}-z_{k+1}]\omega_{k+1}.
\end{split}
\end{equation}
Moreover, from (\ref{compactness}) and (\ref{poisson_reg}), $\E[|z_{k}|]$ is upper bounded by
\begin{equation}
\begin{split}
\label{condition:z}
\E[|z_{k}|]=\E[\langle \bm{T}_{k}, \mathrm{\Pi}_{\bm{\theta}_{k-1}}\mu_{\bm{\theta}_{k-1}}(\bm{x}_{k})\rangle]\leq \E[\|\bT_k\|]\E[\|\mathrm{\Pi}_{\bm{\theta}_{k-1}}\mu_{\bm{\theta}_{k-1}}(\bm{x}_{k})\|]\leq 2QC.
\end{split}
\end{equation}

According to Lemma $\ref{lemma:4}$, we can choose $\lambda_0$ and $k_0$ such that 
\begin{align*}
\E[\|\bm{T}_{k_0}\|^2]\leq \psi_{k_0}=\lambda_0 \omega_{k_0}+\frac{2Q}{\phi}\sup_{i\geq k_0}\Delta_{i},
\end{align*}
which satisfies the conditions ($\ref{lemma:3-a}$) and ($\ref{lemma:3-b}$) of Lemma $\ref{lemma:3-all}$. Applying Lemma $\ref{lemma:3-all}$ leads to
\begin{equation}
\begin{split}
\label{eqn:9}
\E\left[\|\bm{T}_{k}\|^2\right]\leq \psi_{k}+\E\left[\sum_{j=k_0+1}^{k}\Lambda_j^k \left(z_{j-1}-z_{j}\right)\right],
\end{split}
\end{equation}
where $\psi_{k}=\lambda_0 \omega_{k}+\frac{2Q}{\phi}\sup_{i\geq k_0}\Delta_{i}$ for all $k>k_0$. Based on ($\ref{condition:z}$) and the increasing condition of $\Lambda_{j}^k$ in Lemma $\ref{lemma:2}$, we have
\begin{equation}
\small
\begin{split}
\label{eqn:10}
&\E\left[\left|\sum_{j=k_0+1}^{k} \Lambda_j^k\left(z_{j-1}-z_{j}\right)\right|\right]
=\E\left[\left|\sum_{j=k_0+1}^{k-1}(\Lambda_{j+1}^k-\Lambda_j^k)z_j-2\omega_{k}z_{k}+\Lambda_{k_0+1}^k z_{k_0}\right|\right]\\
\leq& \sum_{j=k_0+1}^{k-1}2(\Lambda_{j+1}^k-\Lambda_j^k)QC+\E[|2\omega_{k} z_{k}|]+2\Lambda_k^k QC\\
\leq& 2(\Lambda_k^k-\Lambda_{k_0}^k)QC+2\Lambda_k^k QC+2\Lambda_k^k QC\\
\leq& 6\Lambda_k^k QC.
\end{split}
\end{equation}

Therefore, given $\psi_{k}=\lambda_0 \omega_{k}+\frac{2Q}{\phi}\sup_{i\geq k_0}\Delta_{i}$ that satisfies the conditions ($\ref{lemma:3-a}$), ($\ref{lemma:3-b}$) of Lemma $\ref{lemma:3-all}$, for any $k>k_0$, from ($\ref{eqn:9}$) and ($\ref{eqn:10}$), we have
\begin{equation*}
\E[\|\bm{T}_{k}\|^2]\leq \psi_{k}+6\Lambda_k^k QC=\left(\lambda_0+12QC\right)\omega_{k}+\frac{2Q}{\phi}\sup_{i\geq k_0}\Delta_{i}=\lambda \omega_{k}+\frac{2Q}{\phi}\sup_{i\geq k_0}\Delta_{i},
\end{equation*}
where $\lambda=\lambda_0+12QC$, $\lambda_0=\frac{4GQ\sup_{i\geq k_0} \Delta_i + 2C_0\phi}{C_1\phi}$, $\small{C_1=\lim \inf 2\phi \dfrac{\omega_{k}}{\omega_{k+1}}+\dfrac{\omega_{k+1}-\omega_{k}}{\omega^2_{k+1}}>0}$, $C_0=G+5Q^2C+2QC+2Q^2$ and $G=9 Q^2(1+Q^2)$.
\end{proof}

\subsection{Technical Lemmas} \label{Lemmasection}

\begin{lemma}
\label{convex_property}
Given $\sup\{\omega_k\}_{k=1}^{\infty}\leq 1$, there exists a constant $G=9 Q^2(1+Q^2)$ such that
\begin{equation} \label{bound2}
\| \widetilde H(m, n, \epsilon_k, \bm{\theta}_k, \bm{\xeta}_{k+1})+\omega_{k+1}\rho(\btheta_k, \bx_{k+1})\|^2 \leq G (1+\|\bm{\theta}_k-\btheta_{\star}\|^2). 
\end{equation}
\end{lemma}
\begin{proof}

According to the compactness condition in Assumption \ref{ass2a}, we have
\begin{equation}
\label{mid_1}
\|H(\bm{\theta}_k, \bm{\xeta}_{k+1})\|^2\leq Q^2 (1+\|\bm{\theta}_k\|^2) = 
 Q^2 (1+\|\bm{\theta}_k-\btheta_{\star}+\btheta_{\star}\|^2)\leq Q^2 (1+2\|\bm{\theta}_k-\btheta_{\star}\|^2+2Q^2).
\end{equation}

Therefore, using (\ref{mid_1}), we can show that for a constant $G=9Q^2(1+Q^2)$
\begin{equation*}
\small
\begin{split}
    &\ \ \ \|\widetilde H(m, n, \epsilon_k, \bm{\theta}_k, \bm{\xeta}_{k+1})+\omega_{k+1}\rho(\btheta_k, \bx_{k+1})\|^2 \\
    &\leq 3\|H(\bm{\theta}_k, \bm{\xeta}_{k+1})\|^2 + 3\|\delta(m, n, \epsilon_k, \btheta_{k}, \bx_{k+1})\|^2+3\omega_{k+1}^2 \|\rho(\btheta_k, \bx_{k+1})\|\\
    &\leq 3Q^2 (1+2\|\bm{\theta}_k-\btheta_{\star}\|^2+2Q^2) + 6Q^2\\
    &\leq 3Q^2 (3+3Q^2+(3+3Q^2)\|\bm{\theta}_k-\btheta_{\star}\|^2)\\
    &\leq G (1+\|\bm{\theta}_k-\btheta_{\star}\|^2).
\end{split}
\end{equation*}
\end{proof}

\begin{lemma}
\label{theta_lip}Given $\sup\{\omega_k\}_{k=1}^{\infty}\leq 1$, we have that
\begin{equation}
\label{lip_theta}
    \|\btheta_{k}-\btheta_{k-1}\|\leq 2\omega_{k} Q
\end{equation}
\end{lemma}

\begin{proof}
Following the update $\btheta_k-\btheta_{k-1}=\omega_k \widetilde H(m, n, \epsilon_{k-1},\bm{\theta}_{k-1}, \bm{x}_{k})+\omega_{k}^2 \brho_{k}$, we have that
$$\|\btheta_{k}-\btheta_{k-1}\|= \|\omega_k \widetilde H(m, n, \epsilon_{k-1}, \bm{\theta}_{k-1}, \bm{x}_{k})+\omega_{k}^2 \brho_{k}\|\leq \omega_k\| \widetilde H(m, n, \epsilon_k, \bm{\theta}_{k-1},\bm{x}_{k})\|+\omega_{k}^2\| \brho_{k}\|.$$
By the compactness condition in Assumption \ref{ass2a} and $\sup\{\omega_k\}_{k=1}^{\infty}\leq 1$, (\ref{lip_theta}) can be derived.
\end{proof}

\begin{lemma}
\label{lemma:4}
There exist constants $\lambda_0$ and $k_0$ such that $\forall \lambda\geq\lambda_0$ and $\forall k> k_0$, the sequence $\{\psi_{k}\}_{k=1}^{\infty}$, where $\psi_{k}=\lambda\omega_{k}+\frac{2Q}{\phi} \sup_{i\geq k_0}\Delta_i$, satisfies
\begin{equation}
\begin{split}
\label{key_ieq}
\psi_{k+1}\geq& (1-2\omega_{k+1}\phi+G\omega_{k+1}^2)\psi_{k}+C_0\omega_{k+1}^2  +4Q \Delta_k\omega_{k+1}.
\end{split}
\end{equation}
\begin{proof}
By replacing $\psi_{k}$ with $\lambda\omega_{k}+\frac{2Q}{\phi} \sup_{i\geq k_0}\Delta_i$ in ($\ref{key_ieq}$), it suffices to show
\begin{equation*}
\small
\begin{split}
\label{lemma:loss_control}
\lambda \omega_{k+1}+\frac{2Q}{\phi} \sup_{i\geq k_0}\Delta_i\geq& (1-2\omega_{k+1}\phi+G\omega_{k+1}^2)\left(\lambda \omega_{k}+\frac{2Q}{\phi} \sup_{i\geq k_0}\Delta_i\right)+C_0\omega_{k+1}^2 + 4Q\Delta_k\omega_{k+1}.
\end{split}
\end{equation*}

which is equivalent to proving
\begin{equation*}
\small
\begin{split}
&\lambda (\omega_{k+1}-\omega_k+2\omega_k\omega_{k+1}\phi-G\omega_k\omega_{k+1}^2)\geq  \frac{2Q}{\phi}\sup_{i\geq k_0}\Delta_i(-2\omega_{k+1}\phi+G\omega_{k+1}^2 )+C_0\omega_{k+1}^2+ 4Q\Delta_k\omega_{k+1}.
\end{split}
\end{equation*}

Given the step size condition in ($\ref{a1}$), we have $$\small{\omega_{k+1}-\omega_{k}+2 \omega_{k}\omega_{k+1}\phi \geq C_1 \omega_{k+1}^2},$$ 
where $\small{C_1=\lim \inf 2\phi  \dfrac{\omega_{k}}{\omega_{k+1}}+\dfrac{\omega_{k+1}-\omega_{k}}{\omega^2_{k+1}}>0}$. Combining $-\sup_{i\geq k_0}\Delta_i\leq \Delta_k$, it suffices to prove
\begin{equation}
\begin{split}
\label{loss_control-2}
\lambda \left(C_1-G\omega_{k}\right)\omega^2_{k+1}\geq  \left(\frac{2GQ}{\phi} \sup_{i\geq k_0}\Delta_i+C_0\right)\omega^2_{k+1}.
\end{split}
\end{equation}

It is clear that for a large enough $k_0$ and $\lambda_0$ such that $\omega_{k_0}\leq \frac{C_1}{2G}$, $\lambda_0=\frac{4GQ\sup_{i\geq k_0} \Delta_i + 2C_0\phi}{C_1\phi}$, the desired conclusion ($\ref{loss_control-2}$) holds for all such $k\geq k_0$ and $\lambda\geq \lambda_0$.
\end{proof}
\end{lemma}

The following lemma is a restatement of Lemma 25 (page 247) from \citet{Albert90}.
\begin{lemma}
\label{lemma:2}
Suppose $k_0$ is an integer satisfying
$\inf_{k> k_0} \dfrac{\omega_{k+1}-\omega_{k}}{\omega_{k}\omega_{k+1}}+2\phi-G\omega_{k+1}>0$ 
for some constant $G$. 
Then for any $k>k_0$, the sequence $\{\Lambda_k^K\}_{k=k_0, \ldots, K}$ defined below is increasing and uppered bounded by $2\omega_{k}$
\begin{equation}  
\Lambda_k^K=\left\{  
             \begin{array}{lr}  
             2\omega_{k}\prod_{j=k}^{K-1}(1-2\omega_{j+1}\phi+G\omega_{j+1}^2) & \text{if $k<K$},   \\  
              & \\
             2\omega_{k} &  \text{if $k=K$}.
             \end{array}  
\right.  
\end{equation} 
\end{lemma}

\begin{lemma}
\label{lemma:3-all}
Let $\{\psi_{k}\}_{k> k_0}$ be a series that satisfies the following inequality for all $k> k_0$
\begin{equation}
\begin{split}
\label{lemma:3-a}
\psi_{k+1}\geq &\psi_{k}\left(1-2\omega_{k+1}\phi+G\omega^2_{k+1}\right)+C_0\omega^2_{k+1} + 4Q \Delta_k\omega_{k+1},
\end{split}
\end{equation}
and assume there exists such $k_0$ that 
\begin{equation}
\begin{split}
\label{lemma:3-b}
\E\left[\|\bm{T}_{k_0}\|^2\right]\leq \psi_{k_0}.
\end{split}
\end{equation}
Then for all $k> k_0$, we have
\begin{equation}
\begin{split}
\label{result}
\E\left[\|\bm{T}_{k}\|^2\right]\leq \psi_{k}+\sum_{j=k_0+1}^{k}\Lambda_j^k (z_{j-1}-z_j).
\end{split}
\end{equation}
\end{lemma}

\begin{proof}
We prove by the induction method. Assuming (\ref{result}) is true and applying (\ref{key_eqn}), we have that 
\begin{equation*}
\small
\begin{split}
    \E\left[\|\bm{T}_{k+1}\|^2\right]&\leq (1-2\omega_{k+1}\phi+\omega^2_{k+1} G)(\psi_{k}+\sum_{j=k_0+1}^{k}\Lambda_j^k (z_{j-1}-z_j))+C_0\omega^2_{k+1} +4Q \Delta_k\omega_{k+1}+2\omega_{k+1}\E[z_{k}-z_{k+1}]\\
\end{split}
\end{equation*}

Combining (\ref{key_ieq}) and Lemma.\ref{lemma:2}, respectively, we have
\begin{equation*}
\small
\begin{split}
    \E\left[\|\bm{T}_{k+1}\|^2\right]&\leq  \psi_{k+1}+(1-2\omega_{k+1}\phi+\omega^2_{k+1} G)\sum_{j=k_0+1}^{k}\Lambda_j^k (z_{j-1}-z_j)+2\omega_{k+1}\E[z_{k}-z_{k+1}]\\
    & \leq \psi_{k+1}+\sum_{j=k_0+1}^{k}\Lambda_j^{k+1} (z_{j-1}-z_j)+\Lambda_{k+1}^{k+1}\E[z_{k}-z_{k+1}]\\
    & \leq \psi_{k+1}+\sum_{j=k_0+1}^{k+1}\Lambda_j^{k+1} (z_{j-1}-z_j).\\
\end{split}
\end{equation*}
\end{proof}

\section{Ergodicity and weighted averaging estimators}
\label{ergodic}
Our interest is to analyze the deviation between the weighted averaging estimator $\frac{1}{k}\sum_{i=1}^k\theta_{i}^{\zeta}( \tilde{J}(\bx_i)) f(\bx_i)$ and posterior average $\int_{\mX}f(\bx)\pi(d\bx)$ for an integrable function $f$. To accomplish this analysis, we first study the convergence of the 
posterior sample mean $\frac{1}{k}\sum_{i=1}^k f(\bx_i)$ 
to the posterior expectation $\bar{f}=\int_{\mX}f(\bx)\varpi_{\btheta_{\star}}(d\bx)$. 
The key tool for ergodic theory is still the Poisson equation 
which is used to characterize the fluctuation 
between $f(\bx)$ and $\bar f$: 
\begin{equation}
    \mathcal{L}g(\bx)=f(\bx)-\bar f,
\end{equation}
where $g(\bx)$ is the solution of the Poisson equation, and $\mathcal{L}$ is the infinitesimal generator of the Langevin diffusion 
\begin{equation*}
    \mathcal{L}g:=\nabla g \nabla L(\cdot, \btheta_{\star})+\tau\nabla^2g.
\end{equation*}

By imposing the following regularity conditions on the function $g(\bx)$, we can control the perturbations of $\frac{1}{k}\sum_{i=1}^k f(\bx_i)-\bar f$ and enables convergence of the ergodic average.
\begin{assump}[Regularity] 
\label{ass6}
Given a sufficiently smooth function $g(\bx)$ and a function $\mathcal{V}(\bx)$, such that $\|D^k g\|\lesssim \mathcal{V}^{p_k}(\bx)$ and $p_k>0$ for $k\in\{0,1,2,3\}$. In addition, $\mathcal{V}^p$ has a bounded expectation: $\sup_{\bx} \E[\mathcal{V}^p(\bx)]<\infty$ and $\mathcal{V}$ is smooth, i.e. $\sup_{s\in\{0, 1\}} \mathcal{V}^p(s\bx+(1-s)\by)\lesssim \mathcal{V}^p(\bx)+\mathcal{V}^p(\by)$ for all $\bx,\by\in\mX$ and $p\leq 2\max_k\{p_k\}$.
\end{assump}

Now, we present a lemma, which is majorly adapted from Theorem 2 \citep{Chen15} with a fixed learning rate $\epsilon$. For stronger but verifiable conditions, we refer readers to \citet{VollmerZW2016}.
\begin{lemma}[Convergence of the Averaging Estimators]
\label{avg_converge_appendix}
Assume Assumptions $\ref{ass2a}$-$\ref{ass6}$ hold.  For any integrable function $f$, we have
\begin{equation*}
\small
\begin{split}
    \left|\E\left[\frac{\sum_{i=1}^k f(\bx_i)}{k}\right]-\int_{\mX}f(\bx)\varpi_{\bm{\theta}_{\star}}(\bx)d\bx\right|&= \mathcal{O}\left(\frac{1}{k\epsilon}+\epsilon+\sqrt{\frac{\sum_{i=1}^k \omega_k}{k}}+\sup_{i\geq k_0}\E[\|\delta(m,n, \epsilon, \btheta_{i}, \bx_{i+1})\|]^{0.5}\right). \\
\end{split}
\end{equation*}
\end{lemma}

\begin{proof}

To study the ergodic average, we can view the adaptive algorithm as a standard sampling algorithm with fixed latent variable $\btheta_{\star}$ as follows
\begin{equation*}
\begin{split}
    \bm{x}_{k+1}&=\bx_k- \epsilon_k\nabla_{\bx} \widetilde L(\bx_k, \btheta_k)+\mathcal{N}({0, 2\epsilon_k \tau\bm{I}})\\
    &=\bx_k- \epsilon_k\left(\nabla_{\bx} \widetilde L(\bx_k, \btheta_{\star})+\Upsilon(\bx_k,\btheta_{\star})\right)+\mathcal{N}({0, 2\epsilon_k \tau\bm{I}}),
\end{split}
\end{equation*}
where the bias term is denoted by $$\Upsilon(\bx_k,\btheta_k)=\nabla L(\bx_k,\btheta_k)-\nabla L(\bx_k,\btheta_{\star}).$$

According to the smoothness assumption \ref{ass2} and Jensen's inequality, we have
\begin{equation}
\label{latent_bias}
\small
    \|\E[\Upsilon(\bx_k,\btheta_k)]\|\leq \E[\|\Upsilon(\bx_k,\btheta_k)\|]=\E[\|\nabla_{\bx} L(\bx, \btheta_k)-\nabla_{\bx}  L(\bx, \btheta_{\star})\|] \leq M\E[\|\btheta_k-\btheta_{\star}\|] \leq M\sqrt{\E[\|\btheta_k-\btheta_{\star}\|^2]}.
\end{equation}

The ergodic average based on biased gradients and a fixed learning rate is a direct result of Theorem 2 \citep{Chen15} by imposing regularity condition \ref{ass6}. Combining (\ref{latent_bias}) and Theorem \ref{latent_convergence}, we know that 
\begin{equation*}
\small
\begin{split}
    \left|\E\left[\frac{\sum_{i=1}^k f(\bx_i)}{k}\right]-\int_{\mX}f(\bx)\varpi_{\bm{\theta}_{\star}}(\bx)d\bx\right|&\leq \mathcal{O}\left(\frac{1}{k\epsilon}+\epsilon+\frac{\sum_{i=1}^k \|\E[\Upsilon(\bx_k,\btheta_k)]\|}{k}\right)\\
    &\lesssim \mathcal{O}\left(\frac{1}{k\epsilon}+\epsilon+\frac{\sum_{i=1}^k \left(\omega_k+\sup_{i\geq k_0}\E[\|\delta(m, n, \epsilon, \btheta_{i}, \bx_{i+1})\|]\right)^{0.5}}{k}\right), \\
    &\leq \mathcal{O}\left(\frac{1}{k\epsilon}+\epsilon+\sqrt{\frac{\sum_{i=1}^k \omega_k}{k}}+\sup_{i\geq k_0}\E[\|\delta(m, n, \epsilon, \btheta_{i}, \bx_{i+1})\|]^{0.5}\right)
\end{split}
\end{equation*}
where the last inequality follows from $\small{(\omega_k+\Delta)^{0.5}\leq \omega_k^{0.5}+\Delta^{0.5}}$ and $\small{\sum_{i=1}^k \omega_i^{0.5}\leq \sqrt{k\sum_{i=1}^k \omega_i}}$.
\end{proof}


Now we are ready to show the convergence of the weighted averaging estimator $\frac{\sum_{i=1}^k\theta_{i}
     ^{\zeta}(\tilde{J}(\bx_i)) f(\bx_i)}{\sum_{i=1}^k\theta_{i}^{\zeta}( 
      \tilde{J}(\bx_i))}$ to the posterior mean $\int_{\mX}f(\bx)\pi(d\bx)$.
\begin{theorem}[Convergence of the Weighted Averaging Estimators] Assume Assumptions $\ref{ass2a}$-$\ref{ass6}$ hold. For any bounded function $f$, we have that 
\label{w_avg_converge_appendix}
\begin{equation*}
\small
\begin{split}
    \left|\E\left[\frac{\sum_{i=1}^k\theta_{i}
     ^{\zeta}(\tilde{J}(\bx_i)) f(\bx_i)}{\sum_{i=1}^k\theta_{i}^{\zeta}( 
      \tilde{J}(\bx_i))}\right]-\int_{\mX}f(\bx)\pi(d\bx)\right|&= \mathcal{O}\left(\frac{1}{k\epsilon}+\epsilon+\sqrt{\frac{\sum_{i=1}^k \omega_k}{k}}+\sup_{n\geq k_0}\E[\|\delta(m, n, \epsilon, \btheta_{n}, \bx_{n+1})\|]^{0.5}\right). \\
\end{split}
\end{equation*}
\end{theorem}

\begin{proof}

Applying triangle inequality and $|\E[x]|\leq \E[|x|]$, we have
\begin{equation*}
\small
    \begin{split}
        &\left|\E\left[\frac{\sum_{i=1}^k\theta_{i}
        ^{\zeta}( \tilde{J}(\bx_i)) f(\bx_i)}{\sum_{i=1}^k\theta_{i}^{\zeta}(
         \tilde{J}(\bx_i))}\right]-\int_{\mX}f(\bx)\pi(d\bx)\right|\\
        \leq &\underbrace{\E\left[\left|\frac{\sum_{i=1}^k\theta_{i}^{\zeta}
         (\tilde{J}(\bx_i))f(\bx_i)}
         {\sum_{i=1}^k\theta_{i}^{\zeta}(\tilde{J}(\bx_i)) }-\frac{Z_{\btheta_{\star}}\sum_{i=1}^k\theta_{i}^{\zeta} (\tilde{J}(\bx_i)) f(\bx_i)}{k}\right|\right]}_{\text{I}_1}\\
        &\ \ \ \ + \underbrace{\E\left[\frac{Z_{\btheta_{\star}}}{k}\sum_{i=1}^k\left|\theta_i^{\zeta} (\tilde{J}(\bx_i))-\theta_{\star}^{\zeta}
      (\tilde{J}(\bx_i))  \right| \cdot |f(\bx_i)|\right]}_{\text{I}_2} +\underbrace{\left|\E\left[\frac{Z_{\btheta_{\star}}}{k}\sum_{i=1}^k\theta_{\star}^{\zeta}
     (\tilde{J}(\bx_i)) f(\bx_i)\right]-\int_{\mX}f(\bx)\pi(d\bx)\right|}_{\text{I}_3}.
    \end{split}
\end{equation*}


For the first term $\text{I}_1$, 
by the boundedness of $\bTheta$ and $f$ and $\inf_{\btheta, i}\theta(i)^{\zeta}>0$, we have
\begin{equation*}
\small
\begin{split}
    \text{I}_1=&\E\left[\left|\frac{\sum_{i=1}^k\theta_{i}^{\zeta}(\tilde{J}(\bx_i))  f(\bx_i)}{\sum_{i=1}^k\theta_{i}^{\zeta}
    (\tilde{J}(\bx_i))
    }\left(1-\sum_{i=1}^k\frac{\theta_i^{\zeta}
    (\tilde{J}(\bx_i))
    }{k}Z_{\btheta_{\star}}\right)\right|\right]\\
    \lesssim & \E\left[\left|Z_{\btheta_{\star}}\frac{{\sum_{i=1}^k\theta_{i}^{\zeta}
    (\tilde{J}(\bx_i))
    }}{k}-1\right|\right]\\
    =&\E\left[\left|Z_{\btheta_{\star}}\sum_{i=1}^m \frac{\sum_{j=1}^k\left( \theta_j^{\zeta}(i)-\theta_{\star}^{\zeta}(i)+\theta_{\star}^{\zeta}(i)\right)1_{
    \tilde{J}(\bx_j)=i}}{k}-1\right|\right]\\
    \leq & \underbrace{\E\left[Z_{\btheta_{\star}}\sum_{i=1}^m \frac{\sum_{j=1}^k\left| \theta_j^{\zeta}(i)-\theta_{\star}^{\zeta}(i)\right| 1_{\tilde{J}(\bx_j)=i}}{k} \right]}_{\text{I}_{11}} + \underbrace{\E\left[\left| Z_{\btheta_{\star}}\sum_{i=1}^m \frac{\theta_{\star}^{\zeta}(i)\sum_{j=1}^k  1_{\tilde{J}(\bx_j)=i}}{k}-1\right|\right]}_{\text{I}_{12}}\\
\end{split}
\end{equation*}

Regarding $\text{I}_{11}$, first applying $|x^{\zeta}-y^{\zeta}|\leq |x-y| z^{\zeta}$ for any $\zeta>0, x\leq y$ and $z\in[x, y]$ based on the mean-value theorem and then using Cauchy–Schwarz inequality
\begin{equation}
    \text{I}_{11}\lesssim \frac{1}{k}\E\left[ \sum_{j=1}^k\sum_{i=1}^m\left| \theta_j^{\zeta}(i)-\theta_{\star}^{\zeta}(i)\right| \right]\lesssim  \frac{1}{k}\E\left[ \sum_{j=1}^k\sum_{i=1}^m\left| \theta_j(i)-\theta_{\star}(i)\right| \right]\lesssim  \frac{1}{k}\sqrt{\sum_{j=1}^k\E\left[\left\| \btheta_j-\btheta_{\star}\right\|^2\right]},
\end{equation}
where the compactness of $\Theta$ has been 
used in deriving the second inequality. 

For $\text{I}_{12}$, considering the following relation $$
    1=\sum_{i=1}^m\int_{\mX_i} \pi(\bx)d\bx=\sum_{i=1}^m\int_{\mX_i} \theta_{\star}^{\zeta}(i) \frac{\pi(\bx)}{\theta_{\star}^{\zeta}(i)}d\bx
    =Z_{\btheta_{\star}}\sum_{i=1}^m \theta_{\star}^{\zeta}(i)\int_{\mX} 1_{\tilde{J}(\bx)=i}\varpi_{\btheta_{\star}}(\bx)d\bx,$$ then we have
\begin{equation}
\begin{split}
    \text{I}_{12}&=\E\left[\left|Z_{\btheta_{\star}}\sum_{i=1}^m \theta_{\star}^{\zeta}(i)\left(\frac{\sum_{j=1}^k 1_{
    \tilde{J}(\bx_j)=i}}{k}-\int_{\mX}1_{
    \tilde{J}(\bx)=i}\varpi_{\btheta_{\star}}(\bx)d\bx\right)\right|\right]\\
    &\lesssim \sum_{i=1}^m\E\left[\left|\frac{\sum_{j=1}^k 1_{ \tilde{J}(\bx_j)=i}
    }{k}-\int_{\mX} 1_{\tilde{J}(\bx)=i }\varpi_{\btheta_{\star}}(\bx)d\bx\right|\right]\\
    &=\mathcal{O}\left(\frac{1}{k\epsilon}+\epsilon+\sqrt{\frac{\sum_{i=1}^k \omega_k}{k}}+\sup_{n\geq k_0}\E[\|\delta(m,n, \epsilon, \btheta_{n}, \bx_{n+1})\|]^{0.5}\right),
\end{split}
\end{equation}
where the last equality follows  from Lemma \ref{avg_converge_appendix} as the indicator 
function $1_{ \tilde{J}(\bx)=i}$ is integrable.

For $\text{I}_2$, by the boundedness of $f$,  
the mean value theorem and Cauchy-Schwarz inequality, 
we have 
\begin{equation*}
\small
    \begin{split}
        \text{I}_2&\lesssim \E\left[\frac{1}{k}\sum_{i=1}^k\left|\theta_{i}
        ^{\zeta}(\tilde{J}(\bx_i)) -\theta_{\star}^{\zeta}(
        \tilde{J}(\bx_i))\right|\right]\lesssim  \frac{1}{k}\E\left[ \sum_{j=1}^k\sum_{i=1}^m\left| \theta_j(i)-\theta_{\star}(i)\right| \right]\lesssim  \frac{1}{k}\sqrt{\sum_{j=1}^k\E\left[\left\| \btheta_j-\btheta_{\star}\right\|^2\right]}.\\
    \end{split}
\end{equation*}

For the last term $\text{I}_3$, we first decompose $\int_{\mX} f(\bx) \pi(d\bx)$ into $m$ disjoint regions to facilitate the analysis
\begin{equation}
\label{split_posterior}
\small
\begin{split}
      \int_{\mX} f(\bx) \pi(d\bx)=\int_{\cup_{j=1}^m \mX_j}  f(\bx) \pi(d\bx)=\sum_{j=1}^m\theta_{\star}^{\zeta}(j)\int_{\mX_j}  f(\bx) \frac{\pi(d\bx)}{\theta_{\star}^{\zeta}(j)}=Z_{\btheta_{\star}}\sum_{j=1}^m\theta_{\star}(j)^{\zeta}\int_{\mX_j}  f(\bx) \varpi_{\bm{\theta}_{\star}}(d\bx).\\
\end{split}
\end{equation}

Plugging (\ref{split_posterior}) into the third term $\text{I}_3$, we have
\begin{equation}
\label{final_i2}
\small
    \begin{split}
        \text{I}_3&=\left|\E\left[\frac{Z_{\btheta_{\star}}}{k}\sum_{i=1}^k\sum_{j=1}^m\theta_{\star}(j)^{\zeta} f(\bx_i)1_{  \tilde{J}(\bx_i)=j
        }\right]-\int_{\mX}f(\bx)\pi(d\bx)\right|\\
        &=Z_{\btheta_{\star}}\left|\sum_{j=1}^m\theta_{\star}^{\zeta}(j)\E\left[\frac{1}{k}\sum_{i=1}^k f(\bx_i)
        1_{ \tilde{J}(\bx_i)=j 
        }\right]-\sum_{j=1}^m\theta_{\star}^{\zeta}(j)\int_{\mX_j}  f(\bx) \varpi_{\bm{\theta}_{\star}}(d\bx)\right|\\
        &\leq Z_{\btheta_{\star}}\sum_{j=1}^m\theta_{\star}^{\zeta}(j)\left|\E\left[\frac{1}{k}\sum_{i=1}^k f(\bx_i)1_{
        \tilde{J}(\bx_i)=j 
        }\right]-\int_{\mX_j}  f(\bx) \varpi_{\bm{\theta}_{\star}}(d\bx)\right|.\\
    \end{split}
\end{equation}

Given any $j\in \{1,2,...,m\}$, applying the function $f(\bx)1_{\tilde{J}(\bx)=j}$ 
to Lemma \ref{avg_converge_appendix} yields
\begin{equation}
\label{almost_i2}
\small
\begin{split}
      \left|\E\left[\frac{1}{k}\sum_{i=1}^k f(\bx_i)1_{
      \tilde{J}(\bx_i)=j 
      }\right]-\int_{\mX_j}  f(\bx) \varpi_{\bm{\theta}_{\star}}(d\bx)\right|\leq \mathcal{O}\left(\frac{1}{k\epsilon}+\epsilon+\sqrt{\frac{\sum_{i=1}^k \omega_k}{k}}+\sup_{i\geq k_0}\E[\|\delta(m, n, \epsilon, \btheta_{i}, \bx_{i+1})\|]^{0.5}\right).\\
\end{split}
\end{equation}

Plugging (\ref{almost_i2}) into (\ref{final_i2}) and combining $\text{I}_{11}$, $\text{I}_{12}$, $\text{I}_2$ and Theorem \ref{latent_convergence}, we have
\begin{equation}
\small
\begin{split}
      \left|\E\left[\frac{\sum_{i=1}^k\theta_{i}
     ^{\zeta}(\tilde{J}(\bx_i)) f(\bx_i)}{\sum_{i=1}^k\theta_{i}^{\zeta}( 
      \tilde{J}(\bx_i))}\right]-\int_{\mX}f(\bx)\pi(d\bx)\right|\leq \mathcal{O}\left(\frac{1}{k\epsilon}+\epsilon+\sqrt{\frac{\sum_{i=1}^k \omega_k}{k}}+\sup_{i\geq k_0}\E[\|\delta(m, n, \epsilon, \btheta_{i}, \bx_{i+1})\|]^{0.5}\right).\\
\end{split}
\end{equation}

\end{proof}

\bibliography{myref}
\bibliographystyle{plainnat}


\maketitle

The supplementary material is organized as follows: Section \ref{review} provides a  review 
for the related methodologies, Section \ref{convergence} proves the stability condition and convergence of the self-adapting parameter, Section \ref{ergodicity} establishes the ergodicity of the CSGLD algorithm,  and Section \ref{ext} provides a brief discussion for hyperparameter settings. 

\section{Background on stochastic approximation and Poisson equation}
\label{review}

\subsection{Stochastic approximation}
Stochastic approximation \citep{Albert90} provides a standard framework for the development of adaptive algorithms. Given a random field function $\widetilde H(\bm{\btheta}, \bm{\bx})$, the goal of the stochastic approximation algorithm is to find the solution to the  mean-field equation $h(\btheta)=0$, i.e., solving
\begin{equation*}
\begin{split}
\label{sa00}
h(\btheta)&=\int_{\MX} \widetilde H(\bm{\theta}, \bm{\bx}) \varpi_{\bm{\theta}}(d\bm{\bx})=0,
\end{split}
\end{equation*}
where $\bx\in \MX \subset \mathbb{R}^d$, $\btheta\in\bTheta \subset \mathbb{R}^{m}$, $\widetilde H(\btheta,\bx)$ is a random field 
function and $\varpi_{\btheta}(\bx)$ is a distribution function of $\bx$ depending on the parameter $\btheta$. The stochastic approximation  algorithm works by repeating the following iterations
\begin{itemize}
\item[(1)] Draw $\bm{x}_{k+1}\sim\Pi_{\bm{\theta_{k}}}(\bm{x}_{k}, \cdot)$, where $\Pi_{\bm{\theta_{k}}}(\bm{x}_{k}, \cdot)$ is a transition kernel that admits $ \varpi_{\bm{\theta}_{k}}(\bm{x})$ as
the invariant distribution,

\item[(2)] Update $\bm{\theta}_{k+1}=\bm{\theta}_{k}+\omega_{k+1} \widetilde H(\bm{\theta}_{k}, \bm{x}_{k+1})+\omega_{k+1}^2 \rho(\bm{\theta}_{k}, \bm{x}_{k+1}),$
where $\rho(\cdot,\cdot)$ denotes a bias term. 
\end{itemize}

The algorithm differs from the Robbins–Monro algorithm \citep{Robbins51} in that $\bx$ is simulated from a transition kernel $\Pi_{\bm{\theta_{k}}}(\cdot, \cdot)$ instead of the exact distribution $\varpi_{\bm{\theta}_{k}}(\cdot)$. As a result, a Markov state-dependent noise $\widetilde H(\btheta_k, \bx_{k+1})-h(\btheta_k)$ is generated, which requires some regularity conditions to control the fluctuation $\sum_k \Pi_{\btheta}^k (\widetilde H(\btheta, \bx)-h(\btheta))$. Moreover, it supports a more general form where a bounded bias term $\rho(\cdot,\cdot)$ is allowed without affecting the theoretical properties of the algorithm.

\subsection{Poisson equation}

Stochastic approximation generates a nonhomogeneous Markov chain $\{(\bx_k, \btheta_k)\}_{k=1}^{\infty}$, for which the convergence theory can be studied based on the Poisson equation 
\begin{equation*}
    \mu_{\btheta}(\bm{x})-\mathrm{\Pi}_{\bm{\theta}}\mu_{\bm{\theta}}(\bm{x})=\widetilde H(\bm{\theta}, \bm{x})-h(\bm{\theta}),
\end{equation*}
where $\Pi_{\bm{\theta}}(\bm{x}, A)$ is the transition kernel for any Borel subset $A\subset \MX$ and $\mu_{\btheta}(\cdot)$ is a function on $\MX$.
The existence of the solution to the Poisson equation can be identified when the following series converges:
\begin{equation*}
    \mu_{\btheta}(\bx):=\sum_k \Pi_{\btheta}^k (\widetilde H(\btheta, \bx)-h(\btheta)).
\end{equation*}
In other words, the consistency of the estimator $\btheta$ can be established by controlling the perturbations of $\sum_k \Pi_{\btheta}^k (\widetilde H(\btheta, \bx)-h(\btheta))$ via some regularity conditions on $\mu_{\btheta}(\cdot)$. To avoid studying the individual algorithms, \cite{Albert90} has simplified the work to the justification of the regularity conditions on $\mu_{\btheta}(\cdot)$:

There exist a function  $V: \MX \to [1,\infty)$, and a constant $C$ such that for all $\bm{\theta}, \bm{\theta}'\in \bm{\bTheta}$, we have
\begin{equation*}
\begin{split}
\|\mathrm{\Pi}_{\bm{\theta}}\mu_{\btheta}(\bx)\|&\leq C V(\bx),\quad
\|\mathrm{\Pi}_{\bm{\theta}}\mu_{\bm{\theta}}(\bx)-\mathrm{\Pi}_{\bm{\theta'}}\mu_{\bm{\theta'}}(\bx)\|\leq C\|\bm{\theta}-\bm{\theta}'\| V(\bx),  \quad 
\E[V(\bx)]\leq \infty.\\
\end{split}
\end{equation*}

In particular, only the first order smoothness is required for the convergence of the adaptive algorithms \citep{Albert90}. By contrast, we see that the ergodicity theory \citep{mattingly10, VollmerZW2016} relies on a much stronger 4th-order smoothness.


\section{Stability and convergence analysis for CSGLD} \label{convergence}

\subsection{CSGLD algorithm} \label{Alg:app}

To make our theory more general, we slightly extend CSGLD by allowing a higher order oscillation term $\rho(\btheta, \bx)$. The resulting algorithm works as follows:
\begin{itemize}
\item[(1)] Sample $\bm{x}_{k+1}=\bx_k- \epsilon_k\nabla_{\bx} \widetilde L(\bx_k, \btheta_k)+\mathcal{N}({0, 2\epsilon_k \tau\bm{I}}), \ \ \ \ \ \ \ \ \ \ \ \ \ \ \ \ \ \ \ \ \ \ \ \ \ \ \ \ \ \ \ \ \ \ \ \ \ \ \ \ \ \ \ \ \ \ \ \ \ \ \ \ \ \ \ \ \ \ \ \ (\text{S}_1)$

\item[(2)] Update $\bm{\theta}_{k+1}=\bm{\theta}_{k}+\omega_{k+1} \widetilde H(\bm{\theta}_{k}, \bm{x}_{k+1})
+\omega_{k+1}^2 \rho(\bm{\theta}_{k}, \bm{x}_{k+1}),
\ \ \ \ \ \ \ \ \ \ \ \ \ \ \ \ \ \ \ \ \ \ \ \ \ \ \ \ \ \ \ \ \ \ \ \ \ \  (\text{S}_2)$
\end{itemize}
where $\epsilon_k$ is the learning rate, $\omega_{k+1}$ is the step size, $\nabla_{\bx} \widetilde L(\bx, \btheta)$ is the gradient following
\begin{equation}
    \nabla_{\bx} \widetilde{L}(\bx,\btheta)= \frac{N}{n} \left[1+ 
   \frac{\zeta\tau}{\Delta u}  \left(\textcolor{black}{\log \theta(\tilde{J}(\bx))-\log\theta((\tilde{J}(\bx)-1)\vee 1)} \right) \right]  
    \nabla_{\bx} \widetilde U(\bx),
\end{equation}
$\omega_{k+1}^2 \rho(\btheta_k,\bx_{k+1})$ is a higher-order oscillation term and $\widetilde H(\btheta, \bx)$ a random field function such that
\begin{equation}
    \widetilde H(\btheta,\bx)=(\widetilde H_1(\btheta,\bx), \ldots, 
 \widetilde H_m(\btheta,\bx)) \text{, where } \widetilde H_i(\btheta,\bx)={\theta}^{\zeta}(\tilde J(\bx))\left(1_{i= \tilde J(\bx)}-{\theta}(i)\right).
\end{equation}

\subsection{Convergence of parameter estimation} 
\label{App:convergence}

We first lay out the following assumptions:

\begin{assump}[Compactness] \label{ass2a} 
The space $\Theta$ is compact such that $\inf_{\btheta}\min_{i\in\{1,2,\cdots, m\}}\theta(i) >0$. For a large enough constant $Q>0$ and any $\btheta\in \bTheta$ and $\bx \in \MX$, we have
\begin{equation}
\label{compactness}
     \|\btheta\|\leq Q, \quad  
     \|\widetilde H(\btheta, \bx)\|\leq Q, \quad 
     \|\rho(\btheta, \bx)\|\leq Q.
\end{equation}
\end{assump}

To simplify the proof, we consider a slightly stronger assumption such that $\inf \theta(i)>0$ holds for any $i \in \{1,2,\ldots,m\}$. To relax this assumption, we suggest interested readers to \cite{Fort15} where the recurrence property for a 
similar algorithm can be proved for the sequence $\{\btheta_k\}_{k\geq 1}$. Such a property guarantees the parameter to visit often enough to the desired compact space and renders the convergence. 
 

\textcolor{black}{\begin{assump}[Smoothness]
\label{ass2}
$U(\bm{\xeta})$ is $M$-smooth, namely, for any $\bx, \bx'\in \MX$,
\begin{equation}
\label{ass_2_1_eq}
\begin{split}
\|\nabla_{\bx} U(\bx)-\nabla_{\bx} U(\bm{\bx}')\| & \leq M\|\bx-\bx'\|. \\
\end{split}
\end{equation}
\end{assump}}

The above is a standard assumption, which has been used \cite{Maxim17} and \cite{Xu18}.

\begin{assump}[Dissipativity]
\label{ass3}
 There exist constants $\tilde{m}>0$ and $\tilde{b}\geq 0$ such that for any $\bx \in \MX$ and $\btheta \in \bTheta$, 
\label{ass_dissipative}
\begin{equation}
\label{eq:01}
\langle \nabla_{\bx} L(\bx, \btheta), \bx\rangle\leq \tilde{b}-\tilde{m}\|\bx\|^2.
\end{equation}
\end{assump}
This assumption ensures samples to move towards the origin regardless the initial point and is standard in proving the geometric ergodicity of dynamical systems \citep{mattingly02, Maxim17, Xu18}.

\begin{assump}[Gradient condition] 
\label{ass4}
We assume bounded second moment on the gradient estimate $U(\bx_{k})$ with respect to the empirical measure, i.e. $\E[\|\nabla_{\bx}U(\bx)\|^2]<\infty$. The stochastic gradient is unbiased in the sense that
\begin{equation*}
\E[\nabla_{\bx}\widetilde U(\bx_{k})-\nabla_{\bx} U(\bx_{k})]=0.
\end{equation*}
For some large enough constants $M, B>0$, we can upper bound the second moment of the noise by
\begin{equation*} 
\E [\|\nabla_{\bx}\widetilde U(\bx_{k})-\nabla_{\bx} U(\bx_{k})\|^2]\leq M^2 \|\bx\|^2+B^2.
\end{equation*}

\end{assump}

Next, we show a stability condition such that $\btheta$ has the potential to converge to a unique fixed point $\btheta_{\star}$ under right conditions.

\begin{lemma}[Stability] \label{convex_appendix}
Given large enough batch size $n$ and partition number $m$ and a small enough learning rate $\epsilon$, the mean-field $h(\btheta)$ satisfies $\forall \btheta \in \bTheta$, $\langle h(\btheta), \btheta - \btheta_{\star}\rangle \leq  -\phi\|\btheta - \btheta_{\star}\|^2+\mathcal{O}\left(\delta_{n}(\btheta)+\epsilon+\frac{1}{m}\right)$, where $\phi=\inf_{\btheta} Z_{\btheta}^{-1}>0$, $\theta_{\star}=(\int_{\MX_1}\pi(\bx)d\bx,\int_{\MX_2}\pi(\bx)d\bx,\ldots,\int_{\MX_m}\pi(\bx)d\bx)$ and $\delta_n(\cdot)$ is a bias term depending on the batch size $n$ such that $\delta_n(\cdot)\rightarrow 0$ as $n\rightarrow N$. 
\end{lemma}

\begin{proof}
 Let $\varpi_{\Psi_{\btheta}}(\bx)\propto\frac{\pi(\bx)}{\Psi^{\zeta}_{\btheta}(U(\bx))}$ denote the theoretical invariant measure of SGLD for a fixed estimate $\Psi_{\btheta}$ following
 a piecewise continuous function: 
\begin{equation}\label{new_design}
\Psi_{\btheta}(u)= \sum_{i=1}^m \left(\theta(i-1)e^{(\log\theta(i)-\log\theta(i-1)) \frac{u-u_{i-1}}{\Delta u}}\right) 1_{u_{i-1} < u \leq u_i},
\end{equation}
where the full data is used 
 in determining the indexes of subregions and the learning rate converges to zero. 
 According to the convergence theory 
 of SGLD, see e.g., \cite{Sato2014ApproximationAO}, \cite{DalalyanK2017}, \cite{SongLeSGLD2020} and \cite{bhatia2019bayesian},  
the empirical measure ${\varpi}_{\btheta}(\bx)$ converges to $\varpi_{\Psi_{\btheta}}(\bx)$ as the subsample size $n$ approaches the full data size $N$ and the learning rate $\epsilon$ converges to zero. 
 In addition, we 
 define a piece-wise constant function 
 \[
 \widetilde{\Psi}_{\btheta}=\sum_{i=1}^m \theta(i) 1_{u_{i-1} < u \leq u_{i}},
 \]
 and a theoretical measure 
 $\varpi_{\widetilde{\Psi}_{\btheta}}(\bx) \propto \frac{\pi(\bx)}{\theta^{\zeta}(J(\bx))}$. 
 Obviously, as the sample space partition becomes 
 fine and fine, i.e., $u_1 \to u_{\min}$, $u_{m-1}\to u_{\max}$ and $m \to \infty$, we have  
 $\|\widetilde{\Psi}_{\btheta}-\Psi_{\btheta}\|\to 0$ and $\| 
 \varpi_{\widetilde{\Psi}_{\btheta}}(\bx)- 
 \varpi_{\Psi_{\btheta}}(\bx) \|\to 0$, where 
 $u_{\min}$ and $u_{\max}$ denote the minimum and maximum of $U(\bx)$, respectively.


For each $i \in \{1,2,\ldots,m\}$, the random field $\widetilde H_i(\btheta,\bx)={\theta}^{\zeta}(\tilde J(\bx))\left(1_{i\geq \tilde J(\bx)}-{\theta}(i)\right)$ is a biased estimator of $ H_i(\btheta,\bx)={\theta}^{\zeta}( J(\bx))\left(1_{i\geq J(\bx)}-{\theta}(i)\right)$. Let $\delta_n(\btheta)=\E[\widetilde{H}(\btheta,\bx)-H(\btheta,\bx)]$ denote the bias, which is caused by the mini-batch evaluation of the energy and 
decays to $0$ as $n\rightarrow N$.

Now, compute the mean-field $h(\btheta)$ with respect to the empirical measure $\varpi_{\btheta}(\bx)$:
\begin{equation}
\small
\label{iiii}
\begin{split} 
        h_i(\btheta)&=\int_{\MX} \widetilde H_i(\btheta,\bx) 
         \varpi_{\btheta}(\bx) d\bx
         =\int_{\MX} H_i(\btheta,\bx) 
         \varpi_{\btheta}(\bx) d\bx+\delta_n(\btheta)\\
         &=\ \int_{\MX} H_i(\btheta,\bx) \left( \underbrace{\varpi_{\widetilde{\Psi}_\btheta}(\bx)}_{\text{I}_1} \underbrace{-\varpi_{\widetilde{\Psi}_\btheta}(\bx)+\varpi_{\Psi_{\btheta}}(\bx)}_{\text{I}_2}\underbrace{-\varpi_{\Psi_{\btheta}}(\bx)+\varpi_{\btheta}(\bx)}_{\text{I}_3}\right) d\bx+\delta_n(\btheta).\\
\end{split}
\end{equation}

For the term $\text{I}_1$, we have
\begin{equation}
\begin{split}
\label{i_1}
    \int_{\MX} H_i(\btheta,\bx) 
     \varpi_{\widetilde{\Psi}_\btheta}(\bx) d\bx&=\frac{1}{Z_{\btheta}} \int_{\MX} {\theta}^{\zeta}(J(\bx))\left(1_{i= J(\bx)}-{\theta}(i)\right) \frac{\pi(\bx)}{\theta^{\zeta}(J(\bx))} d\bx\\
    &=Z_{\btheta}^{-1}\left[\sum_{k=1}^m \int_{\MX_k} 
     \pi(\bx) 1_{k=i} d\bx -\theta(i)\sum_{k=1}^m\int_{\MX_k} \pi(\bx)d\bx \right] \\
    &=Z_{\btheta}^{-1} \left[\theta_{\star}(i)-\theta(i)\right],
\end{split}
\end{equation}
where $Z_{\btheta}=\sum_{i=1}^m \frac{\int_{\bchi_i \pi(\bx)d\bx}}{\theta(i)^{\zeta}}$ denotes the normalizing constant 
of $\varpi_{\widetilde{\Psi}_\btheta}(\bx)$.

Next, we consider the integration of the bounded function $H_i(\btheta,\bx) $ with respect to $\text{I}_2$ and  $\text{I}_3$. By Lemma \ref{partition_order} and the boundedness of $H(\btheta,\bx)$, we have 
\begin{equation} \label{biasI2}
\int_{\MX} H_i(\btheta,\bx) (-\varpi_{\widetilde{\Psi}_{\btheta}}(\bx)+\varpi_{\Psi_{\btheta}}(\bx)) d\bx= \mathcal{O}\left(\frac{1}{m}\right).
\end{equation}
For the term $I_3$, we have 
\begin{equation}\label{iiii_2}
    \int_{\MX} H_i(\btheta,\bx) \left(-\varpi_{\Psi_{\btheta}}(\bx)+\varpi_{\btheta}(\bx)\right) d\bx=\mathcal{O}(\delta_{n}\left(\btheta)\right)+\mathcal{O}(\epsilon),
\end{equation}
where $\delta_{n}(\cdot)$ uniformly decays to 0 as $n\rightarrow N$ and the \textcolor{black}{order of $\mathcal{O}(\epsilon)$ follows from Theorem 6 of} \cite{Sato2014ApproximationAO}.

Plugging (\ref{i_1}), (\ref{biasI2}) and  \textcolor{black}{(\ref{iiii_2})} into (\ref{iiii}), we have
\begin{equation}\label{h_i_theta}
     h_i(\btheta)=Z_{\btheta}^{-1} \left[\varepsilon\beta_i(\btheta)+\theta_{\star}(i)-\theta(i)\right],
\end{equation}
where $\varepsilon=\mathcal{O}\left(\delta_{n}(\btheta)+\epsilon+\frac{1}{m}\right)$ and $\beta_i(\btheta)$ is a bounded term such that $Z_{\btheta}^{-1}\varepsilon\beta_i(\btheta)=\mathcal{O}\left(\delta_{n}(\btheta)+\epsilon+\frac{1}{m}\right)$.

To solve the ODE system with small disturbances, we consider standard techniques in perturbation theory.
According to the fundamental theorem of
perturbation theory \cite{Eric}, we can obtain the solution to 
the mean field equation $h(\btheta)=0$: 
\begin{equation}
    \theta(i)=\theta_{\star}(i)+\varepsilon\beta_i(\btheta_{\star}) +\mathcal{O}(\varepsilon^2), \quad i=1,2,\ldots,m,
\end{equation}
which is a stable point in a small neighbourhood of $\btheta_{\star}$.

Considering the positive definite function $\mathbb{V}(\btheta)=\frac{1}{2}\| \btheta_{\star}-\btheta\|^2$ for the mean-field system $h(\btheta)=Z_{\btheta}^{-1} (\varepsilon\beta_i(\btheta)+\btheta_{\star}-\btheta)=Z_{\btheta}^{-1} (\btheta_{\star}-\btheta)+\mathcal{O}(\varepsilon)$, we have
\begin{equation*}
    \langle h(\btheta), \mathbb{V}(\btheta)\rangle=\langle h(\btheta), \btheta - \btheta_{\star}\rangle = -Z_{\btheta}^{-1}\|\btheta - \btheta_{\star}\|^2+\mathcal{O}(\varepsilon)\leq -\phi\|\btheta - \btheta_{\star}\|^2+\mathcal{O}\left(\delta_{n}(\btheta)+\epsilon+\frac{1}{m}\right),
\end{equation*}
where $\phi=\inf_{\btheta} Z_{\btheta}^{-1}>0$ by
the compactness assumption \ref{ass2a}. This concludes the proof.
\end{proof}


\begin{assump}[Learning rate and step size]
\label{ass1}
The step size $\{\omega_{k}\}_{k\in \mathrm{N}}$ is a positive decreasing sequence of real numbers such that
\begin{equation} \label{a1}
\omega_{k}\rightarrow 0, \ \ \sum_{k=1}^{\infty} \omega_{k}=+\infty,\ \  \lim_{k\rightarrow \infty} \inf 2\phi  \dfrac{\omega_{k}}{\omega_{k+1}}+\dfrac{\omega_{k+1}-\omega_{k}}{\omega^2_{k+1}}>0.
\end{equation}
According to \cite{Albert90}, we can choose $\omega_{k}:=\frac{A}{k^{\alpha}+B}$ for some $\alpha \in (\frac{1}{2}, 1]$ and some suitable constants 
 $A>0$ and $B>0$. 
 \end{assump}


\textcolor{black}{The following lemma is a restatement of Lemma 3 \cite{Maxim17} which holds for any $\btheta$ in the compact space $\bTheta$.
\begin{lemma}[Uniform $L^2$ bounds]
\label{lemma:1}
Suppose Assumptions \ref{ass2a}-\textcolor{black}{\ref{ass4}} holds.  Given a small enough learning rate, then 
$\sup_{k\geq 1} \E[\|\bm{\xeta}_{k}\|^2] < \infty$.
\end{lemma}}

\begin{lemma}[Solution of Poisson equation]
\label{lyapunov}
Suppose that Assumptions \textcolor{black}{A1}-A4 hold. 
There is a solution $\mu_{\btheta}(\cdot)$ on $\MX$ to the Poisson equation 
\begin{equation}
    \label{poisson_eqn}
    \mu_{\btheta}(\bm{x})-\mathrm{\Pi}_{\bm{\theta}}\mu_{\bm{\theta}}(\bm{x})=\widetilde H(\bm{\theta}, \bm{x})-h(\bm{\theta}).
\end{equation}
such that for all $\bm{\theta}, \bm{\theta}'\in \bm{\bTheta}$ and a function  $V(\bx)=1+\|\bx\|^2$, there exists a constant $C$ such that
\begin{equation}
\begin{split}
\label{poisson_reg}
\E[\|\mathrm{\Pi}_{\bm{\theta}}\mu_{\btheta}(\bx)\|]&\leq C,\\
\E[\|\mathrm{\Pi}_{\bm{\theta}}\mu_{\bm{\theta}}(\bx)-\mathrm{\Pi}_{\bm{\theta}'}\mu_{\bm{\theta'}}(\bx)\|]&\leq C\|\bm{\theta}-\bm{\theta}'\|.\\
\end{split}
\end{equation}
\end{lemma}

\begin{proof}
The conditions of Theorem 13 of  \cite{VollmerZW2016} can be 
easily verified for CSGLD given the assumptions A1-A4 and 
Lemma \ref{lemma:1}. The details are omitted. 
\end{proof}

Now we are ready to prove the first main result on the 
convergence of $\btheta_k$.
The technique lemmas are listed 
in Section \ref{Lemmasection}. 


\begin{theorem}[$L^2$ convergence rate]
\label{latent_convergence}
Suppose Assumptions $\ref{ass2a}$-$\ref{ass1}$ hold. For a sufficiently
large value of $k_0$ and $m$, a sufficiently small learning rate sequence  $\{\epsilon_k\}_{k=1}^{\infty}$,  and a sufficiently small
 step  size sequence $\{\omega_k\}_{k=1}^{\infty}$, 
$\{\btheta_k\}_{k=1}^{\infty}$ converges to
 $\btheta_{\star}$ in $L_2$-norm  such that
\begin{equation*}
    \E\left[\|\bm{\theta}_{k}-\btheta_{\star}\|^2\right]=\mathcal{O}\left( \omega_{k}+
    \sup_{i\geq k_0}\epsilon_i+
\frac{1}{m} +\sup_{i\geq k_0}\delta_{n}(\btheta_i)\right),
\end{equation*}
where $k_0$ is a sufficiently large constant, and $\delta_{n}(\btheta)$ is a bias term 
decaying to 0 as $n\rightarrow N$.
\end{theorem}

\paragraph{Proof of Theorem \ref{latent_convergence}} 
\begin{proof}
Consider the iterates 
\begin{equation*}
    \bm{\theta}_{k+1}=\bm{\theta}_{k}+\omega_{k+1} \left(\widetilde H(\bm{\theta}_{k}, \bm{x}_{k+1})+\omega_{k+1} \rho(\btheta_k, \bx_{k+1})\right).
\end{equation*}
Define $\bm{T}_{k}=\bm{\theta}_{k}-\btheta_{\star}$. By subtracting $\btheta_{\star}$ from both sides and taking the square and $L_2$ norm,  we have
\begin{equation*}
\small
\begin{split}
    \|\bT_{k+1}^2\|&=\|\bT_k^2\| +\omega_{k+1}^2 \|\widetilde H(\btheta_k, \bx_{k+1}) + \omega_{k+1}\rho(\btheta_k, \bx_{k+1})\|^2+2\omega_{k+1}\underbrace{\langle \bT_k,  \widetilde H(\bx_{k+1})+\omega_{k+1}\rho(\btheta_k, \bx_{k+1})\rangle}_{\text{D}}.
\end{split}
\end{equation*}

First, by Lemma \ref{convex_property}, there exists a constant $G=4Q^2(1+Q^2)$ such that
\begin{equation}
\label{first_term}
    \| \widetilde H(\btheta_k, \bx_{k+1}) + \omega_{k+1}\rho(\btheta_k, \bx_{k+1})\|^2 \leq G (1+\|\bT_k\|^2).
\end{equation}

Next, by the Poisson equation (\ref{poisson_eqn}), we have
\begin{equation*}
\begin{split}
   \text{D}&=\langle \bT_k,  \widetilde H(\btheta_k, \bx_{k+1})+\omega_{k+1}\rho(\btheta_k, \bx_{k+1}) \rangle\\
   &=\langle \bT_k,  h(\btheta_k)+\mu_{\btheta_k}(\bm{x}_{k+1})-\mathrm{\Pi}_{\bm{\theta}_k}\mu_{\bm{\theta}_k}(\bm{x}_{k+1})+\omega_{k+1}\rho(\btheta_k, \bx_{k+1}) \rangle\\
   &=\underbrace{\langle \bT_k,  h(\btheta_k)\rangle}_{\text{D}_{1}} +\underbrace{\langle\bT_k, \mu_{\btheta_k}(\bm{x}_{k+1})-\mathrm{\Pi}_{\bm{\theta}_k}\mu_{\bm{\theta}_k}(\bm{x}_{k+1})\rangle}_{\text{D}_{2}}+\underbrace{\langle \bT_k, \omega_{k+1}\rho(\btheta_k, \bx_{k+1})\rangle}_{{\text{D}_{3}}}.
\end{split}
\end{equation*}

For the term $\text{D}_1$, by Lemma \ref{convex_appendix}, we have
\begin{align*}
\E\left[\langle \bm{T}_{k}, h(\bm{\theta}_{k})\rangle\right] &\leq - \phi\E[\|\bm{T}_{k}\|^2]+\mathcal{O}(\delta_{n}(\btheta_k)+\epsilon_k+\frac{1}{m}).
\end{align*}
For convenience, in the following context, we denote $\mathcal{O}(\delta_{n}(\btheta_k)+\epsilon_k+\frac{1}{m})$ by $\Delta_k$. 


To deal with the error $\text{D}_2$, we make the following decomposition 
\begin{equation*}
\begin{split}
\text{D}_2 &=\underbrace{\langle \bT_k, \mu_{\bm{\theta}_{k}}(\bm{\xeta}_{k+1})-\mathrm{\Pi}_{\bm{\theta}_{k}}\mu_{\bm{\theta}_{k}}(\bm{\bx}_{k})\rangle}_{\text{D}_{21}} \\
&+ \underbrace{\langle \bT_k,\mathrm{\Pi}_{\bm{\theta}_{k}}\mu_{\bm{\theta}_{k}}(\bm{x}_{k})- \mathrm{\Pi}_{\bm{\theta}_{k-1}}\mu_{\bm{\theta}_{k-1}}(\bm{x}_{k})\rangle}_{\text{D}_{22}}
+ \underbrace{\langle \bT_k,\mathrm{\Pi}_{\bm{\theta}_{k-1}}\mu_{\bm{\theta}_{k-1}}(\bm{x}_{k})- \mathrm{\Pi}_{\bm{\theta}_{k}}\mu_{\bm{\theta}_{k}}(\bm{\xeta}_{k+1})\rangle}_{\text{D}_{23}}.\\
\end{split}
\end{equation*}

(\text{i})  From the Markov property, $\mu_{\bm{\theta}_{k}}(\bm{\xeta}_{k+1})-\mathrm{\Pi}_{\bm{\theta}_{k}}\mu_{\bm{\theta}_{k}}(\bm{x}_{k})$ forms a martingale difference sequence 
$$\E\left[\langle \bT_k, \mu_{\bm{\theta}_{k}}(\bm{\xeta}_{k+1})-\mathrm{\Pi}_{\bm{\theta}_{k}}\mu_{\bm{\theta}_{k}}(\bm{x}_{k})\rangle |\mathcal{F}_{k}\right]=0, \eqno{(\text{D}_{21})}$$
where  $\mathcal{F}_k$ is a $\sigma$-filter formed by
$\{\bx_1, \btheta_1, \bx_2, \cdots, \bx_k,\btheta_k\}$.

(\text{ii})  By the regularity of the solution of Poisson equation in (\ref{poisson_reg}) and Lemma.\ref{theta_lip}, it leads to
\begin{equation}
\label{theta_delta}
\E[\|\mathrm{\Pi}_{\bm{\theta}_{k}}\mu_{\bm{\theta}_{k}}(\bm{x}_{k})- \mathrm{\Pi}_{\bm{\theta}_{k-1}}
 \mu_{\bm{\theta}_{k-1}}(\bm{x}_{k})\|]\leq C \|\btheta_k-\btheta_{k-1}\|\leq 2Q C\omega_k.
\end{equation}
Using Cauchy–Schwarz inequality, (\ref{theta_delta}) and the compactness of $\btheta$ in Assumption \ref{ass2a}, we have
$$\small{\E[\langle\bm{T}_{k},\mathrm{\Pi}_{\bm{\theta}_{k}}\mu_{\bm{\theta}_{k}}(\bm{x}_{k})- \mathrm{\Pi}_{\bm{\theta}_{k-1}}\mu_{\bm{\theta}_{k-1}}(\bm{x}_{k})\rangle]\leq \E[\|\bT_k\|]\cdot 2Q C\omega_k\leq 4Q^2 C\omega_{k}\leq 5Q^2 C\omega_{k+1}}   \eqno{(\text{D}_{22})},$$
where the last inequality follows from assumption \ref{ass1} and holds for a large enough $k$.

(\text{iii})  For the last term of $\text{D}_{2}$, 
\begin{equation*}
\begin{split}
\small
&\langle \bm{T}_{k},\mathrm{\Pi}_{\bm{\theta}_{k-1}}\mu_{\bm{\theta}_{k-1}}(\bm{x}_{k})- \mathrm{\Pi}_{\bm{\theta}_{k}}\mu_{\bm{\theta}_{k}}(\bm{\xeta}_{k+1})\rangle\\
=&\left(\langle \bm{T}_{k}, \mathrm{\Pi}_{\bm{\theta}_{k-1}}\mu_{\bm{\theta}_{k-1}}(\bm{x}_{k}) \rangle- \langle \bm{T}_{k+1}, \mathrm{\Pi}_{\bm{\theta}_{k}}\mu_{\bm{\theta}_{k}}(\bm{\xeta}_{k+1})\rangle\right)\\
&\ \ \ +\left(\langle \bm{T}_{k+1}, \mathrm{\Pi}_{\bm{\theta}_{k}}\mu_{\bm{\theta}_{k}}(\bm{\xeta}_{k+1})\rangle-\langle \bm{T}_{k}, \mathrm{\Pi}_{\bm{\theta}_{k}}\mu_{\bm{\theta}_{k}}(\bm{\xeta}_{k+1})\rangle\right)\\
=&{({z}_{k}-{z}_{k+1})}+{\langle \bm{T}_{k+1}-\bm{T}_{k}, \mathrm{\Pi}_{\bm{\theta}_{k}}\mu_{\bm{\theta}_{k}}(\bm{\xeta}_{k+1})\rangle},\\
\end{split}
\end{equation*}
where ${z}_{k}=\langle \bm{T}_{k}, \mathrm{\Pi}_{\bm{\theta}_{k-1}}\mu_{\bm{\theta}_{k-1}}(\bm{x}_{k})\rangle$. By the regularity assumption (\ref{poisson_reg}) and Lemma \ref{theta_lip}, 
$$\E\langle \bm{T}_{k+1}-\bm{T}_{k}, \mathrm{\Pi}_{\bm{\theta}_{k}}\mu_{\bm{\theta}_{k}}(\bm{\xeta}_{k+1})\rangle\leq   \E[\|\bm{\theta}_{k+1}-\bm{\theta}_{k}\|] \cdot \E[\|\mathrm{\Pi}_{\bm{\theta}_{k}}\mu_{\bm{\theta}_{k}}(\bm{\xeta}_{k+1})\|] \leq 2Q C \omega_{k+1}.\eqno{(\text{D}_{23})}$$

Regarding $\text{D}_3$, since $\rho(\btheta_k, \bx_{k+1})$ is bounded, applying Cauchy–Schwarz inequality gives
$${\E[\langle \bT_k, \omega_{k+1}\rho(\btheta_k, \bx_{k+1}))]\leq 2Q^2\omega_{k+1}} \eqno{(\text{D}_{3})}$$

Finally, adding (\ref{first_term}), $\text{D}_1$, $\text{D}_{21}$, $\text{D}_{22}$, $\text{D}_{23}$ and $\text{D}_3$ together, it follows that for a constant $C_0 = G+10Q^2C+4QC+4Q^2$,
\begin{equation}
\begin{split}
\label{key_eqn}
\E\left[\|\bm{T}_{k+1}\|^2\right]&\leq (1-2\omega_{k+1}\phi+G\omega^2_{k+1} )\E\left[\|\bm{T}_{k}\|^2\right]+C_0\omega^2_{k+1} +2\Delta_k\omega_{k+1} +2\E[z_{k}-z_{k+1}]\omega_{k+1}.
\end{split}
\end{equation}
Moreover, from (\ref{compactness}) and (\ref{poisson_reg}), $\E[|z_{k}|]$ is upper bounded by
\begin{equation}
\begin{split}
\label{condition:z}
\E[|z_{k}|]=\E[\langle \bm{T}_{k}, \mathrm{\Pi}_{\bm{\theta}_{k-1}}\mu_{\bm{\theta}_{k-1}}(\bm{x}_{k})\rangle]\leq \E[\|\bT_k\|]\E[\|\mathrm{\Pi}_{\bm{\theta}_{k-1}}\mu_{\bm{\theta}_{k-1}}(\bm{x}_{k})\|]\leq 2QC.
\end{split}
\end{equation}

According to Lemma $\ref{lemma:4}$, we can choose $\lambda_0$ and $k_0$ such that 
\begin{align*}
\E[\|\bm{T}_{k_0}\|^2]\leq \psi_{k_0}=\lambda_0 \omega_{k_0}+\frac{1}{\phi}\sup_{i\geq k_0}\Delta_{i},
\end{align*}
which satisfies the conditions ($\ref{lemma:3-a}$) and ($\ref{lemma:3-b}$) of Lemma $\ref{lemma:3-all}$. Applying Lemma $\ref{lemma:3-all}$ leads to
\begin{equation}
\begin{split}
\label{eqn:9}
\E\left[\|\bm{T}_{k}\|^2\right]\leq \psi_{k}+\E\left[\sum_{j=k_0+1}^{k}\Lambda_j^k \left(z_{j-1}-z_{j}\right)\right],
\end{split}
\end{equation}
where $\psi_{k}=\lambda_0 \omega_{k}+\frac{1}{\phi}\sup_{i\geq k_0}\Delta_{i}$ for all $k>k_0$. Based on ($\ref{condition:z}$) and the increasing condition of $\Lambda_{j}^k$ in Lemma $\ref{lemma:2}$, we have
\begin{equation}
\small
\begin{split}
\label{eqn:10}
&\E\left[\left|\sum_{j=k_0+1}^{k} \Lambda_j^k\left(z_{j-1}-z_{j}\right)\right|\right]
=\E\left[\left|\sum_{j=k_0+1}^{k-1}(\Lambda_{j+1}^k-\Lambda_j^k)z_j-2\omega_{k}z_{k}+\Lambda_{k_0+1}^k z_{k_0}\right|\right]\\
\leq& \sum_{j=k_0+1}^{k-1}2(\Lambda_{j+1}^k-\Lambda_j^k)QC+\E[|2\omega_{k} z_{k}|]+2\Lambda_k^k QC\\
\leq& 2(\Lambda_k^k-\Lambda_{k_0}^k)QC+2\Lambda_k^k QC+2\Lambda_k^k QC\\
\leq& 6\Lambda_k^k QC.
\end{split}
\end{equation}

Therefore, given $\psi_{k}=\lambda_0 \omega_{k}+\frac{1}{\phi}\sup_{i\geq k_0}\Delta_{i}$ that satisfies the conditions ($\ref{lemma:3-a}$), ($\ref{lemma:3-b}$) of Lemma $\ref{lemma:3-all}$, for any $k>k_0$, from ($\ref{eqn:9}$) and ($\ref{eqn:10}$), we have
\begin{equation*}
\E[\|\bm{T}_{k}\|^2]\leq \psi_{k}+6\Lambda_k^k QC=\left(\lambda_0+12QC\right)\omega_{k}+\frac{1}{\phi}\sup_{i\geq k_0}\Delta_{i}=\lambda \omega_{k}+\frac{1}{\phi}\sup_{i\geq k_0}\Delta_{i},
\end{equation*}
where $\lambda=\lambda_0+12QC$, $\lambda_0=\frac{2G\sup_{i\geq k_0} \Delta_i + 2C_0\phi}{C_1\phi}$, $\small{C_1=\lim \inf 2\phi \dfrac{\omega_{k}}{\omega_{k+1}}+\dfrac{\omega_{k+1}-\omega_{k}}{\omega^2_{k+1}}>0}$, $C_0=G+5Q^2C+2QC+2Q^2$ and $G=4 Q^2(1+Q^2)$.
\end{proof}

\subsection{Technical Lemmas} \label{Lemmasection}

\begin{lemma}\label{partition_order}
Suppose Assumption \ref{ass2a} holds, and $u_1$ and 
$u_{m-1}$ are fixed such that $\Psi(u_1)>\nu$ and $\Psi(u_{m-1})>1-\nu$ for some small constant $\nu>0$. For any bounded function $f(\bx)$, we have 
\label{m_order}
\begin{equation}\label{i_2}
    \int_{\MX} f(\bx)\left(\varpi_{\Psi_{\btheta}}(\bx)-\varpi_{\widetilde\Psi_{\btheta}}(\bx)\right) d\bx=\mathcal{O}\left(\frac{1}{m}\right).
\end{equation}
\end{lemma}

\begin{proof}
Recall that $\varpi_{\widetilde\Psi_{\btheta}}(\bx)= \frac{1}{Z_{\btheta}} 
\frac{\pi(\bx)}{\theta^{\zeta}(J(\bx))}$ and $\varpi_{\Psi_{\btheta}}(\bx)=\frac{1}{Z_{\Psi_{\btheta}}}\frac{\pi(\bx)}{\Psi^{\zeta}_{\btheta}(U(\bx))}$. Since $f(\bx)$ is bounded, 
it suffices to show 
\begin{equation}
\begin{split}
    &\int_{\MX} \frac{1}{Z_{\btheta}} 
\frac{\pi(\bx)}{\theta^{\zeta}(J(\bx))}-\frac{1}{Z_{\Psi_{\btheta}}}\frac{\pi(\bx)}{\Psi^{\zeta}_{\btheta}(U(\bx))} d\bx\\
\leq &\int_{\MX} \left|\frac{1}{Z_{\btheta}} 
\frac{\pi(\bx)}{\theta^{\zeta}(J(\bx))}-\frac{1}{Z_{\btheta}}\frac{\pi(\bx)}{\Psi^{\zeta}_{\btheta}(U(\bx))}\right|d\bx+\int_{\MX}\left|\frac{1}{Z_{\btheta}}\frac{\pi(\bx)}{\Psi^{\zeta}_{\btheta}(U(\bx))}-\frac{1}{Z_{\Psi_{\btheta}}}\frac{\pi(\bx)}{\Psi^{\zeta}_{\btheta}(U(\bx))}\right| d\bx\\
=&\underbrace{\frac{1}{Z_{\btheta}}\sum_{i=1}^m \int_{\MX_i} \left| 
\frac{\pi(\bx)}{\theta^{\zeta}(i)}-\frac{\pi(\bx)}{\Psi^{\zeta}_{\btheta}(U(\bx))}\right|d\bx}_{\text{I}_1}+\underbrace{\sum_{i=1}^m\left|\frac{1}{Z_{\btheta}}-\frac{1}{Z_{\Psi_{\btheta}}}\right|\int_{\MX_i}\frac{\pi(\bx)}{\Psi^{\zeta}_{\btheta}(U(\bx))} d\bx}_{\text{I}_2}=\mathcal{O}\left(\frac{1}{m}\right),\\
\end{split}
\end{equation}
where $Z_{\btheta}=\sum_{i=1}^m \int_{\mX_i} \frac{\pi(\bx)}{\theta(i)^{\zeta}}d\bx$, $Z_{\Psi_{\btheta}}=\sum_{i=1}^{m}\int_{\mX_i} \frac{\pi(\bx)}{\Psi^{\zeta}_{\btheta}(U(\bx))}d\bx$, and $\Psi_{\btheta}(u)$ is a piecewise continuous function defined in (\ref{new_design}).

By Assumption \ref{ass2a}, $\inf_{\btheta}\btheta(i)>0$ for any $i$. 
Further, by the mean-value theorem, which implies $|x^{\zeta}-y^{\zeta}|\lesssim |x-y| z^{\zeta}$ for any $\zeta>0, x\leq y$ and $z\in[x, y]\subset [u_1, \infty)$, we have 
\begin{equation*}
\small
\begin{split}
    \text{I}_1&=\frac{1}{Z_{\btheta}}\sum_{i=1}^m \int_{\MX_i} \left| 
\frac{\theta^{\zeta}(i)-\Psi^{\zeta}_{\btheta}(U(\bx))}{\theta^{\zeta}(i)\Psi^{\zeta}_{\btheta}(U(\bx))}\right|\pi(\bx)d\bx\lesssim \frac{1}{Z_{\btheta}}\sum_{i=1}^m \int_{\MX_i} 
\frac{|\Psi_{\btheta}(u_{i-1})-\Psi_{\btheta}(u_i)|}{\theta^{\zeta}(i)}\pi(\bx)d\bx\\
&\leq \max_i |\Psi_{\btheta}(u_{i}-\Delta u)-\Psi_{\btheta}(u_i)|  \frac{1}{Z_{\btheta}}\sum_{i=1}^m \int_{\MX_i} 
\frac{\pi(\bx)}{\theta^{\zeta}(i)}d\bx=\max_i |\Psi_{\btheta}(u_{i}-\Delta u)-\Psi_{\btheta}(u_i)|\lesssim \Delta u=\mathcal{O}\left(\frac{1}{m}\right),
\end{split}
\end{equation*}
where the last inequality follows by Taylor expansion, 
and the last equality follows as $u_1$ and $u_{m-1}$ 
are fixed. Similarly, we have 
\begin{equation*}
    \begin{split}
        \text{I}_2= \left|\frac{1}{Z_{\btheta}}-\frac{1}{Z_{\Psi_{\btheta}}}\right|Z_{\Psi_{\btheta}}=\frac{ |Z_{\Psi_{\btheta}}-Z_{\btheta}|}{Z_{\btheta}}\leq \frac{1}{Z_{\btheta}}\sum_{i=1}^m \int_{\MX_i} \left|\frac{\pi(\bx)}{\theta^{\zeta}(i)}-\frac{\pi(\bx)}{\Psi^{\zeta}_{\btheta}(U(\bx))}\right|d\bx=\text{I}_1=\mathcal{O}\left(\frac{1}{m}\right).
    \end{split}
\end{equation*}
The proof can then be concluded by combining the orders of $\text{I}_1$ and $\text{I}_2$. 
\end{proof}

\begin{lemma}
\label{convex_property}
Given $\sup\{\omega_k\}_{k=1}^{\infty}\leq 1$, there exists a constant $G=4 Q^2(1+Q^2)$ such that
\begin{equation} \label{bound2}
\| \widetilde H(\bm{\theta}_k, \bm{\xeta}_{k+1})+\omega_{k+1}\rho(\btheta_k, \bx_{k+1})\|^2 \leq G (1+\|\bm{\theta}_k-\btheta_{\star}\|^2). 
\end{equation}
\end{lemma}
\begin{proof}

According to the compactness condition in Assumption \ref{ass2a}, we have
\begin{equation}
\label{mid_1}
\|\widetilde H(\bm{\theta}_k, \bm{\xeta}_{k+1})\|^2\leq Q^2 (1+\|\bm{\theta}_k\|^2) = 
 Q^2 (1+\|\bm{\theta}_k-\btheta_{\star}+\btheta_{\star}\|^2)\leq Q^2 (1+2\|\bm{\theta}_k-\btheta_{\star}\|^2+2Q^2).
\end{equation}

Therefore, using (\ref{mid_1}), we can show that for a constant $G=4Q^2(1+Q^2)$
\begin{equation*}
\small
\begin{split}
    &\ \ \ \| \widetilde H(\bm{\theta}_k, \bm{\xeta}_{k+1})+\omega_{k+1}\rho(\btheta_k, \bx_{k+1})\|^2 \\
    &\leq 2\|\widetilde H(\bm{\theta}_k, \bm{\xeta}_{k+1})\|^2 + 2\omega_{k+1}^2 \|\rho(\btheta_k, \bx_{k+1})\|^2\\
    &\leq 2Q^2 (1+2\|\bm{\theta}_k-\btheta_{\star}\|^2+2Q^2) + 2Q^2\\
    &\leq 2Q^2 (2+2Q^2+(2+2Q^2)\|\bm{\theta}_k-\btheta_{\star}\|^2)\\
    &\leq G (1+\|\bm{\theta}_k-\btheta_{\star}\|^2).
\end{split}
\end{equation*}
\end{proof}

\begin{lemma}
\label{theta_lip}Given $\sup\{\omega_k\}_{k=1}^{\infty}\leq 1$, we have that
\begin{equation}
\label{lip_theta}
    \|\btheta_{k}-\btheta_{k-1}\|\leq 2\omega_{k} Q
\end{equation}
\end{lemma}

\begin{proof}
Following the update $\btheta_k-\btheta_{k-1}=\omega_k \widetilde H(\bm{\theta}_{k-1}, \bm{x}_{k})+\omega_{k}^2 \rho(\btheta_{k-1}, \bx_{k})$, we have that
$$\|\btheta_{k}-\btheta_{k-1}\|= \|\omega_k \widetilde H(\bm{\theta}_{k-1}, \bm{x}_{k})+\omega_{k}^2 \rho(\btheta_{k-1}, \bx_{k})\|\leq \omega_k\| \widetilde H(\bm{\theta}_{k-1},\bm{x}_{k})\|+\omega_{k}^2\| \rho(\btheta_{k-1}, \bx_{k})\|.$$
By the compactness condition in Assumption \ref{ass2a} and $\sup\{\omega_k\}_{k=1}^{\infty}\leq 1$, (\ref{lip_theta}) can be derived.
\end{proof}

\begin{lemma}
\label{lemma:4}
There exist constants $\lambda_0$ and $k_0$ such that $\forall \lambda\geq\lambda_0$ and $\forall k> k_0$, the sequence $\{\psi_{k}\}_{k=1}^{\infty}$, where $\psi_{k}=\lambda\omega_{k}+\frac{1}{\phi} \sup_{i\geq k_0}\Delta_i$, satisfies
\begin{equation}
\begin{split}
\label{key_ieq}
\psi_{k+1}\geq& (1-2\omega_{k+1}\phi+G\omega_{k+1}^2)\psi_{k}+C_0\omega_{k+1}^2  +2\Delta_k\omega_{k+1}.
\end{split}
\end{equation}
\begin{proof}
By replacing $\psi_{k}$ with $\lambda\omega_{k}+\frac{1}{\phi} \sup_{i\geq k_0}\Delta_i$ in ($\ref{key_ieq}$), it suffices to show
\begin{equation*}
\small
\begin{split}
\label{lemma:loss_control}
\lambda \omega_{k+1}+\frac{1}{\phi} \sup_{i\geq k_0}\Delta_i\geq& (1-2\omega_{k+1}\phi+G\omega_{k+1}^2)\left(\lambda \omega_{k}+\frac{1}{\phi} \sup_{i\geq k_0}\Delta_i\right)+C_0\omega_{k+1}^2 + 2\Delta_k\omega_{k+1}.
\end{split}
\end{equation*}

which is equivalent to proving
\begin{equation*}
\small
\begin{split}
&\lambda (\omega_{k+1}-\omega_k+2\omega_k\omega_{k+1}\phi-G\omega_k\omega_{k+1}^2)\geq  \frac{1}{\phi}\sup_{i\geq k_0}\Delta_i(-2\omega_{k+1}\phi+G\omega_{k+1}^2 )+C_0\omega_{k+1}^2+ 2\Delta_k\omega_{k+1}.
\end{split}
\end{equation*}

Given the step size condition in ($\ref{a1}$), we have $$\small{\omega_{k+1}-\omega_{k}+2 \omega_{k}\omega_{k+1}\phi \geq C_1 \omega_{k+1}^2},$$ 
where $\small{C_1=\lim \inf 2\phi  \dfrac{\omega_{k}}{\omega_{k+1}}+\dfrac{\omega_{k+1}-\omega_{k}}{\omega^2_{k+1}}>0}$. Combining $-\sup_{i\geq k_0}\Delta_i\leq \Delta_k$, it suffices to prove
\begin{equation}
\begin{split}
\label{loss_control-2}
\lambda \left(C_1-G\omega_{k}\right)\omega^2_{k+1}\geq  \left(\frac{G}{\phi} \sup_{i\geq k_0}\Delta_i+C_0\right)\omega^2_{k+1}.
\end{split}
\end{equation}

It is clear that for a large enough $k_0$ and $\lambda_0$ such that $\omega_{k_0}\leq \frac{C_1}{2G}$, $\lambda_0=\frac{2G\sup_{i\geq k_0} \Delta_i + 2C_0\phi}{C_1\phi}$, the desired conclusion ($\ref{loss_control-2}$) holds for all such $k\geq k_0$ and $\lambda\geq \lambda_0$.
\end{proof}
\end{lemma}

The following lemma is a restatement of Lemma 25 (page 247) from \cite{Albert90}.
\begin{lemma}
\label{lemma:2}
Suppose $k_0$ is an integer satisfying
$\inf_{k> k_0} \dfrac{\omega_{k+1}-\omega_{k}}{\omega_{k}\omega_{k+1}}+2\phi-G\omega_{k+1}>0$ 
for some constant $G$. 
Then for any $k>k_0$, the sequence $\{\Lambda_k^K\}_{k=k_0, \ldots, K}$ defined below is increasing and uppered bounded by $2\omega_{k}$
\begin{equation}  
\Lambda_k^K=\left\{  
             \begin{array}{lr}  
             2\omega_{k}\prod_{j=k}^{K-1}(1-2\omega_{j+1}\phi+G\omega_{j+1}^2) & \text{if $k<K$},   \\  
              & \\
             2\omega_{k} &  \text{if $k=K$}.
             \end{array}  
\right.  
\end{equation} 
\end{lemma}

\begin{lemma}
\label{lemma:3-all}
Let $\{\psi_{k}\}_{k> k_0}$ be a series that satisfies the following inequality for all $k> k_0$
\begin{equation}
\begin{split}
\label{lemma:3-a}
\psi_{k+1}\geq &\psi_{k}\left(1-2\omega_{k+1}\phi+G\omega^2_{k+1}\right)+C_0\omega^2_{k+1} + 2 \Delta_k\omega_{k+1},
\end{split}
\end{equation}
and assume there exists such $k_0$ that 
\begin{equation}
\begin{split}
\label{lemma:3-b}
\E\left[\|\bm{T}_{k_0}\|^2\right]\leq \psi_{k_0}.
\end{split}
\end{equation}
Then for all $k> k_0$, we have
\begin{equation}
\begin{split}
\label{result}
\E\left[\|\bm{T}_{k}\|^2\right]\leq \psi_{k}+\sum_{j=k_0+1}^{k}\Lambda_j^k (z_{j-1}-z_j).
\end{split}
\end{equation}
\end{lemma}

\begin{proof}
We prove by the induction method. Assuming (\ref{result}) is true and applying (\ref{key_eqn}), we have that 
\begin{equation*}
\small
\begin{split}
    \E\left[\|\bm{T}_{k+1}\|^2\right]&\leq (1-2\omega_{k+1}\phi+\omega^2_{k+1} G)(\psi_{k}+\sum_{j=k_0+1}^{k}\Lambda_j^k (z_{j-1}-z_j))\\
    &\ \ \ \ \ \ \ \ +C_0\omega^2_{k+1} +2 \Delta_k\omega_{k+1}+2\omega_{k+1}\E[z_{k}-z_{k+1}]\\
\end{split}
\end{equation*}

Combining (\ref{key_ieq}) and Lemma.\ref{lemma:2}, respectively, we have
\begin{equation*}
\small
\begin{split}
    \E\left[\|\bm{T}_{k+1}\|^2\right]&\leq  \psi_{k+1}+(1-2\omega_{k+1}\phi+\omega^2_{k+1} G)\sum_{j=k_0+1}^{k}\Lambda_j^k (z_{j-1}-z_j)+2\omega_{k+1}\E[z_{k}-z_{k+1}]\\
    & \leq \psi_{k+1}+\sum_{j=k_0+1}^{k}\Lambda_j^{k+1} (z_{j-1}-z_j)+\Lambda_{k+1}^{k+1}\E[z_{k}-z_{k+1}]\\
    & \leq \psi_{k+1}+\sum_{j=k_0+1}^{k+1}\Lambda_j^{k+1} (z_{j-1}-z_j).\\
\end{split}
\end{equation*}
\end{proof}

\section{Ergodicity and Dynamic Importance Sampler}
\label{ergodicity}
Our interest is to analyze the deviation between the weighted averaging estimator $\frac{1}{k}\sum_{i=1}^k\theta_{i}^{\zeta}( \tilde{J}(\bx_i)) f(\bx_i)$ and posterior average $\int_{\MX}f(\bx)\pi(d\bx)$ for a \textcolor{black}{bounded} function $f$. To accomplish this analysis, we first study the convergence of the 
posterior sample mean $\frac{1}{k}\sum_{i=1}^k f(\bx_i)$ 
to the posterior expectation $\bar{f}=\int_{\MX}f(\bx)\varpi_{\Psi_{\btheta_{\star}}}(\bx)(d\bx)$ and then extend $\bar{f}$ to $\int_{\MX}f(\bx)\varpi_{\widetilde{\Psi}_{\btheta_{\star}}}(\bx)(d\bx)$. The key tool for ergodic theory is still the Poisson equation 
which is used to characterize the fluctuation 
between $f(\bx)$ and $\bar f$: 
\begin{equation}
    \mathcal{L}g(\bx)=f(\bx)-\bar f,
\end{equation}
where $g(\bx)$ is the solution of the Poisson equation, and $\mathcal{L}$ is the infinitesimal generator of the Langevin diffusion 
\begin{equation*}
    \mathcal{L}g:=\nabla g \nabla L(\cdot, \btheta_{\star})+\textcolor{black}{\frac{1}{2}\tau^2}\nabla^2g.
\end{equation*}

By imposing the following regularity conditions on the function $g(\bx)$, we can control the perturbations of $\frac{1}{k}\sum_{i=1}^k f(\bx_i)-\bar f$ and enables convergence of the ergodic average.
\begin{assump}[Regularity] 
\label{ass6}
Given a sufficiently smooth function $g(\bx)$ and a function $\mathcal{V}(\bx)$, such that $\|D^k g\|\lesssim \mathcal{V}^{p_k}(\bx)$ and $p_k>0$ for $k\in\{0,1,2,3\}$. In addition, $\mathcal{V}^p$ has a bounded expectation: $\sup_{\bx} \E[\mathcal{V}^p(\bx)]<\infty$ and $\mathcal{V}$ is smooth, i.e. $\sup_{s\in\{0, 1\}} \mathcal{V}^p(s\bx+(1-s)\by)\lesssim \mathcal{V}^p(\bx)+\mathcal{V}^p(\by)$ for all $\bx,\by\in\MX$ and $p\leq 2\max_k\{p_k\}$.
\end{assump}

For stronger but verifiable conditions, we refer readers to \cite{VollmerZW2016}. Now, we present a lemma, which is majorly adapted from Theorem 2 of \cite{Chen15} with a fixed learning rate $\epsilon$. 

\begin{lemma}[Convergence of the Averaging Estimators]
\label{avg_converge_appendix}
Suppose Assumptions $\ref{ass2a}$-$\ref{ass6}$ hold.
For any bounded function $f$, 
\begin{equation*}
\small
\begin{split}
    \left|\E\left[\frac{\sum_{i=1}^k f(\bx_i)}{k}\right]-\int_{\MX}f(\bx)\varpi_{\widetilde{\Psi}_{\btheta_{\star}}}(\bx)(\bx)d\bx\right|&=
    \mathcal{O}\left(\frac{1}{k\epsilon}+\sqrt{\epsilon}+\sqrt{\frac{\sum_{i=1}^k \omega_k}{k}}+ \frac{1}{\sqrt{m}}+
    \sup_{i\geq k_0}\sqrt{\delta_n(\btheta_{i})}\right), \\
\end{split}
\end{equation*}
where $k_0$ is a sufficiently large constant, \textcolor{black}{$\varpi_{\widetilde{\Psi}_{\btheta_{\star}}}(\bx)(\bx)= \frac{1}{Z_{\btheta_{\star}}} 
\frac{\pi(\bx)}{\theta_{\star}^{\zeta}(J(\bx))}$, and  $Z_{\btheta_{\star}}=\sum_{i=1}^m \frac{\int_{\MX_i} \pi(\bx)d\bx}{\theta_{\star}^{\zeta}(i)}$}.
\end{lemma}

\begin{proof}
We rewrite the CSGLD algorithm as follows:
\begin{equation*}
\begin{split}
    \bm{x}_{k+1}&=\bx_k- \epsilon_k\nabla_{\bx} \widetilde{L}(\bx_k, \btheta_k)+\mathcal{N}({0, 2\epsilon_k \tau\bm{I}})\\
    &=\bx_k- \epsilon_k\left(\nabla_{\bx} 
    \widehat{L}(\bx_k, \btheta_{\star})+{\Upsilon}(\bx_k, \btheta_k, \btheta_{\star})\right)+\mathcal{N}({0, 2\epsilon_k \tau\bm{I}}),
\end{split}
\end{equation*}
where    
$\nabla_{\bx} \widehat{L}(\bx,\btheta)= \frac{N}{n} \left[1+  \frac{\zeta\tau}{\Delta u}  \left(\textcolor{black}{\log \theta({J}(\bx))-\log\theta(({J}(\bx)-1)\vee 1)} \right) \right]  \nabla_{\bx} \widetilde U(\bx)$,  $\nabla_{\bx} \widetilde{L}(\bx,\btheta)$ is as 
defined in Section \ref{Alg:app},
 and the bias term is given by ${\Upsilon}(\bx_k,\btheta_k,\btheta_{\star})=\nabla_{\bx} \widetilde{L}(\bx_k,\btheta_k)-\nabla_{\bx} \widehat{L}(\bx_k,\btheta_{\star})$.

According to the triangle inequality, Cauchy–Schwarz inequality and $\E[\|\nabla_{\bx}U(\bx)\|^2]<\infty$ in Assumption \ref{ass4}, we have
\begin{equation}
\small
\begin{split}
    \|\E[\Upsilon(\bx_k,\btheta_k,\btheta_{\star})]\|&\leq 
    \E[\|\nabla_{\bx} \widetilde{L}(\bx_k, \btheta_k)-\nabla_{\bx} \widetilde{L}(\bx_k, \btheta_{\star})\|] + \E[\|\nabla_{\bx} \widetilde{L}(\bx_k, \btheta_{\star})-\nabla_{\bx} \widehat{L}(\bx_k, \btheta_{\star})\|] \\
    &\lesssim  \E[\|\btheta_k-\btheta_{\star}\|]+\mathcal{O}(\delta_n(\btheta_{\star}))\E[\|\nabla U(\bx)\|^2],
\end{split}
\end{equation}
where Assumption \ref{ass2a} is used to derive the smoothness of $\nabla_{\bx}L(\bx, \btheta)$ with respect to $\btheta$ and  $\delta_n(\btheta)=\E[\widetilde{H}(\btheta,\bx)-H(\btheta,\bx)]$ is the bias caused by the mini-batch evaluation of $U(\bx)$.

Further considering Jensen's inequality and Theorem \ref{latent_convergence}, it follows that
\begin{equation}
\label{latent_bias}
\small
\begin{split}
    \|\E[\Upsilon(\bx_k,\btheta_k,\btheta_{\star})]\|&\lesssim \sqrt{\E[\|\btheta_k-\btheta_{\star}\|^2]}+\mathcal{O}(\delta_n(\btheta_{\star}))\leq \mathcal{O}\left( \sqrt{\omega_{k}+\epsilon+
\frac{1}{m} +\sup_{i\geq k_0}\delta_{n}(\btheta_i)}\right).
\end{split}
\end{equation}


The ergodic average based on biased gradients and a fixed learning rate is a direct result of Theorem 2 of  \cite{Chen15} by imposing regularity condition \ref{ass6}. By simulating from $\varpi_{\Psi_{\btheta_{\star}}}(\bx)\propto\frac{\pi(\bx)}{\Psi^{\zeta}_{\btheta_{\star}}(U(\bx))}$ and combining (\ref{latent_bias}) and Theorem \ref{latent_convergence}, we know that 
\begin{equation*}
\small
\begin{split}
    \left|\E\left[\frac{\sum_{i=1}^k f(\bx_i)}{k}\right]-\int_{\MX}f(\bx) \varpi_{\Psi_{\btheta_{\star}}}(\bx)d\bx\right|&\leq \mathcal{O}\left(\frac{1}{k\epsilon}+\epsilon+\frac{\sum_{i=1}^k \|\E[\Upsilon(\bx_k,\btheta_k,\btheta_{\star})]\|}{k}\right)\\
    &\lesssim \mathcal{O}\left(\frac{1}{k\epsilon}+\epsilon+\frac{\sum_{i=1}^k \sqrt{\omega_k+\epsilon+\frac{1}{m}+\sup_{i\geq k_0}\delta_n(\btheta_{i} )}}{k}\right) \\
    &\leq \mathcal{O}\left(\frac{1}{k\epsilon}+\sqrt{\epsilon}+\sqrt{\frac{\sum_{i=1}^k \omega_k}{k}}+
    \frac{1}{\sqrt{m}}+\sup_{i\geq k_0}\sqrt{\delta_n( \btheta_{i})}\right),
\end{split}
\end{equation*}
where the last inequality follows by repeatedly applying the inequality $\sqrt{a+b}\leq \sqrt{a}+\sqrt{b}$ and 
the inequality $\sum_{i=1}^k \sqrt{\omega_i}\leq \sqrt{k\sum_{i=1}^k \omega_i}$.

For any a bounded function $f(\bx)$, we have $|\int_{\MX}f(\bx) \varpi_{\Psi_{\btheta_{\star}}}(\bx)d\bx -  \int_{\MX}f(\bx) \varpi_{\widetilde{\Psi}_{\btheta_{\star}}}(\bx)d\bx|= \mathcal{O}(\frac{1}{m})$ by Lemma \ref{partition_order}. By the triangle inequality, we have 
\begin{equation*}
\small
\begin{split}
    \left|\E\left[\frac{\sum_{i=1}^k f(\bx_i)}{k}\right]-\int_{\MX}f(\bx) \varpi_{\widetilde{\Psi}_{\btheta_{\star}}}(\bx)(\bx)d\bx\right|\leq  \mathcal{O}\left(\frac{1}{k\epsilon}+\sqrt{\epsilon}+\sqrt{\frac{\sum_{i=1}^k \omega_k}{k}}+ \frac{1}{\sqrt{m}}+
    \sup_{i\geq k_0}\sqrt{\delta(\btheta_{i})}\right),
\end{split}
\end{equation*}
which concludes the proof. 
\end{proof}


Now we are ready to show the convergence of the weighted averaging estimator $\frac{\sum_{i=1}^k\theta_{i}
     ^{\zeta}(\tilde{J}(\bx_i)) f(\bx_i)}{\sum_{i=1}^k\theta_{i}^{\zeta}( 
      \tilde{J}(\bx_i))}$ to the posterior mean $\int_{\MX}f(\bx)\pi(d\bx)$.
\begin{theorem}[Convergence of the Weighted Averaging Estimators] Assume Assumptions $\ref{ass2a}$-$\ref{ass6}$ hold. For any bounded function $f$, we have that 
\label{w_avg_converge_appendix}
\begin{equation*}
\small
\begin{split}
    \left|\E\left[\frac{\sum_{i=1}^k\theta_{i}
     ^{\zeta}(\tilde{J}(\bx_i)) f(\bx_i)}{\sum_{i=1}^k\theta_{i}^{\zeta}( 
      \tilde{J}(\bx_i))}\right]-\int_{\MX}f(\bx)\pi(d\bx)\right|&= \mathcal{O}\left(\frac{1}{k\epsilon}+\sqrt{\epsilon}+\sqrt{\frac{\sum_{i=1}^k \omega_k}{k}}+\frac{1}{\sqrt{m}}+\sup_{i\geq k_0}\sqrt{\delta_n(\btheta_{i})}\right). \\
\end{split}
\end{equation*}
\end{theorem}

\begin{proof}

Applying triangle inequality and $|\E[x]|\leq \E[|x|]$, we have
\begin{equation*}
\footnotesize
    \begin{split}
        &\left|\E\left[\frac{\sum_{i=1}^k\theta_{i}
        ^{\zeta}( \tilde{J}(\bx_i)) f(\bx_i)}{\sum_{i=1}^k\theta_{i}^{\zeta}(
         \tilde{J}(\bx_i))}\right]-\int_{\MX}f(\bx)\pi(d\bx)\right|\\
        \leq &\underbrace{\E\left[\left|\frac{\sum_{i=1}^k\theta_{i}^{\zeta}
         (\tilde{J}(\bx_i))f(\bx_i)}
         {\sum_{i=1}^k\theta_{i}^{\zeta}(\tilde{J}(\bx_i)) }-\frac{\sum_{i=1}^k\theta_{i}^{\zeta}
         ({J}(\bx_i))f(\bx_i)}
         {\sum_{i=1}^k\theta_{i}^{\zeta}({J}(\bx_i)) }\right|\right]}_{\text{I}_1}\\
         &\ \ +\underbrace{\E\left[\left|\frac{\sum_{i=1}^k\theta_{i}^{\zeta}
         ({J}(\bx_i))f(\bx_i)}
         {\sum_{i=1}^k\theta_{i}^{\zeta}({J}(\bx_i)) }-\frac{Z_{\btheta_{\star}}\sum_{i=1}^k\theta_{i}^{\zeta} ({J}(\bx_i)) f(\bx_i)}{k}\right|\right]}_{\text{I}_2}\\
        \ \ + &\underbrace{\E\left[\frac{Z_{\btheta_{\star}}}{k}\sum_{i=1}^k\left|\theta_i^{\zeta} ({J}(\bx_i))-\theta_{\star}^{\zeta}
      ({J}(\bx_i))  \right| \cdot |f(\bx_i)|\right]}_{\text{I}_3} +\underbrace{\left|\E\left[\frac{Z_{\btheta_{\star}}}{k}\sum_{i=1}^k\theta_{\star}^{\zeta}
     ({J}(\bx_i)) f(\bx_i)\right]-\int_{\MX}f(\bx)\pi(d\bx)\right|}_{\text{I}_4}.
    \end{split}
\end{equation*}

For the term $\text{I}_1$, consider the bias $\delta_n(\btheta)=\E[\widetilde H(\btheta, \bx)- H(\btheta, \bx)]$ as defined in the proof of Lemma 
(\ref{convex_appendix}), which decreases to 0 as $n\rightarrow N$. By applying mean-value theorem, we have 
\begin{equation}
\footnotesize
\begin{split}
    \text{I}_1&=\E\left[\left|\frac{\left(\sum_{i=1}^k\theta_{i}^{\zeta}(
         \tilde{J}(\bx_i))f(\bx_i)\right)\left(\sum_{i=1}^k\theta_{i}^{\zeta}(
         {J}(\bx_i))\right)-\left(\sum_{i=1}^k\theta_{i}^{\zeta}(
         {J}(\bx_i))f(\bx_i)\right)\left(\sum_{i=1}^k\theta_{i}^{\zeta}(
         \tilde{J}(\bx_i))\right)}
         {\left(\sum_{i=1}^k\theta_{i}^{\zeta}(
         \tilde{J}(\bx_i))\right)\left(\sum_{i=1}^k\theta_{i}^{\zeta}(
         {J}(\bx_i))\right)}\right|\right]\\
         &\lesssim \sup_i \delta_n(\btheta_i) \E\left[\frac{\left(\sum_{i=1}^k\theta_{i}
     ^{\zeta}({J}(\bx_i)) f(\bx_i) \left(\sum_{i=1}^k\theta_{i}^{\zeta}(
         {J}(\bx_i))\right)\right)}{\left(\sum_{i=1}^k\theta_{i}^{\zeta}(
         {J}(\bx_i))\right)\left(\sum_{i=1}^k\theta_{i}^{\zeta}(
         {J}(\bx_i))\right)}\right]
         =\mathcal{O}\left(\sup_i\delta_n(\btheta_i)\right).
\end{split}
\end{equation}

For the term $\text{I}_2$, 
by the boundedness of $\bTheta$ and $f$ and the assumption  $\inf_{\btheta, i}\theta^{\zeta}(i)>0$, we have
\begin{equation*}
\small
\begin{split}
    \text{I}_2=&\E\left[\left|\frac{\sum_{i=1}^k\theta_{i}^{\zeta}({J}(\bx_i))  f(\bx_i)}{\sum_{i=1}^k\theta_{i}^{\zeta}
    ({J}(\bx_i))
    }\left(1-\sum_{i=1}^k\frac{\theta_i^{\zeta}
    ({J}(\bx_i))
    }{k}Z_{\btheta_{\star}}\right)\right|\right]\\
    \lesssim & \E\left[\left|Z_{\btheta_{\star}}\frac{{\sum_{i=1}^k\theta_{i}^{\zeta}
    ({J}(\bx_i))
    }}{k}-1\right|\right]\\
    =&\E\left[\left|Z_{\btheta_{\star}}\sum_{i=1}^m \frac{\sum_{j=1}^k\left( \theta_j^{\zeta}(i)-\theta_{\star}^{\zeta}(i)+\theta_{\star}^{\zeta}(i)\right)1_{
    {J}(\bx_j)=i}}{k}-1\right|\right]\\
    \leq & \underbrace{\E\left[Z_{\btheta_{\star}}\sum_{i=1}^m \frac{\sum_{j=1}^k\left| \theta_j^{\zeta}(i)-\theta_{\star}^{\zeta}(i)\right| 1_{{J}(\bx_j)=i}}{k} \right]}_{\text{I}_{21}} + \underbrace{\E\left[\left| Z_{\btheta_{\star}}\sum_{i=1}^m \frac{\theta_{\star}^{\zeta}(i)\sum_{j=1}^k  1_{{J}(\bx_j)=i}}{k}-1\right|
    \right]}_{\text{I}_{22}}.\\
\end{split}
\end{equation*}

For $\text{I}_{21}$, by first applying the inequality $|x^{\zeta}-y^{\zeta}|\leq \zeta |x-y| z^{\zeta-1}$ for any $\zeta>0, x\leq y$ and $z\in[x, y]$ based on the mean-value theorem and then applying the Cauchy–Schwarz inequality, we have 
\begin{equation}\label{ii_21}
    \text{I}_{21}\lesssim \frac{1}{k}\E\left[ \sum_{j=1}^k\sum_{i=1}^m\left| \theta_j^{\zeta}(i)-\theta_{\star}^{\zeta}(i)\right| \right]\lesssim  \frac{1}{k}\E\left[ \sum_{j=1}^k\sum_{i=1}^m\left| \theta_j(i)-\theta_{\star}(i)\right| \right]\lesssim  \frac{1}{k}\sqrt{\sum_{j=1}^k\E\left[\left\| \btheta_j-\btheta_{\star}\right\|^2\right]},
\end{equation}
where the compactness of $\Theta$ has been 
used in deriving the second inequality. 

For $\text{I}_{22}$, considering the following relation $$
    1=\sum_{i=1}^m\int_{\MX_i} \pi(\bx)d\bx=\sum_{i=1}^m\int_{\MX_i} \theta_{\star}^{\zeta}(i) \frac{\pi(\bx)}{\theta_{\star}^{\zeta}(i)}d\bx
    =Z_{\btheta_{\star}}\int_{\MX} \sum_{i=1}^m \theta_{\star}^{\zeta}(i) 1_{{J}(\bx)=i}\varpi_{\widetilde{\Psi}_{\btheta_{\star}}}(\bx)(\bx)d\bx,$$ then we have
\begin{equation}
\begin{split}
    \text{I}_{22}&=\E\left[\left| Z_{\btheta_{\star}}\sum_{i=1}^m \frac{\theta_{\star}^{\zeta}(i)\sum_{j=1}^k  1_{{J}(\bx_j)=i}}{k}-Z_{\btheta_{\star}}
    \int_{\MX} \sum_{i=1}^m \theta_{\star}^{\zeta}(i) 1_{{J}(\bx)=i}\varpi_{\widetilde{\Psi}_{\btheta_{\star}}}(\bx)(\bx)d\bx\right|\right]\\
    &=Z_{\btheta_{\star}} \E\left[\left| \frac{1}{k}\sum_{j=1}^k \left(\sum_{i=1}^m\theta_{\star}^{\zeta}(i)  1_{{J}(\bx_j)=i}\right)-\int_{\MX} \left(\sum_{i=1}^m \theta_{\star}^{\zeta}(i) 1_{{J}(\bx)=i}\right)\varpi_{\widetilde{\Psi}_{\btheta_{\star}}}(\bx)(\bx)d\bx\right|\right]\\
    &= \mathcal{O}\left(\frac{1}{k\epsilon}+\sqrt{\epsilon}+\sqrt{\frac{\sum_{i=1}^k \omega_k}{k}}+\frac{1}{\sqrt{m}}+\sup_{i\geq k_0} \sqrt{\delta_n(\btheta_i)} \right),
\end{split}
\end{equation}
where the last equality follows  from Lemma \ref{avg_converge_appendix} as the \textcolor{black}{step 
function $\sum_{i=1}^m \theta_{\star}^{\zeta}(i) 1_{{J}(\bx)=i}$} is integrable.

For $\text{I}_3$, by the boundedness of $f$,  
the mean value theorem and Cauchy-Schwarz inequality, 
we have 
\begin{equation}\label{ii_3}
\small
    \begin{split}
        \text{I}_3&\lesssim \E\left[\frac{1}{k}\sum_{i=1}^k\left|\theta_{i}
        ^{\zeta}({J}(\bx_i)) -\theta_{\star}^{\zeta}(
        {J}(\bx_i))\right|\right]\lesssim  \frac{1}{k}\E\biggl[ \sum_{j=1}^k\sum_{i=1}^m\bigl| \theta_j(i)-\theta_{\star}(i)\bigr| \biggr]\lesssim  \frac{1}{k}\sqrt{\sum_{j=1}^k\E\left[\left\| \btheta_j-\btheta_{\star}\right\|^2\right]}.\\
    \end{split}
\end{equation}

For the last term $\text{I}_4$, we first decompose $\int_{\MX} f(\bx) \pi(d\bx)$ into $m$ disjoint regions to facilitate the analysis
\begin{equation}
\label{split_posterior}
\footnotesize
\begin{split}
      \int_{\MX} f(\bx) \pi(d\bx)&=\int_{\cup_{j=1}^m \MX_j}  f(\bx) \pi(d\bx)=\sum_{j=1}^m\int_{\MX_j}\theta_{\star}^{\zeta}(j)  f(\bx) \frac{\pi(d\bx)}{\theta_{\star}^{\zeta}(j)}\\
      &=Z_{\btheta_{\star}}\int_{\MX} \sum_{j=1}^m \theta_{\star}(j)^{\zeta}f(\bx) 1_{
        {J}(\bx_i)=j 
        }\varpi_{\widetilde{\Psi}_{\btheta_{\star}}}(\bx)(d\bx).\\
\end{split}
\end{equation}

Plugging (\ref{split_posterior}) into the last term $\text{I}_4$, we have
\begin{equation}
\label{final_i2}
\small
    \begin{split}
        \text{I}_4&=\left|\E\left[\frac{Z_{\btheta_{\star}}}{k}\sum_{i=1}^k\sum_{j=1}^m\theta_{\star}(j)^{\zeta} f(\bx_i)1_{  {J}(\bx_i)=j
        }\right]-\int_{\MX}f(\bx)\pi(d\bx)\right|\\
        &= Z_{\btheta_{\star}}\left|\E\left[\frac{1}{k}\sum_{i=1}^k \left(\sum_{j=1}^m\theta_{\star}^{\zeta}(j) f(\bx_i)1_{
        {J}(\bx_i)=j 
        }\right)\right]-\int_{\MX}  \left(\sum_{j=1}^m\theta_{\star}^{\zeta}(j) f(\bx_i)1_{
        {J}(\bx_i)=j 
        }\right) \varpi_{\widetilde{\Psi}_{\btheta_{\star}}}(\bx)(d\bx)\right|\\
    \end{split}
\end{equation}

Applying the function \textcolor{black}{$\sum_{j=1}^m\theta_{\star}^{\zeta}(j) f(\bx_i)1_{
        {J}(\bx_i)=j 
        }$ }
to Lemma \ref{avg_converge_appendix} yields
\begin{equation}
\label{almost_i2}
\small
\begin{split}
      \left|\E\left[\frac{1}{k}\sum_{i=1}^k f(\bx_i)\right]-\int_{\MX}  f(\bx) \varpi_{\widetilde{\Psi}_{\btheta_{\star}}}(\bx)(d\bx)\right| = \mathcal{O}\left(\frac{1}{k\epsilon}+\sqrt{\epsilon}+\sqrt{\frac{\sum_{i=1}^k \omega_k}{k}}+\frac{1}{\sqrt{m}}+\sup_{i\geq k_0} \sqrt{\delta_n(\btheta_i)} \right).\\
\end{split}
\end{equation}

Plugging (\ref{almost_i2}) into (\ref{final_i2}) and combining $\text{I}_{1}$, $\text{I}_{21}$, $\text{I}_{22}$, $\text{I}_3$ and Theorem \ref{latent_convergence}, we have
\begin{equation*}
\small
\begin{split}
      \left|\E\left[\frac{\sum_{i=1}^k\theta_{i}
     ^{\zeta}(\tilde{J}(\bx_i)) f(\bx_i)}{\sum_{i=1}^k\theta_{i}^{\zeta}( 
      \tilde{J}(\bx_i))}\right]-\int_{\MX}f(\bx)\pi(d\bx)\right| = \mathcal{O}\left(\frac{1}{k\epsilon}+\sqrt{\epsilon}+\sqrt{\frac{\sum_{i=1}^k \omega_k}{k}}+\frac{1}{\sqrt{m}}+\sup_{i\geq k_0} \sqrt{\delta_n(\btheta_i)} \right),\\
\end{split}
\end{equation*}
which concludes the proof of the theorem.

\end{proof}

\section{Hyperparameter Settings}
\label{ext}


In some complex experiments for high-loss functions, the equilibrium $\btheta_{\star}$ can be extremely close to the vector $(1, 0, ..., 0)$, where the lowest energy region contains almost all the probability mass. As a result, the estimations of $\theta(i)$ for large $i$ can be quite difficult due to the limitation of floating points. If we set a small $\zeta$, the gradient multiplier $1+ 
   \zeta\tau\frac{\log {\theta}_{\star}(i) - \log{\theta}_{\star}((i-1)\vee 1)}{\Delta u}$ is close to $1$ for any $i$ and the algorithm performs similar to SGLD. When we set a high $\zeta$, this leads to the training of $\btheta_{\star}$ too slow. To handle this issue, we include a regularizer item in the stochastic approximation as follows
\begin{equation} \label{regularizer_cslgd}
{\theta}_{k+1}(i)={\theta}_{k}(i)+\omega_{k+1}\left({\theta}_{k}^{\zeta}(\tilde J(\bx_{k+1})+\omega_{k+1} 1_{i\geq \tilde J(\bx_{k+1})}\rho)\right)\left(1_{i= \tilde J(\bx_{k+1})}-{\theta}_{k}(i)\right), 
\end{equation} 
where $\rho$ is a constant and the regularizer $\omega_{k+1} 1_{i= \tilde J(\bx_{k+1})}\rho$ acts as a prior count (see page 217 in \cite{liujun}) to smooth out the estimation and accelerate the computations of stochastic approximation in the early phase. Since the regularizer is decayed at a fast rate $\omega_{k+1}^2$, the convergence theory still applies.

In what follows, for all the algorithms, the momentums were set to 0.9. The weight decay was set to $25$ and data augmentation was employed following \cite{Zhong17}. Regarding the step sizes and the regularizer for CSGHMC and saCSGHMC, we set $\omega_k=\frac{10}{k^{0.75}+1000}$ and $\rho=1$ in (\ref{regularizer_cslgd}) for both datasets. As to the hyperparameter, we set $\zeta$ as $1\times 10^6$ to CIFAR10 and $3\times 10^6$ for CIFAR100 experiments. 







\bibliography{myref, mybib}
\bibliographystyle{plain}


\maketitle

In this supplementary material, we review the related methodologies in $\S$\ref{review}, show the convergence in $\S$\ref{convergence} and prove the technical lemmas in $\S$\ref{technique}.
\section{Background on Stochastic Approximation and Poisson Equation}
\label{review}

\subsection{Robbins–Monro algorithm}
Robbins–Monro algorithm \citep{Robbins51} aims to solve the root finding problem. Given a random field of $H(\bm{\btheta}, \bm{\bx})$ with respect to $\bm{\bx}$, our goal is to find the equilibrium $\btheta$ for the mean field function $h(\btheta)$ such that
\begin{equation*}
\begin{split}
\label{sa00}
h(\btheta)&=\int H(\bm{\theta}, \bm{\bx})\varpi_{\bm{\theta}}(d\bm{\bx})=0,
\end{split}
\end{equation*}
where $\bx\in \bX \subset \mathbb{R}^d$ and $\btheta\in\bTheta \subset \mathbb{R}^{d_{\btheta}}$.
The algorithm is implemented as follows:
\begin{itemize}
\item[(1)] Sample $\bm{x}_{k+1}$ from the invariant distribution $\varpi_{\bm{\theta}_{k}}(\bm{x})$,

\item[(2)] Update $\bm{\theta}_{k+1}=\bm{\theta}_{k}+\omega_{k+1} H(\bm{\theta}_{k}, \bm{x}_{k+1}).$
\end{itemize}

\subsection{General stochastic approximation}
The stochastic approximation algorithm \citep{Albert90} is a generalization of the Robbins–Monro algorithm, which consists of the following steps:
\begin{itemize}
\item[(1)] Sample $\bm{x}_{k+1}$ from the transition kernel  $\Pi_{\bm{\theta_{k}}}(\bm{x}_{k}, \cdot)$, which admits $\varpi_{\bm{\theta}_{k}}(\bm{x})$ as
the invariant distribution,

\item[(2)] Update $\bm{\theta}_{k+1}=\bm{\theta}_{k}+\omega_{k+1} H(\bm{\theta}_{k}, \bm{x}_{k+1})+\omega_{k+1}^2 \rho(\bm{\theta}_{k}, \bm{x}_{k+1})$.
\end{itemize}

In contrast to the Robbins-Monro algorithm, stochastic approximation samples $\bx$ from a transition kernel $\Pi_{\bm{\theta_{k}}}(\cdot, \cdot)$ instead of a distribution $\varpi_{\bm{\theta}_{k}}(\cdot)$, which leads to a Markov state-dependent noise $H(\btheta_k, \bx_{k+1})-h(\btheta_k)$. In addition, it allows small oscillations without affecting the convergence. 

\subsection{Poisson equation}

The coupled process $\{(\bx_k, \btheta_k)\}_{k=1}^{\infty}$ forms a nonhomogeneous Markov chain. Let $\Pi_{\bm{\theta}}(\bm{x}, A)$ be the transition kernel for any Borel subset $A\subset \bX$ and let a function $\mu_{\btheta}(\cdot)$ on $\bX$ solve the following Poisson equation 
\begin{equation*}
    \mu_{\btheta}(\bm{x})-\mathrm{\Pi}_{\bm{\theta}}\mu_{\bm{\theta}}(\bm{x})=H(\bm{\theta}, \bm{x})-h(\bm{\theta}).
\end{equation*}
The solution of Poisson equation can be formulated as follows when the series converge.
\begin{equation*}
    \mu_{\btheta}(\bx):=\sum_k \Pi_{\btheta}^k (H(\btheta, \bx)-h(\btheta)).
\end{equation*}
By imposing regularity conditions on $\mu_{\btheta}(\cdot)$, we can control the perturbations over time from $\int H(\btheta_k, \bx)\Pi_{\btheta_k}(\bx_k, d\bx)$ to $h(\btheta)$ and guarantee the consistency of the estimator $\btheta$. In particular, \citet{Albert90} has reduced the study of individual algorithms to the verification of the following regularity assumption on $\mu_{\btheta}(\cdot)$ and bounded moment of certain degree on $V(\bx)$.

\textbf{Assumption}
There exist a function  $V: \mX \to [1,\infty)$, and a constant $C$ such that for all $\bm{\theta}, \bm{\theta}'\in \bm{\bTheta}$, we have
\begin{equation*}
\begin{split}
\|\mathrm{\Pi}_{\bm{\theta}}\mu_{\btheta}(\bx)\|&\leq C V(\bx),\ \|\mathrm{\Pi}_{\bm{\theta}}\mu_{\bm{\theta}}(\bx)-\mathrm{\Pi}_{\bm{\theta'}}\mu_{\bm{\theta'}}(\bx)\|\leq C\|\bm{\theta}-\bm{\theta}'\| V(\bx), \E[V(\bx)]\leq \infty.\\
\end{split}
\end{equation*}

Notably, only 1st-order smoothness is required for the convergence of the adaptive algorithms \citep{Albert90, andrieu06}, which is much weaker than the 4th-order smoothness used in the ergodicity theory \citep{mattingly10, VollmerZW2016}.


\section{Convergence Analysis}\label{convergence}

\subsection{Our algorithm}

Our algorithm falls into the class of the stochastic approximation algorithm, which follows
\begin{itemize}
\item[(1)] Sample $\bm{x}_{k+1}=\bx_k- \epsilon_k\nabla_{\bx} \widetilde L(\bx_k, \btheta_k)+\mathcal{N}({0, 2\epsilon_k \tau\bm{I}})$,

\item[(2)] Update $\bm{\theta}_{k+1}=\bm{\theta}_{k}+\omega_{k+1} \widetilde H(\bm{\theta}_{k}, \bm{x}_{k+1})+\omega_{k+1}^2 \rho(\bm{\theta}_{k}, \bm{x}_{k+1})$.
\end{itemize}
where $\epsilon_k$ is the learning rate, $\omega_k$ is the step size, $\nabla_{\bx} \widetilde L(\bx_k, \btheta_k)$ is the stochastic gradient for $\nabla_{\bx} L(\bx_k, \btheta_k)$. The stochastic random field $\widetilde H(\bm{\theta}_{k}, \bm{x}_{k+1})$ is an estimator of the random field $ H(\bm{\theta}_{k}, \bm{x}_{k+1})$ caused by the mini-batch evaluations. Without loss of generality, we assume 
\begin{equation}
    \label{tildeh}
    \widetilde H(\bm{\theta}_{k}, \bm{x}_{k+1})= H(\bm{\theta}_{k}, \bm{x}_{k+1})+\delta(\btheta_{k}, \bx_{k+1}),
\end{equation}
where the bias $\delta(\btheta_{k}, \bx_{k+1})$ is a random vector, and is usually biased due to the non-linearity of $H(\bm{\theta}_{k}, \bm{x}_{k+1})$. For example, when $H(\bm{\theta}_{k}, \bm{x}_{k+1})=L^2(\bx_{k+1}, \btheta_k)$ and $\widetilde H(\bm{\theta}_{k}, \bm{x}_{k+1})=(L(\bx_{k+1}, \btheta_k)+\mathcal{N}(0,\sigma^2))^2$, it is clear that $\E[\widetilde H(\bm{\theta}_{k}, \bm{x}_{k+1})]=L^2(\bx_{k+1}, \btheta_k)+\sigma^2\neq H(\bm{\theta}_{k}, \bm{x}_{k+1})$, which induced a fixed bias term $\sigma^2$. As such, we know that the bias becomes smaller given a larger batch size and vanishes when the full dataset is used. 

\subsection{Convergence analysis of stochastic approximation}

In order to show the convergence analysis, we first lay out the following assumptions:

\begin{assump}[Step size]
\label{ass1}
$\{\omega_{k}\}_{k\in \mathrm{N}}$ is a positive decreasing sequence of real numbers such that
\begin{equation} \label{a1}
\omega_{k}\rightarrow 0, \ \ \sum_{k=1}^{\infty} \omega_{k}=+\infty,\ \  \lim_{k\rightarrow \infty} \inf 2  \dfrac{\omega_{k}}{\omega_{k+1}}+\dfrac{\omega_{k+1}-\omega_{k}}{\omega^2_{k+1}}>0.
\end{equation}
According to \citet{Albert90}, we can choose $\omega_{k}:=\frac{A}{k^{\alpha}+B},\ \ \text{where} \ \alpha \in (0, 1]\ \text{and}\ A>\frac{\alpha}{2}.$

\end{assump}

\begin{assump}[Compactness] \label{ass2a} 
We study $\btheta, H(\btheta, \bx)$, $\widetilde H(\btheta, \bx)$, $\rho(\btheta, \bx)$ and $\delta(\btheta, \bx)$ in a compact space, and there exists a constant $Q>0$ such that for $\forall \btheta\in \bTheta$, $\bx \in \bX$ and $k \in \mathbb{N}$.
\begin{equation}
\label{compactness}
     \|\btheta\|\leq Q, \|\rho(\btheta, \bx)\|\leq Q, \|\delta(\btheta, \bx)\|\leq Q, \|H(\btheta, \bx)\|\leq Q,\|\widetilde H(\btheta, \bx)\|\leq Q.
\end{equation}
\end{assump}

\begin{assump}[Smoothness]
\label{ass2}
$L(\bm{\xeta}, \bm{\theta})$ is $M$-smooth, namely, for any $\bx, \bx'\in \mX$, $\bm{\theta}, \bm{\theta}'\in \bTheta$.
\begin{equation}
\begin{split}
\label{ass_2_1_eq}
\|\nabla_{\bx} L(\bx, \btheta)-\nabla_{\bx} L(\bm{\bx}', \btheta')\|\leq M\|\bx-\bx'\|+M\|\btheta-\btheta'\|. 
\end{split}
\end{equation}
\end{assump}

\begin{assump}[Dissipativity]
\label{ass3}
 There exist constants $m>0, b\geq 0$, such that for any $\bx \in \mX$ and $\btheta \in \bTheta$, 
\label{ass_dissipative}
\begin{equation}
\label{eq:01}
\langle \nabla_{\bx} L(\bx, \btheta), \bx\rangle\leq b-m\|\bx\|^2.
\end{equation}
\end{assump}
This assumption has been widely used in proving the geometric ergodicity of dynamical systems \citep{mattingly02, Maxim17, Xu18}. It ensures a particle to move towards the origin regardless of the starting position.

\begin{assump}[Gradient noise] 
\label{ass4}
The stochastic noise follows
\begin{equation*}
\E[\nabla_{\bx}\widetilde L(\bx_{k},
 \btheta_{k})-\nabla_{\bx} L(\bx_{k}, \btheta_{k})]=0.
\end{equation*}
There exists some constants $M, B>0$ such that the second moment of the noise is bounded by
\begin{equation*} 
\E [\|\nabla_{\bx}\widetilde L(\bx_{k},
 \btheta_{k})-\nabla_{\bx} L(\bx_{k}, \btheta_{k})\|^2]\leq M^2 \|\bx\|^2+B^2. 
\end{equation*}

\end{assump}

 
 



\begin{lemma}[Stability]
\label{convex}
The mean field function $h(\btheta)$ satisfies that $\forall \btheta \in \bTheta$, $\langle h(\btheta) , \btheta -\btheta_{\star}\rangle \leq  -\|\btheta - \btheta_{\star}\|^2$. The mean field system $\dot{\btheta}=-h(\btheta)$ is globally asymptotically stable and $\btheta_{\star}$ is the globally asymptotically stable equilibrium.
\end{lemma}

\begin{proof}

Given the random field $H(\btheta, \bx) = \btheta_{ \mathcal{I}(\bx)}^{\zeta}\bm{1}_{\mathcal{I}(\bx)} - \btheta$, the mean field function $h(\btheta)$ under probability $\varpi_{\bm{\theta}}(\bx)$ follows
\begin{equation*}
\small
    h(\btheta)=\int H(\btheta, \bx) \varpi_{\bm{\theta}}(d\bx)=\int \left(\btheta_{\mathcal{I}(\bx)}^{\zeta}\bm{1}_{\mathcal{I}(\bx)} - \btheta\right) \dfrac{\pi(\bx)}{\btheta_{\mathcal{I}(\bx)}^{\zeta}} d\bx=\int \bm{1}_{\mathcal{I}(\bx)} \pi(\bx)d\bx-\btheta=\btheta_{\star}-\btheta.
\end{equation*}

Thus,
\begin{equation*}
    \langle h(\btheta), \btheta -\btheta_{\star}\rangle = -\|\btheta - \btheta_{\star}\|^2\leq  -\|\btheta - \btheta_{\star}\|^2.
\end{equation*}

Consider a positive definite Lyapunov function $L(\btheta)=\frac{1}{2}\left(\btheta_{\star}-\btheta\right)^2$ for the mean field system $\dot{\btheta}=-\left(\btheta_{\star}-\btheta\right)$. It is clear that $\dot{L}=\frac{\partial L(\btheta)}{\partial \btheta} \dot{\btheta}=-\left(\btheta_{\star}-\btheta\right)^2<0$ for $\forall \btheta \neq \btheta_{\star}$. This shows that the mean field system is globally asymptotically stable and $\btheta_{\star}$ is the globally asymptotically stable equilibrium.
\end{proof}


The following lemma is a restatement of Lemma 1 in \citet{deng2019}.
\begin{lemma}[Uniform $L^2$ bounds]
\label{lemma:1}
Suppose Assumptions \ref{ass1}-\ref{ass4} holds.  Given a small enough learning rate
 $\ 0<\epsilon<\operatorname{Re}(\tfrac{m-\sqrt{m^2-3M^2}}{3M^2})\wedge 1$,  then 
$\sup_{k\geq 1} \E[\|\bm{\xeta}_{k}\|^2] \leq \E [\|\bm{\xeta}_0\|^2]+ \tfrac{1}{2m}(2b+3 B^2+2\tau d)$.
\end{lemma}

\begin{lemma}[Solution of Poisson equation]
\label{lyapunov}
There is a solution $\mu_{\btheta}(\cdot)$ on $\bX$ to the Poisson equation 
\begin{equation}
    \label{poisson_eqn}
    \mu_{\btheta}(\bm{x})-\mathrm{\Pi}_{\bm{\theta}}\mu_{\bm{\theta}}(\bm{x})=H(\bm{\theta}, \bm{x})-h(\bm{\theta}).
\end{equation}
such that for all $\bm{\theta}, \bm{\theta}'\in \bm{\bTheta}$ and a function  $V(\bx)=1+\|\bx\|^2$, there exists a constant $C$ such that
\begin{equation}
\begin{split}
\label{poisson_reg}
\E[\|\mathrm{\Pi}_{\bm{\theta}}\mu_{\btheta}(\bx)\|]&\leq C,\\
\E[\|\mathrm{\Pi}_{\bm{\theta}}\mu_{\bm{\theta}}(\bx)-\mathrm{\Pi}_{\bm{\theta}'}\mu_{\bm{\theta'}}(\bx)\|]&\leq C\|\bm{\theta}-\bm{\theta}'\|.\\
\end{split}
\end{equation}
\end{lemma}

\begin{proof}
According to Assumption 12 in \citet{VollmerZW2016}, we can easily show that A.7 and A.8 in \citet{VollmerZW2016} is satisfied given a Lyapunov function $V(\bx)=1+\|\bx\|^2$, the dissipitivity condition (\ref{ass3}) and the gradient noise condition (\ref{ass4})\footnote{Adaptive algorithms only require 1st-order smoothness of the solution of Poisson equation instead of 4th-order smoothness as used in proving the ergodicity average \citep{mattingly10, VollmerZW2016}. Therefore, as shown in Lemma.15 \citep{VollmerZW2016}, a much weaker bound such as (\ref{ass4}) can be applied. Further improvement on the results of \citet{Pardoux01} goes beyond of this paper.}. In what follows, Theorem 13 (\citet{Pardoux01}) in \citet{VollmerZW2016} holds, which shows that for any function $H(\btheta, \bx)\lesssim V(\bx)$, we have $\mu_{\btheta}\lesssim V(\bx)$ and $\nabla \mu_{\btheta}\lesssim V(\bx)$. This implies that there exists a constant $\overline{C}$ such that 
\begin{equation}
\begin{split}
\label{first_reg_poisson}
\|\mu_{\btheta}(\bx)\|&\leq \overline{C} V(\bx),\\
\|\mu_{\bm{\theta}}(\bx)-\mu_{\bm{\theta'}}(\bx)\|&\leq \overline{C}\|\bm{\theta}-\bm{\theta}'\| V(\bx).\\
\end{split}
\end{equation}

Together with the triangle inequality, it follows that
\begin{equation*}
    \|\mathrm{\Pi}_{\bm{\theta}}\mu_{\bm{\theta}}(\bx)-\mathrm{\Pi}_{\bm{\theta}'}\mu_{\bm{\theta'}}(\bx)\|\leq \|\mathrm{\Pi}_{\bm{\theta}}\mu_{\bm{\theta}}(\bx)-\mathrm{\Pi}_{\bm{\theta}}\mu_{\bm{\theta'}}(\bx)\|+\|\mathrm{\Pi}_{\bm{\theta}}\mu_{\bm{\theta'}}(\bx)-\mathrm{\Pi}_{\bm{\theta}'}\mu_{\bm{\theta'}}(\bx)\|.
\end{equation*}

Combining Lemma.\ref{lemma:1} to ensure the bounded Lyapunov function  $V(\bx)$ in expectation, we can obtain the desired result
\begin{equation*}
\begin{split}
\E[\|\mathrm{\Pi}_{\bm{\theta}}\mu_{\btheta}(\bx)\|]&\leq C,\\
\E[\|\mathrm{\Pi}_{\bm{\theta}}\mu_{\bm{\theta}}(\bx)-\mathrm{\Pi}_{\bm{\theta}'}\mu_{\bm{\theta'}}(\bx)\|]&\leq C \|\bm{\theta}-\bm{\theta}'\|.\\
\end{split}
\end{equation*}
by applying (\ref{first_reg_poisson}) and the smoothness condition in \ref{ass2} for some constant $C$.
\end{proof}

Now we are ready to present our first main result, where the technique lemmas are shown in $\S$\ref{technique}.

\begin{theorem}[$L^2$ convergence rate]
\label{latent_converge}
Assume Assumptions $\ref{ass1}$-$\ref{ass4}$ hold. For a large enough $k_0$, small enough learning rates $\{\epsilon_k\}_{k=1}^{\infty}$ and step sizes $\{\omega_k\}_{k=1}^{\infty}$, there exists a globally asymptotically equilibrium $\btheta_{\star}$ such that
\begin{equation*}
    \E\left[\|\bm{\theta}_{k}-\bm{\theta}_{\star}\|^2\right]=\mathcal{O}( \omega_{k})+Q\sup_{n\geq k_0}\E[\|\delta(\btheta_{n}, \bx_{n+1})\|].
\end{equation*}
\end{theorem}
\begin{proof}

Consider the following iterates 
\begin{equation*}
    \bm{\theta}_{k+1}=\bm{\theta}_{k}+\omega_{k+1} \left(\widetilde H(\bm{\theta}_{k}, \bm{x}_{k+1})+\omega_{k+1} \rho(\btheta_k, \bx_{k+1})\right).
\end{equation*}

Denote $\bm{T}_{k}=\bm{\theta}_{k}-\bm{\theta}_{\star}$. By subtracting $\btheta_{\star}$ on both sides and taking the square and $L_2$ norm,  we have
\begin{equation*}
\begin{split}
    \|\bT_{k+1}^2\|&=\|\bT_k^2\| +\omega_{k+1}^2 \|\widetilde H(\btheta_k, \bx_{k+1}) + \omega_{k+1}\rho(\btheta_k, \bx_{k+1})\|^2\\
    &\ \ \ +2\omega_{k+1}\underbrace{\langle \bT_k, \widetilde H(\btheta_k, \bx_{k+1})+\omega_{k+1}\rho(\btheta_k, \bx_{k+1})\rangle}_{\text{D}}.
\end{split}
\end{equation*}

First, using Lemma.\ref{convex_property} in $\S$\ref{technique}, there exists a constant $G=9 Q^2(1+Q^2)$ such that
\begin{equation}
\label{first_term}
    \|\widetilde H(\btheta_k, \bx_{k+1}) + \omega_{k+1}\rho(\btheta_k, \bx_{k+1})\|^2 \leq G (1+\|\bT_k\|^2).
\end{equation}

Next, according to the definition of $\widetilde H(\btheta_k, \bx_{k+1})$ in (\ref{tildeh}) and the Poisson equation (\ref{poisson_eqn}), we have
\begin{equation*}
\begin{split}
   \text{D}&=\langle \bT_k,  H(\btheta_k, \bx_{k+1})+\delta(\btheta_{k}, \bx_{k+1})+\omega_{k+1}\rho(\btheta_k, \bx_{k+1}) \rangle\\
   &=\langle \bT_k,  h(\btheta_k)+\mu_{\btheta_k}(\bm{x}_{k+1})-\mathrm{\Pi}_{\bm{\theta}_k}\mu_{\bm{\theta}_k}(\bm{x}_{k+1})+\delta(\btheta_{k}, \bx_{k+1})+\omega_{k+1}\rho(\btheta_k, \bx_{k+1}) \rangle\\
   &=\underbrace{\langle \bT_k,  h(\btheta_k)\rangle}_{\text{D}_{1}} +\underbrace{\langle\bT_k, \mu_{\btheta_k}(\bm{x}_{k+1})-\mathrm{\Pi}_{\bm{\theta}_k}\mu_{\bm{\theta}_k}(\bm{x}_{k+1})\rangle}_{\text{D}_{2}}+\underbrace{\langle \bT_k, \delta(\btheta_{k}, \bx_{k+1})+\omega_{k+1}\rho(\btheta_k, \bx_{k+1})\rangle}_{{\text{D}_{3}}}.
\end{split}
\end{equation*}

Using the stability property of the equilibrium in Lemma \ref{convex}, we have 
\begin{align*}
\langle \bm{T}_{k}, h(\bm{\theta}_{k})\rangle &\leq - \|\bm{T}_{k}\|^2. \tag{$\text{D}_1$}
\end{align*}
To deal with the error $\text{D}_2$, we make the following decomposition 
\begin{equation*}
\begin{split}
\text{D} &=\underbrace{\langle \bT_k, \mu_{\bm{\theta}_{k}}(\bm{\xeta}_{k+1})-\mathrm{\Pi}_{\bm{\theta}_{k}}\mu_{\bm{\theta}_{k}}(\bm{\bx}_{k})\rangle}_{\text{D}_{21}} \\
&+ \underbrace{\langle \bT_k,\mathrm{\Pi}_{\bm{\theta}_{k}}\mu_{\bm{\theta}_{k}}(\bm{x}_{k})- \mathrm{\Pi}_{\bm{\theta}_{k-1}}\mu_{\bm{\theta}_{k-1}}(\bm{x}_{k})\rangle}_{\text{D}_{22}}
+ \underbrace{\langle \bT_k,\mathrm{\Pi}_{\bm{\theta}_{k-1}}\mu_{\bm{\theta}_{k-1}}(\bm{x}_{k})- \mathrm{\Pi}_{\bm{\theta}_{k}}\mu_{\bm{\theta}_{k}}(\bm{\xeta}_{k+1})\rangle}_{\text{D}_{23}}.\\
\end{split}
\end{equation*}

(i) From the Markov property, $\mu_{\bm{\theta}_{k}}(\bm{\xeta}_{k+1})-\mathrm{\Pi}_{\bm{\theta}_{k}}\mu_{\bm{\theta}_{k}}(\bm{x}_{k})$ forms a martingale difference sequence 
$$\E\left[\langle \bT_k, \mu_{\bm{\theta}_{k}}(\bm{\xeta}_{k+1})-\mathrm{\Pi}_{\bm{\theta}_{k}}\mu_{\bm{\theta}_{k}}(\bm{x}_{k})\rangle |\mathcal{F}_{k}\right]=0. \eqno{(\text{D}_{21})}$$

(ii) By the regularity of the solution of Poisson equation in (\ref{poisson_reg}) and Lemma.\ref{theta_lip} in $\S$\ref{technique}, it leads to
\begin{equation}
\label{theta_delta}
\E[\|\mathrm{\Pi}_{\bm{\theta}_{k}}\mu_{\bm{\theta}_{k}}(\bm{x}_{k})- \mathrm{\Pi}_{\bm{\theta}_{k-1}}
 \mu_{\bm{\theta}_{k-1}}(\bm{x}_{k})\|]\leq C \|\btheta_k-\btheta_{k-1}\|\leq 2Q C\omega_k.
\end{equation}
Using Cauchy–Schwarz inequality, (\ref{theta_delta}) and the compactness of $\btheta$ in Assumption \ref{ass2a}, it follows that
$$\small{\E[\langle\bm{T}_{k},\mathrm{\Pi}_{\bm{\theta}_{k}}\mu_{\bm{\theta}_{k}}(\bm{x}_{k})- \mathrm{\Pi}_{\bm{\theta}_{k-1}}\mu_{\bm{\theta}_{k-1}}(\bm{x}_{k})\rangle]\leq \E[\|\bT_k\|]\cdot 2Q C\omega_k\leq 4Q^2 C\omega_{k}\leq 5Q^2 C\omega_{k+1}}   \eqno{(\text{D}_{22})},$$
where the last inequality follows from the step size assumption \ref{ass1} and holds given a large enough $k$.

(iii) \begin{equation*}
\begin{split}
\small
\langle \bm{T}_{k},\mathrm{\Pi}_{\bm{\theta}_{k-1}}\mu_{\bm{\theta}_{k-1}}(\bm{x}_{k})- \mathrm{\Pi}_{\bm{\theta}_{k}}\mu_{\bm{\theta}_{k}}(\bm{\xeta}_{k+1})\rangle
&=\left(\langle \bm{T}_{k}, \mathrm{\Pi}_{\bm{\theta}_{k-1}}\mu_{\bm{\theta}_{k-1}}(\bm{x}_{k}) \rangle- \langle \bm{T}_{k+1}, \mathrm{\Pi}_{\bm{\theta}_{k}}\mu_{\bm{\theta}_{k}}(\bm{\xeta}_{k+1})\rangle\right)\\
&\ \ \ +\left(\langle \bm{T}_{k+1}, \mathrm{\Pi}_{\bm{\theta}_{k}}\mu_{\bm{\theta}_{k}}(\bm{\xeta}_{k+1})\rangle-\langle \bm{T}_{k}, \mathrm{\Pi}_{\bm{\theta}_{k}}\mu_{\bm{\theta}_{k}}(\bm{\xeta}_{k+1})\rangle\right)\\
&={({z}_{k}-{z}_{k+1})}+{\langle \bm{T}_{k+1}-\bm{T}_{k}, \mathrm{\Pi}_{\bm{\theta}_{k}}\mu_{\bm{\theta}_{k}}(\bm{\xeta}_{k+1})\rangle},\\
\end{split}
\end{equation*}
where ${z}_{k}=\langle \bm{T}_{k}, \mathrm{\Pi}_{\bm{\theta}_{k-1}}\mu_{\bm{\theta}_{k-1}}(\bm{x}_{k})\rangle$. Using the regularity assumption in (\ref{poisson_reg}) and Lemma.\ref{theta_lip}, we have
$$\E\langle \bm{T}_{k+1}-\bm{T}_{k}, \mathrm{\Pi}_{\bm{\theta}_{k}}\mu_{\bm{\theta}_{k}}(\bm{\xeta}_{k+1})\rangle\leq   \E[\|\bm{\theta}_{k+1}-\bm{\theta}_{k}\|] \cdot \E[\|\mathrm{\Pi}_{\bm{\theta}_{k}}\mu_{\bm{\theta}_{k}}(\bm{\xeta}_{k+1})\|] \leq 2Q C \omega_{k+1}.\eqno{(\text{D}_{23})}$$

Furthermore, denote $\E[\|\delta(\btheta_{k}, \bx_{k+1})\|]$ by $\Delta_k$. Since $\rho(\btheta_k, \bx_{k+1})$ is compact, it follows that
\begin{equation*}
    \E[\|\delta(\btheta_{k}, \bx_{k+1})+\omega_{k+1}\rho(\btheta_k, \bx_{k+1}))\|]\leq \Delta_k + \omega_{k+1} Q.
\end{equation*}
Applying Cauchy–Schwarz inequality gives
$${\E[\langle \bT_k, \delta(\btheta_{k}, \bx_{k+1})+\omega_{k+1}\rho(\btheta_k, \bx_{k+1}))]\leq 2Q(\Delta_k+\omega_{k+1} Q)} \eqno{(\text{D}_{3})}$$

Finally, adding (\ref{first_term}), $\text{D}_1$, $\text{D}_{2}$ and $\text{D}_3$ together, it follows that for a constant 
     $$C_0 = G+5Q^2C+2QC+2Q^2,$$
we have
\begin{equation}
\begin{split}
\label{key_eqn}
\E\left[\|\bm{T}_{k+1}\|^2\right]&\leq (1-2\omega_{k+1}+G\omega^2_{k+1} )\E\left[\|\bm{T}_{k}\|^2\right]+C_0\omega^2_{k+1} +2Q\Delta_k\omega_{k+1} +2\E[z_{k}-z_{k+1}]\omega_{k+1}.
\end{split}
\end{equation}
Moreover, from (\ref{compactness}) and (\ref{poisson_reg}), $\E[|z_{k}|]$ is upper bounded by
\begin{equation}
\begin{split}
\label{condition:z}
\E[|z_{k}|]=\E[\langle \bm{T}_{k}, \mathrm{\Pi}_{\bm{\theta}_{k-1}}\mu_{\bm{\theta}_{k-1}}(\bm{x}_{k})\rangle]\leq \E[\|\bT_k\|]\E[\|\mathrm{\Pi}_{\bm{\theta}_{k-1}}\mu_{\bm{\theta}_{k-1}}(\bm{x}_{k})\|]\leq 2QC.
\end{split}
\end{equation}

According to Lemma $\ref{lemma:4}$ in $\S$\ref{technique}, we can choose $\lambda_0$ and $k_0$ such that 
\begin{align*}
\E[\|\bm{T}_{k_0}\|^2]\leq \psi_{k_0}=\lambda_0 \omega_{k_0}+Q\sup_{n\geq k_0}\Delta_{n},
\end{align*}
which satisfies the conditions ($\ref{lemma:3-a}$) and ($\ref{lemma:3-b}$) of Lemma $\ref{lemma:3-all}$ in $\S$\ref{technique}. Applying Lemma $\ref{lemma:3-all}$ leads to
\begin{equation}
\begin{split}
\label{eqn:9}
\E\left[\|\bm{T}_{k}\|^2\right]\leq \psi_{k}+\E\left[\sum_{j=k_0+1}^{k}\Lambda_j^k \left(z_{j-1}-z_{j}\right)\right],
\end{split}
\end{equation}
where $\psi_{k}=\lambda_0 \omega_{k}+Q\sup_{n\geq k_0}\Delta_{n}$ for all $k>k_0$. Based on ($\ref{condition:z}$) and the increasing condition of $\Lambda_{j}^k$ in Lemma $\ref{lemma:2}$ in $\S$\ref{technique}, we have
\begin{equation}
\small
\begin{split}
\label{eqn:10}
&\E\left[\left|\sum_{j=k_0+1}^{k} \Lambda_j^k\left(z_{j-1}-z_{j}\right)\right|\right]
=\E\left[\left|\sum_{j=k_0+1}^{k-1}(\Lambda_{j+1}^k-\Lambda_j^k)z_j-2\omega_{k}z_{k}+\Lambda_{k_0+1}^k z_{k_0}\right|\right]\\
\leq& \sum_{j=k_0+1}^{k-1}2(\Lambda_{j+1}^k-\Lambda_j^k)QC+\E[|2\omega_{k} z_{k}|]+2\Lambda_k^k QC\\
\leq& 2(\Lambda_k^k-\Lambda_{k_0}^k)QC+2\Lambda_k^k QC+2\Lambda_k^k QC\\
\leq& 6\Lambda_k^k QC.
\end{split}
\end{equation}

Therefore, given $\psi_{k}=\lambda_0 \omega_{k}+Q\sup_{n\geq k_0}\Delta_{n}$ that satisfies the conditions ($\ref{lemma:3-a}$), ($\ref{lemma:3-b}$) of Lemma $\ref{lemma:3-all}$, for any $k>k_0$, from ($\ref{eqn:9}$) and ($\ref{eqn:10}$), we have
\begin{equation*}
\E[\|\bm{T}_{k}\|^2]\leq \psi_{k}+6\Lambda_k^k QC=\left(\lambda_0+12QC\right)\omega_{k}+Q\sup_{n\geq k_0}\Delta_{n}=\lambda \omega_{k}+Q\sup_{n\geq k_0}\Delta_{n},
\end{equation*}
where $\lambda=\lambda_0+12QC$, $\lambda_0=\frac{2GQ\sup_{n\geq k_0} \Delta_n + 2C_0}{C_1}$, $\small{C_1=\lim \inf 2  \dfrac{\omega_{k}}{\omega_{k+1}}+\dfrac{\omega_{k+1}-\omega_{k}}{\omega^2_{k+1}}>0}$, $C_0=G+5Q^2C+2QC+2Q^2$ and $G=9 Q^2(1+Q^2)$.
\end{proof}

\subsection{Ergodicity and weighted averaging estimators}

Our interest is to analyze the deviation between the weighted averaging estimator $\frac{1}{k}\sum_{i=1}^k\btheta_{i, \mathcal{I}(\bx_i)}^{\zeta} f(\bx_i)$ and posterior average $\int f(\bx)\pi(d\bx)$ for a test function $f$. To obtain the desired error analysis, we first need to study the convergence of the empirical mean $\frac{1}{k}\sum_{i=1}^k f(\bx_i)$ to the posterior average $\bar f=\int f(\bx)\varpi_{\btheta_{\star}}(d\bx)$. The key tool for ergodic theory is still the Poisson equation to characterize the fluctuations between $f(\bx)$ and $\bar f$, which is defined as follows:
\begin{equation}
    \mathcal{L}\phi(\bx)=f(\bx)-\bar f,
\end{equation}
where $\phi(\bx)$ is the solution of the Poisson equation, and $\mathcal{L}$ is the infinitesimal generator of the Langevin diffusion 
\begin{equation*}
    \mathcal{L}\phi:=\nabla \phi \nabla L(\cdot, \btheta_{\star})+\tau\nabla^2\phi.
\end{equation*}

By imposing regularity conditions on the function $\phi(\bx)$, we can control the perturbations of $\frac{1}{k}\sum_{i=1}^k f(\bx_i)-\bar f$. Now, we present the first result, which is majorly adapted from Theorem 2 \citep{Chen15} with a fixed learning rate $\epsilon$. Similar conclusions have also been achieved in \citet{Teh16, VollmerZW2016, Dalalyan18}.
\begin{lemma}[Convergence of the Averaging Estimators]
\label{avg_converge}
Assume Assumptions $\ref{ass1}$-$\ref{ass4}$ hold. Given a sufficiently smooth function $\phi(\bx)$ and a function $\mathcal{V}(\bx)$, such that $\|D^k \phi\|\lesssim \mathcal{V}^{p_k}(\bx)$ and $p_k>0$ for $k\in\{0,1,2,3\}$. In addition, $\mathcal{V}^p$ has a bounded expectation: $\sup_{\bx} \E[\mathcal{V}^p(\bx)]<\infty$ and $\mathcal{V}$ is smooth, i.e. $\sup_{s\in\{0, 1\}} \mathcal{V}^p(s\bx+(1-s)\by)\lesssim \mathcal{V}^p(\bx)+\mathcal{V}^p(\by)$ for all $\bx,\by\in\bX$ and $p\leq 2\max_k\{p_k\}$. For any integrable function $f^2$, we have
\begin{equation*}
\begin{split}
    \left|\E\left[\frac{\sum_{i=1}^k f(\bx_i)}{k}\right]-\int f(\bx)\varpi_{\bm{\theta}_{\star}}(d\bx)\right|&= \mathcal{O}\left(\frac{1}{k\epsilon}+\epsilon+\sqrt{\frac{\sum_{i=1}^k \omega_k}{k}}+\sup_{n\geq k_0}\E[\|\delta(\btheta_{n}, \bx_{n+1})\|]^{0.5}\right). \\
\end{split}
\end{equation*}
\end{lemma}

\begin{proof}

To study the ergodic average, we can view the adaptive algorithm as a standard sampling algorithm with fixed latent variable $\btheta_{\star}$ and biased gradients, where the bias term is denoted by $$\bbeta_k=\nabla_{\bx} L(\bx, \btheta_k)-\nabla_{\bx}  L(\bx, \btheta_{\star}).$$

According to the smoothness assumption \ref{ass2} and Jensen's inequality, we have
\begin{equation}
\label{latent_bias}
\small
    \|\E[\bbeta_k]\|\leq \E[\|\bbeta_k\|]=\E[\|\nabla_{\bx} L(\bx, \btheta_k)-\nabla_{\bx}  L(\bx, \btheta_{\star})\|] \leq M\E[\|\btheta_k-\btheta_{\star}\|] \leq M\sqrt{\E[\|\btheta_k-\btheta_{\star}\|^2]}.
\end{equation}

As a result, we can reformulate the original adaptive algorithm as
\begin{equation*}
\begin{split}
    \bm{x}_{k+1}&=\bx_k- \epsilon_k\nabla_{\bx} \widetilde L(\bx_k, \btheta_k)+\mathcal{N}({0, 2\epsilon_k \tau\bm{I}})\\
    &=\bx_k- \epsilon_k\left(\nabla_{\bx} \widetilde L(\bx_k, \btheta_{\star})+\bbeta_k\right)+\mathcal{N}({0, 2\epsilon_k \tau\bm{I}}).
\end{split}
\end{equation*}
The ergodic average based on biased gradients and a fixed learning rate is a direct result of Theorem 2 \citep{Chen15}. Given regularity conditions on the solution of the Poisson equation, (\ref{latent_bias}) and Theorem \ref{latent_converge}, we know that 
\begin{equation*}
\small
\begin{split}
    \left|\E\left[\frac{\sum_{i=1}^k f(\bx_i)}{k}\right]-\int f(\bx)\varpi_{\bm{\theta}_{\star}}(d\bx)\right|&\leq \mathcal{O}\left(\frac{1}{k\epsilon}+\epsilon+\frac{\sum_{i=1}^k \|\E[\bbeta_k]\|}{k}\right)\\
    &\lesssim \mathcal{O}\left(\frac{1}{k\epsilon}+\epsilon+\frac{\sum_{i=1}^k \left(\omega_k+\sup_{n\geq k_0}\E[\|\delta(\btheta_{n}, \bx_{n+1})\|]\right)^{0.5}}{k}\right), \\
    &\leq \mathcal{O}\left(\frac{1}{k\epsilon}+\epsilon+\sqrt{\frac{\sum_{i=1}^k \omega_k}{k}}+\sup_{n\geq k_0}\E[\|\delta(\btheta_{n}, \bx_{n+1})\|]^{0.5}\right)
\end{split}
\end{equation*}
where the last inequality follows from $\small{(\omega_k+\Delta)^{0.5}\leq \omega_k^{0.5}+\Delta^{0.5}}$ and $\small{\sum_{i=1}^k \omega_i^{0.5}\leq \sqrt{k\sum_{i=1}^k \omega_i}}$.
\end{proof}


Now we are ready to show the convergence of the weighted averaging estimator $\frac{1}{k}\sum_{i=1}^k\btheta_{i, \mathcal{I}(\bx_i)}^{\zeta} f(\bx_i)$ to the posterior average $\int f(\bx)\pi(d\bx)$.
\begin{theorem}[Convergence of the Weighted Averaging Estimators] Assume Assumptions $\ref{ass1}$-$\ref{ass4}$ hold. Given a sufficiently smooth function $\phi(\bx)$ and a function $\mathcal{V}(\bx)$, such that $\|D^k \phi\|\lesssim \mathcal{V}^{p_k}(\bx)$ and $p_k>0$ for $k\in\{0,1,2,3\}$. In addition, $\mathcal{V}^p$ has a bounded expectation: $\sup_{\bx} \E[\mathcal{V}^p(\bx)]<\infty$ and $\mathcal{V}$ is smooth, i.e. $\sup_{s\in\{0, 1\}} \mathcal{V}^p(s\bx+(1-s)\by)\lesssim \mathcal{V}^p(\bx)+\mathcal{V}^p(\by)$ for all $\bx,\by\in\bX$ and $p\leq 2\max_k\{p_k\}$. For any integrable function $f^2$, we have that 
\label{w_avg_converge}
\begin{equation*}
\begin{split}
    \left|\E\left[\frac{1}{k}\sum_{i=1}^k\btheta_{i, \mathcal{I}(\bx_i)}^{\zeta} f(\bx_i)\right]-\int f(\bx)\pi(d\bx)\right|&= \mathcal{O}\left(\frac{1}{k\epsilon}+\epsilon+\sqrt{\frac{\sum_{i=1}^k \omega_k}{k}}+\sup_{n\geq k_0}\E[\|\delta(\btheta_{n}, \bx_{n+1})\|]^{0.5}\right). \\
\end{split}
\end{equation*}
\end{theorem}

\begin{proof}

Applying triangle inequality and $|\E[x]|\leq \E[|x|]$, we have
\begin{equation*}
\small
    \begin{split}
        &\left|\E\left[\frac{1}{k}\sum_{i=1}^k\btheta_{i, \mathcal{I}(\bx_i)}^{\zeta} f(\bx_i)\right]-\int f(\bx)\pi(d\bx)\right|\\
        \leq &\underbrace{\E\left[\frac{1}{k}\sum_{i=1}^k\left|\btheta_{i, \mathcal{I}(\bx_i)}^{\zeta}-\btheta_{\star, \mathcal{I}(\bx_i)}^{\zeta}\right| \cdot |f(\bx_i)|\right]}_{\text{I}_1} +\underbrace{\left|\E\left[\frac{1}{k}\sum_{i=1}^k\btheta_{\star, \mathcal{I}(\bx_i)}^{\zeta} f(\bx_i)\right]-\int f(\bx)\pi(d\bx)\right|}_{\text{I}_2}.
    \end{split}
\end{equation*}

For the first term $\text{I}_1$, consider the Mean value theorem for $t(x)=x^{\zeta}$
\begin{equation}
    \begin{split}
    \label{mvt}
        |\btheta_{i, \mathcal{I}(\bx)}^{\zeta}(\mathbb{A})-\btheta_{\star, \mathcal{I}(\bx)}^{\zeta}|\leq |\btheta_{i, \mathcal{I}(\bx)}(\mathbb{A})-\btheta_{\star, \mathcal{I}(\bx)}| \cdot \widetilde\btheta^{\zeta}\lesssim |\btheta_{i, \mathcal{I}(\bx)}(\mathbb{A})-\btheta_{\star, \mathcal{I}(\bx)}|,
    \end{split}
\end{equation}
where the first inequality holds for any $\bx\in\bX$, any $i\in\{1,2,...,m\}$, any $\sigma$-algebra $\mathbb{A}$ for the stochastic variable $\btheta_{i, \mathcal{I}(\bx)}$ and some $\widetilde\btheta \leq  \btheta_{i, \mathcal{I}(\bx)}\vee \btheta_{\star, \mathcal{I}(\bx)}$; the last inequality follows because we only consider $\btheta$ in a compact set. By Cauchy-Schwarz inequality, (\ref{mvt}) and Theorem \ref{latent_converge}, it follows that
\begin{equation*}
\small
    \begin{split}
        \text{I}_1&\lesssim \sqrt{\E\left[\frac{\sum_{i=1}^k\left(\btheta_{i, \mathcal{I}(\bx_i)}-\btheta_{\star, \mathcal{I}(\bx_i)}\right)^2}{k} \right]\E\left[\sum_{i=1}^k\frac{f^2(\bx_i)}{k}\right]}\\
        &\lesssim \sqrt{\sum_{i=1}^k\dfrac{\E[\|\btheta_i-\btheta_{\star}\|^2]}{k}}\lesssim \sqrt{\frac{\sum_{i=1}^k\omega_{i}}{k}+\sup_{n\geq k_0}\E[\|\delta(\btheta_{n}, \bx_{n+1})\|]}\lesssim \sqrt{\frac{\sum_{i=1}^k\omega_{i}}{k}}+\sup_{n\geq k_0}\E[\|\delta(\btheta_{n}, \bx_{n+1})\|]^{0.5},
    \end{split}
\end{equation*}
where the second inequality holds because of the integrability of $f^2(\bx)$ and the last inequality follows from $\sqrt{x+y}\leq \sqrt{x}+\sqrt{y}$.

Before we study $\text{I}_2$, we first decompose $\int  f(\bx) \pi(d\bx)$ into $m$ disjoint regions to facilitate the analysis
\begin{equation}
\label{split_posterior}
\small
\begin{split}
      \int  f(\bx) \pi(d\bx)=\int_{\cup_{j=1}^m E_j}  f(\bx) \pi(d\bx)=\sum_{j=1}^m\btheta_{\star, j}^{\zeta}\int_{E_j}  f(\bx) \frac{\pi(d\bx)}{\btheta_{\star, j}^{\zeta}}=\sum_{j=1}^m\btheta_{\star, j}^{\zeta}\int_{E_j}  f(\bx) \varpi_{\bm{\theta}_{\star}}(d\bx).\\
\end{split}
\end{equation}

Plugging (\ref{split_posterior}) into the second term $\text{I}_2$, we have
\begin{equation}
\label{final_i2}
\small
    \begin{split}
        \text{I}_2&=\left|\E\left[\frac{1}{k}\sum_{i=1}^k\sum_{j=1}^m\btheta_{\star, j}^{\zeta} f(\bx_i)1_{\bx_i\in E_j}\right]-\int f(\bx)\pi(d\bx)\right|\\
        &=\left|\sum_{j=1}^m\btheta_{\star, j}^{\zeta}\E\left[\frac{1}{k}\sum_{i=1}^k f(\bx_i)1_{\bx_i\in E_j}\right]-\sum_{j=1}^m\btheta_{\star, j}^{\zeta}\int_{E_j}  f(\bx) \varpi_{\bm{\theta}_{\star}}(d\bx)\right|\\
        &\leq \sum_{j=1}^m\btheta_{\star, j}^{\zeta}\left|\E\left[\frac{1}{k}\sum_{i=1}^k f(\bx_i)1_{\bx_i\in E_j}\right]-\int_{E_j}  f(\bx) \varpi_{\bm{\theta}_{\star}}(d\bx)\right|.\\
    \end{split}
\end{equation}

Given any $j\in \{1,2,...,m\}$, applying the function $f(\bx)1_{\bx\in E_j}$ to Theorem \ref{avg_converge} leads to
\begin{equation}
\label{almost_i2}
\small
\begin{split}
      \left|\E\left[\frac{1}{k}\sum_{i=1}^k f(\bx_i)1_{\bx_i\in E_j}\right]-\int_{E_j}  f(\bx) \varpi_{\bm{\theta}_{\star}}(d\bx)\right|\leq \mathcal{O}\left(\frac{1}{k\epsilon}+\epsilon+\sqrt{\frac{\sum_{i=1}^k \omega_k}{k}}+\sup_{n\geq k_0}\E[\|\delta(\btheta_{n}, \bx_{n+1})\|]^{0.5}\right).\\
\end{split}
\end{equation}

Plugging (\ref{almost_i2}) into (\ref{final_i2}) and combining $\text{I}_1$, we have
\begin{equation}
\small
\begin{split}
      \left|\E\left[\frac{1}{k}\sum_{i=1}^k\btheta_{i, \mathcal{I}(\bx_i)}^{\zeta} f(\bx_i)\right]-\int f(\bx)\pi(d\bx)\right|\leq \mathcal{O}\left(\frac{1}{k\epsilon}+\epsilon+\sqrt{\frac{\sum_{i=1}^k \omega_k}{k}}+\sup_{n\geq k_0}\E[\|\delta(\btheta_{n}, \bx_{n+1})\|]^{0.5}\right).\\
\end{split}
\end{equation}


\end{proof}

\section{Technical Lemmas}
\label{technique}

\begin{lemma}
\label{convex_property}
Given $\sup\{\omega_k\}_{k=1}^{\infty}\leq 1$, there exists a constant $G=9 Q^2(1+Q^2)$ such that
\begin{equation} \label{bound2}
\| \widetilde H(\bm{\theta}_k, \bm{\xeta}_{k+1})+\omega_{k+1}\rho(\btheta_k, \bx_{k+1})\|^2 \leq G (1+\|\bm{\theta}_k-\bm{\theta}_*\|^2). 
\end{equation}
\end{lemma}
\begin{proof}

According to the compactness condition in Assumption \ref{ass2a}, we have
\begin{equation*}
\|H(\bm{\theta}_k, \bm{\xeta}_{k+1})\|^2\leq Q^2 (1+\|\bm{\theta}_k\|^2) = 
 Q^2 (1+\|\bm{\theta}_k-\bm{\theta}_*+\bm{\theta}_*\|^2)\leq Q^2 (1+2\|\bm{\theta}_k-\bm{\theta}_*\|^2+2Q^2).
\end{equation*}

Therefore, we can show that for a constant $G=9Q^2(1+Q^2)$
\begin{equation*}
\small
\begin{split}
    \|\widetilde H(\bm{\theta}, \bm{\xeta})+\omega_{k+1}\rho(\btheta_k, \bx_{k+1})\|^2 &\leq 3\|H(\bm{\theta}_k, \bm{\xeta}_{k+1})\|^2 + 3\|\bdelta(\theta_k)\|^2+3\omega_{k+1}^2 \|\rho(\btheta_k, \bx_{k+1})\|\\
    &\leq 3Q^2 (1+2\|\bm{\theta}_k-\bm{\theta}_*\|^2+2Q^2) + 6Q^2\\
    &\leq 3Q^2 (3+3Q^2+(3+3Q^2)\|\bm{\theta}_k-\bm{\theta}_*\|^2)\\
    &\leq G (1+\|\bm{\theta}_k-\bm{\theta}_*\|^2).
\end{split}
\end{equation*}
\end{proof}

\begin{lemma}
\label{theta_lip}Given $\sup\{\omega_k\}_{k=1}^{\infty}\leq 1$, we have that
\begin{equation}
\label{lip_theta}
    \|\btheta_{k}-\btheta_{k-1}\|\leq 2\omega_{k} Q
\end{equation}
\end{lemma}

\begin{proof}
Following the update $\btheta_k-\btheta_{k-1}=\omega_k \widetilde H(\bm{\theta}_{k-1}, \bm{x}_{k})+\omega_{k}^2 \brho_{k}$, we have that
$$\|\btheta_{k}-\btheta_{k-1}\|= \|\omega_k \widetilde H(\bm{\theta}_{k-1}, \bm{x}_{k})+\omega_{k}^2 \brho_{k}\|\leq \omega_k\| \widetilde H(\bm{\theta}_{k-1},\bm{x}_{k})\|+\omega_{k}^2\| \brho_{k}\|.$$
By the compactness condition in Assumption \ref{ass2a} and $\sup\{\omega_k\}_{k=1}^{\infty}\leq 1$, (\ref{lip_theta}) can be derived.
\end{proof}

\begin{lemma}
\label{lemma:4}
There exist constants $\lambda_0$ and $k_0$ such that $\forall \lambda\geq\lambda_0$ and $\forall k> k_0$, the sequence $\{\psi_{k}\}_{k=1}^{\infty}$, where $\psi_{k}=\lambda\omega_{k}+Q \sup_{n\geq k_0}\Delta_n$, satisfies
\begin{equation}
\begin{split}
\label{key_ieq}
\psi_{k+1}\geq& (1-2\omega_{k+1}+G\omega_{k+1}^2)\psi_{k}+C_0\omega_{k+1}^2  +2Q \Delta_k\omega_{k+1}.
\end{split}
\end{equation}
\begin{proof}
By replacing $\psi_{k}$ with $\lambda\omega_{k}+Q \sup_{n\geq k_0}\Delta_n$ in ($\ref{key_ieq}$), it suffices to show
\begin{equation*}
\small
\begin{split}
\label{lemma:loss_control}
\lambda \omega_{k+1}+Q \sup_{n\geq k_0}\Delta_n\geq& (1-2\omega_{k+1}+G\omega_{k+1}^2)\left(\lambda \omega_{k}+Q \sup_{n\geq k_0}\Delta_n\right)+C_0\omega_{k+1}^2 + 2Q\Delta_k\omega_{k+1}.
\end{split}
\end{equation*}

which is equivalent to proving
\begin{equation*}
\small
\begin{split}
&\lambda (\omega_{k+1}-\omega_k+2\omega_k\omega_{k+1}-G\omega_k\omega_{k+1}^2)\geq  Q\sup_{n\geq k_0}\Delta_n(-2\omega_{k+1}+G\omega_{k+1}^2 )+C_0\omega_{k+1}^2+ 2Q\Delta_k\omega_{k+1}.
\end{split}
\end{equation*}

Given the step size condition in ($\ref{a1}$), we have $\small{\omega_{k+1}-\omega_{k}+2 \omega_{k}\omega_{k+1} \geq C_1 \omega_{k+1}^2}$, where $\small{C_1=\lim \inf 2  \dfrac{\omega_{k}}{\omega_{k+1}}+\dfrac{\omega_{k+1}-\omega_{k}}{\omega^2_{k+1}}>0}$. Together with the fact that $-\sup_{n\geq k_0}\Delta_n\leq \Delta_k$, it suffices to prove
\begin{equation}
\begin{split}
\label{loss_control-2}
\lambda \left(C_1-G\omega_{k}\right)\omega^2_{k+1}\geq  \left(GQ \sup_{n\geq k_0}\Delta_n+C_0\right)\omega^2_{k+1}.
\end{split}
\end{equation}

It is clear that for a large enough $k_0$ and $\lambda_0$ such that $\omega_{k_0}\leq \frac{C_1}{2G}$, $\lambda_0=\frac{2GQ\sup_{n\geq k_0} \Delta_n + 2C_0}{C_1}$, the desired conclusion ($\ref{loss_control-2}$) holds for all such $k\geq k_0$ and $\lambda\geq \lambda_0$.
\end{proof}
\end{lemma}

\begin{lemma}
\label{lemma:3-all}
Let $\{\psi_{k}\}_{k> k_0}$ be a series that satisfies the following inequality for all $k> k_0$
\begin{equation}
\begin{split}
\label{lemma:3-a}
\psi_{k+1}\geq &\psi_{k}\left(1-2\omega_{k+1}+G\omega^2_{k+1}\right)+C_0\omega^2_{k+1} + 2Q \Delta_k\omega_{k+1},
\end{split}
\end{equation}
and assume there exists such $k_0$ that 
\begin{equation}
\begin{split}
\label{lemma:3-b}
\E\left[\|\bm{T}_{k_0}\|^2\right]\leq \psi_{k_0}.
\end{split}
\end{equation}
Then for all $k> k_0$, we have
\begin{equation}
\begin{split}
\label{result}
\E\left[\|\bm{T}_{k}\|^2\right]\leq \psi_{k}+\sum_{j=k_0+1}^{k}\Lambda_j^k (z_{j-1}-z_j).
\end{split}
\end{equation}
\end{lemma}

\begin{proof}
We prove by the induction method. Assuming (\ref{result}) is true and combining (\ref{key_eqn}), (\ref{lemma:3-a}) and Lemma.\ref{lemma:2}, we have that 
\begin{equation*}
\small
\begin{split}
    \E\left[\|\bm{T}_{k+1}\|^2\right]&\leq (1-2\omega_{k+1}+\omega^2_{k+1} G)(\psi_{k}+\sum_{j=k_0+1}^{k}\Lambda_j^k (z_{j-1}-z_j))\\
    &\ \ +\omega^2_{k+1} C_0+2Q \delta(\btheta_{k}, \bx_{k+1})\omega_{k+1}+2\omega_{k+1}\E[z_{k}-z_{k+1}]\\
    & \leq \psi_{k+1}+(1-2\omega_{k+1}+\omega^2_{k+1} G)\sum_{j=k_0+1}^{k}\Lambda_j^k (z_{j-1}-z_j)+2\omega_{k+1}\E[z_{k}-z_{k+1}]\\
    & \leq \psi_{k+1}+\sum_{j=k_0+1}^{k}\Lambda_j^{k+1} (z_{j-1}-z_j)+\Lambda_{k+1}^{k+1}\E[z_{k}-z_{k+1}]\\
    & \leq \psi_{k+1}+\sum_{j=k_0+1}^{k+1}\Lambda_j^{k+1} (z_{j-1}-z_j)\\
\end{split}
\end{equation*}
\end{proof}

The following lemma is a restatement of Lemma 25 (page 247) from \citet{Albert90}.
\begin{lemma}
\label{lemma:2}
Suppose $k_0$ is an integer satisfying
$\inf_{k> k_0} \dfrac{\omega_{k+1}-\omega_{k}}{\omega_{k}\omega_{k+1}}+2-G\omega_{k+1}>0$ 
for some constant $G$. 
Then for any $k>k_0$, the sequence $\{\Lambda_k^K\}_{k=k_0, \ldots, K}$ defined below is increasing and uppered bounded by $2\omega_{k}$
\begin{equation}  
\Lambda_k^K=\left\{  
             \begin{array}{lr}  
             2\omega_{k}\prod_{j=k}^{K-1}(1-2\omega_{j+1}+G\omega_{j+1}^2) & \text{if $k<K$},   \\  
              & \\
             2\omega_{k} &  \text{if $k=K$}.
             \end{array}  
\right.  
\end{equation} 
\end{lemma}

\bibliography{mybib}
\bibliographystyle{plainnat}


\maketitle

In this supplementary material, we review the related methodologies in $\S$\ref{review}, show the convergence in $\S$\ref{convergence} and prove the technical lemmas in $\S$\ref{technique}.
\section{Background on Stochastic Approximation and Poisson Equation}
\label{review}

\subsection{Robbins–Monro algorithm}
Robbins–Monro algorithm \citep{Robbins51} aims to solve the root finding problem. Given a random field of $H(\bm{\btheta}, \bm{\bx})$ with respect to $\bm{\bx}$, our goal is to find the equilibrium $\btheta$ for the mean field function $h(\btheta)$ such that
\begin{equation*}
\begin{split}
\label{sa00}
h(\btheta)&=\int H(\bm{\theta}, \bm{\bx})\varpi_{\bm{\theta}}(d\bm{\bx})=0,
\end{split}
\end{equation*}
where $\bx\in \bX \subset \mathbb{R}^d$ and $\btheta\in\bTheta \subset \mathbb{R}^{d_{\btheta}}$.
The algorithm is implemented as follows:
\begin{itemize}
\item[(1)] Sample $\bm{x}_{k+1}$ from the invariant distribution $\varpi_{\bm{\theta}_{k}}(\bm{x})$,

\item[(2)] Update $\bm{\theta}_{k+1}=\bm{\theta}_{k}+\omega_{k+1} H(\bm{\theta}_{k}, \bm{x}_{k+1}).$
\end{itemize}

\subsection{General stochastic approximation}
The stochastic approximation algorithm \citep{Albert90} is a generalization of the Robbins–Monro algorithm, which consists of the following steps:
\begin{itemize}
\item[(1)] Sample $\bm{x}_{k+1}$ from the transition kernel  $\Pi_{\bm{\theta_{k}}}(\bm{x}_{k}, \cdot)$, which admits $\varpi_{\bm{\theta}_{k}}(\bm{x})$ as
the invariant distribution,

\item[(2)] Update $\bm{\theta}_{k+1}=\bm{\theta}_{k}+\omega_{k+1} H(\bm{\theta}_{k}, \bm{x}_{k+1})+\omega_{k+1}^2 \rho(\bm{\theta}_{k}, \bm{x}_{k+1})$.
\end{itemize}

In contrast to the Robbins-Monro algorithm, stochastic approximation samples $\bx$ from a transition kernel $\Pi_{\bm{\theta_{k}}}(\cdot, \cdot)$ instead of a distribution $\varpi_{\bm{\theta}_{k}}(\cdot)$, which leads to a Markov state-dependent noise $H(\btheta_k, \bx_{k+1})-h(\btheta_k)$. In addition, it allows small oscillations $\rho$ without affecting the convergence. 

\subsection{Poisson equation}

The coupled process $\{(\bx_k, \btheta_k)\}_{k=1}^{\infty}$ forms a nonhomogeneous Markov chain. Let $\Pi_{\bm{\theta}}(\bm{x}, A)$ be the transition kernel for any Borel subset $A\subset \bX$ and let a function $\mu_{\btheta}(\cdot)$ on $\bX$ solve the following Poisson equation 
\begin{equation*}
    \mu_{\btheta}(\bm{x})-\mathrm{\Pi}_{\bm{\theta}}\mu_{\bm{\theta}}(\bm{x})=H(\bm{\theta}, \bm{x})-h(\bm{\theta}).
\end{equation*}
The solution of Poisson equation can be formulated as follows when the series converge.
\begin{equation*}
    \mu_{\btheta}(\bx):=\sum_k \Pi_{\btheta}^k (H(\btheta, \bx)-h(\btheta)).
\end{equation*}
By imposing regularity conditions on $\mu_{\btheta}(\cdot)$, we can control the perturbations over time from $\int H(\btheta_k, \bx)\Pi_{\btheta_k}(\bx_k, d\bx)$ to $h(\btheta)$ and guarantee the consistency of the estimator $\btheta$. In particular, \citet{Albert90} has reduced the study of individual algorithms to the verification of the following regularity assumption on $\mu_{\btheta}(\cdot)$ and bounded moment of certain degree on $V(\bx)$.

\textbf{Assumption}
There exist a function  $V: \mX \to [1,\infty)$, and a constant $C$ such that for all $\bm{\theta}, \bm{\theta}'\in \bm{\bTheta}$, we have
\begin{equation*}
\begin{split}
\|\mathrm{\Pi}_{\bm{\theta}}\mu_{\btheta}(\bx)\|&\leq C V(\bx),\ \|\mathrm{\Pi}_{\bm{\theta}}\mu_{\bm{\theta}}(\bx)-\mathrm{\Pi}_{\bm{\theta'}}\mu_{\bm{\theta'}}(\bx)\|\leq C\|\bm{\theta}-\bm{\theta}'\| V(\bx), \E[V(\bx)]\leq \infty.\\
\end{split}
\end{equation*}

Notably, only 1st-order smoothness is required for the convergence of the adaptive algorithms \citep{Albert90, andrieu06}, which is much weaker than the 4th-order smoothness used in the ergodicity theory \citep{mattingly10, VollmerZW2016}.


\section{Convergence Analysis}\label{convergence}

\subsection{Our algorithm}

Our algorithm falls into the class of the stochastic approximation algorithm, which follows
\begin{itemize}
\item[(1)] Sample $\bm{x}_{k+1}=\bx_k- \epsilon_k\nabla_{\bx} \widetilde L(\bx_k, \btheta_k)+\mathcal{N}({0, 2\epsilon_k \tau\bm{I}}), \ \ \ \ \ \ \ \ \ \ \ \ \ \ \ \ \ \ \ \ \ \ \ \ \ \ \ \ \ \ \ \ \ \ \ \ \ \ \ \ \ \ \ \ \ \ \ \ (\text{S}_1)$

\item[(2)] Update $\bm{\theta}_{k+1}=\bm{\theta}_{k}+\omega_{k+1} \widetilde H(\epsilon_k, \bm{\theta}_{k}, \bm{x}_{k+1})+\omega_{k+1}^2 \rho(\bm{\theta}_{k}, \bm{x}_{k+1}).\ \ \ \ \ \ \ \ \ \ \ \ \ \ \ \ \ \ \ \ \ \ \ \ \ \ \ \ \ \ \ \ \ \ \ \ \ \ \ \ \ \ \ \ \ \ \ \ (\text{S}_2)$
\end{itemize}
where $\epsilon_k$ is the learning rate, $\omega_k$ is the step size, $\nabla_{\bx} \widetilde L(\bx_k, \btheta_k)$ is the stochastic gradient for $\nabla_{\bx} L(\bx_k, \btheta_k)$. The stochastic random field $\widetilde H(\epsilon_k, \bm{\theta}_{k}, \bm{x}_{k+1})$ is an estimator of the random field $ H(\bm{\theta}_{k}, \bm{x}_{k+1})$ from discretization and mini-batch evaluations. Without loss of generality, we assume 
\begin{equation}
    \label{tildeh}
    \widetilde H(\epsilon_k, \bm{\theta}_{k}, \bm{x}_{k+1})= H(\bm{\theta}_{k}, \bm{x}_{k+1})+\delta(\epsilon_k, \btheta_{k}, \bx_{k+1}),
\end{equation}
where the bias $\delta(\epsilon_k, \btheta_{k}, \bx_{k+1})$ is a random vector generated from two aspects: the discretization error of SGLD ($\text{S}_1$) in approximating the underlying Langevin diffusion \citep{Issei14, Maxim17}; the estimation error from approximating $H(\bm{\theta}_{k}, \bm{x}_{k+1})$ using a randomized $\widetilde H(\epsilon_k, \bm{\theta}_{k}, \bm{x}_{k+1})$. Take an example for the latter, when $H(\bm{\theta}_{k}, \bm{x}_{k+1})=L^2(\bx_{k+1}, \btheta_k)$ and $\widetilde H(\epsilon_k, \bm{\theta}_{k}, \bm{x}_{k+1})=(L(\bx_{k+1}, \btheta_k)+\mathcal{N}(0,\sigma^2))^2$, it is clear that $\E[\widetilde H(\epsilon_k, \bm{\theta}_{k}, \bm{x}_{k+1})]=L^2(\bx_{k+1}, \btheta_k)+\sigma^2\neq H(\bm{\theta}_{k}, \bm{x}_{k+1})$, which induced a fixed bias term $\sigma^2$. As such, we know that the bias becomes smaller given a larger batch size and vanishes when the full dataset is used. 

\subsection{Convergence analysis of stochastic approximation}

The convergence analysis rests on the following assumptions:

\begin{assump}[Step size]
\label{ass1}
$\{\omega_{k}\}_{k\in \mathrm{N}}$ is a positive decreasing sequence of real numbers such that
\begin{equation} \label{a1}
\omega_{k}\rightarrow 0, \ \ \sum_{k=1}^{\infty} \omega_{k}=+\infty,\ \  \lim_{k\rightarrow \infty} \inf 2  \dfrac{\omega_{k}}{\omega_{k+1}}+\dfrac{\omega_{k+1}-\omega_{k}}{\omega^2_{k+1}}>0.
\end{equation}
According to \citet{Albert90}, we can choose $\omega_{k}:=\frac{A}{k^{\alpha}+B},\ \ \text{where} \ \alpha \in (0, 1]\ \text{and}\ A>\frac{\alpha}{2}.$

\end{assump}

\begin{assump}[Compactness] \label{ass2a} 
We study $\btheta, H(\btheta, \bx)$, $\widetilde H(\epsilon, \btheta, \bx)$, $\rho(\btheta, \bx)$ and $\delta(\btheta, \bx)$ in a compact space, and there exists a constant $Q>0$ such that for $\forall \btheta\in \bTheta$, $\bx \in \bX$ and $k \in \mathbb{N}$.
\begin{equation}
\label{compactness}
     \|\btheta\|\leq Q, \|\rho(\btheta, \bx)\|\leq Q, \|\delta(\btheta, \bx)\|\leq Q, \|H(\btheta, \bx)\|\leq Q,\|\widetilde H(\epsilon, \btheta, \bx)\|\leq Q.
\end{equation}
\end{assump}

\begin{assump}[Smoothness]
\label{ass2}
$L(\bm{\xeta}, \bm{\theta})$ is $M$-smooth, namely, for any $\bx, \bx'\in \mX$, $\bm{\theta}, \bm{\theta}'\in \bTheta$.
\begin{equation}
\begin{split}
\label{ass_2_1_eq}
\|\nabla_{\bx} L(\bx, \btheta)-\nabla_{\bx} L(\bm{\bx}', \btheta')\|\leq M\|\bx-\bx'\|+M\|\btheta-\btheta'\|. 
\end{split}
\end{equation}
\end{assump}

\begin{assump}[Dissipativity]
\label{ass3}
 There exist constants $m>0, b\geq 0$, such that for any $\bx \in \mX$ and $\btheta \in \bTheta$, 
\label{ass_dissipative}
\begin{equation}
\label{eq:01}
\langle \nabla_{\bx} L(\bx, \btheta), \bx\rangle\leq b-m\|\bx\|^2.
\end{equation}
\end{assump}
This assumption has been widely used in proving the geometric ergodicity of dynamical systems \citep{mattingly02, Maxim17, Xu18}. It ensures a particle to move towards the origin regardless of the starting position.

\begin{assump}[Gradient noise] 
\label{ass4}
The stochastic noise follows
\begin{equation*}
\E[\nabla_{\bx}\widetilde L(\bx_{k},
 \btheta_{k})-\nabla_{\bx} L(\bx_{k}, \btheta_{k})]=0.
\end{equation*}
There exists some constants $M, B>0$ such that the second moment of the noise is bounded by
\begin{equation*} 
\E [\|\nabla_{\bx}\widetilde L(\bx_{k},
 \btheta_{k})-\nabla_{\bx} L(\bx_{k}, \btheta_{k})\|^2]\leq M^2 \|\bx\|^2+B^2. 
\end{equation*}

\end{assump}

 
 



\begin{lemma}[Stability]
\label{convex}
The mean field function $h(\btheta)$ satisfies that $\forall \btheta \in \bTheta$, $\langle h(\btheta) , \btheta -\btheta_{\star}\rangle \leq  -\|\btheta - \btheta_{\star}\|^2$. The mean field system $\dot{\btheta}=-h(\btheta)$ is globally asymptotically stable and $\btheta_{\star}$ is the globally asymptotically stable equilibrium.
\end{lemma}

\begin{proof}

Given the random field $H(\btheta, \bx) = \btheta_{ \mathcal{I}(\bx)}^{\zeta}\bm{1}_{\mathcal{I}(\bx)} - \btheta$, the mean field function $h(\btheta)$ under probability $\varpi_{\bm{\theta}}(\bx)$ follows
\begin{equation*}
\small
    h(\btheta)=\int H(\btheta, \bx) \varpi_{\bm{\theta}}(d\bx)=\int \left(\btheta_{\mathcal{I}(\bx)}^{\zeta}\bm{1}_{\mathcal{I}(\bx)} - \btheta\right) \dfrac{\pi(\bx)}{\btheta_{\mathcal{I}(\bx)}^{\zeta}} d\bx=\int \bm{1}_{\mathcal{I}(\bx)} \pi(\bx)d\bx-\btheta=\btheta_{\star}-\btheta.
\end{equation*}

Thus,
\begin{equation*}
    \langle h(\btheta), \btheta -\btheta_{\star}\rangle = -\|\btheta - \btheta_{\star}\|^2\leq  -\|\btheta - \btheta_{\star}\|^2.
\end{equation*}

Consider a positive definite Lyapunov function $L(\btheta)=\frac{1}{2}\left(\btheta_{\star}-\btheta\right)^2$ for the mean field system $\dot{\btheta}=-\left(\btheta_{\star}-\btheta\right)$. Clearly, $\dot{L}=\frac{\partial L(\btheta)}{\partial \btheta} \dot{\btheta}=-\left(\btheta_{\star}-\btheta\right)^2<0$ for $\forall \btheta \neq \btheta_{\star}$. This shows that the mean field system is globally asymptotically stable and $\btheta_{\star}$ is the globally asymptotically stable equilibrium.
\end{proof}


The following lemma is a restatement of Lemma 1 in \citet{deng2019}.
\begin{lemma}[Uniform $L^2$ bounds]
\label{lemma:1}
Suppose Assumptions \ref{ass1}-\ref{ass4} holds.  Given a small enough learning rate
 $\ 0<\epsilon<\operatorname{Re}(\tfrac{m-\sqrt{m^2-3M^2}}{3M^2})\wedge 1$,  then 
$\sup_{k\geq 1} \E[\|\bm{\xeta}_{k}\|^2] \leq \E [\|\bm{\xeta}_0\|^2]+ \tfrac{1}{2m}(2b+3 B^2+2\tau d)$.
\end{lemma}

\begin{lemma}[Solution of Poisson equation]
\label{lyapunov}
There is a solution $\mu_{\btheta}(\cdot)$ on $\bX$ to the Poisson equation 
\begin{equation}
    \label{poisson_eqn}
    \mu_{\btheta}(\bm{x})-\mathrm{\Pi}_{\bm{\theta}}\mu_{\bm{\theta}}(\bm{x})=H(\bm{\theta}, \bm{x})-h(\bm{\theta}).
\end{equation}
such that for all $\bm{\theta}, \bm{\theta}'\in \bm{\bTheta}$ and a function  $V(\bx)=1+\|\bx\|^2$, there exists a constant $C$ such that
\begin{equation}
\begin{split}
\label{poisson_reg}
\E[\|\mathrm{\Pi}_{\bm{\theta}}\mu_{\btheta}(\bx)\|]&\leq C,\\
\E[\|\mathrm{\Pi}_{\bm{\theta}}\mu_{\bm{\theta}}(\bx)-\mathrm{\Pi}_{\bm{\theta}'}\mu_{\bm{\theta'}}(\bx)\|]&\leq C\|\bm{\theta}-\bm{\theta}'\|.\\
\end{split}
\end{equation}
\end{lemma}

\begin{proof}
According to Assumption 12 in \citet{VollmerZW2016}, we can easily show that A.7 and A.8 in \citet{VollmerZW2016} is satisfied given a Lyapunov function $V(\bx)=1+\|\bx\|^2$, the dissipitivity condition (\ref{ass3}) and the gradient noise condition (\ref{ass4})\footnote{Adaptive algorithms only require 1st-order smoothness of the solution of Poisson equation instead of 4th-order smoothness as used in proving the ergodicity average \citep{mattingly10, VollmerZW2016}. Therefore, as shown in Lemma.15 \citep{VollmerZW2016}, a much weaker bound such as (\ref{ass4}) can be applied. Further improvement on the results of \citet{Pardoux01} goes beyond of this paper.}. In what follows, Theorem 13 (\citet{Pardoux01}) in \citet{VollmerZW2016} holds, which shows that for any function $H(\btheta, \bx)\lesssim V(\bx)$, we have $\mu_{\btheta}\lesssim V(\bx)$ and $\nabla \mu_{\btheta}\lesssim V(\bx)$. This implies that there exists a constant $\overline{C}$ such that 
\begin{equation}
\begin{split}
\label{first_reg_poisson}
\|\mu_{\btheta}(\bx)\|&\leq \overline{C} V(\bx),\\
\|\mu_{\bm{\theta}}(\bx)-\mu_{\bm{\theta'}}(\bx)\|&\leq \overline{C}\|\bm{\theta}-\bm{\theta}'\| V(\bx).\\
\end{split}
\end{equation}

Together with the triangle inequality, it follows that
\begin{equation*}
    \|\mathrm{\Pi}_{\bm{\theta}}\mu_{\bm{\theta}}(\bx)-\mathrm{\Pi}_{\bm{\theta}'}\mu_{\bm{\theta'}}(\bx)\|\leq \|\mathrm{\Pi}_{\bm{\theta}}\mu_{\bm{\theta}}(\bx)-\mathrm{\Pi}_{\bm{\theta}}\mu_{\bm{\theta'}}(\bx)\|+\|\mathrm{\Pi}_{\bm{\theta}}\mu_{\bm{\theta'}}(\bx)-\mathrm{\Pi}_{\bm{\theta}'}\mu_{\bm{\theta'}}(\bx)\|.
\end{equation*}

Combining Lemma.\ref{lemma:1} to ensure the bounded Lyapunov function  $V(\bx)$ in expectation, we can obtain the desired result
\begin{equation*}
\begin{split}
\E[\|\mathrm{\Pi}_{\bm{\theta}}\mu_{\btheta}(\bx)\|]&\leq C,\\
\E[\|\mathrm{\Pi}_{\bm{\theta}}\mu_{\bm{\theta}}(\bx)-\mathrm{\Pi}_{\bm{\theta}'}\mu_{\bm{\theta'}}(\bx)\|]&\leq C \|\bm{\theta}-\bm{\theta}'\|.\\
\end{split}
\end{equation*}
by applying (\ref{first_reg_poisson}) and the smoothness condition in \ref{ass2} for some constant $C$.
\end{proof}

Now we are ready to present our first main result, where the technique lemmas are shown in $\S$\ref{technique}.

\begin{theorem}[$L^2$ convergence rate]
\label{latent_converge}
Assume Assumptions $\ref{ass1}$-$\ref{ass4}$ hold. For a large enough $k_0$, small enough learning rates $\{\epsilon_k\}_{k=1}^{\infty}$ and step sizes $\{\omega_k\}_{k=1}^{\infty}$, there exists a globally asymptotically equilibrium $\btheta_{\star}$ such that
\begin{equation*}
    \E\left[\|\bm{\theta}_{k}-\bm{\theta}_{\star}\|^2\right]=\mathcal{O}( \omega_{k})+Q\sup_{n\geq k_0}\E[\|\delta(\epsilon_n, \btheta_{n}, \bx_{n+1})\|].
\end{equation*}
\end{theorem}
\begin{proof}

Consider the following iterates 
\begin{equation*}
    \bm{\theta}_{k+1}=\bm{\theta}_{k}+\omega_{k+1} \left(\widetilde H(\epsilon_k, \bm{\theta}_{k}, \bm{x}_{k+1})+\omega_{k+1} \rho(\btheta_k, \bx_{k+1})\right).
\end{equation*}

Denote $\bm{T}_{k}=\bm{\theta}_{k}-\bm{\theta}_{\star}$. By subtracting $\btheta_{\star}$ on both sides and taking the square and $L_2$ norm,  we have
\begin{equation*}
\begin{split}
    \|\bT_{k+1}^2\|&=\|\bT_k^2\| +\omega_{k+1}^2 \|\widetilde H(\epsilon_k, \btheta_k, \bx_{k+1}) + \omega_{k+1}\rho(\btheta_k, \bx_{k+1})\|^2\\
    &\ \ \ +2\omega_{k+1}\underbrace{\langle \bT_k, \widetilde H(\epsilon_k, \btheta_k, \bx_{k+1})+\omega_{k+1}\rho(\btheta_k, \bx_{k+1})\rangle}_{\text{D}}.
\end{split}
\end{equation*}

First, using Lemma.\ref{convex_property} in $\S$\ref{technique}, there exists a constant $G=9 Q^2(1+Q^2)$ such that
\begin{equation}
\label{first_term}
    \|\widetilde H(\epsilon_k, \btheta_k, \bx_{k+1}) + \omega_{k+1}\rho(\btheta_k, \bx_{k+1})\|^2 \leq G (1+\|\bT_k\|^2).
\end{equation}

Next, according to the definition of $\widetilde H(\epsilon_k, \btheta_k, \bx_{k+1})$ in (\ref{tildeh}) and the Poisson equation (\ref{poisson_eqn}), we have
\begin{equation*}
\begin{split}
   \text{D}&=\langle \bT_k,  H(\btheta_k, \bx_{k+1})+\delta(\epsilon_k, \btheta_{k}, \bx_{k+1})+\omega_{k+1}\rho(\btheta_k, \bx_{k+1}) \rangle\\
   &=\langle \bT_k,  h(\btheta_k)+\mu_{\btheta_k}(\bm{x}_{k+1})-\mathrm{\Pi}_{\bm{\theta}_k}\mu_{\bm{\theta}_k}(\bm{x}_{k+1})+\delta(\epsilon_k, \btheta_{k}, \bx_{k+1})+\omega_{k+1}\rho(\btheta_k, \bx_{k+1}) \rangle\\
   &=\underbrace{\langle \bT_k,  h(\btheta_k)\rangle}_{\text{D}_{1}} +\underbrace{\langle\bT_k, \mu_{\btheta_k}(\bm{x}_{k+1})-\mathrm{\Pi}_{\bm{\theta}_k}\mu_{\bm{\theta}_k}(\bm{x}_{k+1})\rangle}_{\text{D}_{2}}+\underbrace{\langle \bT_k, \delta(\epsilon_k, \btheta_{k}, \bx_{k+1})+\omega_{k+1}\rho(\btheta_k, \bx_{k+1})\rangle}_{{\text{D}_{3}}}.
\end{split}
\end{equation*}

Using the stability property of the equilibrium in Lemma \ref{convex}, we have 
\begin{align*}
\langle \bm{T}_{k}, h(\bm{\theta}_{k})\rangle &\leq - \|\bm{T}_{k}\|^2. \tag{$\text{D}_1$}
\end{align*}
To deal with the error $\text{D}_2$, we make the following decomposition 
\begin{equation*}
\begin{split}
\text{D} &=\underbrace{\langle \bT_k, \mu_{\bm{\theta}_{k}}(\bm{\xeta}_{k+1})-\mathrm{\Pi}_{\bm{\theta}_{k}}\mu_{\bm{\theta}_{k}}(\bm{\bx}_{k})\rangle}_{\text{D}_{21}} \\
&+ \underbrace{\langle \bT_k,\mathrm{\Pi}_{\bm{\theta}_{k}}\mu_{\bm{\theta}_{k}}(\bm{x}_{k})- \mathrm{\Pi}_{\bm{\theta}_{k-1}}\mu_{\bm{\theta}_{k-1}}(\bm{x}_{k})\rangle}_{\text{D}_{22}}
+ \underbrace{\langle \bT_k,\mathrm{\Pi}_{\bm{\theta}_{k-1}}\mu_{\bm{\theta}_{k-1}}(\bm{x}_{k})- \mathrm{\Pi}_{\bm{\theta}_{k}}\mu_{\bm{\theta}_{k}}(\bm{\xeta}_{k+1})\rangle}_{\text{D}_{23}}.\\
\end{split}
\end{equation*}

(i) From the Markov property, $\mu_{\bm{\theta}_{k}}(\bm{\xeta}_{k+1})-\mathrm{\Pi}_{\bm{\theta}_{k}}\mu_{\bm{\theta}_{k}}(\bm{x}_{k})$ forms a martingale difference sequence 
$$\E\left[\langle \bT_k, \mu_{\bm{\theta}_{k}}(\bm{\xeta}_{k+1})-\mathrm{\Pi}_{\bm{\theta}_{k}}\mu_{\bm{\theta}_{k}}(\bm{x}_{k})\rangle |\mathcal{F}_{k}\right]=0. \eqno{(\text{D}_{21})}$$

(ii) By the regularity of the solution of Poisson equation in (\ref{poisson_reg}) and Lemma.\ref{theta_lip} in $\S$\ref{technique}, it leads to
\begin{equation}
\label{theta_delta}
\E[\|\mathrm{\Pi}_{\bm{\theta}_{k}}\mu_{\bm{\theta}_{k}}(\bm{x}_{k})- \mathrm{\Pi}_{\bm{\theta}_{k-1}}
 \mu_{\bm{\theta}_{k-1}}(\bm{x}_{k})\|]\leq C \|\btheta_k-\btheta_{k-1}\|\leq 2Q C\omega_k.
\end{equation}
Using Cauchy–Schwarz inequality, (\ref{theta_delta}) and the compactness of $\btheta$ in Assumption \ref{ass2a}, it follows that
$$\small{\E[\langle\bm{T}_{k},\mathrm{\Pi}_{\bm{\theta}_{k}}\mu_{\bm{\theta}_{k}}(\bm{x}_{k})- \mathrm{\Pi}_{\bm{\theta}_{k-1}}\mu_{\bm{\theta}_{k-1}}(\bm{x}_{k})\rangle]\leq \E[\|\bT_k\|]\cdot 2Q C\omega_k\leq 4Q^2 C\omega_{k}\leq 5Q^2 C\omega_{k+1}}   \eqno{(\text{D}_{22})},$$
where the last inequality follows from the step size assumption \ref{ass1} and holds given a large enough $k$.

(iii) \begin{equation*}
\begin{split}
\small
\langle \bm{T}_{k},\mathrm{\Pi}_{\bm{\theta}_{k-1}}\mu_{\bm{\theta}_{k-1}}(\bm{x}_{k})- \mathrm{\Pi}_{\bm{\theta}_{k}}\mu_{\bm{\theta}_{k}}(\bm{\xeta}_{k+1})\rangle
&=\left(\langle \bm{T}_{k}, \mathrm{\Pi}_{\bm{\theta}_{k-1}}\mu_{\bm{\theta}_{k-1}}(\bm{x}_{k}) \rangle- \langle \bm{T}_{k+1}, \mathrm{\Pi}_{\bm{\theta}_{k}}\mu_{\bm{\theta}_{k}}(\bm{\xeta}_{k+1})\rangle\right)\\
&\ \ \ +\left(\langle \bm{T}_{k+1}, \mathrm{\Pi}_{\bm{\theta}_{k}}\mu_{\bm{\theta}_{k}}(\bm{\xeta}_{k+1})\rangle-\langle \bm{T}_{k}, \mathrm{\Pi}_{\bm{\theta}_{k}}\mu_{\bm{\theta}_{k}}(\bm{\xeta}_{k+1})\rangle\right)\\
&={({z}_{k}-{z}_{k+1})}+{\langle \bm{T}_{k+1}-\bm{T}_{k}, \mathrm{\Pi}_{\bm{\theta}_{k}}\mu_{\bm{\theta}_{k}}(\bm{\xeta}_{k+1})\rangle},\\
\end{split}
\end{equation*}
where ${z}_{k}=\langle \bm{T}_{k}, \mathrm{\Pi}_{\bm{\theta}_{k-1}}\mu_{\bm{\theta}_{k-1}}(\bm{x}_{k})\rangle$. Using the regularity assumption in (\ref{poisson_reg}) and Lemma.\ref{theta_lip} leads to
$$\E\langle \bm{T}_{k+1}-\bm{T}_{k}, \mathrm{\Pi}_{\bm{\theta}_{k}}\mu_{\bm{\theta}_{k}}(\bm{\xeta}_{k+1})\rangle\leq   \E[\|\bm{\theta}_{k+1}-\bm{\theta}_{k}\|] \cdot \E[\|\mathrm{\Pi}_{\bm{\theta}_{k}}\mu_{\bm{\theta}_{k}}(\bm{\xeta}_{k+1})\|] \leq 2Q C \omega_{k+1}.\eqno{(\text{D}_{23})}$$

For convenience, we denote $\E[\|\delta(\epsilon_k, \btheta_{k}, \bx_{k+1})\|]$ by $\Delta_k$. Since $\rho(\btheta_k, \bx_{k+1})$ is compact, we have
\begin{equation*}
    \E[\|\delta(\epsilon_k, \btheta_{k}, \bx_{k+1})+\omega_{k+1}\rho(\btheta_k, \bx_{k+1}))\|]\leq \Delta_k + \omega_{k+1} Q.
\end{equation*}
Applying Cauchy–Schwarz inequality gives
$${\E[\langle \bT_k, \delta(\epsilon_k, \btheta_{k}, \bx_{k+1})+\omega_{k+1}\rho(\btheta_k, \bx_{k+1}))]\leq 2Q(\Delta_k+\omega_{k+1} Q)} \eqno{(\text{D}_{3})}$$

Finally, adding (\ref{first_term}), $\text{D}_1$, $\text{D}_{2}$ and $\text{D}_3$ together, it follows that for a constant 
     $$C_0 = G+5Q^2C+2QC+2Q^2,$$
we have
\begin{equation}
\begin{split}
\label{key_eqn}
\E\left[\|\bm{T}_{k+1}\|^2\right]&\leq (1-2\omega_{k+1}+G\omega^2_{k+1} )\E\left[\|\bm{T}_{k}\|^2\right]+C_0\omega^2_{k+1} +2Q\Delta_k\omega_{k+1} +2\E[z_{k}-z_{k+1}]\omega_{k+1}.
\end{split}
\end{equation}
Moreover, from (\ref{compactness}) and (\ref{poisson_reg}), $\E[|z_{k}|]$ is upper bounded by
\begin{equation}
\begin{split}
\label{condition:z}
\E[|z_{k}|]=\E[\langle \bm{T}_{k}, \mathrm{\Pi}_{\bm{\theta}_{k-1}}\mu_{\bm{\theta}_{k-1}}(\bm{x}_{k})\rangle]\leq \E[\|\bT_k\|]\E[\|\mathrm{\Pi}_{\bm{\theta}_{k-1}}\mu_{\bm{\theta}_{k-1}}(\bm{x}_{k})\|]\leq 2QC.
\end{split}
\end{equation}

According to Lemma $\ref{lemma:4}$ in $\S$\ref{technique}, we can choose $\lambda_0$ and $k_0$ such that 
\begin{align*}
\E[\|\bm{T}_{k_0}\|^2]\leq \psi_{k_0}=\lambda_0 \omega_{k_0}+Q\sup_{n\geq k_0}\Delta_{n},
\end{align*}
which satisfies the conditions ($\ref{lemma:3-a}$) and ($\ref{lemma:3-b}$) of Lemma $\ref{lemma:3-all}$ in $\S$\ref{technique}. Applying Lemma $\ref{lemma:3-all}$ leads to
\begin{equation}
\begin{split}
\label{eqn:9}
\E\left[\|\bm{T}_{k}\|^2\right]\leq \psi_{k}+\E\left[\sum_{j=k_0+1}^{k}\Lambda_j^k \left(z_{j-1}-z_{j}\right)\right],
\end{split}
\end{equation}
where $\psi_{k}=\lambda_0 \omega_{k}+Q\sup_{n\geq k_0}\Delta_{n}$ for all $k>k_0$. Based on ($\ref{condition:z}$) and the increasing condition of $\Lambda_{j}^k$ in Lemma $\ref{lemma:2}$ in $\S$\ref{technique}, we have
\begin{equation}
\small
\begin{split}
\label{eqn:10}
&\E\left[\left|\sum_{j=k_0+1}^{k} \Lambda_j^k\left(z_{j-1}-z_{j}\right)\right|\right]
=\E\left[\left|\sum_{j=k_0+1}^{k-1}(\Lambda_{j+1}^k-\Lambda_j^k)z_j-2\omega_{k}z_{k}+\Lambda_{k_0+1}^k z_{k_0}\right|\right]\\
\leq& \sum_{j=k_0+1}^{k-1}2(\Lambda_{j+1}^k-\Lambda_j^k)QC+\E[|2\omega_{k} z_{k}|]+2\Lambda_k^k QC\\
\leq& 2(\Lambda_k^k-\Lambda_{k_0}^k)QC+2\Lambda_k^k QC+2\Lambda_k^k QC\\
\leq& 6\Lambda_k^k QC.
\end{split}
\end{equation}

Therefore, given $\psi_{k}=\lambda_0 \omega_{k}+Q\sup_{n\geq k_0}\Delta_{n}$ that satisfies the conditions ($\ref{lemma:3-a}$), ($\ref{lemma:3-b}$) of Lemma $\ref{lemma:3-all}$, for any $k>k_0$, from ($\ref{eqn:9}$) and ($\ref{eqn:10}$), we have
\begin{equation*}
\E[\|\bm{T}_{k}\|^2]\leq \psi_{k}+6\Lambda_k^k QC=\left(\lambda_0+12QC\right)\omega_{k}+Q\sup_{n\geq k_0}\Delta_{n}=\lambda \omega_{k}+Q\sup_{n\geq k_0}\Delta_{n},
\end{equation*}
where $\lambda=\lambda_0+12QC$, $\lambda_0=\frac{2GQ\sup_{n\geq k_0} \Delta_n + 2C_0}{C_1}$, $\small{C_1=\lim \inf 2  \dfrac{\omega_{k}}{\omega_{k+1}}+\dfrac{\omega_{k+1}-\omega_{k}}{\omega^2_{k+1}}>0}$, $C_0=G+5Q^2C+2QC+2Q^2$ and $G=9 Q^2(1+Q^2)$.
\end{proof}

\subsection{Ergodicity and weighted averaging estimators}

Our interest is to analyze the deviation between the weighted averaging estimator $\frac{1}{k}\sum_{i=1}^k\btheta_{i, \mathcal{I}(\bx_i)}^{\zeta} f(\bx_i)$ and posterior average $\int f(\bx)\pi(d\bx)$ for a test function $f$. To obtain the desired error analysis, we first study the convergence of the empirical mean $\frac{1}{k}\sum_{i=1}^k f(\bx_i)$ to the posterior average $\bar f=\int f(\bx)\varpi_{\btheta_{\star}}(d\bx)$. The key tool for ergodic theory is still the Poisson equation to characterize the fluctuations between $f(\bx)$ and $\bar f$, which is defined as follows:
\begin{equation}
    \mathcal{L}\phi(\bx)=f(\bx)-\bar f,
\end{equation}
where $\phi(\bx)$ is the solution of the Poisson equation, and $\mathcal{L}$ is the infinitesimal generator of the Langevin diffusion 
\begin{equation*}
    \mathcal{L}\phi:=\nabla \phi \nabla L(\cdot, \btheta_{\star})+\tau\nabla^2\phi.
\end{equation*}

By imposing regularity conditions on the function $\phi(\bx)$, we can control the perturbations of $\frac{1}{k}\sum_{i=1}^k f(\bx_i)-\bar f$. Now, we present a lemma, which is majorly adapted from Theorem 2 \citep{Chen15} with a fixed learning rate $\epsilon$. Similar conclusions have also been achieved in \citet{Teh16, VollmerZW2016, Dalalyan18}.
\begin{lemma}[Convergence of the Averaging Estimators]
\label{avg_converge}
Assume Assumptions $\ref{ass1}$-$\ref{ass4}$ hold. Given a sufficiently smooth function $\phi(\bx)$ and a function $\mathcal{V}(\bx)$, such that $\|D^k \phi\|\lesssim \mathcal{V}^{p_k}(\bx)$ and $p_k>0$ for $k\in\{0,1,2,3\}$. In addition, $\mathcal{V}^p$ has a bounded expectation: $\sup_{\bx} \E[\mathcal{V}^p(\bx)]<\infty$ and $\mathcal{V}$ is smooth, i.e. $\sup_{s\in\{0, 1\}} \mathcal{V}^p(s\bx+(1-s)\by)\lesssim \mathcal{V}^p(\bx)+\mathcal{V}^p(\by)$ for all $\bx,\by\in\bX$ and $p\leq 2\max_k\{p_k\}$. For any integrable function $f^2$, we have
\begin{equation*}
\small
\begin{split}
    \left|\E\left[\frac{\sum_{i=1}^k f(\bx_i)}{k}\right]-\int f(\bx)\varpi_{\bm{\theta}_{\star}}(d\bx)\right|&= \mathcal{O}\left(\frac{1}{k\epsilon}+\epsilon+\sqrt{\frac{\sum_{i=1}^k \omega_k}{k}}+\sup_{n\geq k_0}\E[\|\delta(\epsilon, \btheta_{n}, \bx_{n+1})\|]^{0.5}\right). \\
\end{split}
\end{equation*}
\end{lemma}

\begin{proof}

To study the ergodic average, we can view the adaptive algorithm as a standard sampling algorithm with fixed latent variable $\btheta_{\star}$ and biased gradients, where the bias term is denoted by $$\bbeta_k=\nabla_{\bx} L(\bx, \btheta_k)-\nabla_{\bx}  L(\bx, \btheta_{\star}).$$

According to the smoothness assumption \ref{ass2} and Jensen's inequality, we have
\begin{equation}
\label{latent_bias}
\small
    \|\E[\bbeta_k]\|\leq \E[\|\bbeta_k\|]=\E[\|\nabla_{\bx} L(\bx, \btheta_k)-\nabla_{\bx}  L(\bx, \btheta_{\star})\|] \leq M\E[\|\btheta_k-\btheta_{\star}\|] \leq M\sqrt{\E[\|\btheta_k-\btheta_{\star}\|^2]}.
\end{equation}

As a result, we can reformulate the original adaptive algorithm as
\begin{equation*}
\begin{split}
    \bm{x}_{k+1}&=\bx_k- \epsilon_k\nabla_{\bx} \widetilde L(\bx_k, \btheta_k)+\mathcal{N}({0, 2\epsilon_k \tau\bm{I}})\\
    &=\bx_k- \epsilon_k\left(\nabla_{\bx} \widetilde L(\bx_k, \btheta_{\star})+\bbeta_k\right)+\mathcal{N}({0, 2\epsilon_k \tau\bm{I}}).
\end{split}
\end{equation*}
The ergodic average based on biased gradients and a fixed learning rate is a direct result of Theorem 2 \citep{Chen15}. Together with regularity conditions on the solution of the Poisson equation, (\ref{latent_bias}) and Theorem \ref{latent_converge}, we know that 
\begin{equation*}
\small
\begin{split}
    \left|\E\left[\frac{\sum_{i=1}^k f(\bx_i)}{k}\right]-\int f(\bx)\varpi_{\bm{\theta}_{\star}}(d\bx)\right|&\leq \mathcal{O}\left(\frac{1}{k\epsilon}+\epsilon+\frac{\sum_{i=1}^k \|\E[\bbeta_k]\|}{k}\right)\\
    &\lesssim \mathcal{O}\left(\frac{1}{k\epsilon}+\epsilon+\frac{\sum_{i=1}^k \left(\omega_k+\sup_{n\geq k_0}\E[\|\delta(\epsilon, \btheta_{n}, \bx_{n+1})\|]\right)^{0.5}}{k}\right), \\
    &\leq \mathcal{O}\left(\frac{1}{k\epsilon}+\epsilon+\sqrt{\frac{\sum_{i=1}^k \omega_k}{k}}+\sup_{n\geq k_0}\E[\|\delta(\epsilon, \btheta_{n}, \bx_{n+1})\|]^{0.5}\right)
\end{split}
\end{equation*}
where the last inequality follows from $\small{(\omega_k+\Delta)^{0.5}\leq \omega_k^{0.5}+\Delta^{0.5}}$ and $\small{\sum_{i=1}^k \omega_i^{0.5}\leq \sqrt{k\sum_{i=1}^k \omega_i}}$.
\end{proof}


Now we are ready to show the convergence of the weighted averaging estimator $\frac{1}{k}\sum_{i=1}^k\btheta_{i, \mathcal{I}(\bx_i)}^{\zeta} f(\bx_i)$ to the posterior average $\int f(\bx)\pi(d\bx)$.
\begin{theorem}[Convergence of the Weighted Averaging Estimators] Assume Assumptions $\ref{ass1}$-$\ref{ass4}$ hold. Given a sufficiently smooth function $\phi(\bx)$ and a function $\mathcal{V}(\bx)$, such that $\|D^k \phi\|\lesssim \mathcal{V}^{p_k}(\bx)$ and $p_k>0$ for $k\in\{0,1,2,3\}$. In addition, $\mathcal{V}^p$ has a bounded expectation: $\sup_{\bx} \E[\mathcal{V}^p(\bx)]<\infty$ and $\mathcal{V}$ is smooth, i.e. $\sup_{s\in\{0, 1\}} \mathcal{V}^p(s\bx+(1-s)\by)\lesssim \mathcal{V}^p(\bx)+\mathcal{V}^p(\by)$ for all $\bx,\by\in\bX$ and $p\leq 2\max_k\{p_k\}$. For any integrable function $f^2$, we have that 
\label{w_avg_converge}
\begin{equation*}
\small
\begin{split}
    \left|\E\left[\frac{1}{k}\sum_{i=1}^k\btheta_{i, \mathcal{I}(\bx_i)}^{\zeta} f(\bx_i)\right]-\int f(\bx)\pi(d\bx)\right|&= \mathcal{O}\left(\frac{1}{k\epsilon}+\epsilon+\sqrt{\frac{\sum_{i=1}^k \omega_k}{k}}+\sup_{n\geq k_0}\E[\|\delta(\epsilon, \btheta_{n}, \bx_{n+1})\|]^{0.5}\right). \\
\end{split}
\end{equation*}
\end{theorem}

\begin{proof}

Applying triangle inequality and $|\E[x]|\leq \E[|x|]$, we have
\begin{equation*}
\small
    \begin{split}
        &\left|\E\left[\frac{1}{k}\sum_{i=1}^k\btheta_{i, \mathcal{I}(\bx_i)}^{\zeta} f(\bx_i)\right]-\int f(\bx)\pi(d\bx)\right|\\
        \leq &\underbrace{\E\left[\frac{1}{k}\sum_{i=1}^k\left|\btheta_{i, \mathcal{I}(\bx_i)}^{\zeta}-\btheta_{\star, \mathcal{I}(\bx_i)}^{\zeta}\right| \cdot |f(\bx_i)|\right]}_{\text{I}_1} +\underbrace{\left|\E\left[\frac{1}{k}\sum_{i=1}^k\btheta_{\star, \mathcal{I}(\bx_i)}^{\zeta} f(\bx_i)\right]-\int f(\bx)\pi(d\bx)\right|}_{\text{I}_2}.
    \end{split}
\end{equation*}

For the first term $\text{I}_1$, consider the mean value theorem for $t(x)=x^{\zeta}$
\begin{equation}
\small
    \begin{split}
    \label{mvt}
        |t(\btheta_{i, \mathcal{I}(\bx)})-t(\btheta_{\star, \mathcal{I}(\bx)})|=|\btheta_{i, \mathcal{I}(\bx)}^{\zeta}-\btheta_{\star, \mathcal{I}(\bx)}^{\zeta}|= |\btheta_{i, \mathcal{I}(\bx)}-\btheta_{\star, \mathcal{I}(\bx)}| \cdot t(\widetilde\btheta)\lesssim |\btheta_{i, \mathcal{I}(\bx)}-\btheta_{\star, \mathcal{I}(\bx)}|,
    \end{split}
\end{equation}
where the first inequality holds for any $\bx\in\bX$, any $i\in\{1,2,...,m\}$and some $\widetilde\btheta \leq  \btheta_{i, \mathcal{I}(\bx)}\vee \btheta_{\star, \mathcal{I}(\bx)}$; the last inequality follows because we only consider $\btheta$ in a compact set. By Cauchy-Schwarz inequality, (\ref{mvt}) and Theorem \ref{latent_converge}, it follows that
\begin{equation*}
\small
    \begin{split}
        \text{I}_1&\lesssim \sqrt{\E\left[\frac{\sum_{i=1}^k\left(\btheta_{i, \mathcal{I}(\bx_i)}-\btheta_{\star, \mathcal{I}(\bx_i)}\right)^2}{k} \right]\E\left[\sum_{i=1}^k\frac{f^2(\bx_i)}{k}\right]}\\
        &\lesssim \sqrt{\sum_{i=1}^k\dfrac{\E[\|\btheta_i-\btheta_{\star}\|^2]}{k}}\\
        &\lesssim \sqrt{\frac{\sum_{i=1}^k\omega_{i}}{k}+\sup_{n\geq k_0}\E[\|\delta(\epsilon_n, \btheta_{n}, \bx_{n+1})\|]}\\
        &\leq \sqrt{\frac{\sum_{i=1}^k\omega_{i}}{k}}+\sup_{n\geq k_0}\E[\|\delta(\epsilon, \btheta_{n}, \bx_{n+1})\|]^{0.5},
    \end{split}
\end{equation*}
where the second inequality holds because of the integrability of $f^2(\bx)$ and the last inequality follows from $\sqrt{x+y}\leq \sqrt{x}+\sqrt{y}$.

Before we study $\text{I}_2$, we first decompose $\int  f(\bx) \pi(d\bx)$ into $m$ disjoint regions to facilitate the analysis
\begin{equation}
\label{split_posterior}
\small
\begin{split}
      \int  f(\bx) \pi(d\bx)=\int_{\cup_{j=1}^m E_j}  f(\bx) \pi(d\bx)=\sum_{j=1}^m\btheta_{\star, j}^{\zeta}\int_{E_j}  f(\bx) \frac{\pi(d\bx)}{\btheta_{\star, j}^{\zeta}}=\sum_{j=1}^m\btheta_{\star, j}^{\zeta}\int_{E_j}  f(\bx) \varpi_{\bm{\theta}_{\star}}(d\bx).\\
\end{split}
\end{equation}

Plugging (\ref{split_posterior}) into the second term $\text{I}_2$, we have
\begin{equation}
\label{final_i2}
\small
    \begin{split}
        \text{I}_2&=\left|\E\left[\frac{1}{k}\sum_{i=1}^k\sum_{j=1}^m\btheta_{\star, j}^{\zeta} f(\bx_i)1_{\bx_i\in E_j}\right]-\int f(\bx)\pi(d\bx)\right|\\
        &=\left|\sum_{j=1}^m\btheta_{\star, j}^{\zeta}\E\left[\frac{1}{k}\sum_{i=1}^k f(\bx_i)1_{\bx_i\in E_j}\right]-\sum_{j=1}^m\btheta_{\star, j}^{\zeta}\int_{E_j}  f(\bx) \varpi_{\bm{\theta}_{\star}}(d\bx)\right|\\
        &\leq \sum_{j=1}^m\btheta_{\star, j}^{\zeta}\left|\E\left[\frac{1}{k}\sum_{i=1}^k f(\bx_i)1_{\bx_i\in E_j}\right]-\int_{E_j}  f(\bx) \varpi_{\bm{\theta}_{\star}}(d\bx)\right|.\\
    \end{split}
\end{equation}

Given any $j\in \{1,2,...,m\}$, applying the function $f(\bx)1_{\bx\in E_j}$ to Theorem \ref{avg_converge} yields
\begin{equation}
\label{almost_i2}
\small
\begin{split}
      \left|\E\left[\frac{1}{k}\sum_{i=1}^k f(\bx_i)1_{\bx_i\in E_j}\right]-\int_{E_j}  f(\bx) \varpi_{\bm{\theta}_{\star}}(d\bx)\right|\leq \mathcal{O}\left(\frac{1}{k\epsilon}+\epsilon+\sqrt{\frac{\sum_{i=1}^k \omega_k}{k}}+\sup_{n\geq k_0}\E[\|\delta(\epsilon, \btheta_{n}, \bx_{n+1})\|]^{0.5}\right).\\
\end{split}
\end{equation}

Plugging (\ref{almost_i2}) into (\ref{final_i2}) and combining $\text{I}_1$, we have
\begin{equation}
\small
\begin{split}
      \left|\E\left[\frac{1}{k}\sum_{i=1}^k\btheta_{i, \mathcal{I}(\bx_i)}^{\zeta} f(\bx_i)\right]-\int f(\bx)\pi(d\bx)\right|\leq \mathcal{O}\left(\frac{1}{k\epsilon}+\epsilon+\sqrt{\frac{\sum_{i=1}^k \omega_k}{k}}+\sup_{n\geq k_0}\E[\|\delta(\epsilon, \btheta_{n}, \bx_{n+1})\|]^{0.5}\right).\\
\end{split}
\end{equation}


\end{proof}

\section{Technical Lemmas}
\label{technique}

\begin{lemma}
\label{convex_property}
Given $\sup\{\omega_k\}_{k=1}^{\infty}\leq 1$, there exists a constant $G=9 Q^2(1+Q^2)$ such that
\begin{equation} \label{bound2}
\| \widetilde H(\epsilon_k, \bm{\theta}_k, \bm{\xeta}_{k+1})+\omega_{k+1}\rho(\btheta_k, \bx_{k+1})\|^2 \leq G (1+\|\bm{\theta}_k-\bm{\theta}_*\|^2). 
\end{equation}
\end{lemma}
\begin{proof}

According to the compactness condition in Assumption \ref{ass2a}, we have
\begin{equation*}
\|H(\bm{\theta}_k, \bm{\xeta}_{k+1})\|^2\leq Q^2 (1+\|\bm{\theta}_k\|^2) = 
 Q^2 (1+\|\bm{\theta}_k-\bm{\theta}_*+\bm{\theta}_*\|^2)\leq Q^2 (1+2\|\bm{\theta}_k-\bm{\theta}_*\|^2+2Q^2).
\end{equation*}

Therefore, we can show that for a constant $G=9Q^2(1+Q^2)$
\begin{equation*}
\small
\begin{split}
    \|\widetilde H(\epsilon, \bm{\theta}, \bm{\xeta})+\omega_{k+1}\rho(\btheta_k, \bx_{k+1})\|^2 &\leq 3\|H(\bm{\theta}_k, \bm{\xeta}_{k+1})\|^2 + 3\|\bdelta(\theta_k)\|^2+3\omega_{k+1}^2 \|\rho(\btheta_k, \bx_{k+1})\|\\
    &\leq 3Q^2 (1+2\|\bm{\theta}_k-\bm{\theta}_*\|^2+2Q^2) + 6Q^2\\
    &\leq 3Q^2 (3+3Q^2+(3+3Q^2)\|\bm{\theta}_k-\bm{\theta}_*\|^2)\\
    &\leq G (1+\|\bm{\theta}_k-\bm{\theta}_*\|^2).
\end{split}
\end{equation*}
\end{proof}

\begin{lemma}
\label{theta_lip}Given $\sup\{\omega_k\}_{k=1}^{\infty}\leq 1$, we have that
\begin{equation}
\label{lip_theta}
    \|\btheta_{k}-\btheta_{k-1}\|\leq 2\omega_{k} Q
\end{equation}
\end{lemma}

\begin{proof}
Following the update $\btheta_k-\btheta_{k-1}=\omega_k \widetilde H(\bm{\theta}_{k-1}, \bm{x}_{k})+\omega_{k}^2 \brho_{k}$, we have that
$$\|\btheta_{k}-\btheta_{k-1}\|= \|\omega_k \widetilde H(\epsilon_{k-1}, \bm{\theta}_{k-1}, \bm{x}_{k})+\omega_{k}^2 \brho_{k}\|\leq \omega_k\| \widetilde H(\epsilon_k, \bm{\theta}_{k-1},\bm{x}_{k})\|+\omega_{k}^2\| \brho_{k}\|.$$
By the compactness condition in Assumption \ref{ass2a} and $\sup\{\omega_k\}_{k=1}^{\infty}\leq 1$, (\ref{lip_theta}) can be derived.
\end{proof}

\begin{lemma}
\label{lemma:4}
There exist constants $\lambda_0$ and $k_0$ such that $\forall \lambda\geq\lambda_0$ and $\forall k> k_0$, the sequence $\{\psi_{k}\}_{k=1}^{\infty}$, where $\psi_{k}=\lambda\omega_{k}+Q \sup_{n\geq k_0}\Delta_n$, satisfies
\begin{equation}
\begin{split}
\label{key_ieq}
\psi_{k+1}\geq& (1-2\omega_{k+1}+G\omega_{k+1}^2)\psi_{k}+C_0\omega_{k+1}^2  +2Q \Delta_k\omega_{k+1}.
\end{split}
\end{equation}
\begin{proof}
By replacing $\psi_{k}$ with $\lambda\omega_{k}+Q \sup_{n\geq k_0}\Delta_n$ in ($\ref{key_ieq}$), it suffices to show
\begin{equation*}
\small
\begin{split}
\label{lemma:loss_control}
\lambda \omega_{k+1}+Q \sup_{n\geq k_0}\Delta_n\geq& (1-2\omega_{k+1}+G\omega_{k+1}^2)\left(\lambda \omega_{k}+Q \sup_{n\geq k_0}\Delta_n\right)+C_0\omega_{k+1}^2 + 2Q\Delta_k\omega_{k+1}.
\end{split}
\end{equation*}

which is equivalent to proving
\begin{equation*}
\small
\begin{split}
&\lambda (\omega_{k+1}-\omega_k+2\omega_k\omega_{k+1}-G\omega_k\omega_{k+1}^2)\geq  Q\sup_{n\geq k_0}\Delta_n(-2\omega_{k+1}+G\omega_{k+1}^2 )+C_0\omega_{k+1}^2+ 2Q\Delta_k\omega_{k+1}.
\end{split}
\end{equation*}

Given the step size condition in ($\ref{a1}$), we have $\small{\omega_{k+1}-\omega_{k}+2 \omega_{k}\omega_{k+1} \geq C_1 \omega_{k+1}^2}$, where $\small{C_1=\lim \inf 2  \dfrac{\omega_{k}}{\omega_{k+1}}+\dfrac{\omega_{k+1}-\omega_{k}}{\omega^2_{k+1}}>0}$. Together with the fact that $-\sup_{n\geq k_0}\Delta_n\leq \Delta_k$, it suffices to prove
\begin{equation}
\begin{split}
\label{loss_control-2}
\lambda \left(C_1-G\omega_{k}\right)\omega^2_{k+1}\geq  \left(GQ \sup_{n\geq k_0}\Delta_n+C_0\right)\omega^2_{k+1}.
\end{split}
\end{equation}

It is clear that for a large enough $k_0$ and $\lambda_0$ such that $\omega_{k_0}\leq \frac{C_1}{2G}$, $\lambda_0=\frac{2GQ\sup_{n\geq k_0} \Delta_n + 2C_0}{C_1}$, the desired conclusion ($\ref{loss_control-2}$) holds for all such $k\geq k_0$ and $\lambda\geq \lambda_0$.
\end{proof}
\end{lemma}

\begin{lemma}
\label{lemma:3-all}
Let $\{\psi_{k}\}_{k> k_0}$ be a series that satisfies the following inequality for all $k> k_0$
\begin{equation}
\begin{split}
\label{lemma:3-a}
\psi_{k+1}\geq &\psi_{k}\left(1-2\omega_{k+1}+G\omega^2_{k+1}\right)+C_0\omega^2_{k+1} + 2Q \Delta_k\omega_{k+1},
\end{split}
\end{equation}
and assume there exists such $k_0$ that 
\begin{equation}
\begin{split}
\label{lemma:3-b}
\E\left[\|\bm{T}_{k_0}\|^2\right]\leq \psi_{k_0}.
\end{split}
\end{equation}
Then for all $k> k_0$, we have
\begin{equation}
\begin{split}
\label{result}
\E\left[\|\bm{T}_{k}\|^2\right]\leq \psi_{k}+\sum_{j=k_0+1}^{k}\Lambda_j^k (z_{j-1}-z_j).
\end{split}
\end{equation}
\end{lemma}

\begin{proof}
We prove by the induction method. Assuming (\ref{result}) is true and combining (\ref{key_eqn}), (\ref{lemma:3-a}) and Lemma.\ref{lemma:2}, we have that 
\begin{equation*}
\small
\begin{split}
    \E\left[\|\bm{T}_{k+1}\|^2\right]&\leq (1-2\omega_{k+1}+\omega^2_{k+1} G)(\psi_{k}+\sum_{j=k_0+1}^{k}\Lambda_j^k (z_{j-1}-z_j))\\
    &\ \ +\omega^2_{k+1} C_0+2Q \delta(\epsilon_k, \btheta_{k}, \bx_{k+1})\omega_{k+1}+2\omega_{k+1}\E[z_{k}-z_{k+1}]\\
    & \leq \psi_{k+1}+(1-2\omega_{k+1}+\omega^2_{k+1} G)\sum_{j=k_0+1}^{k}\Lambda_j^k (z_{j-1}-z_j)+2\omega_{k+1}\E[z_{k}-z_{k+1}]\\
    & \leq \psi_{k+1}+\sum_{j=k_0+1}^{k}\Lambda_j^{k+1} (z_{j-1}-z_j)+\Lambda_{k+1}^{k+1}\E[z_{k}-z_{k+1}]\\
    & \leq \psi_{k+1}+\sum_{j=k_0+1}^{k+1}\Lambda_j^{k+1} (z_{j-1}-z_j)\\
\end{split}
\end{equation*}
\end{proof}

The following lemma is a restatement of Lemma 25 (page 247) from \citet{Albert90}.
\begin{lemma}
\label{lemma:2}
Suppose $k_0$ is an integer satisfying
$\inf_{k> k_0} \dfrac{\omega_{k+1}-\omega_{k}}{\omega_{k}\omega_{k+1}}+2-G\omega_{k+1}>0$ 
for some constant $G$. 
Then for any $k>k_0$, the sequence $\{\Lambda_k^K\}_{k=k_0, \ldots, K}$ defined below is increasing and uppered bounded by $2\omega_{k}$
\begin{equation}  
\Lambda_k^K=\left\{  
             \begin{array}{lr}  
             2\omega_{k}\prod_{j=k}^{K-1}(1-2\omega_{j+1}+G\omega_{j+1}^2) & \text{if $k<K$},   \\  
              & \\
             2\omega_{k} &  \text{if $k=K$}.
             \end{array}  
\right.  
\end{equation} 
\end{lemma}

\bibliography{mybib}
\bibliographystyle{plainnat}


\maketitle

In this supplementary material, we review the related methodologies in $\S$\ref{review}, show the convergence of latent variable in $\S$\ref{convergence} and prove the ergodicity and weighted averaging estimators in $\S$\ref{ergodic}.
\section{Background on Stochastic Approximation and Poisson Equation}
\label{review}

\subsection{Stochastic approximation}
Stochastic approximation \citep{RobbinsM1951, Albert90} is a standard framework for formulating adaptive algorithms. Given a random field of $H(\bm{\btheta}, \bm{\bx})$ with respect to $\bm{\bx}$, the goal is to find the equilibrium $\btheta$ for the mean-field function $h(\btheta)$ such that
\begin{equation*}
\begin{split}
\label{sa00}
h(\btheta)&=\int_{\bchi} H(\bm{\theta}, \bm{\bx})\varpi_{\bm{\theta}}(d\bm{\bx})=0,
\end{split}
\end{equation*}
where $\bx\in \bchi \subset \mathbb{R}^d$, $\btheta\in\bTheta \subset \mathbb{R}^{m}$, 
$\varpi_{\btheta}(\bx)$ denotes a distribution parametrized by 
$\btheta$, and $H(\btheta,\bx)$ denotes a random field 
function. The algorithm works by iterating 
between the following two steps:
\begin{itemize}
\item[(1)] Simulate $\bm{x}_{k+1}$ from the transition kernel  $\Pi_{\bm{\theta_{k}}}(\bm{x}_{k}, \cdot)$, which admits $\varpi_{\bm{\theta}_{k}}(\bm{x})$ as
the invariant distribution,

\item[(2)] Update $\btheta_k$ by setting $\bm{\theta}_{k+1}=\bm{\theta}_{k}+\omega_{k+1} H(\bm{\theta}_{k}, \bm{x}_{k+1})+\omega_{k+1}^2 \rho(\bm{\theta}_{k}, \bm{x}_{k+1}),$
where $\rho(\cdot,\cdot)$ denotes a bias or oscillation term. 
\end{itemize}

The algorithm differs from the Robbins–Monro algorithm in that we simulate $\bx$ from a transition kernel $\Pi_{\bm{\theta_{k}}}(\cdot, \cdot)$ instead of a distribution $\varpi_{\bm{\theta}_{k}}(\cdot)$. As a result, a Markov state-dependent noise $H(\btheta_k, \bx_{k+1})-h(\btheta_k)$ is generated, which requires some regularity conditions to control the fluctuations $\sum_k \Pi_{\btheta}^k (H(\btheta, \bx)-h(\btheta))$. Moreover, it supports a more general form where bounded oscillations $\rho(\cdot,\cdot)$ are allowed without affecting the theoretical properties. 

\subsection{Poisson equation}

Stochastic approximation generates a nonhomogeneous Markov chain $\{(\bx_k, \btheta_k)\}_{k=1}^{\infty}$, of which the convergence theory can be studied by the Poisson equation 
\begin{equation*}
    \mu_{\btheta}(\bm{x})-\mathrm{\Pi}_{\bm{\theta}}\mu_{\bm{\theta}}(\bm{x})=H(\bm{\theta}, \bm{x})-h(\bm{\theta}).
\end{equation*}
where $\Pi_{\bm{\theta}}(\bm{x}, A)$ is the transition kernel for any Borel subset $A\subset \bchi$ and $\mu_{\btheta}(\cdot)$ is a function on $\bchi$.
The existence of the solution of Poisson equation can be identified when the series converges.
\begin{equation*}
    \mu_{\btheta}(\bx):=\sum_k \Pi_{\btheta}^k (H(\btheta, \bx)-h(\btheta)).
\end{equation*}
In other words, the consistency of the estimator $\btheta$ can be established by controlling the perturbations of $\sum_k \Pi_{\btheta}^k (H(\btheta, \bx)-h(\btheta))$ via some regularity conditions on $\mu_{\btheta}(\cdot)$. To avoid studying the individual algorithms, \citet{Albert90} has simplified the work to the justification of the regularity conditions on $\mu_{\btheta}(\cdot)$:

There exist a function  $V: \mX \to [1,\infty)$, and a constant $C$ such that for all $\bm{\theta}, \bm{\theta}'\in \bm{\bTheta}$, we have
\begin{equation*}
\begin{split}
\|\mathrm{\Pi}_{\bm{\theta}}\mu_{\btheta}(\bx)\|&\leq C V(\bx),\quad
\|\mathrm{\Pi}_{\bm{\theta}}\mu_{\bm{\theta}}(\bx)-\mathrm{\Pi}_{\bm{\theta'}}\mu_{\bm{\theta'}}(\bx)\|\leq C\|\bm{\theta}-\bm{\theta}'\| V(\bx),  \quad 
\E[V(\bx)]\leq \infty.\\
\end{split}
\end{equation*}

In particular, only the first order smoothness is required for the convergence of the adaptive algorithms \citep{Albert90}. By contrast, we see that the ergodicity theory \citep{mattingly10, VollmerZW2016} relies on a much stronger 4th-order smoothness.


\section{Convergence Analysis for CSGLD} \label{convergence}

For notational simplicity, we define  
$\nabla_{\bx} \tilde{L}(\bx,\btheta)= \frac{N}{n} \left[1+ 
   \frac{\zeta\tau}{\Delta u}  \left( \frac{\theta((\tilde{J}(\bx)+1)\wedge m)}{\theta(\tilde{J}(\bx))} -1 \right) \right]  
    \nabla_{\bx} \widetilde U(\bx)$. To make 
 our theory more general, 
 we generalize Algorithm a little bit by 
 including a higher order oscillation term in 
 parameter updating. The resulting algorithm works as follows:
\begin{itemize}
\item[(1)] Sample $\bm{x}_{k+1}=\bx_k- \epsilon_k\nabla_{\bx} \widetilde L(\bx_k, \btheta_k)+\mathcal{N}({0, 2\epsilon_k \tau\bm{I}}), \ \ \ \ \ \ \ \ \ \ \ \ \ \ \ \ \ \ \ \ \ \ \ \ \ \ \ \ \ \ \ \ \ \ \ \ \ \ \ \ \ \ \ \ \ \ \ \ \ \ \ \ \ \ \ \ \ \ \ \ (\text{S}_1)$

\item[(2)] Update $\bm{\theta}_{k+1}=\bm{\theta}_{k}+\omega_{k+1} \widetilde H(m,n, \epsilon_k, \bm{\theta}_{k}, \bm{x}_{k+1})
+\omega_{k+1}^2 \rho(\bm{\theta}_{k}, \bm{x}_{k+1}),
\ \ \ \ \ \ \ \ \ \ \ \ \ \ \ \ \ \ \ \ \ \ \ \  (\text{S}_2)$
\end{itemize}
where $\epsilon_k$ is the learning rate, $\omega_{k+1}$ is the step size, $\nabla_{\bx} \widetilde L(\bx_k, \btheta_k)$ is an unbiased estimate of $\nabla_{\bx} L(\bx_k, \btheta_k)= \left[1+
   \frac{\zeta\tau}{\Delta u}  \left( \frac{\theta_k((\tilde{J}(\bx_k)+1)\wedge m)}{\theta_k(\tilde{J}(\bx_k))} -1 \right) \right]  
    \nabla_{\bx} U(\bx_k)$ for any given $\btheta_k$, 
    and $\rho(\btheta_k,\bx_{k+1})$ is an oscillation term. 
    The stochastic random field $\widetilde H(m, n, \epsilon_k, \bm{\theta}_{k}, \bm{x}_{k+1})$ is an estimator of the random field $ H(\bm{\theta}_{k}, \bm{x}_{k+1})$  
    and it is subject to 
    discretization and mini-batch approximation 
    errors. Formally, we can decompose 
    $\widetilde H(m,n, \epsilon_k, \bm{\theta}_{k}, \bm{x}_{k+1})$ as follows:
\begin{equation}
    \label{tildeh}
    \widetilde H(m,n, \epsilon_k, \bm{\theta}_{k}, \bm{x}_{k+1})= H(\bm{\theta}_{k}, \bm{x}_{k+1})+\delta(m,n, \epsilon_k, \btheta_{k}, \bx_{k+1}),
\end{equation}
where the bias $\delta(m, n, \epsilon_k, \btheta_{k}, \bx_{k+1})$ is a random vector generated from two aspects: the discretization error $\delta_1(\frac{1}{m}, \epsilon_k)$ of SGLD ($\text{S}_1$) due to approximating the underlying Langevin diffusion \citep{Issei14, Maxim17}, which depends on $\epsilon_k$ and $m$; the estimation error $\delta_2(n, \btheta_k, \bx_{k+1})$ due to  approximating $H(\bm{\theta}_{k}, \bm{x}_{k+1})$ using a mini-batch of data, which mainly depends on the batch size $n$. Notably, the bias $\delta_2$ becomes smaller given a larger batch size and vanishes when the full data is used.


\subsection{Convergence of parameter estimation} 

The convergence analysis rests on the following assumptions:

\begin{assump}[Compactness] \label{ass2a} 
The space $\Theta$ is compact, $\inf \theta(i) >0$ for any $i\in \{1,2,\ldots,m\}$, 
and there exists a constant $Q>0$ such that for any  $\btheta\in \bTheta$ and $\bx \in \mX$, 
\begin{equation}
\label{compactness}
     \|\btheta\|\leq Q, \quad  
     \|H(\btheta, \bx)\|\leq Q, \quad 
     \|\widetilde H(m,n, \epsilon, \btheta, \bx)\|\leq Q, \quad 
     \|\rho(\btheta, \bx)\|\leq Q.
\end{equation}
\end{assump}

\begin{assump}[Smoothness]
\label{ass2}
$L(\bm{\xeta}, \bm{\theta})$ is $M$-smooth, namely, for any $\bx, \bx'\in \mX$, $\bm{\theta}, \bm{\theta}'\in \bTheta$,
\begin{equation}
\begin{split}
\label{ass_2_1_eq}
\|\nabla_{\bx} L(\bx, \btheta)-\nabla_{\bx} L(\bm{\bx}', \btheta')\|\leq M\|\bx-\bx'\|+M\|\btheta-\btheta'\|. 
\end{split}
\end{equation}
\end{assump}

\begin{assump}[Dissipativity]
\label{ass3}
 There exist constants $\tilde{m}>0$ and $\tilde{b}\geq 0$ such that for any $\bx \in \mX$ and $\btheta \in \bTheta$, 
\label{ass_dissipative}
\begin{equation}
\label{eq:01}
\langle \nabla_{\bx} L(\bx, \btheta), \bx\rangle\leq \tilde{b}-\tilde{m}\|\bx\|^2.
\end{equation}
\end{assump}
This assumption has been widely used in proving the geometric ergodicity of dynamical systems \citep{mattingly02, Maxim17, Xu18}. It ensures the sampler to move towards the origin regardless of the starting position.

\begin{assump}[Gradient noise] 
\label{ass4}
The stochastic noise follows
\begin{equation*}
\E[\nabla_{\bx}\widetilde L(\bx_{k},
 \btheta_{k})-\nabla_{\bx} L(\bx_{k}, \btheta_{k})]=0.
\end{equation*}
There exists some constants $M, B>0$ such that the second moment of the noise is bounded by
\begin{equation*} 
\E [\|\nabla_{\bx}\widetilde L(\bx_{k},
 \btheta_{k})-\nabla_{\bx} L(\bx_{k}, \btheta_{k})\|^2]\leq M^2 \|\bx\|^2+B^2. 
\end{equation*}

\end{assump}

 
 



We proceed by proving that $\mathbb{V}(\btheta)=\frac{1}{2}\|\btheta_{\star}-\btheta\|^2$ is a Lyapunov function for the mean-field $h(\btheta)$.

\begin{lemma}[Stability] \label{convex_appendix}
The mean-field $h(\btheta)$ satisfies $\forall \btheta \in \bTheta$, $\langle h(\btheta), \btheta - \btheta_{\star}\rangle \leq  -\phi\|\btheta - \btheta_{\star}\|^2$ for a constant $\phi>0$, where $\theta_*=(\Psi(u_1),\Psi(u_2),\ldots,\Psi(u_m))$.
In addition, $\{\btheta\in\bTheta:\langle h(\btheta), \nabla \mathbb{V}(\btheta)\rangle=0\}=\{\btheta_{\star}\}$, i.e., 
$\btheta_*$ is the only stable point of the mean-field system.
\end{lemma}

\begin{proof}

Given the random field $H_i(\btheta,\bx)={\theta}^{\zeta}(J(\bx))\left(1_{i\geq J(\bx)}-{\theta}(i)\right)$ for $i\in\{1,2,....m\}$, the mean-field function $h(\btheta)$ under the density 
$\varpi_{\btheta}(\bx)$ given in XX  follows
\begin{equation}
\begin{split}
    h_i(\btheta)&=\int_{\mX} H_i(\btheta,\bx) 
     \varpi_{\btheta}(\bx) d\bx
    =\int_{\mX} {\theta}^{\zeta}(J(\bx))\left(1_{i\geq J(\bx)}-{\theta}(i)\right) \varpi_{\btheta}(\bx) d\bx\\
    &=Z_{\btheta}^{-1}\left[ \sum_{k=1}^m \int_{\mX_k} 
     \pi(\bx) 1_{k\leq i} d\bx -\theta(i)\sum_{k=1}^m\int_{\mX_k} \pi(\bx)d\bx \right] \\
    &=  Z_{\btheta}^{-1}[\Psi(u_i)-\theta(i)] =
    Z_{\btheta}^{-1} \left[ \theta_{\star}(i)-\theta(i) \right].
\end{split}
\end{equation}
Considering the positive definite Lyapunov function $\mathbb{V}(\btheta)=\frac{1}{2}\|\btheta_{\star}-\btheta\|^2$ for the mean-field system $h(\btheta)=Z_{\btheta}^{-1} (\btheta_{\star}-\btheta)$, we have
\begin{equation*}
    \langle h(\btheta), \mathbb{V}(\btheta)\rangle=\langle h(\btheta), \btheta -\btheta_{\star}\rangle = -Z_{\btheta}^{-1}\|\btheta - \btheta_{\star}\|^2\leq  -\phi\|\btheta - \btheta_{\star}\|^2,
\end{equation*}
where $\phi=\inf_{\btheta} Z_{\btheta}^{-1}>0$ by
the compactness assumption \ref{ass2a}.  Therefore, $\langle h(\btheta), \mathbb{V}(\btheta)\rangle=\langle h(\btheta), \btheta -\btheta_{\star}\rangle\leq-\phi\left(\hat\btheta_{\star}-\btheta\right)^2<0$ holds for any $\btheta \neq \hat\btheta_{\star}$. This shows that the mean-field system is 
stable and $\btheta_{\star}$ is the only stable point.
\end{proof}


\begin{assump}[Step size]
\label{ass1}
$\{\omega_{k}\}_{k\in \mathrm{N}}$ is a positive decreasing sequence of real numbers such that
\begin{equation} \label{a1}
\omega_{k}\rightarrow 0, \ \ \sum_{k=1}^{\infty} \omega_{k}=+\infty,\ \  \lim_{k\rightarrow \infty} \inf 2\phi  \dfrac{\omega_{k}}{\omega_{k+1}}+\dfrac{\omega_{k+1}-\omega_{k}}{\omega^2_{k+1}}>0.
\end{equation}
According to \citet{Albert90}, we can choose $\omega_{k}:=\frac{A}{k^{\alpha}+B}$ for some $\alpha \in (\frac{1}{2}, 1]$ and some suitable constants 
 $A>0$ and $B>0$. 
 \end{assump}

The following lemma is a restatement of Lemma 1 in \citet{deng2019}, for which the required conditions are 
clearly satisfied due to the compactness assumption \ref{ass2a}.
\begin{lemma}[Uniform $L^2$ bounds]
\label{lemma:1}
Suppose Assumptions \ref{ass2a}-\ref{ass1} holds.  Given a small enough learning rate
 $\ 0<\epsilon<\operatorname{Re}(\tfrac{\tilde{m}-\sqrt{\tilde{m}^2-3M^2}}{3M^2})\wedge 1$,  then 
$\sup_{k\geq 1} \E[\|\bm{\xeta}_{k}\|^2] < \infty$.
\end{lemma}

\begin{lemma}[Solution of Poisson equation]
\label{lyapunov}
Suppose that Assumptions A2-A4 hold. 
There is a solution $\mu_{\btheta}(\cdot)$ on $\mX$ to the Poisson equation 
\begin{equation}
    \label{poisson_eqn}
    \mu_{\btheta}(\bm{x})-\mathrm{\Pi}_{\bm{\theta}}\mu_{\bm{\theta}}(\bm{x})=H(\bm{\theta}, \bm{x})-h(\bm{\theta}).
\end{equation}
such that for all $\bm{\theta}, \bm{\theta}'\in \bm{\bTheta}$ and a function  $V(\bx)=1+\|\bx\|^2$, there exists a constant $C$ such that
\begin{equation}
\begin{split}
\label{poisson_reg}
\E[\|\mathrm{\Pi}_{\bm{\theta}}\mu_{\btheta}(\bx)\|]&\leq C,\\
\E[\|\mathrm{\Pi}_{\bm{\theta}}\mu_{\bm{\theta}}(\bx)-\mathrm{\Pi}_{\bm{\theta}'}\mu_{\bm{\theta'}}(\bx)\|]&\leq C\|\bm{\theta}-\bm{\theta}'\|.\\
\end{split}
\end{equation}
\end{lemma}

\begin{proof}
The conditions of Theorem 13 of  \citet{VollmerZW2016} can be 
easily verified for AWSGLD given the assumptions A1-A4 and 
Lemma \ref{lemma:1}. The details are omitted. 
\end{proof}

Now we are ready to prove the first main result on the 
convergence of $\btheta_k$.
The technique lemmas are listed 
in Section \ref{Lemmasection}. 


\begin{theorem}[$L^2$ convergence rate]
\label{latent_convergence}
Suppose Assumptions $\ref{ass2a}$-$\ref{ass1}$ (given in the Appendix) hold. For a sufficiently
large value of $k_0$, a sufficiently small learning rate sequence  $\{\epsilon_k\}_{k=1}^{\infty}$,  and a sufficiently small
 step  size sequence $\{\omega_k\}_{k=1}^{\infty}$, 
 $\{\btheta_k\}_{k=1}^{\infty}$ converges to
 $\btheta_{\star}$ in $L_2$-norm  such that
\begin{equation*}
    \E\left[\|\bm{\theta}_{k}-\btheta_{\star}\|^2\right]=\mathcal{O}( \omega_{k}+\sup_{i\geq k_0}\E[\|\delta(m,n, \epsilon_i, \btheta_{i}, \bx_{i+1})\|]),
\end{equation*}
where $\delta(m,n, \epsilon_i, \btheta_{i},\bx_{i+1})$ denotes a bias term.
\end{theorem}
\begin{proof}
Consider the iterates 
\begin{equation*}
    \bm{\theta}_{k+1}=\bm{\theta}_{k}+\omega_{k+1} \left(\widetilde H(m,n, \epsilon_k, \bm{\theta}_{k}, \bm{x}_{k+1})+\omega_{k+1} \rho(\btheta_k, \bx_{k+1})\right).
\end{equation*}
Define $\bm{T}_{k}=\bm{\theta}_{k}-\btheta_{\star}$. By subtracting $\btheta_{\star}$ from both sides and taking the square and $L_2$ norm,  we have
\begin{equation*}
\small
\begin{split}
    \|\bT_{k+1}^2\|&=\|\bT_k^2\| +\omega_{k+1}^2 \|\widetilde H(m,n,\epsilon_k, \btheta_k, \bx_{k+1}) + \omega_{k+1}\rho(\btheta_k, \bx_{k+1})\|^2\\
    & +2\omega_{k+1}\underbrace{\langle \bT_k, \widetilde H(m,n,\epsilon_k, \btheta_k, \bx_{k+1})+\omega_{k+1}\rho(\btheta_k, \bx_{k+1})\rangle}_{\text{D}}.
\end{split}
\end{equation*}

First, by Lemma \ref{convex_property}, there exists a constant $G=9 Q^2(1+Q^2)$ such that
\begin{equation}
\label{first_term}
    \|\widetilde H(m,n,\epsilon_k, \btheta_k, \bx_{k+1}) + \omega_{k+1}\rho(\btheta_k, \bx_{k+1})\|^2 \leq G (1+\|\bT_k\|^2).
\end{equation}

Next, by the definition of $\widetilde H(m,n, \epsilon_k, \btheta_k, \bx_{k+1})$ in (\ref{tildeh}) and the Poisson equation (\ref{poisson_eqn}), we have
\begin{equation*}
\begin{split}
   \text{D}&=\langle \bT_k,  H(\btheta_k, \bx_{k+1})+\delta(m, n,\epsilon_k, \btheta_{k}, \bx_{k+1})+\omega_{k+1}\rho(\btheta_k, \bx_{k+1}) \rangle\\
   &=\langle \bT_k,  h(\btheta_k)+\mu_{\btheta_k}(\bm{x}_{k+1})-\mathrm{\Pi}_{\bm{\theta}_k}\mu_{\bm{\theta}_k}(\bm{x}_{k+1})+\delta(m,n,\epsilon_k, \btheta_{k}, \bx_{k+1})+\omega_{k+1}\rho(\btheta_k, \bx_{k+1}) \rangle\\
   &=\underbrace{\langle \bT_k,  h(\btheta_k)\rangle}_{\text{D}_{1}} +\underbrace{\langle\bT_k, \mu_{\btheta_k}(\bm{x}_{k+1})-\mathrm{\Pi}_{\bm{\theta}_k}\mu_{\bm{\theta}_k}(\bm{x}_{k+1})\rangle}_{\text{D}_{2}}+\underbrace{\langle \bT_k, \delta(m,n,\epsilon_k, \btheta_{k}, \bx_{k+1})+\omega_{k+1}\rho(\btheta_k, \bx_{k+1})\rangle}_{{\text{D}_{3}}}.
\end{split}
\end{equation*}

Using the stability property of the equilibrium in Lemma \ref{convex_appendix}, we have 
\begin{align*}
\langle \bm{T}_{k}, h(\bm{\theta}_{k})\rangle &\leq - \phi\|\bm{T}_{k}\|^2. \tag{$\text{D}_1$}
\end{align*}
To deal with the error $\text{D}_2$, we make the following decomposition 
\begin{equation*}
\begin{split}
\text{D}_2 &=\underbrace{\langle \bT_k, \mu_{\bm{\theta}_{k}}(\bm{\xeta}_{k+1})-\mathrm{\Pi}_{\bm{\theta}_{k}}\mu_{\bm{\theta}_{k}}(\bm{\bx}_{k})\rangle}_{\text{D}_{21}} \\
&+ \underbrace{\langle \bT_k,\mathrm{\Pi}_{\bm{\theta}_{k}}\mu_{\bm{\theta}_{k}}(\bm{x}_{k})- \mathrm{\Pi}_{\bm{\theta}_{k-1}}\mu_{\bm{\theta}_{k-1}}(\bm{x}_{k})\rangle}_{\text{D}_{22}}
+ \underbrace{\langle \bT_k,\mathrm{\Pi}_{\bm{\theta}_{k-1}}\mu_{\bm{\theta}_{k-1}}(\bm{x}_{k})- \mathrm{\Pi}_{\bm{\theta}_{k}}\mu_{\bm{\theta}_{k}}(\bm{\xeta}_{k+1})\rangle}_{\text{D}_{23}}.\\
\end{split}
\end{equation*}

\begin{itemize}
\item[(i)] From the Markov property, $\mu_{\bm{\theta}_{k}}(\bm{\xeta}_{k+1})-\mathrm{\Pi}_{\bm{\theta}_{k}}\mu_{\bm{\theta}_{k}}(\bm{x}_{k})$ forms a martingale difference sequence 
$$\E\left[\langle \bT_k, \mu_{\bm{\theta}_{k}}(\bm{\xeta}_{k+1})-\mathrm{\Pi}_{\bm{\theta}_{k}}\mu_{\bm{\theta}_{k}}(\bm{x}_{k})\rangle |\mathcal{F}_{k}\right]=0, \eqno{(\text{D}_{21})}$$
where  $\mF_k$ is a $\sigma$-filter formed by 
$\{\btheta_0,\bx_1, \btheta_1, \bx_2, \cdots, \bx_k,\btheta_k\}$.

\item[(ii)] By the regularity of the solution of Poisson equation in (\ref{poisson_reg}) and Lemma.\ref{theta_lip}, it leads to
\begin{equation}
\label{theta_delta}
\E[\|\mathrm{\Pi}_{\bm{\theta}_{k}}\mu_{\bm{\theta}_{k}}(\bm{x}_{k})- \mathrm{\Pi}_{\bm{\theta}_{k-1}}
 \mu_{\bm{\theta}_{k-1}}(\bm{x}_{k})\|]\leq C \|\btheta_k-\btheta_{k-1}\|\leq 2Q C\omega_k.
\end{equation}
Using Cauchy–Schwarz inequality, (\ref{theta_delta}) and the compactness of $\btheta$ in Assumption \ref{ass2a}, it follows that
$$\small{\E[\langle\bm{T}_{k},\mathrm{\Pi}_{\bm{\theta}_{k}}\mu_{\bm{\theta}_{k}}(\bm{x}_{k})- \mathrm{\Pi}_{\bm{\theta}_{k-1}}\mu_{\bm{\theta}_{k-1}}(\bm{x}_{k})\rangle]\leq \E[\|\bT_k\|]\cdot 2Q C\omega_k\leq 4Q^2 C\omega_{k}\leq 5Q^2 C\omega_{k+1}}   \eqno{(\text{D}_{22})},$$
where the last inequality follows from the step size assumption \ref{ass1} and holds for a sufficiently large value of $k$.

\item[(iii)] For the last term of $D_2$, 
\begin{equation*}
\begin{split}
\small
\langle \bm{T}_{k},\mathrm{\Pi}_{\bm{\theta}_{k-1}}\mu_{\bm{\theta}_{k-1}}(\bm{x}_{k})- \mathrm{\Pi}_{\bm{\theta}_{k}}\mu_{\bm{\theta}_{k}}(\bm{\xeta}_{k+1})\rangle
&=\left(\langle \bm{T}_{k}, \mathrm{\Pi}_{\bm{\theta}_{k-1}}\mu_{\bm{\theta}_{k-1}}(\bm{x}_{k}) \rangle- \langle \bm{T}_{k+1}, \mathrm{\Pi}_{\bm{\theta}_{k}}\mu_{\bm{\theta}_{k}}(\bm{\xeta}_{k+1})\rangle\right)\\
&\ \ \ +\left(\langle \bm{T}_{k+1}, \mathrm{\Pi}_{\bm{\theta}_{k}}\mu_{\bm{\theta}_{k}}(\bm{\xeta}_{k+1})\rangle-\langle \bm{T}_{k}, \mathrm{\Pi}_{\bm{\theta}_{k}}\mu_{\bm{\theta}_{k}}(\bm{\xeta}_{k+1})\rangle\right)\\
&={({z}_{k}-{z}_{k+1})}+{\langle \bm{T}_{k+1}-\bm{T}_{k}, \mathrm{\Pi}_{\bm{\theta}_{k}}\mu_{\bm{\theta}_{k}}(\bm{\xeta}_{k+1})\rangle},\\
\end{split}
\end{equation*}
where ${z}_{k}=\langle \bm{T}_{k}, \mathrm{\Pi}_{\bm{\theta}_{k-1}}\mu_{\bm{\theta}_{k-1}}(\bm{x}_{k})\rangle$. By the regularity assumption (\ref{poisson_reg}) and Lemma \ref{theta_lip}, 
$$\E\langle \bm{T}_{k+1}-\bm{T}_{k}, \mathrm{\Pi}_{\bm{\theta}_{k}}\mu_{\bm{\theta}_{k}}(\bm{\xeta}_{k+1})\rangle\leq   \E[\|\bm{\theta}_{k+1}-\bm{\theta}_{k}\|] \cdot \E[\|\mathrm{\Pi}_{\bm{\theta}_{k}}\mu_{\bm{\theta}_{k}}(\bm{\xeta}_{k+1})\|] \leq 2Q C \omega_{k+1}.\eqno{(\text{D}_{23})}$$
\end{itemize}

For convenience, we denote $\E[\|\delta(m,n,\epsilon_k, \btheta_{k}, \bx_{k+1})\|]$ by $\Delta_k$. Since $\rho(\btheta_k, \bx_{k+1})$ is bounded, we have
\begin{equation*}
    \E[\|\delta(m,n,\epsilon_k, \btheta_{k}, \bx_{k+1})+\omega_{k+1}\rho(\btheta_k, \bx_{k+1}))\|]\leq \Delta_k + \omega_{k+1} Q.
\end{equation*}
Applying Cauchy–Schwarz inequality gives
$${\E[\langle \bT_k, \delta(m,n,\epsilon_k, \btheta_{k}, \bx_{k+1})+\omega_{k+1}\rho(\btheta_k, \bx_{k+1}))]\leq 2Q(\Delta_k+\omega_{k+1} Q)} \eqno{(\text{D}_{3})}$$

Finally, adding (\ref{first_term}), $\text{D}_1$, $\text{D}_{2}$ and $\text{D}_3$ together, it follows that for a constant $C_0 = G+10Q^2C+4QC+4Q^2$,
\begin{equation}
\begin{split}
\label{key_eqn}
\E\left[\|\bm{T}_{k+1}\|^2\right]&\leq (1-2\omega_{k+1}\phi+G\omega^2_{k+1} )\E\left[\|\bm{T}_{k}\|^2\right]+C_0\omega^2_{k+1} +4Q\Delta_k\omega_{k+1} +2\E[z_{k}-z_{k+1}]\omega_{k+1}.
\end{split}
\end{equation}
Moreover, from (\ref{compactness}) and (\ref{poisson_reg}), $\E[|z_{k}|]$ is upper bounded by
\begin{equation}
\begin{split}
\label{condition:z}
\E[|z_{k}|]=\E[\langle \bm{T}_{k}, \mathrm{\Pi}_{\bm{\theta}_{k-1}}\mu_{\bm{\theta}_{k-1}}(\bm{x}_{k})\rangle]\leq \E[\|\bT_k\|]\E[\|\mathrm{\Pi}_{\bm{\theta}_{k-1}}\mu_{\bm{\theta}_{k-1}}(\bm{x}_{k})\|]\leq 2QC.
\end{split}
\end{equation}

According to Lemma $\ref{lemma:4}$, we can choose $\lambda_0$ and $k_0$ such that 
\begin{align*}
\E[\|\bm{T}_{k_0}\|^2]\leq \psi_{k_0}=\lambda_0 \omega_{k_0}+\frac{2Q}{\phi}\sup_{i\geq k_0}\Delta_{i},
\end{align*}
which satisfies the conditions ($\ref{lemma:3-a}$) and ($\ref{lemma:3-b}$) of Lemma $\ref{lemma:3-all}$. Applying Lemma $\ref{lemma:3-all}$ leads to
\begin{equation}
\begin{split}
\label{eqn:9}
\E\left[\|\bm{T}_{k}\|^2\right]\leq \psi_{k}+\E\left[\sum_{j=k_0+1}^{k}\Lambda_j^k \left(z_{j-1}-z_{j}\right)\right],
\end{split}
\end{equation}
where $\psi_{k}=\lambda_0 \omega_{k}+\frac{2Q}{\phi}\sup_{i\geq k_0}\Delta_{i}$ for all $k>k_0$. Based on ($\ref{condition:z}$) and the increasing condition of $\Lambda_{j}^k$ in Lemma $\ref{lemma:2}$, we have
\begin{equation}
\small
\begin{split}
\label{eqn:10}
&\E\left[\left|\sum_{j=k_0+1}^{k} \Lambda_j^k\left(z_{j-1}-z_{j}\right)\right|\right]
=\E\left[\left|\sum_{j=k_0+1}^{k-1}(\Lambda_{j+1}^k-\Lambda_j^k)z_j-2\omega_{k}z_{k}+\Lambda_{k_0+1}^k z_{k_0}\right|\right]\\
\leq& \sum_{j=k_0+1}^{k-1}2(\Lambda_{j+1}^k-\Lambda_j^k)QC+\E[|2\omega_{k} z_{k}|]+2\Lambda_k^k QC\\
\leq& 2(\Lambda_k^k-\Lambda_{k_0}^k)QC+2\Lambda_k^k QC+2\Lambda_k^k QC\\
\leq& 6\Lambda_k^k QC.
\end{split}
\end{equation}

Therefore, given $\psi_{k}=\lambda_0 \omega_{k}+\frac{2Q}{\phi}\sup_{i\geq k_0}\Delta_{i}$ that satisfies the conditions ($\ref{lemma:3-a}$), ($\ref{lemma:3-b}$) of Lemma $\ref{lemma:3-all}$, for any $k>k_0$, from ($\ref{eqn:9}$) and ($\ref{eqn:10}$), we have
\begin{equation*}
\E[\|\bm{T}_{k}\|^2]\leq \psi_{k}+6\Lambda_k^k QC=\left(\lambda_0+12QC\right)\omega_{k}+\frac{2Q}{\phi}\sup_{i\geq k_0}\Delta_{i}=\lambda \omega_{k}+\frac{2Q}{\phi}\sup_{i\geq k_0}\Delta_{i},
\end{equation*}
where $\lambda=\lambda_0+12QC$, $\lambda_0=\frac{4GQ\sup_{i\geq k_0} \Delta_i + 2C_0\phi}{C_1\phi}$, $\small{C_1=\lim \inf 2\phi \dfrac{\omega_{k}}{\omega_{k+1}}+\dfrac{\omega_{k+1}-\omega_{k}}{\omega^2_{k+1}}>0}$, $C_0=G+5Q^2C+2QC+2Q^2$ and $G=9 Q^2(1+Q^2)$.
\end{proof}

\subsection{Technical Lemmas} \label{Lemmasection}

\begin{lemma}
\label{convex_property}
Given $\sup\{\omega_k\}_{k=1}^{\infty}\leq 1$, there exists a constant $G=9 Q^2(1+Q^2)$ such that
\begin{equation} \label{bound2}
\| \widetilde H(m, n, \epsilon_k, \bm{\theta}_k, \bm{\xeta}_{k+1})+\omega_{k+1}\rho(\btheta_k, \bx_{k+1})\|^2 \leq G (1+\|\bm{\theta}_k-\btheta_{\star}\|^2). 
\end{equation}
\end{lemma}
\begin{proof}

According to the compactness condition in Assumption \ref{ass2a}, we have
\begin{equation}
\label{mid_1}
\|H(\bm{\theta}_k, \bm{\xeta}_{k+1})\|^2\leq Q^2 (1+\|\bm{\theta}_k\|^2) = 
 Q^2 (1+\|\bm{\theta}_k-\btheta_{\star}+\btheta_{\star}\|^2)\leq Q^2 (1+2\|\bm{\theta}_k-\btheta_{\star}\|^2+2Q^2).
\end{equation}

Therefore, using (\ref{mid_1}), we can show that for a constant $G=9Q^2(1+Q^2)$
\begin{equation*}
\small
\begin{split}
    &\ \ \ \|\widetilde H(m, n, \epsilon_k, \bm{\theta}_k, \bm{\xeta}_{k+1})+\omega_{k+1}\rho(\btheta_k, \bx_{k+1})\|^2 \\
    &\leq 3\|H(\bm{\theta}_k, \bm{\xeta}_{k+1})\|^2 + 3\|\delta(m, n, \epsilon_k, \btheta_{k}, \bx_{k+1})\|^2+3\omega_{k+1}^2 \|\rho(\btheta_k, \bx_{k+1})\|\\
    &\leq 3Q^2 (1+2\|\bm{\theta}_k-\btheta_{\star}\|^2+2Q^2) + 6Q^2\\
    &\leq 3Q^2 (3+3Q^2+(3+3Q^2)\|\bm{\theta}_k-\btheta_{\star}\|^2)\\
    &\leq G (1+\|\bm{\theta}_k-\btheta_{\star}\|^2).
\end{split}
\end{equation*}
\end{proof}

\begin{lemma}
\label{theta_lip}Given $\sup\{\omega_k\}_{k=1}^{\infty}\leq 1$, we have that
\begin{equation}
\label{lip_theta}
    \|\btheta_{k}-\btheta_{k-1}\|\leq 2\omega_{k} Q
\end{equation}
\end{lemma}

\begin{proof}
Following the update $\btheta_k-\btheta_{k-1}=\omega_k \widetilde H(m, n, \epsilon_{k-1},\bm{\theta}_{k-1}, \bm{x}_{k})+\omega_{k}^2 \brho_{k}$, we have that
$$\|\btheta_{k}-\btheta_{k-1}\|= \|\omega_k \widetilde H(m, n, \epsilon_{k-1}, \bm{\theta}_{k-1}, \bm{x}_{k})+\omega_{k}^2 \brho_{k}\|\leq \omega_k\| \widetilde H(m, n, \epsilon_k, \bm{\theta}_{k-1},\bm{x}_{k})\|+\omega_{k}^2\| \brho_{k}\|.$$
By the compactness condition in Assumption \ref{ass2a} and $\sup\{\omega_k\}_{k=1}^{\infty}\leq 1$, (\ref{lip_theta}) can be derived.
\end{proof}

\begin{lemma}
\label{lemma:4}
There exist constants $\lambda_0$ and $k_0$ such that $\forall \lambda\geq\lambda_0$ and $\forall k> k_0$, the sequence $\{\psi_{k}\}_{k=1}^{\infty}$, where $\psi_{k}=\lambda\omega_{k}+\frac{2Q}{\phi} \sup_{i\geq k_0}\Delta_i$, satisfies
\begin{equation}
\begin{split}
\label{key_ieq}
\psi_{k+1}\geq& (1-2\omega_{k+1}\phi+G\omega_{k+1}^2)\psi_{k}+C_0\omega_{k+1}^2  +4Q \Delta_k\omega_{k+1}.
\end{split}
\end{equation}
\begin{proof}
By replacing $\psi_{k}$ with $\lambda\omega_{k}+\frac{2Q}{\phi} \sup_{i\geq k_0}\Delta_i$ in ($\ref{key_ieq}$), it suffices to show
\begin{equation*}
\small
\begin{split}
\label{lemma:loss_control}
\lambda \omega_{k+1}+\frac{2Q}{\phi} \sup_{i\geq k_0}\Delta_i\geq& (1-2\omega_{k+1}\phi+G\omega_{k+1}^2)\left(\lambda \omega_{k}+\frac{2Q}{\phi} \sup_{i\geq k_0}\Delta_i\right)+C_0\omega_{k+1}^2 + 4Q\Delta_k\omega_{k+1}.
\end{split}
\end{equation*}

which is equivalent to proving
\begin{equation*}
\small
\begin{split}
&\lambda (\omega_{k+1}-\omega_k+2\omega_k\omega_{k+1}\phi-G\omega_k\omega_{k+1}^2)\geq  \frac{2Q}{\phi}\sup_{i\geq k_0}\Delta_i(-2\omega_{k+1}\phi+G\omega_{k+1}^2 )+C_0\omega_{k+1}^2+ 4Q\Delta_k\omega_{k+1}.
\end{split}
\end{equation*}

Given the step size condition in ($\ref{a1}$), we have $$\small{\omega_{k+1}-\omega_{k}+2 \omega_{k}\omega_{k+1}\phi \geq C_1 \omega_{k+1}^2},$$ 
where $\small{C_1=\lim \inf 2\phi  \dfrac{\omega_{k}}{\omega_{k+1}}+\dfrac{\omega_{k+1}-\omega_{k}}{\omega^2_{k+1}}>0}$. Combining $-\sup_{i\geq k_0}\Delta_i\leq \Delta_k$, it suffices to prove
\begin{equation}
\begin{split}
\label{loss_control-2}
\lambda \left(C_1-G\omega_{k}\right)\omega^2_{k+1}\geq  \left(\frac{2GQ}{\phi} \sup_{i\geq k_0}\Delta_i+C_0\right)\omega^2_{k+1}.
\end{split}
\end{equation}

It is clear that for a large enough $k_0$ and $\lambda_0$ such that $\omega_{k_0}\leq \frac{C_1}{2G}$, $\lambda_0=\frac{4GQ\sup_{i\geq k_0} \Delta_i + 2C_0\phi}{C_1\phi}$, the desired conclusion ($\ref{loss_control-2}$) holds for all such $k\geq k_0$ and $\lambda\geq \lambda_0$.
\end{proof}
\end{lemma}

The following lemma is a restatement of Lemma 25 (page 247) from \citet{Albert90}.
\begin{lemma}
\label{lemma:2}
Suppose $k_0$ is an integer satisfying
$\inf_{k> k_0} \dfrac{\omega_{k+1}-\omega_{k}}{\omega_{k}\omega_{k+1}}+2\phi-G\omega_{k+1}>0$ 
for some constant $G$. 
Then for any $k>k_0$, the sequence $\{\Lambda_k^K\}_{k=k_0, \ldots, K}$ defined below is increasing and uppered bounded by $2\omega_{k}$
\begin{equation}  
\Lambda_k^K=\left\{  
             \begin{array}{lr}  
             2\omega_{k}\prod_{j=k}^{K-1}(1-2\omega_{j+1}\phi+G\omega_{j+1}^2) & \text{if $k<K$},   \\  
              & \\
             2\omega_{k} &  \text{if $k=K$}.
             \end{array}  
\right.  
\end{equation} 
\end{lemma}

\begin{lemma}
\label{lemma:3-all}
Let $\{\psi_{k}\}_{k> k_0}$ be a series that satisfies the following inequality for all $k> k_0$
\begin{equation}
\begin{split}
\label{lemma:3-a}
\psi_{k+1}\geq &\psi_{k}\left(1-2\omega_{k+1}\phi+G\omega^2_{k+1}\right)+C_0\omega^2_{k+1} + 4Q \Delta_k\omega_{k+1},
\end{split}
\end{equation}
and assume there exists such $k_0$ that 
\begin{equation}
\begin{split}
\label{lemma:3-b}
\E\left[\|\bm{T}_{k_0}\|^2\right]\leq \psi_{k_0}.
\end{split}
\end{equation}
Then for all $k> k_0$, we have
\begin{equation}
\begin{split}
\label{result}
\E\left[\|\bm{T}_{k}\|^2\right]\leq \psi_{k}+\sum_{j=k_0+1}^{k}\Lambda_j^k (z_{j-1}-z_j).
\end{split}
\end{equation}
\end{lemma}

\begin{proof}
We prove by the induction method. Assuming (\ref{result}) is true and applying (\ref{key_eqn}), we have that 
\begin{equation*}
\small
\begin{split}
    \E\left[\|\bm{T}_{k+1}\|^2\right]&\leq (1-2\omega_{k+1}\phi+\omega^2_{k+1} G)(\psi_{k}+\sum_{j=k_0+1}^{k}\Lambda_j^k (z_{j-1}-z_j))+C_0\omega^2_{k+1} +4Q \Delta_k\omega_{k+1}+2\omega_{k+1}\E[z_{k}-z_{k+1}]\\
\end{split}
\end{equation*}

Combining (\ref{key_ieq}) and Lemma.\ref{lemma:2}, respectively, we have
\begin{equation*}
\small
\begin{split}
    \E\left[\|\bm{T}_{k+1}\|^2\right]&\leq  \psi_{k+1}+(1-2\omega_{k+1}\phi+\omega^2_{k+1} G)\sum_{j=k_0+1}^{k}\Lambda_j^k (z_{j-1}-z_j)+2\omega_{k+1}\E[z_{k}-z_{k+1}]\\
    & \leq \psi_{k+1}+\sum_{j=k_0+1}^{k}\Lambda_j^{k+1} (z_{j-1}-z_j)+\Lambda_{k+1}^{k+1}\E[z_{k}-z_{k+1}]\\
    & \leq \psi_{k+1}+\sum_{j=k_0+1}^{k+1}\Lambda_j^{k+1} (z_{j-1}-z_j).\\
\end{split}
\end{equation*}
\end{proof}

\section{Ergodicity and weighted averaging estimators}
\label{ergodic}
Our interest is to analyze the deviation between the weighted averaging estimator $\frac{1}{k}\sum_{i=1}^k\theta_{i}^{\zeta}( \tilde{J}(\bx_i)) f(\bx_i)$ and posterior average $\int_{\mX}f(\bx)\pi(d\bx)$ for an integrable function $f$. To accomplish this analysis, we first study the convergence of the 
posterior sample mean $\frac{1}{k}\sum_{i=1}^k f(\bx_i)$ 
to the posterior expectation $\bar{f}=\int_{\mX}f(\bx)\varpi_{\btheta_{\star}}(d\bx)$. 
The key tool for ergodic theory is still the Poisson equation 
which is used to characterize the fluctuation 
between $f(\bx)$ and $\bar f$: 
\begin{equation}
    \mathcal{L}g(\bx)=f(\bx)-\bar f,
\end{equation}
where $g(\bx)$ is the solution of the Poisson equation, and $\mathcal{L}$ is the infinitesimal generator of the Langevin diffusion 
\begin{equation*}
    \mathcal{L}g:=\nabla g \nabla L(\cdot, \btheta_{\star})+\tau\nabla^2g.
\end{equation*}

By imposing the following regularity conditions on the function $g(\bx)$, we can control the perturbations of $\frac{1}{k}\sum_{i=1}^k f(\bx_i)-\bar f$ and enables convergence of the ergodic average.
\begin{assump}[Regularity] 
\label{ass6}
Given a sufficiently smooth function $g(\bx)$ and a function $\mathcal{V}(\bx)$, such that $\|D^k g\|\lesssim \mathcal{V}^{p_k}(\bx)$ and $p_k>0$ for $k\in\{0,1,2,3\}$. In addition, $\mathcal{V}^p$ has a bounded expectation: $\sup_{\bx} \E[\mathcal{V}^p(\bx)]<\infty$ and $\mathcal{V}$ is smooth, i.e. $\sup_{s\in\{0, 1\}} \mathcal{V}^p(s\bx+(1-s)\by)\lesssim \mathcal{V}^p(\bx)+\mathcal{V}^p(\by)$ for all $\bx,\by\in\mX$ and $p\leq 2\max_k\{p_k\}$.
\end{assump}

Now, we present a lemma, which is majorly adapted from Theorem 2 \citep{Chen15} with a fixed learning rate $\epsilon$. For stronger but verifiable conditions, we refer readers to \citet{VollmerZW2016}.
\begin{lemma}[Convergence of the Averaging Estimators]
\label{avg_converge_appendix}
Assume Assumptions $\ref{ass2a}$-$\ref{ass6}$ hold.  For any integrable function $f$, we have
\begin{equation*}
\small
\begin{split}
    \left|\E\left[\frac{\sum_{i=1}^k f(\bx_i)}{k}\right]-\int_{\mX}f(\bx)\varpi_{\bm{\theta}_{\star}}(\bx)d\bx\right|&= \mathcal{O}\left(\frac{1}{k\epsilon}+\epsilon+\sqrt{\frac{\sum_{i=1}^k \omega_k}{k}}+\sup_{i\geq k_0}\E[\|\delta(m,n, \epsilon, \btheta_{i}, \bx_{i+1})\|]^{0.5}\right). \\
\end{split}
\end{equation*}
\end{lemma}

\begin{proof}

To study the ergodic average, we can view the adaptive algorithm as a standard sampling algorithm with fixed latent variable $\btheta_{\star}$ as follows
\begin{equation*}
\begin{split}
    \bm{x}_{k+1}&=\bx_k- \epsilon_k\nabla_{\bx} \widetilde L(\bx_k, \btheta_k)+\mathcal{N}({0, 2\epsilon_k \tau\bm{I}})\\
    &=\bx_k- \epsilon_k\left(\nabla_{\bx} \widetilde L(\bx_k, \btheta_{\star})+\Upsilon(\bx_k,\btheta_{\star})\right)+\mathcal{N}({0, 2\epsilon_k \tau\bm{I}}),
\end{split}
\end{equation*}
where the bias term is denoted by $$\Upsilon(\bx_k,\btheta_k)=\nabla L(\bx_k,\btheta_k)-\nabla L(\bx_k,\btheta_{\star}).$$

According to the smoothness assumption \ref{ass2} and Jensen's inequality, we have
\begin{equation}
\label{latent_bias}
\small
    \|\E[\Upsilon(\bx_k,\btheta_k)]\|\leq \E[\|\Upsilon(\bx_k,\btheta_k)\|]=\E[\|\nabla_{\bx} L(\bx, \btheta_k)-\nabla_{\bx}  L(\bx, \btheta_{\star})\|] \leq M\E[\|\btheta_k-\btheta_{\star}\|] \leq M\sqrt{\E[\|\btheta_k-\btheta_{\star}\|^2]}.
\end{equation}

The ergodic average based on biased gradients and a fixed learning rate is a direct result of Theorem 2 \citep{Chen15} by imposing regularity condition \ref{ass6}. Combining (\ref{latent_bias}) and Theorem \ref{latent_convergence}, we know that 
\begin{equation*}
\small
\begin{split}
    \left|\E\left[\frac{\sum_{i=1}^k f(\bx_i)}{k}\right]-\int_{\mX}f(\bx)\varpi_{\bm{\theta}_{\star}}(\bx)d\bx\right|&\leq \mathcal{O}\left(\frac{1}{k\epsilon}+\epsilon+\frac{\sum_{i=1}^k \|\E[\Upsilon(\bx_k,\btheta_k)]\|}{k}\right)\\
    &\lesssim \mathcal{O}\left(\frac{1}{k\epsilon}+\epsilon+\frac{\sum_{i=1}^k \left(\omega_k+\sup_{i\geq k_0}\E[\|\delta(m, n, \epsilon, \btheta_{i}, \bx_{i+1})\|]\right)^{0.5}}{k}\right), \\
    &\leq \mathcal{O}\left(\frac{1}{k\epsilon}+\epsilon+\sqrt{\frac{\sum_{i=1}^k \omega_k}{k}}+\sup_{i\geq k_0}\E[\|\delta(m, n, \epsilon, \btheta_{i}, \bx_{i+1})\|]^{0.5}\right)
\end{split}
\end{equation*}
where the last inequality follows from $\small{(\omega_k+\Delta)^{0.5}\leq \omega_k^{0.5}+\Delta^{0.5}}$ and $\small{\sum_{i=1}^k \omega_i^{0.5}\leq \sqrt{k\sum_{i=1}^k \omega_i}}$.
\end{proof}


Now we are ready to show the convergence of the weighted averaging estimator $\frac{\sum_{i=1}^k\theta_{i}
     ^{\zeta}(\tilde{J}(\bx_i)) f(\bx_i)}{\sum_{i=1}^k\theta_{i}^{\zeta}( 
      \tilde{J}(\bx_i))}$ to the posterior mean $\int_{\mX}f(\bx)\pi(d\bx)$.
\begin{theorem}[Convergence of the Weighted Averaging Estimators] Assume Assumptions $\ref{ass2a}$-$\ref{ass6}$ hold. For any bounded function $f$, we have that 
\label{w_avg_converge_appendix}
\begin{equation*}
\small
\begin{split}
    \left|\E\left[\frac{\sum_{i=1}^k\theta_{i}
     ^{\zeta}(\tilde{J}(\bx_i)) f(\bx_i)}{\sum_{i=1}^k\theta_{i}^{\zeta}( 
      \tilde{J}(\bx_i))}\right]-\int_{\mX}f(\bx)\pi(d\bx)\right|&= \mathcal{O}\left(\frac{1}{k\epsilon}+\epsilon+\sqrt{\frac{\sum_{i=1}^k \omega_k}{k}}+\sup_{n\geq k_0}\E[\|\delta(m, n, \epsilon, \btheta_{n}, \bx_{n+1})\|]^{0.5}\right). \\
\end{split}
\end{equation*}
\end{theorem}

\begin{proof}

Applying triangle inequality and $|\E[x]|\leq \E[|x|]$, we have
\begin{equation*}
\small
    \begin{split}
        &\left|\E\left[\frac{\sum_{i=1}^k\theta_{i}
        ^{\zeta}( \tilde{J}(\bx_i)) f(\bx_i)}{\sum_{i=1}^k\theta_{i}^{\zeta}(
         \tilde{J}(\bx_i))}\right]-\int_{\mX}f(\bx)\pi(d\bx)\right|\\
        \leq &\underbrace{\E\left[\left|\frac{\sum_{i=1}^k\theta_{i}^{\zeta}
         (\tilde{J}(\bx_i))f(\bx_i)}
         {\sum_{i=1}^k\theta_{i}^{\zeta}(\tilde{J}(\bx_i)) }-\frac{Z_{\btheta_{\star}}\sum_{i=1}^k\theta_{i}^{\zeta} (\tilde{J}(\bx_i)) f(\bx_i)}{k}\right|\right]}_{\text{I}_1}\\
        &\ \ \ \ + \underbrace{\E\left[\frac{Z_{\btheta_{\star}}}{k}\sum_{i=1}^k\left|\theta_i^{\zeta} (\tilde{J}(\bx_i))-\theta_{\star}^{\zeta}
      (\tilde{J}(\bx_i))  \right| \cdot |f(\bx_i)|\right]}_{\text{I}_2} +\underbrace{\left|\E\left[\frac{Z_{\btheta_{\star}}}{k}\sum_{i=1}^k\theta_{\star}^{\zeta}
     (\tilde{J}(\bx_i)) f(\bx_i)\right]-\int_{\mX}f(\bx)\pi(d\bx)\right|}_{\text{I}_3}.
    \end{split}
\end{equation*}


For the first term $\text{I}_1$, 
by the boundedness of $\bTheta$ and $f$ and $\inf_{\btheta, i}\theta(i)^{\zeta}>0$, we have
\begin{equation*}
\small
\begin{split}
    \text{I}_1=&\E\left[\left|\frac{\sum_{i=1}^k\theta_{i}^{\zeta}(\tilde{J}(\bx_i))  f(\bx_i)}{\sum_{i=1}^k\theta_{i}^{\zeta}
    (\tilde{J}(\bx_i))
    }\left(1-\sum_{i=1}^k\frac{\theta_i^{\zeta}
    (\tilde{J}(\bx_i))
    }{k}Z_{\btheta_{\star}}\right)\right|\right]\\
    \lesssim & \E\left[\left|Z_{\btheta_{\star}}\frac{{\sum_{i=1}^k\theta_{i}^{\zeta}
    (\tilde{J}(\bx_i))
    }}{k}-1\right|\right]\\
    =&\E\left[\left|Z_{\btheta_{\star}}\sum_{i=1}^m \frac{\sum_{j=1}^k\left( \theta_j^{\zeta}(i)-\theta_{\star}^{\zeta}(i)+\theta_{\star}^{\zeta}(i)\right)1_{
    \tilde{J}(\bx_j)=i}}{k}-1\right|\right]\\
    \leq & \underbrace{\E\left[Z_{\btheta_{\star}}\sum_{i=1}^m \frac{\sum_{j=1}^k\left| \theta_j^{\zeta}(i)-\theta_{\star}^{\zeta}(i)\right| 1_{\tilde{J}(\bx_j)=i}}{k} \right]}_{\text{I}_{11}} + \underbrace{\E\left[\left| Z_{\btheta_{\star}}\sum_{i=1}^m \frac{\theta_{\star}^{\zeta}(i)\sum_{j=1}^k  1_{\tilde{J}(\bx_j)=i}}{k}-1\right|\right]}_{\text{I}_{12}}\\
\end{split}
\end{equation*}

Regarding $\text{I}_{11}$, first applying $|x^{\zeta}-y^{\zeta}|\leq |x-y| z^{\zeta}$ for any $\zeta>0, x\leq y$ and $z\in[x, y]$ based on the mean-value theorem and then using Cauchy–Schwarz inequality
\begin{equation}
    \text{I}_{11}\lesssim \frac{1}{k}\E\left[ \sum_{j=1}^k\sum_{i=1}^m\left| \theta_j^{\zeta}(i)-\theta_{\star}^{\zeta}(i)\right| \right]\lesssim  \frac{1}{k}\E\left[ \sum_{j=1}^k\sum_{i=1}^m\left| \theta_j(i)-\theta_{\star}(i)\right| \right]\lesssim  \frac{1}{k}\sqrt{\sum_{j=1}^k\E\left[\left\| \btheta_j-\btheta_{\star}\right\|^2\right]},
\end{equation}
where the compactness of $\Theta$ has been 
used in deriving the second inequality. 

For $\text{I}_{12}$, considering the following relation $$
    1=\sum_{i=1}^m\int_{\mX_i} \pi(\bx)d\bx=\sum_{i=1}^m\int_{\mX_i} \theta_{\star}^{\zeta}(i) \frac{\pi(\bx)}{\theta_{\star}^{\zeta}(i)}d\bx
    =Z_{\btheta_{\star}}\sum_{i=1}^m \theta_{\star}^{\zeta}(i)\int_{\mX} 1_{\tilde{J}(\bx)=i}\varpi_{\btheta_{\star}}(\bx)d\bx,$$ then we have
\begin{equation}
\begin{split}
    \text{I}_{12}&=\E\left[\left|Z_{\btheta_{\star}}\sum_{i=1}^m \theta_{\star}^{\zeta}(i)\left(\frac{\sum_{j=1}^k 1_{
    \tilde{J}(\bx_j)=i}}{k}-\int_{\mX}1_{
    \tilde{J}(\bx)=i}\varpi_{\btheta_{\star}}(\bx)d\bx\right)\right|\right]\\
    &\lesssim \sum_{i=1}^m\E\left[\left|\frac{\sum_{j=1}^k 1_{ \tilde{J}(\bx_j)=i}
    }{k}-\int_{\mX} 1_{\tilde{J}(\bx)=i }\varpi_{\btheta_{\star}}(\bx)d\bx\right|\right]\\
    &=\mathcal{O}\left(\frac{1}{k\epsilon}+\epsilon+\sqrt{\frac{\sum_{i=1}^k \omega_k}{k}}+\sup_{n\geq k_0}\E[\|\delta(m,n, \epsilon, \btheta_{n}, \bx_{n+1})\|]^{0.5}\right),
\end{split}
\end{equation}
where the last equality follows  from Lemma \ref{avg_converge_appendix} as the indicator 
function $1_{ \tilde{J}(\bx)=i}$ is integrable.

For $\text{I}_2$, by the boundedness of $f$,  
the mean value theorem and Cauchy-Schwarz inequality, 
we have 
\begin{equation*}
\small
    \begin{split}
        \text{I}_2&\lesssim \E\left[\frac{1}{k}\sum_{i=1}^k\left|\theta_{i}
        ^{\zeta}(\tilde{J}(\bx_i)) -\theta_{\star}^{\zeta}(
        \tilde{J}(\bx_i))\right|\right]\lesssim  \frac{1}{k}\E\left[ \sum_{j=1}^k\sum_{i=1}^m\left| \theta_j(i)-\theta_{\star}(i)\right| \right]\lesssim  \frac{1}{k}\sqrt{\sum_{j=1}^k\E\left[\left\| \btheta_j-\btheta_{\star}\right\|^2\right]}.\\
    \end{split}
\end{equation*}

For the last term $\text{I}_3$, we first decompose $\int_{\mX} f(\bx) \pi(d\bx)$ into $m$ disjoint regions to facilitate the analysis
\begin{equation}
\label{split_posterior}
\small
\begin{split}
      \int_{\mX} f(\bx) \pi(d\bx)=\int_{\cup_{j=1}^m \mX_j}  f(\bx) \pi(d\bx)=\sum_{j=1}^m\theta_{\star}^{\zeta}(j)\int_{\mX_j}  f(\bx) \frac{\pi(d\bx)}{\theta_{\star}^{\zeta}(j)}=Z_{\btheta_{\star}}\sum_{j=1}^m\theta_{\star}(j)^{\zeta}\int_{\mX_j}  f(\bx) \varpi_{\bm{\theta}_{\star}}(d\bx).\\
\end{split}
\end{equation}

Plugging (\ref{split_posterior}) into the third term $\text{I}_3$, we have
\begin{equation}
\label{final_i2}
\small
    \begin{split}
        \text{I}_3&=\left|\E\left[\frac{Z_{\btheta_{\star}}}{k}\sum_{i=1}^k\sum_{j=1}^m\theta_{\star}(j)^{\zeta} f(\bx_i)1_{  \tilde{J}(\bx_i)=j
        }\right]-\int_{\mX}f(\bx)\pi(d\bx)\right|\\
        &=Z_{\btheta_{\star}}\left|\sum_{j=1}^m\theta_{\star}^{\zeta}(j)\E\left[\frac{1}{k}\sum_{i=1}^k f(\bx_i)
        1_{ \tilde{J}(\bx_i)=j 
        }\right]-\sum_{j=1}^m\theta_{\star}^{\zeta}(j)\int_{\mX_j}  f(\bx) \varpi_{\bm{\theta}_{\star}}(d\bx)\right|\\
        &\leq Z_{\btheta_{\star}}\sum_{j=1}^m\theta_{\star}^{\zeta}(j)\left|\E\left[\frac{1}{k}\sum_{i=1}^k f(\bx_i)1_{
        \tilde{J}(\bx_i)=j 
        }\right]-\int_{\mX_j}  f(\bx) \varpi_{\bm{\theta}_{\star}}(d\bx)\right|.\\
    \end{split}
\end{equation}

Given any $j\in \{1,2,...,m\}$, applying the function $f(\bx)1_{\tilde{J}(\bx)=j}$ 
to Lemma \ref{avg_converge_appendix} yields
\begin{equation}
\label{almost_i2}
\small
\begin{split}
      \left|\E\left[\frac{1}{k}\sum_{i=1}^k f(\bx_i)1_{
      \tilde{J}(\bx_i)=j 
      }\right]-\int_{\mX_j}  f(\bx) \varpi_{\bm{\theta}_{\star}}(d\bx)\right|\leq \mathcal{O}\left(\frac{1}{k\epsilon}+\epsilon+\sqrt{\frac{\sum_{i=1}^k \omega_k}{k}}+\sup_{i\geq k_0}\E[\|\delta(m, n, \epsilon, \btheta_{i}, \bx_{i+1})\|]^{0.5}\right).\\
\end{split}
\end{equation}

Plugging (\ref{almost_i2}) into (\ref{final_i2}) and combining $\text{I}_{11}$, $\text{I}_{12}$, $\text{I}_2$ and Theorem \ref{latent_convergence}, we have
\begin{equation}
\small
\begin{split}
      \left|\E\left[\frac{\sum_{i=1}^k\theta_{i}
     ^{\zeta}(\tilde{J}(\bx_i)) f(\bx_i)}{\sum_{i=1}^k\theta_{i}^{\zeta}( 
      \tilde{J}(\bx_i))}\right]-\int_{\mX}f(\bx)\pi(d\bx)\right|\leq \mathcal{O}\left(\frac{1}{k\epsilon}+\epsilon+\sqrt{\frac{\sum_{i=1}^k \omega_k}{k}}+\sup_{i\geq k_0}\E[\|\delta(m, n, \epsilon, \btheta_{i}, \bx_{i+1})\|]^{0.5}\right).\\
\end{split}
\end{equation}

\end{proof}

\bibliography{myref}
\bibliographystyle{plainnat}


\maketitle

The supplementary material is organized as follows: Section \ref{review} provides a  review 
for the related methodologies, Section \ref{convergence} proves the stability condition and convergence of the self-adapting parameter, Section \ref{ergodicity} establishes the ergodicity of the CSGLD algorithm,  and Section \ref{ext} provides a brief discussion for hyperparameter settings. 

\section{Background on stochastic approximation and Poisson equation}
\label{review}

\subsection{Stochastic approximation}
Stochastic approximation \citep{Albert90} provides a standard framework for the development of adaptive algorithms. Given a random field function $\widetilde H(\bm{\btheta}, \bm{\bx})$, the goal of the stochastic approximation algorithm is to find the solution to the  mean-field equation $h(\btheta)=0$, i.e., solving
\begin{equation*}
\begin{split}
\label{sa00}
h(\btheta)&=\int_{\MX} \widetilde H(\bm{\theta}, \bm{\bx}) \varpi_{\bm{\theta}}(d\bm{\bx})=0,
\end{split}
\end{equation*}
where $\bx\in \MX \subset \mathbb{R}^d$, $\btheta\in\bTheta \subset \mathbb{R}^{m}$, $\widetilde H(\btheta,\bx)$ is a random field 
function and $\varpi_{\btheta}(\bx)$ is a distribution function of $\bx$ depending on the parameter $\btheta$. The stochastic approximation  algorithm works by repeating the following iterations
\begin{itemize}
\item[(1)] Draw $\bm{x}_{k+1}\sim\Pi_{\bm{\theta_{k}}}(\bm{x}_{k}, \cdot)$, where $\Pi_{\bm{\theta_{k}}}(\bm{x}_{k}, \cdot)$ is a transition kernel that admits $ \varpi_{\bm{\theta}_{k}}(\bm{x})$ as
the invariant distribution,

\item[(2)] Update $\bm{\theta}_{k+1}=\bm{\theta}_{k}+\omega_{k+1} \widetilde H(\bm{\theta}_{k}, \bm{x}_{k+1})+\omega_{k+1}^2 \rho(\bm{\theta}_{k}, \bm{x}_{k+1}),$
where $\rho(\cdot,\cdot)$ denotes a bias term. 
\end{itemize}

The algorithm differs from the Robbins–Monro algorithm \citep{Robbins51} in that $\bx$ is simulated from a transition kernel $\Pi_{\bm{\theta_{k}}}(\cdot, \cdot)$ instead of the exact distribution $\varpi_{\bm{\theta}_{k}}(\cdot)$. As a result, a Markov state-dependent noise $\widetilde H(\btheta_k, \bx_{k+1})-h(\btheta_k)$ is generated, which requires some regularity conditions to control the fluctuation $\sum_k \Pi_{\btheta}^k (\widetilde H(\btheta, \bx)-h(\btheta))$. Moreover, it supports a more general form where a bounded bias term $\rho(\cdot,\cdot)$ is allowed without affecting the theoretical properties of the algorithm.

\subsection{Poisson equation}

Stochastic approximation generates a nonhomogeneous Markov chain $\{(\bx_k, \btheta_k)\}_{k=1}^{\infty}$, for which the convergence theory can be studied based on the Poisson equation 
\begin{equation*}
    \mu_{\btheta}(\bm{x})-\mathrm{\Pi}_{\bm{\theta}}\mu_{\bm{\theta}}(\bm{x})=\widetilde H(\bm{\theta}, \bm{x})-h(\bm{\theta}),
\end{equation*}
where $\Pi_{\bm{\theta}}(\bm{x}, A)$ is the transition kernel for any Borel subset $A\subset \MX$ and $\mu_{\btheta}(\cdot)$ is a function on $\MX$.
The existence of the solution to the Poisson equation can be identified when the following series converges:
\begin{equation*}
    \mu_{\btheta}(\bx):=\sum_k \Pi_{\btheta}^k (\widetilde H(\btheta, \bx)-h(\btheta)).
\end{equation*}
In other words, the consistency of the estimator $\btheta$ can be established by controlling the perturbations of $\sum_k \Pi_{\btheta}^k (\widetilde H(\btheta, \bx)-h(\btheta))$ via some regularity conditions on $\mu_{\btheta}(\cdot)$. To avoid studying the individual algorithms, \cite{Albert90} has simplified the work to the justification of the regularity conditions on $\mu_{\btheta}(\cdot)$:

There exist a function  $V: \MX \to [1,\infty)$, and a constant $C$ such that for all $\bm{\theta}, \bm{\theta}'\in \bm{\bTheta}$, we have
\begin{equation*}
\begin{split}
\|\mathrm{\Pi}_{\bm{\theta}}\mu_{\btheta}(\bx)\|&\leq C V(\bx),\quad
\|\mathrm{\Pi}_{\bm{\theta}}\mu_{\bm{\theta}}(\bx)-\mathrm{\Pi}_{\bm{\theta'}}\mu_{\bm{\theta'}}(\bx)\|\leq C\|\bm{\theta}-\bm{\theta}'\| V(\bx),  \quad 
\E[V(\bx)]\leq \infty.\\
\end{split}
\end{equation*}

In particular, only the first order smoothness is required for the convergence of the adaptive algorithms \citep{Albert90}. By contrast, we see that the ergodicity theory \citep{mattingly10, VollmerZW2016} relies on a much stronger 4th-order smoothness.


\section{Stability and convergence analysis for CSGLD} \label{convergence}

\subsection{CSGLD algorithm} \label{Alg:app}

To make our theory more general, we slightly extend CSGLD by allowing a higher order oscillation term $\rho(\btheta, \bx)$. The resulting algorithm works as follows:
\begin{itemize}
\item[(1)] Sample $\bm{x}_{k+1}=\bx_k- \epsilon_k\nabla_{\bx} \widetilde L(\bx_k, \btheta_k)+\mathcal{N}({0, 2\epsilon_k \tau\bm{I}}), \ \ \ \ \ \ \ \ \ \ \ \ \ \ \ \ \ \ \ \ \ \ \ \ \ \ \ \ \ \ \ \ \ \ \ \ \ \ \ \ \ \ \ \ \ \ \ \ \ \ \ \ \ \ \ \ \ \ \ \ (\text{S}_1)$

\item[(2)] Update $\bm{\theta}_{k+1}=\bm{\theta}_{k}+\omega_{k+1} \widetilde H(\bm{\theta}_{k}, \bm{x}_{k+1})
+\omega_{k+1}^2 \rho(\bm{\theta}_{k}, \bm{x}_{k+1}),
\ \ \ \ \ \ \ \ \ \ \ \ \ \ \ \ \ \ \ \ \ \ \ \ \ \ \ \ \ \ \ \ \ \ \ \ \ \  (\text{S}_2)$
\end{itemize}
where $\epsilon_k$ is the learning rate, $\omega_{k+1}$ is the step size, $\nabla_{\bx} \widetilde L(\bx, \btheta)$ is the gradient following
\begin{equation}
    \nabla_{\bx} \widetilde{L}(\bx,\btheta)= \frac{N}{n} \left[1+ 
   \frac{\zeta\tau}{\Delta u}  \left(\textcolor{black}{\log \theta(\tilde{J}(\bx))-\log\theta((\tilde{J}(\bx)-1)\vee 1)} \right) \right]  
    \nabla_{\bx} \widetilde U(\bx),
\end{equation}
$\omega_{k+1}^2 \rho(\btheta_k,\bx_{k+1})$ is a higher-order oscillation term and $\widetilde H(\btheta, \bx)$ a random field function such that
\begin{equation}
    \widetilde H(\btheta,\bx)=(\widetilde H_1(\btheta,\bx), \ldots, 
 \widetilde H_m(\btheta,\bx)) \text{, where } \widetilde H_i(\btheta,\bx)={\theta}^{\zeta}(\tilde J(\bx))\left(1_{i= \tilde J(\bx)}-{\theta}(i)\right).
\end{equation}

\subsection{Convergence of parameter estimation} 
\label{App:convergence}

We first lay out the following assumptions:

\begin{assump}[Compactness] \label{ass2a} 
The space $\Theta$ is compact such that $\inf_{\btheta}\min_{i\in\{1,2,\cdots, m\}}\theta(i) >0$. For a large enough constant $Q>0$ and any $\btheta\in \bTheta$ and $\bx \in \MX$, we have
\begin{equation}
\label{compactness}
     \|\btheta\|\leq Q, \quad  
     \|\widetilde H(\btheta, \bx)\|\leq Q, \quad 
     \|\rho(\btheta, \bx)\|\leq Q.
\end{equation}
\end{assump}

To simplify the proof, we consider a slightly stronger assumption such that $\inf \theta(i)>0$ holds for any $i \in \{1,2,\ldots,m\}$. To relax this assumption, we suggest interested readers to \cite{Fort15} where the recurrence property for a 
similar algorithm can be proved for the sequence $\{\btheta_k\}_{k\geq 1}$. Such a property guarantees the parameter to visit often enough to the desired compact space and renders the convergence. 
 

\textcolor{black}{\begin{assump}[Smoothness]
\label{ass2}
$U(\bm{\xeta})$ is $M$-smooth, namely, for any $\bx, \bx'\in \MX$,
\begin{equation}
\label{ass_2_1_eq}
\begin{split}
\|\nabla_{\bx} U(\bx)-\nabla_{\bx} U(\bm{\bx}')\| & \leq M\|\bx-\bx'\|. \\
\end{split}
\end{equation}
\end{assump}}

The above is a standard assumption, which has been used \cite{Maxim17} and \cite{Xu18}.

\begin{assump}[Dissipativity]
\label{ass3}
 There exist constants $\tilde{m}>0$ and $\tilde{b}\geq 0$ such that for any $\bx \in \MX$ and $\btheta \in \bTheta$, 
\label{ass_dissipative}
\begin{equation}
\label{eq:01}
\langle \nabla_{\bx} L(\bx, \btheta), \bx\rangle\leq \tilde{b}-\tilde{m}\|\bx\|^2.
\end{equation}
\end{assump}
This assumption ensures samples to move towards the origin regardless the initial point and is standard in proving the geometric ergodicity of dynamical systems \citep{mattingly02, Maxim17, Xu18}.

\begin{assump}[Gradient condition] 
\label{ass4}
We assume bounded second moment on the gradient estimate $U(\bx_{k})$ with respect to the empirical measure, i.e. $\E[\|\nabla_{\bx}U(\bx)\|^2]<\infty$. The stochastic gradient is unbiased in the sense that
\begin{equation*}
\E[\nabla_{\bx}\widetilde U(\bx_{k})-\nabla_{\bx} U(\bx_{k})]=0.
\end{equation*}
For some large enough constants $M, B>0$, we can upper bound the second moment of the noise by
\begin{equation*} 
\E [\|\nabla_{\bx}\widetilde U(\bx_{k})-\nabla_{\bx} U(\bx_{k})\|^2]\leq M^2 \|\bx\|^2+B^2.
\end{equation*}

\end{assump}

Next, we show a stability condition such that $\btheta$ has the potential to converge to a unique fixed point $\btheta_{\star}$ under right conditions.

\begin{lemma}[Stability] \label{convex_appendix}
Given large enough batch size $n$ and partition number $m$ and a small enough learning rate $\epsilon$, the mean-field $h(\btheta)$ satisfies $\forall \btheta \in \bTheta$, $\langle h(\btheta), \btheta - \btheta_{\star}\rangle \leq  -\phi\|\btheta - \btheta_{\star}\|^2+\mathcal{O}\left(\delta_{n}(\btheta)+\epsilon+\frac{1}{m}\right)$, where $\phi=\inf_{\btheta} Z_{\btheta}^{-1}>0$, $\theta_{\star}=(\int_{\MX_1}\pi(\bx)d\bx,\int_{\MX_2}\pi(\bx)d\bx,\ldots,\int_{\MX_m}\pi(\bx)d\bx)$ and $\delta_n(\cdot)$ is a bias term depending on the batch size $n$ such that $\delta_n(\cdot)\rightarrow 0$ as $n\rightarrow N$. 
\end{lemma}

\begin{proof}
 Let $\varpi_{\Psi_{\btheta}}(\bx)\propto\frac{\pi(\bx)}{\Psi^{\zeta}_{\btheta}(U(\bx))}$ denote the theoretical invariant measure of SGLD for a fixed estimate $\Psi_{\btheta}$ following
 a piecewise continuous function: 
\begin{equation}\label{new_design}
\Psi_{\btheta}(u)= \sum_{i=1}^m \left(\theta(i-1)e^{(\log\theta(i)-\log\theta(i-1)) \frac{u-u_{i-1}}{\Delta u}}\right) 1_{u_{i-1} < u \leq u_i},
\end{equation}
where the full data is used 
 in determining the indexes of subregions and the learning rate converges to zero. 
 According to the convergence theory 
 of SGLD, see e.g., \cite{Sato2014ApproximationAO}, \cite{DalalyanK2017}, \cite{SongLeSGLD2020} and \cite{bhatia2019bayesian},  
the empirical measure ${\varpi}_{\btheta}(\bx)$ converges to $\varpi_{\Psi_{\btheta}}(\bx)$ as the subsample size $n$ approaches the full data size $N$ and the learning rate $\epsilon$ converges to zero. 
 In addition, we 
 define a piece-wise constant function 
 \[
 \widetilde{\Psi}_{\btheta}=\sum_{i=1}^m \theta(i) 1_{u_{i-1} < u \leq u_{i}},
 \]
 and a theoretical measure 
 $\varpi_{\widetilde{\Psi}_{\btheta}}(\bx) \propto \frac{\pi(\bx)}{\theta^{\zeta}(J(\bx))}$. 
 Obviously, as the sample space partition becomes 
 fine and fine, i.e., $u_1 \to u_{\min}$, $u_{m-1}\to u_{\max}$ and $m \to \infty$, we have  
 $\|\widetilde{\Psi}_{\btheta}-\Psi_{\btheta}\|\to 0$ and $\| 
 \varpi_{\widetilde{\Psi}_{\btheta}}(\bx)- 
 \varpi_{\Psi_{\btheta}}(\bx) \|\to 0$, where 
 $u_{\min}$ and $u_{\max}$ denote the minimum and maximum of $U(\bx)$, respectively.


For each $i \in \{1,2,\ldots,m\}$, the random field $\widetilde H_i(\btheta,\bx)={\theta}^{\zeta}(\tilde J(\bx))\left(1_{i\geq \tilde J(\bx)}-{\theta}(i)\right)$ is a biased estimator of $ H_i(\btheta,\bx)={\theta}^{\zeta}( J(\bx))\left(1_{i\geq J(\bx)}-{\theta}(i)\right)$. Let $\delta_n(\btheta)=\E[\widetilde{H}(\btheta,\bx)-H(\btheta,\bx)]$ denote the bias, which is caused by the mini-batch evaluation of the energy and 
decays to $0$ as $n\rightarrow N$.

Now, compute the mean-field $h(\btheta)$ with respect to the empirical measure $\varpi_{\btheta}(\bx)$:
\begin{equation}
\small
\label{iiii}
\begin{split} 
        h_i(\btheta)&=\int_{\MX} \widetilde H_i(\btheta,\bx) 
         \varpi_{\btheta}(\bx) d\bx
         =\int_{\MX} H_i(\btheta,\bx) 
         \varpi_{\btheta}(\bx) d\bx+\delta_n(\btheta)\\
         &=\ \int_{\MX} H_i(\btheta,\bx) \left( \underbrace{\varpi_{\widetilde{\Psi}_\btheta}(\bx)}_{\text{I}_1} \underbrace{-\varpi_{\widetilde{\Psi}_\btheta}(\bx)+\varpi_{\Psi_{\btheta}}(\bx)}_{\text{I}_2}\underbrace{-\varpi_{\Psi_{\btheta}}(\bx)+\varpi_{\btheta}(\bx)}_{\text{I}_3}\right) d\bx+\delta_n(\btheta).\\
\end{split}
\end{equation}

For the term $\text{I}_1$, we have
\begin{equation}
\begin{split}
\label{i_1}
    \int_{\MX} H_i(\btheta,\bx) 
     \varpi_{\widetilde{\Psi}_\btheta}(\bx) d\bx&=\frac{1}{Z_{\btheta}} \int_{\MX} {\theta}^{\zeta}(J(\bx))\left(1_{i= J(\bx)}-{\theta}(i)\right) \frac{\pi(\bx)}{\theta^{\zeta}(J(\bx))} d\bx\\
    &=Z_{\btheta}^{-1}\left[\sum_{k=1}^m \int_{\MX_k} 
     \pi(\bx) 1_{k=i} d\bx -\theta(i)\sum_{k=1}^m\int_{\MX_k} \pi(\bx)d\bx \right] \\
    &=Z_{\btheta}^{-1} \left[\theta_{\star}(i)-\theta(i)\right],
\end{split}
\end{equation}
where $Z_{\btheta}=\sum_{i=1}^m \frac{\int_{\bchi_i \pi(\bx)d\bx}}{\theta(i)^{\zeta}}$ denotes the normalizing constant 
of $\varpi_{\widetilde{\Psi}_\btheta}(\bx)$.

Next, we consider the integration of the bounded function $H_i(\btheta,\bx) $ with respect to $\text{I}_2$ and  $\text{I}_3$. By Lemma \ref{partition_order} and the boundedness of $H(\btheta,\bx)$, we have 
\begin{equation} \label{biasI2}
\int_{\MX} H_i(\btheta,\bx) (-\varpi_{\widetilde{\Psi}_{\btheta}}(\bx)+\varpi_{\Psi_{\btheta}}(\bx)) d\bx= \mathcal{O}\left(\frac{1}{m}\right).
\end{equation}
For the term $I_3$, we have 
\begin{equation}\label{iiii_2}
    \int_{\MX} H_i(\btheta,\bx) \left(-\varpi_{\Psi_{\btheta}}(\bx)+\varpi_{\btheta}(\bx)\right) d\bx=\mathcal{O}(\delta_{n}\left(\btheta)\right)+\mathcal{O}(\epsilon),
\end{equation}
where $\delta_{n}(\cdot)$ uniformly decays to 0 as $n\rightarrow N$ and the \textcolor{black}{order of $\mathcal{O}(\epsilon)$ follows from Theorem 6 of} \cite{Sato2014ApproximationAO}.

Plugging (\ref{i_1}), (\ref{biasI2}) and  \textcolor{black}{(\ref{iiii_2})} into (\ref{iiii}), we have
\begin{equation}\label{h_i_theta}
     h_i(\btheta)=Z_{\btheta}^{-1} \left[\varepsilon\beta_i(\btheta)+\theta_{\star}(i)-\theta(i)\right],
\end{equation}
where $\varepsilon=\mathcal{O}\left(\delta_{n}(\btheta)+\epsilon+\frac{1}{m}\right)$ and $\beta_i(\btheta)$ is a bounded term such that $Z_{\btheta}^{-1}\varepsilon\beta_i(\btheta)=\mathcal{O}\left(\delta_{n}(\btheta)+\epsilon+\frac{1}{m}\right)$.

To solve the ODE system with small disturbances, we consider standard techniques in perturbation theory.
According to the fundamental theorem of
perturbation theory \cite{Eric}, we can obtain the solution to 
the mean field equation $h(\btheta)=0$: 
\begin{equation}
    \theta(i)=\theta_{\star}(i)+\varepsilon\beta_i(\btheta_{\star}) +\mathcal{O}(\varepsilon^2), \quad i=1,2,\ldots,m,
\end{equation}
which is a stable point in a small neighbourhood of $\btheta_{\star}$.

Considering the positive definite function $\mathbb{V}(\btheta)=\frac{1}{2}\| \btheta_{\star}-\btheta\|^2$ for the mean-field system $h(\btheta)=Z_{\btheta}^{-1} (\varepsilon\beta_i(\btheta)+\btheta_{\star}-\btheta)=Z_{\btheta}^{-1} (\btheta_{\star}-\btheta)+\mathcal{O}(\varepsilon)$, we have
\begin{equation*}
    \langle h(\btheta), \mathbb{V}(\btheta)\rangle=\langle h(\btheta), \btheta - \btheta_{\star}\rangle = -Z_{\btheta}^{-1}\|\btheta - \btheta_{\star}\|^2+\mathcal{O}(\varepsilon)\leq -\phi\|\btheta - \btheta_{\star}\|^2+\mathcal{O}\left(\delta_{n}(\btheta)+\epsilon+\frac{1}{m}\right),
\end{equation*}
where $\phi=\inf_{\btheta} Z_{\btheta}^{-1}>0$ by
the compactness assumption \ref{ass2a}. This concludes the proof.
\end{proof}


\begin{assump}[Learning rate and step size]
\label{ass1}
The step size $\{\omega_{k}\}_{k\in \mathrm{N}}$ is a positive decreasing sequence of real numbers such that
\begin{equation} \label{a1}
\omega_{k}\rightarrow 0, \ \ \sum_{k=1}^{\infty} \omega_{k}=+\infty,\ \  \lim_{k\rightarrow \infty} \inf 2\phi  \dfrac{\omega_{k}}{\omega_{k+1}}+\dfrac{\omega_{k+1}-\omega_{k}}{\omega^2_{k+1}}>0.
\end{equation}
According to \cite{Albert90}, we can choose $\omega_{k}:=\frac{A}{k^{\alpha}+B}$ for some $\alpha \in (\frac{1}{2}, 1]$ and some suitable constants 
 $A>0$ and $B>0$. 
 \end{assump}


\textcolor{black}{The following lemma is a restatement of Lemma 3 \cite{Maxim17} which holds for any $\btheta$ in the compact space $\bTheta$.
\begin{lemma}[Uniform $L^2$ bounds]
\label{lemma:1}
Suppose Assumptions \ref{ass2a}-\textcolor{black}{\ref{ass4}} holds.  Given a small enough learning rate, then 
$\sup_{k\geq 1} \E[\|\bm{\xeta}_{k}\|^2] < \infty$.
\end{lemma}}

\begin{lemma}[Solution of Poisson equation]
\label{lyapunov}
Suppose that Assumptions \textcolor{black}{A1}-A4 hold. 
There is a solution $\mu_{\btheta}(\cdot)$ on $\MX$ to the Poisson equation 
\begin{equation}
    \label{poisson_eqn}
    \mu_{\btheta}(\bm{x})-\mathrm{\Pi}_{\bm{\theta}}\mu_{\bm{\theta}}(\bm{x})=\widetilde H(\bm{\theta}, \bm{x})-h(\bm{\theta}).
\end{equation}
such that for all $\bm{\theta}, \bm{\theta}'\in \bm{\bTheta}$ and a function  $V(\bx)=1+\|\bx\|^2$, there exists a constant $C$ such that
\begin{equation}
\begin{split}
\label{poisson_reg}
\E[\|\mathrm{\Pi}_{\bm{\theta}}\mu_{\btheta}(\bx)\|]&\leq C,\\
\E[\|\mathrm{\Pi}_{\bm{\theta}}\mu_{\bm{\theta}}(\bx)-\mathrm{\Pi}_{\bm{\theta}'}\mu_{\bm{\theta'}}(\bx)\|]&\leq C\|\bm{\theta}-\bm{\theta}'\|.\\
\end{split}
\end{equation}
\end{lemma}

\begin{proof}
The conditions of Theorem 13 of  \cite{VollmerZW2016} can be 
easily verified for CSGLD given the assumptions A1-A4 and 
Lemma \ref{lemma:1}. The details are omitted. 
\end{proof}

Now we are ready to prove the first main result on the 
convergence of $\btheta_k$.
The technique lemmas are listed 
in Section \ref{Lemmasection}. 


\begin{theorem}[$L^2$ convergence rate]
\label{latent_convergence}
Suppose Assumptions $\ref{ass2a}$-$\ref{ass1}$ hold. For a sufficiently
large value of $k_0$ and $m$, a sufficiently small learning rate sequence  $\{\epsilon_k\}_{k=1}^{\infty}$,  and a sufficiently small
 step  size sequence $\{\omega_k\}_{k=1}^{\infty}$, 
$\{\btheta_k\}_{k=1}^{\infty}$ converges to
 $\btheta_{\star}$ in $L_2$-norm  such that
\begin{equation*}
    \E\left[\|\bm{\theta}_{k}-\btheta_{\star}\|^2\right]=\mathcal{O}\left( \omega_{k}+
    \sup_{i\geq k_0}\epsilon_i+
\frac{1}{m} +\sup_{i\geq k_0}\delta_{n}(\btheta_i)\right),
\end{equation*}
where $k_0$ is a sufficiently large constant, and $\delta_{n}(\btheta)$ is a bias term 
decaying to 0 as $n\rightarrow N$.
\end{theorem}

\paragraph{Proof of Theorem \ref{latent_convergence}} 
\begin{proof}
Consider the iterates 
\begin{equation*}
    \bm{\theta}_{k+1}=\bm{\theta}_{k}+\omega_{k+1} \left(\widetilde H(\bm{\theta}_{k}, \bm{x}_{k+1})+\omega_{k+1} \rho(\btheta_k, \bx_{k+1})\right).
\end{equation*}
Define $\bm{T}_{k}=\bm{\theta}_{k}-\btheta_{\star}$. By subtracting $\btheta_{\star}$ from both sides and taking the square and $L_2$ norm,  we have
\begin{equation*}
\small
\begin{split}
    \|\bT_{k+1}^2\|&=\|\bT_k^2\| +\omega_{k+1}^2 \|\widetilde H(\btheta_k, \bx_{k+1}) + \omega_{k+1}\rho(\btheta_k, \bx_{k+1})\|^2+2\omega_{k+1}\underbrace{\langle \bT_k,  \widetilde H(\bx_{k+1})+\omega_{k+1}\rho(\btheta_k, \bx_{k+1})\rangle}_{\text{D}}.
\end{split}
\end{equation*}

First, by Lemma \ref{convex_property}, there exists a constant $G=4Q^2(1+Q^2)$ such that
\begin{equation}
\label{first_term}
    \| \widetilde H(\btheta_k, \bx_{k+1}) + \omega_{k+1}\rho(\btheta_k, \bx_{k+1})\|^2 \leq G (1+\|\bT_k\|^2).
\end{equation}

Next, by the Poisson equation (\ref{poisson_eqn}), we have
\begin{equation*}
\begin{split}
   \text{D}&=\langle \bT_k,  \widetilde H(\btheta_k, \bx_{k+1})+\omega_{k+1}\rho(\btheta_k, \bx_{k+1}) \rangle\\
   &=\langle \bT_k,  h(\btheta_k)+\mu_{\btheta_k}(\bm{x}_{k+1})-\mathrm{\Pi}_{\bm{\theta}_k}\mu_{\bm{\theta}_k}(\bm{x}_{k+1})+\omega_{k+1}\rho(\btheta_k, \bx_{k+1}) \rangle\\
   &=\underbrace{\langle \bT_k,  h(\btheta_k)\rangle}_{\text{D}_{1}} +\underbrace{\langle\bT_k, \mu_{\btheta_k}(\bm{x}_{k+1})-\mathrm{\Pi}_{\bm{\theta}_k}\mu_{\bm{\theta}_k}(\bm{x}_{k+1})\rangle}_{\text{D}_{2}}+\underbrace{\langle \bT_k, \omega_{k+1}\rho(\btheta_k, \bx_{k+1})\rangle}_{{\text{D}_{3}}}.
\end{split}
\end{equation*}

For the term $\text{D}_1$, by Lemma \ref{convex_appendix}, we have
\begin{align*}
\E\left[\langle \bm{T}_{k}, h(\bm{\theta}_{k})\rangle\right] &\leq - \phi\E[\|\bm{T}_{k}\|^2]+\mathcal{O}(\delta_{n}(\btheta_k)+\epsilon_k+\frac{1}{m}).
\end{align*}
For convenience, in the following context, we denote $\mathcal{O}(\delta_{n}(\btheta_k)+\epsilon_k+\frac{1}{m})$ by $\Delta_k$. 


To deal with the error $\text{D}_2$, we make the following decomposition 
\begin{equation*}
\begin{split}
\text{D}_2 &=\underbrace{\langle \bT_k, \mu_{\bm{\theta}_{k}}(\bm{\xeta}_{k+1})-\mathrm{\Pi}_{\bm{\theta}_{k}}\mu_{\bm{\theta}_{k}}(\bm{\bx}_{k})\rangle}_{\text{D}_{21}} \\
&+ \underbrace{\langle \bT_k,\mathrm{\Pi}_{\bm{\theta}_{k}}\mu_{\bm{\theta}_{k}}(\bm{x}_{k})- \mathrm{\Pi}_{\bm{\theta}_{k-1}}\mu_{\bm{\theta}_{k-1}}(\bm{x}_{k})\rangle}_{\text{D}_{22}}
+ \underbrace{\langle \bT_k,\mathrm{\Pi}_{\bm{\theta}_{k-1}}\mu_{\bm{\theta}_{k-1}}(\bm{x}_{k})- \mathrm{\Pi}_{\bm{\theta}_{k}}\mu_{\bm{\theta}_{k}}(\bm{\xeta}_{k+1})\rangle}_{\text{D}_{23}}.\\
\end{split}
\end{equation*}

(\text{i})  From the Markov property, $\mu_{\bm{\theta}_{k}}(\bm{\xeta}_{k+1})-\mathrm{\Pi}_{\bm{\theta}_{k}}\mu_{\bm{\theta}_{k}}(\bm{x}_{k})$ forms a martingale difference sequence 
$$\E\left[\langle \bT_k, \mu_{\bm{\theta}_{k}}(\bm{\xeta}_{k+1})-\mathrm{\Pi}_{\bm{\theta}_{k}}\mu_{\bm{\theta}_{k}}(\bm{x}_{k})\rangle |\mathcal{F}_{k}\right]=0, \eqno{(\text{D}_{21})}$$
where  $\mathcal{F}_k$ is a $\sigma$-filter formed by
$\{\bx_1, \btheta_1, \bx_2, \cdots, \bx_k,\btheta_k\}$.

(\text{ii})  By the regularity of the solution of Poisson equation in (\ref{poisson_reg}) and Lemma.\ref{theta_lip}, it leads to
\begin{equation}
\label{theta_delta}
\E[\|\mathrm{\Pi}_{\bm{\theta}_{k}}\mu_{\bm{\theta}_{k}}(\bm{x}_{k})- \mathrm{\Pi}_{\bm{\theta}_{k-1}}
 \mu_{\bm{\theta}_{k-1}}(\bm{x}_{k})\|]\leq C \|\btheta_k-\btheta_{k-1}\|\leq 2Q C\omega_k.
\end{equation}
Using Cauchy–Schwarz inequality, (\ref{theta_delta}) and the compactness of $\btheta$ in Assumption \ref{ass2a}, we have
$$\small{\E[\langle\bm{T}_{k},\mathrm{\Pi}_{\bm{\theta}_{k}}\mu_{\bm{\theta}_{k}}(\bm{x}_{k})- \mathrm{\Pi}_{\bm{\theta}_{k-1}}\mu_{\bm{\theta}_{k-1}}(\bm{x}_{k})\rangle]\leq \E[\|\bT_k\|]\cdot 2Q C\omega_k\leq 4Q^2 C\omega_{k}\leq 5Q^2 C\omega_{k+1}}   \eqno{(\text{D}_{22})},$$
where the last inequality follows from assumption \ref{ass1} and holds for a large enough $k$.

(\text{iii})  For the last term of $\text{D}_{2}$, 
\begin{equation*}
\begin{split}
\small
&\langle \bm{T}_{k},\mathrm{\Pi}_{\bm{\theta}_{k-1}}\mu_{\bm{\theta}_{k-1}}(\bm{x}_{k})- \mathrm{\Pi}_{\bm{\theta}_{k}}\mu_{\bm{\theta}_{k}}(\bm{\xeta}_{k+1})\rangle\\
=&\left(\langle \bm{T}_{k}, \mathrm{\Pi}_{\bm{\theta}_{k-1}}\mu_{\bm{\theta}_{k-1}}(\bm{x}_{k}) \rangle- \langle \bm{T}_{k+1}, \mathrm{\Pi}_{\bm{\theta}_{k}}\mu_{\bm{\theta}_{k}}(\bm{\xeta}_{k+1})\rangle\right)\\
&\ \ \ +\left(\langle \bm{T}_{k+1}, \mathrm{\Pi}_{\bm{\theta}_{k}}\mu_{\bm{\theta}_{k}}(\bm{\xeta}_{k+1})\rangle-\langle \bm{T}_{k}, \mathrm{\Pi}_{\bm{\theta}_{k}}\mu_{\bm{\theta}_{k}}(\bm{\xeta}_{k+1})\rangle\right)\\
=&{({z}_{k}-{z}_{k+1})}+{\langle \bm{T}_{k+1}-\bm{T}_{k}, \mathrm{\Pi}_{\bm{\theta}_{k}}\mu_{\bm{\theta}_{k}}(\bm{\xeta}_{k+1})\rangle},\\
\end{split}
\end{equation*}
where ${z}_{k}=\langle \bm{T}_{k}, \mathrm{\Pi}_{\bm{\theta}_{k-1}}\mu_{\bm{\theta}_{k-1}}(\bm{x}_{k})\rangle$. By the regularity assumption (\ref{poisson_reg}) and Lemma \ref{theta_lip}, 
$$\E\langle \bm{T}_{k+1}-\bm{T}_{k}, \mathrm{\Pi}_{\bm{\theta}_{k}}\mu_{\bm{\theta}_{k}}(\bm{\xeta}_{k+1})\rangle\leq   \E[\|\bm{\theta}_{k+1}-\bm{\theta}_{k}\|] \cdot \E[\|\mathrm{\Pi}_{\bm{\theta}_{k}}\mu_{\bm{\theta}_{k}}(\bm{\xeta}_{k+1})\|] \leq 2Q C \omega_{k+1}.\eqno{(\text{D}_{23})}$$

Regarding $\text{D}_3$, since $\rho(\btheta_k, \bx_{k+1})$ is bounded, applying Cauchy–Schwarz inequality gives
$${\E[\langle \bT_k, \omega_{k+1}\rho(\btheta_k, \bx_{k+1}))]\leq 2Q^2\omega_{k+1}} \eqno{(\text{D}_{3})}$$

Finally, adding (\ref{first_term}), $\text{D}_1$, $\text{D}_{21}$, $\text{D}_{22}$, $\text{D}_{23}$ and $\text{D}_3$ together, it follows that for a constant $C_0 = G+10Q^2C+4QC+4Q^2$,
\begin{equation}
\begin{split}
\label{key_eqn}
\E\left[\|\bm{T}_{k+1}\|^2\right]&\leq (1-2\omega_{k+1}\phi+G\omega^2_{k+1} )\E\left[\|\bm{T}_{k}\|^2\right]+C_0\omega^2_{k+1} +2\Delta_k\omega_{k+1} +2\E[z_{k}-z_{k+1}]\omega_{k+1}.
\end{split}
\end{equation}
Moreover, from (\ref{compactness}) and (\ref{poisson_reg}), $\E[|z_{k}|]$ is upper bounded by
\begin{equation}
\begin{split}
\label{condition:z}
\E[|z_{k}|]=\E[\langle \bm{T}_{k}, \mathrm{\Pi}_{\bm{\theta}_{k-1}}\mu_{\bm{\theta}_{k-1}}(\bm{x}_{k})\rangle]\leq \E[\|\bT_k\|]\E[\|\mathrm{\Pi}_{\bm{\theta}_{k-1}}\mu_{\bm{\theta}_{k-1}}(\bm{x}_{k})\|]\leq 2QC.
\end{split}
\end{equation}

According to Lemma $\ref{lemma:4}$, we can choose $\lambda_0$ and $k_0$ such that 
\begin{align*}
\E[\|\bm{T}_{k_0}\|^2]\leq \psi_{k_0}=\lambda_0 \omega_{k_0}+\frac{1}{\phi}\sup_{i\geq k_0}\Delta_{i},
\end{align*}
which satisfies the conditions ($\ref{lemma:3-a}$) and ($\ref{lemma:3-b}$) of Lemma $\ref{lemma:3-all}$. Applying Lemma $\ref{lemma:3-all}$ leads to
\begin{equation}
\begin{split}
\label{eqn:9}
\E\left[\|\bm{T}_{k}\|^2\right]\leq \psi_{k}+\E\left[\sum_{j=k_0+1}^{k}\Lambda_j^k \left(z_{j-1}-z_{j}\right)\right],
\end{split}
\end{equation}
where $\psi_{k}=\lambda_0 \omega_{k}+\frac{1}{\phi}\sup_{i\geq k_0}\Delta_{i}$ for all $k>k_0$. Based on ($\ref{condition:z}$) and the increasing condition of $\Lambda_{j}^k$ in Lemma $\ref{lemma:2}$, we have
\begin{equation}
\small
\begin{split}
\label{eqn:10}
&\E\left[\left|\sum_{j=k_0+1}^{k} \Lambda_j^k\left(z_{j-1}-z_{j}\right)\right|\right]
=\E\left[\left|\sum_{j=k_0+1}^{k-1}(\Lambda_{j+1}^k-\Lambda_j^k)z_j-2\omega_{k}z_{k}+\Lambda_{k_0+1}^k z_{k_0}\right|\right]\\
\leq& \sum_{j=k_0+1}^{k-1}2(\Lambda_{j+1}^k-\Lambda_j^k)QC+\E[|2\omega_{k} z_{k}|]+2\Lambda_k^k QC\\
\leq& 2(\Lambda_k^k-\Lambda_{k_0}^k)QC+2\Lambda_k^k QC+2\Lambda_k^k QC\\
\leq& 6\Lambda_k^k QC.
\end{split}
\end{equation}

Therefore, given $\psi_{k}=\lambda_0 \omega_{k}+\frac{1}{\phi}\sup_{i\geq k_0}\Delta_{i}$ that satisfies the conditions ($\ref{lemma:3-a}$), ($\ref{lemma:3-b}$) of Lemma $\ref{lemma:3-all}$, for any $k>k_0$, from ($\ref{eqn:9}$) and ($\ref{eqn:10}$), we have
\begin{equation*}
\E[\|\bm{T}_{k}\|^2]\leq \psi_{k}+6\Lambda_k^k QC=\left(\lambda_0+12QC\right)\omega_{k}+\frac{1}{\phi}\sup_{i\geq k_0}\Delta_{i}=\lambda \omega_{k}+\frac{1}{\phi}\sup_{i\geq k_0}\Delta_{i},
\end{equation*}
where $\lambda=\lambda_0+12QC$, $\lambda_0=\frac{2G\sup_{i\geq k_0} \Delta_i + 2C_0\phi}{C_1\phi}$, $\small{C_1=\lim \inf 2\phi \dfrac{\omega_{k}}{\omega_{k+1}}+\dfrac{\omega_{k+1}-\omega_{k}}{\omega^2_{k+1}}>0}$, $C_0=G+5Q^2C+2QC+2Q^2$ and $G=4 Q^2(1+Q^2)$.
\end{proof}

\subsection{Technical Lemmas} \label{Lemmasection}

\begin{lemma}\label{partition_order}
Suppose Assumption \ref{ass2a} holds, and $u_1$ and 
$u_{m-1}$ are fixed such that $\Psi(u_1)>\nu$ and $\Psi(u_{m-1})>1-\nu$ for some small constant $\nu>0$. For any bounded function $f(\bx)$, we have 
\label{m_order}
\begin{equation}\label{i_2}
    \int_{\MX} f(\bx)\left(\varpi_{\Psi_{\btheta}}(\bx)-\varpi_{\widetilde\Psi_{\btheta}}(\bx)\right) d\bx=\mathcal{O}\left(\frac{1}{m}\right).
\end{equation}
\end{lemma}

\begin{proof}
Recall that $\varpi_{\widetilde\Psi_{\btheta}}(\bx)= \frac{1}{Z_{\btheta}} 
\frac{\pi(\bx)}{\theta^{\zeta}(J(\bx))}$ and $\varpi_{\Psi_{\btheta}}(\bx)=\frac{1}{Z_{\Psi_{\btheta}}}\frac{\pi(\bx)}{\Psi^{\zeta}_{\btheta}(U(\bx))}$. Since $f(\bx)$ is bounded, 
it suffices to show 
\begin{equation}
\begin{split}
    &\int_{\MX} \frac{1}{Z_{\btheta}} 
\frac{\pi(\bx)}{\theta^{\zeta}(J(\bx))}-\frac{1}{Z_{\Psi_{\btheta}}}\frac{\pi(\bx)}{\Psi^{\zeta}_{\btheta}(U(\bx))} d\bx\\
\leq &\int_{\MX} \left|\frac{1}{Z_{\btheta}} 
\frac{\pi(\bx)}{\theta^{\zeta}(J(\bx))}-\frac{1}{Z_{\btheta}}\frac{\pi(\bx)}{\Psi^{\zeta}_{\btheta}(U(\bx))}\right|d\bx+\int_{\MX}\left|\frac{1}{Z_{\btheta}}\frac{\pi(\bx)}{\Psi^{\zeta}_{\btheta}(U(\bx))}-\frac{1}{Z_{\Psi_{\btheta}}}\frac{\pi(\bx)}{\Psi^{\zeta}_{\btheta}(U(\bx))}\right| d\bx\\
=&\underbrace{\frac{1}{Z_{\btheta}}\sum_{i=1}^m \int_{\MX_i} \left| 
\frac{\pi(\bx)}{\theta^{\zeta}(i)}-\frac{\pi(\bx)}{\Psi^{\zeta}_{\btheta}(U(\bx))}\right|d\bx}_{\text{I}_1}+\underbrace{\sum_{i=1}^m\left|\frac{1}{Z_{\btheta}}-\frac{1}{Z_{\Psi_{\btheta}}}\right|\int_{\MX_i}\frac{\pi(\bx)}{\Psi^{\zeta}_{\btheta}(U(\bx))} d\bx}_{\text{I}_2}=\mathcal{O}\left(\frac{1}{m}\right),\\
\end{split}
\end{equation}
where $Z_{\btheta}=\sum_{i=1}^m \int_{\mX_i} \frac{\pi(\bx)}{\theta(i)^{\zeta}}d\bx$, $Z_{\Psi_{\btheta}}=\sum_{i=1}^{m}\int_{\mX_i} \frac{\pi(\bx)}{\Psi^{\zeta}_{\btheta}(U(\bx))}d\bx$, and $\Psi_{\btheta}(u)$ is a piecewise continuous function defined in (\ref{new_design}).

By Assumption \ref{ass2a}, $\inf_{\btheta}\btheta(i)>0$ for any $i$. 
Further, by the mean-value theorem, which implies $|x^{\zeta}-y^{\zeta}|\lesssim |x-y| z^{\zeta}$ for any $\zeta>0, x\leq y$ and $z\in[x, y]\subset [u_1, \infty)$, we have 
\begin{equation*}
\small
\begin{split}
    \text{I}_1&=\frac{1}{Z_{\btheta}}\sum_{i=1}^m \int_{\MX_i} \left| 
\frac{\theta^{\zeta}(i)-\Psi^{\zeta}_{\btheta}(U(\bx))}{\theta^{\zeta}(i)\Psi^{\zeta}_{\btheta}(U(\bx))}\right|\pi(\bx)d\bx\lesssim \frac{1}{Z_{\btheta}}\sum_{i=1}^m \int_{\MX_i} 
\frac{|\Psi_{\btheta}(u_{i-1})-\Psi_{\btheta}(u_i)|}{\theta^{\zeta}(i)}\pi(\bx)d\bx\\
&\leq \max_i |\Psi_{\btheta}(u_{i}-\Delta u)-\Psi_{\btheta}(u_i)|  \frac{1}{Z_{\btheta}}\sum_{i=1}^m \int_{\MX_i} 
\frac{\pi(\bx)}{\theta^{\zeta}(i)}d\bx=\max_i |\Psi_{\btheta}(u_{i}-\Delta u)-\Psi_{\btheta}(u_i)|\lesssim \Delta u=\mathcal{O}\left(\frac{1}{m}\right),
\end{split}
\end{equation*}
where the last inequality follows by Taylor expansion, 
and the last equality follows as $u_1$ and $u_{m-1}$ 
are fixed. Similarly, we have 
\begin{equation*}
    \begin{split}
        \text{I}_2= \left|\frac{1}{Z_{\btheta}}-\frac{1}{Z_{\Psi_{\btheta}}}\right|Z_{\Psi_{\btheta}}=\frac{ |Z_{\Psi_{\btheta}}-Z_{\btheta}|}{Z_{\btheta}}\leq \frac{1}{Z_{\btheta}}\sum_{i=1}^m \int_{\MX_i} \left|\frac{\pi(\bx)}{\theta^{\zeta}(i)}-\frac{\pi(\bx)}{\Psi^{\zeta}_{\btheta}(U(\bx))}\right|d\bx=\text{I}_1=\mathcal{O}\left(\frac{1}{m}\right).
    \end{split}
\end{equation*}
The proof can then be concluded by combining the orders of $\text{I}_1$ and $\text{I}_2$. 
\end{proof}

\begin{lemma}
\label{convex_property}
Given $\sup\{\omega_k\}_{k=1}^{\infty}\leq 1$, there exists a constant $G=4 Q^2(1+Q^2)$ such that
\begin{equation} \label{bound2}
\| \widetilde H(\bm{\theta}_k, \bm{\xeta}_{k+1})+\omega_{k+1}\rho(\btheta_k, \bx_{k+1})\|^2 \leq G (1+\|\bm{\theta}_k-\btheta_{\star}\|^2). 
\end{equation}
\end{lemma}
\begin{proof}

According to the compactness condition in Assumption \ref{ass2a}, we have
\begin{equation}
\label{mid_1}
\|\widetilde H(\bm{\theta}_k, \bm{\xeta}_{k+1})\|^2\leq Q^2 (1+\|\bm{\theta}_k\|^2) = 
 Q^2 (1+\|\bm{\theta}_k-\btheta_{\star}+\btheta_{\star}\|^2)\leq Q^2 (1+2\|\bm{\theta}_k-\btheta_{\star}\|^2+2Q^2).
\end{equation}

Therefore, using (\ref{mid_1}), we can show that for a constant $G=4Q^2(1+Q^2)$
\begin{equation*}
\small
\begin{split}
    &\ \ \ \| \widetilde H(\bm{\theta}_k, \bm{\xeta}_{k+1})+\omega_{k+1}\rho(\btheta_k, \bx_{k+1})\|^2 \\
    &\leq 2\|\widetilde H(\bm{\theta}_k, \bm{\xeta}_{k+1})\|^2 + 2\omega_{k+1}^2 \|\rho(\btheta_k, \bx_{k+1})\|^2\\
    &\leq 2Q^2 (1+2\|\bm{\theta}_k-\btheta_{\star}\|^2+2Q^2) + 2Q^2\\
    &\leq 2Q^2 (2+2Q^2+(2+2Q^2)\|\bm{\theta}_k-\btheta_{\star}\|^2)\\
    &\leq G (1+\|\bm{\theta}_k-\btheta_{\star}\|^2).
\end{split}
\end{equation*}
\end{proof}

\begin{lemma}
\label{theta_lip}Given $\sup\{\omega_k\}_{k=1}^{\infty}\leq 1$, we have that
\begin{equation}
\label{lip_theta}
    \|\btheta_{k}-\btheta_{k-1}\|\leq 2\omega_{k} Q
\end{equation}
\end{lemma}

\begin{proof}
Following the update $\btheta_k-\btheta_{k-1}=\omega_k \widetilde H(\bm{\theta}_{k-1}, \bm{x}_{k})+\omega_{k}^2 \rho(\btheta_{k-1}, \bx_{k})$, we have that
$$\|\btheta_{k}-\btheta_{k-1}\|= \|\omega_k \widetilde H(\bm{\theta}_{k-1}, \bm{x}_{k})+\omega_{k}^2 \rho(\btheta_{k-1}, \bx_{k})\|\leq \omega_k\| \widetilde H(\bm{\theta}_{k-1},\bm{x}_{k})\|+\omega_{k}^2\| \rho(\btheta_{k-1}, \bx_{k})\|.$$
By the compactness condition in Assumption \ref{ass2a} and $\sup\{\omega_k\}_{k=1}^{\infty}\leq 1$, (\ref{lip_theta}) can be derived.
\end{proof}

\begin{lemma}
\label{lemma:4}
There exist constants $\lambda_0$ and $k_0$ such that $\forall \lambda\geq\lambda_0$ and $\forall k> k_0$, the sequence $\{\psi_{k}\}_{k=1}^{\infty}$, where $\psi_{k}=\lambda\omega_{k}+\frac{1}{\phi} \sup_{i\geq k_0}\Delta_i$, satisfies
\begin{equation}
\begin{split}
\label{key_ieq}
\psi_{k+1}\geq& (1-2\omega_{k+1}\phi+G\omega_{k+1}^2)\psi_{k}+C_0\omega_{k+1}^2  +2\Delta_k\omega_{k+1}.
\end{split}
\end{equation}
\begin{proof}
By replacing $\psi_{k}$ with $\lambda\omega_{k}+\frac{1}{\phi} \sup_{i\geq k_0}\Delta_i$ in ($\ref{key_ieq}$), it suffices to show
\begin{equation*}
\small
\begin{split}
\label{lemma:loss_control}
\lambda \omega_{k+1}+\frac{1}{\phi} \sup_{i\geq k_0}\Delta_i\geq& (1-2\omega_{k+1}\phi+G\omega_{k+1}^2)\left(\lambda \omega_{k}+\frac{1}{\phi} \sup_{i\geq k_0}\Delta_i\right)+C_0\omega_{k+1}^2 + 2\Delta_k\omega_{k+1}.
\end{split}
\end{equation*}

which is equivalent to proving
\begin{equation*}
\small
\begin{split}
&\lambda (\omega_{k+1}-\omega_k+2\omega_k\omega_{k+1}\phi-G\omega_k\omega_{k+1}^2)\geq  \frac{1}{\phi}\sup_{i\geq k_0}\Delta_i(-2\omega_{k+1}\phi+G\omega_{k+1}^2 )+C_0\omega_{k+1}^2+ 2\Delta_k\omega_{k+1}.
\end{split}
\end{equation*}

Given the step size condition in ($\ref{a1}$), we have $$\small{\omega_{k+1}-\omega_{k}+2 \omega_{k}\omega_{k+1}\phi \geq C_1 \omega_{k+1}^2},$$ 
where $\small{C_1=\lim \inf 2\phi  \dfrac{\omega_{k}}{\omega_{k+1}}+\dfrac{\omega_{k+1}-\omega_{k}}{\omega^2_{k+1}}>0}$. Combining $-\sup_{i\geq k_0}\Delta_i\leq \Delta_k$, it suffices to prove
\begin{equation}
\begin{split}
\label{loss_control-2}
\lambda \left(C_1-G\omega_{k}\right)\omega^2_{k+1}\geq  \left(\frac{G}{\phi} \sup_{i\geq k_0}\Delta_i+C_0\right)\omega^2_{k+1}.
\end{split}
\end{equation}

It is clear that for a large enough $k_0$ and $\lambda_0$ such that $\omega_{k_0}\leq \frac{C_1}{2G}$, $\lambda_0=\frac{2G\sup_{i\geq k_0} \Delta_i + 2C_0\phi}{C_1\phi}$, the desired conclusion ($\ref{loss_control-2}$) holds for all such $k\geq k_0$ and $\lambda\geq \lambda_0$.
\end{proof}
\end{lemma}

The following lemma is a restatement of Lemma 25 (page 247) from \cite{Albert90}.
\begin{lemma}
\label{lemma:2}
Suppose $k_0$ is an integer satisfying
$\inf_{k> k_0} \dfrac{\omega_{k+1}-\omega_{k}}{\omega_{k}\omega_{k+1}}+2\phi-G\omega_{k+1}>0$ 
for some constant $G$. 
Then for any $k>k_0$, the sequence $\{\Lambda_k^K\}_{k=k_0, \ldots, K}$ defined below is increasing and uppered bounded by $2\omega_{k}$
\begin{equation}  
\Lambda_k^K=\left\{  
             \begin{array}{lr}  
             2\omega_{k}\prod_{j=k}^{K-1}(1-2\omega_{j+1}\phi+G\omega_{j+1}^2) & \text{if $k<K$},   \\  
              & \\
             2\omega_{k} &  \text{if $k=K$}.
             \end{array}  
\right.  
\end{equation} 
\end{lemma}

\begin{lemma}
\label{lemma:3-all}
Let $\{\psi_{k}\}_{k> k_0}$ be a series that satisfies the following inequality for all $k> k_0$
\begin{equation}
\begin{split}
\label{lemma:3-a}
\psi_{k+1}\geq &\psi_{k}\left(1-2\omega_{k+1}\phi+G\omega^2_{k+1}\right)+C_0\omega^2_{k+1} + 2 \Delta_k\omega_{k+1},
\end{split}
\end{equation}
and assume there exists such $k_0$ that 
\begin{equation}
\begin{split}
\label{lemma:3-b}
\E\left[\|\bm{T}_{k_0}\|^2\right]\leq \psi_{k_0}.
\end{split}
\end{equation}
Then for all $k> k_0$, we have
\begin{equation}
\begin{split}
\label{result}
\E\left[\|\bm{T}_{k}\|^2\right]\leq \psi_{k}+\sum_{j=k_0+1}^{k}\Lambda_j^k (z_{j-1}-z_j).
\end{split}
\end{equation}
\end{lemma}

\begin{proof}
We prove by the induction method. Assuming (\ref{result}) is true and applying (\ref{key_eqn}), we have that 
\begin{equation*}
\small
\begin{split}
    \E\left[\|\bm{T}_{k+1}\|^2\right]&\leq (1-2\omega_{k+1}\phi+\omega^2_{k+1} G)(\psi_{k}+\sum_{j=k_0+1}^{k}\Lambda_j^k (z_{j-1}-z_j))\\
    &\ \ \ \ \ \ \ \ +C_0\omega^2_{k+1} +2 \Delta_k\omega_{k+1}+2\omega_{k+1}\E[z_{k}-z_{k+1}]\\
\end{split}
\end{equation*}

Combining (\ref{key_ieq}) and Lemma.\ref{lemma:2}, respectively, we have
\begin{equation*}
\small
\begin{split}
    \E\left[\|\bm{T}_{k+1}\|^2\right]&\leq  \psi_{k+1}+(1-2\omega_{k+1}\phi+\omega^2_{k+1} G)\sum_{j=k_0+1}^{k}\Lambda_j^k (z_{j-1}-z_j)+2\omega_{k+1}\E[z_{k}-z_{k+1}]\\
    & \leq \psi_{k+1}+\sum_{j=k_0+1}^{k}\Lambda_j^{k+1} (z_{j-1}-z_j)+\Lambda_{k+1}^{k+1}\E[z_{k}-z_{k+1}]\\
    & \leq \psi_{k+1}+\sum_{j=k_0+1}^{k+1}\Lambda_j^{k+1} (z_{j-1}-z_j).\\
\end{split}
\end{equation*}
\end{proof}

\section{Ergodicity and Dynamic Importance Sampler}
\label{ergodicity}
Our interest is to analyze the deviation between the weighted averaging estimator $\frac{1}{k}\sum_{i=1}^k\theta_{i}^{\zeta}( \tilde{J}(\bx_i)) f(\bx_i)$ and posterior average $\int_{\MX}f(\bx)\pi(d\bx)$ for a \textcolor{black}{bounded} function $f$. To accomplish this analysis, we first study the convergence of the 
posterior sample mean $\frac{1}{k}\sum_{i=1}^k f(\bx_i)$ 
to the posterior expectation $\bar{f}=\int_{\MX}f(\bx)\varpi_{\Psi_{\btheta_{\star}}}(\bx)(d\bx)$ and then extend $\bar{f}$ to $\int_{\MX}f(\bx)\varpi_{\widetilde{\Psi}_{\btheta_{\star}}}(\bx)(d\bx)$. The key tool for ergodic theory is still the Poisson equation 
which is used to characterize the fluctuation 
between $f(\bx)$ and $\bar f$: 
\begin{equation}
    \mathcal{L}g(\bx)=f(\bx)-\bar f,
\end{equation}
where $g(\bx)$ is the solution of the Poisson equation, and $\mathcal{L}$ is the infinitesimal generator of the Langevin diffusion 
\begin{equation*}
    \mathcal{L}g:=\nabla g \nabla L(\cdot, \btheta_{\star})+\textcolor{black}{\frac{1}{2}\tau^2}\nabla^2g.
\end{equation*}

By imposing the following regularity conditions on the function $g(\bx)$, we can control the perturbations of $\frac{1}{k}\sum_{i=1}^k f(\bx_i)-\bar f$ and enables convergence of the ergodic average.
\begin{assump}[Regularity] 
\label{ass6}
Given a sufficiently smooth function $g(\bx)$ and a function $\mathcal{V}(\bx)$, such that $\|D^k g\|\lesssim \mathcal{V}^{p_k}(\bx)$ and $p_k>0$ for $k\in\{0,1,2,3\}$. In addition, $\mathcal{V}^p$ has a bounded expectation: $\sup_{\bx} \E[\mathcal{V}^p(\bx)]<\infty$ and $\mathcal{V}$ is smooth, i.e. $\sup_{s\in\{0, 1\}} \mathcal{V}^p(s\bx+(1-s)\by)\lesssim \mathcal{V}^p(\bx)+\mathcal{V}^p(\by)$ for all $\bx,\by\in\MX$ and $p\leq 2\max_k\{p_k\}$.
\end{assump}

For stronger but verifiable conditions, we refer readers to \cite{VollmerZW2016}. Now, we present a lemma, which is majorly adapted from Theorem 2 of \cite{Chen15} with a fixed learning rate $\epsilon$. 

\begin{lemma}[Convergence of the Averaging Estimators]
\label{avg_converge_appendix}
Suppose Assumptions $\ref{ass2a}$-$\ref{ass6}$ hold.
For any bounded function $f$, 
\begin{equation*}
\small
\begin{split}
    \left|\E\left[\frac{\sum_{i=1}^k f(\bx_i)}{k}\right]-\int_{\MX}f(\bx)\varpi_{\widetilde{\Psi}_{\btheta_{\star}}}(\bx)(\bx)d\bx\right|&=
    \mathcal{O}\left(\frac{1}{k\epsilon}+\sqrt{\epsilon}+\sqrt{\frac{\sum_{i=1}^k \omega_k}{k}}+ \frac{1}{\sqrt{m}}+
    \sup_{i\geq k_0}\sqrt{\delta_n(\btheta_{i})}\right), \\
\end{split}
\end{equation*}
where $k_0$ is a sufficiently large constant, \textcolor{black}{$\varpi_{\widetilde{\Psi}_{\btheta_{\star}}}(\bx)(\bx)= \frac{1}{Z_{\btheta_{\star}}} 
\frac{\pi(\bx)}{\theta_{\star}^{\zeta}(J(\bx))}$, and  $Z_{\btheta_{\star}}=\sum_{i=1}^m \frac{\int_{\MX_i} \pi(\bx)d\bx}{\theta_{\star}^{\zeta}(i)}$}.
\end{lemma}

\begin{proof}
We rewrite the CSGLD algorithm as follows:
\begin{equation*}
\begin{split}
    \bm{x}_{k+1}&=\bx_k- \epsilon_k\nabla_{\bx} \widetilde{L}(\bx_k, \btheta_k)+\mathcal{N}({0, 2\epsilon_k \tau\bm{I}})\\
    &=\bx_k- \epsilon_k\left(\nabla_{\bx} 
    \widehat{L}(\bx_k, \btheta_{\star})+{\Upsilon}(\bx_k, \btheta_k, \btheta_{\star})\right)+\mathcal{N}({0, 2\epsilon_k \tau\bm{I}}),
\end{split}
\end{equation*}
where    
$\nabla_{\bx} \widehat{L}(\bx,\btheta)= \frac{N}{n} \left[1+  \frac{\zeta\tau}{\Delta u}  \left(\textcolor{black}{\log \theta({J}(\bx))-\log\theta(({J}(\bx)-1)\vee 1)} \right) \right]  \nabla_{\bx} \widetilde U(\bx)$,  $\nabla_{\bx} \widetilde{L}(\bx,\btheta)$ is as 
defined in Section \ref{Alg:app},
 and the bias term is given by ${\Upsilon}(\bx_k,\btheta_k,\btheta_{\star})=\nabla_{\bx} \widetilde{L}(\bx_k,\btheta_k)-\nabla_{\bx} \widehat{L}(\bx_k,\btheta_{\star})$.

According to the triangle inequality, Cauchy–Schwarz inequality and $\E[\|\nabla_{\bx}U(\bx)\|^2]<\infty$ in Assumption \ref{ass4}, we have
\begin{equation}
\small
\begin{split}
    \|\E[\Upsilon(\bx_k,\btheta_k,\btheta_{\star})]\|&\leq 
    \E[\|\nabla_{\bx} \widetilde{L}(\bx_k, \btheta_k)-\nabla_{\bx} \widetilde{L}(\bx_k, \btheta_{\star})\|] + \E[\|\nabla_{\bx} \widetilde{L}(\bx_k, \btheta_{\star})-\nabla_{\bx} \widehat{L}(\bx_k, \btheta_{\star})\|] \\
    &\lesssim  \E[\|\btheta_k-\btheta_{\star}\|]+\mathcal{O}(\delta_n(\btheta_{\star}))\E[\|\nabla U(\bx)\|^2],
\end{split}
\end{equation}
where Assumption \ref{ass2a} is used to derive the smoothness of $\nabla_{\bx}L(\bx, \btheta)$ with respect to $\btheta$ and  $\delta_n(\btheta)=\E[\widetilde{H}(\btheta,\bx)-H(\btheta,\bx)]$ is the bias caused by the mini-batch evaluation of $U(\bx)$.

Further considering Jensen's inequality and Theorem \ref{latent_convergence}, it follows that
\begin{equation}
\label{latent_bias}
\small
\begin{split}
    \|\E[\Upsilon(\bx_k,\btheta_k,\btheta_{\star})]\|&\lesssim \sqrt{\E[\|\btheta_k-\btheta_{\star}\|^2]}+\mathcal{O}(\delta_n(\btheta_{\star}))\leq \mathcal{O}\left( \sqrt{\omega_{k}+\epsilon+
\frac{1}{m} +\sup_{i\geq k_0}\delta_{n}(\btheta_i)}\right).
\end{split}
\end{equation}


The ergodic average based on biased gradients and a fixed learning rate is a direct result of Theorem 2 of  \cite{Chen15} by imposing regularity condition \ref{ass6}. By simulating from $\varpi_{\Psi_{\btheta_{\star}}}(\bx)\propto\frac{\pi(\bx)}{\Psi^{\zeta}_{\btheta_{\star}}(U(\bx))}$ and combining (\ref{latent_bias}) and Theorem \ref{latent_convergence}, we know that 
\begin{equation*}
\small
\begin{split}
    \left|\E\left[\frac{\sum_{i=1}^k f(\bx_i)}{k}\right]-\int_{\MX}f(\bx) \varpi_{\Psi_{\btheta_{\star}}}(\bx)d\bx\right|&\leq \mathcal{O}\left(\frac{1}{k\epsilon}+\epsilon+\frac{\sum_{i=1}^k \|\E[\Upsilon(\bx_k,\btheta_k,\btheta_{\star})]\|}{k}\right)\\
    &\lesssim \mathcal{O}\left(\frac{1}{k\epsilon}+\epsilon+\frac{\sum_{i=1}^k \sqrt{\omega_k+\epsilon+\frac{1}{m}+\sup_{i\geq k_0}\delta_n(\btheta_{i} )}}{k}\right) \\
    &\leq \mathcal{O}\left(\frac{1}{k\epsilon}+\sqrt{\epsilon}+\sqrt{\frac{\sum_{i=1}^k \omega_k}{k}}+
    \frac{1}{\sqrt{m}}+\sup_{i\geq k_0}\sqrt{\delta_n( \btheta_{i})}\right),
\end{split}
\end{equation*}
where the last inequality follows by repeatedly applying the inequality $\sqrt{a+b}\leq \sqrt{a}+\sqrt{b}$ and 
the inequality $\sum_{i=1}^k \sqrt{\omega_i}\leq \sqrt{k\sum_{i=1}^k \omega_i}$.

For any a bounded function $f(\bx)$, we have $|\int_{\MX}f(\bx) \varpi_{\Psi_{\btheta_{\star}}}(\bx)d\bx -  \int_{\MX}f(\bx) \varpi_{\widetilde{\Psi}_{\btheta_{\star}}}(\bx)d\bx|= \mathcal{O}(\frac{1}{m})$ by Lemma \ref{partition_order}. By the triangle inequality, we have 
\begin{equation*}
\small
\begin{split}
    \left|\E\left[\frac{\sum_{i=1}^k f(\bx_i)}{k}\right]-\int_{\MX}f(\bx) \varpi_{\widetilde{\Psi}_{\btheta_{\star}}}(\bx)(\bx)d\bx\right|\leq  \mathcal{O}\left(\frac{1}{k\epsilon}+\sqrt{\epsilon}+\sqrt{\frac{\sum_{i=1}^k \omega_k}{k}}+ \frac{1}{\sqrt{m}}+
    \sup_{i\geq k_0}\sqrt{\delta(\btheta_{i})}\right),
\end{split}
\end{equation*}
which concludes the proof. 
\end{proof}


Now we are ready to show the convergence of the weighted averaging estimator $\frac{\sum_{i=1}^k\theta_{i}
     ^{\zeta}(\tilde{J}(\bx_i)) f(\bx_i)}{\sum_{i=1}^k\theta_{i}^{\zeta}( 
      \tilde{J}(\bx_i))}$ to the posterior mean $\int_{\MX}f(\bx)\pi(d\bx)$.
\begin{theorem}[Convergence of the Weighted Averaging Estimators] Assume Assumptions $\ref{ass2a}$-$\ref{ass6}$ hold. For any bounded function $f$, we have that 
\label{w_avg_converge_appendix}
\begin{equation*}
\small
\begin{split}
    \left|\E\left[\frac{\sum_{i=1}^k\theta_{i}
     ^{\zeta}(\tilde{J}(\bx_i)) f(\bx_i)}{\sum_{i=1}^k\theta_{i}^{\zeta}( 
      \tilde{J}(\bx_i))}\right]-\int_{\MX}f(\bx)\pi(d\bx)\right|&= \mathcal{O}\left(\frac{1}{k\epsilon}+\sqrt{\epsilon}+\sqrt{\frac{\sum_{i=1}^k \omega_k}{k}}+\frac{1}{\sqrt{m}}+\sup_{i\geq k_0}\sqrt{\delta_n(\btheta_{i})}\right). \\
\end{split}
\end{equation*}
\end{theorem}

\begin{proof}

Applying triangle inequality and $|\E[x]|\leq \E[|x|]$, we have
\begin{equation*}
\footnotesize
    \begin{split}
        &\left|\E\left[\frac{\sum_{i=1}^k\theta_{i}
        ^{\zeta}( \tilde{J}(\bx_i)) f(\bx_i)}{\sum_{i=1}^k\theta_{i}^{\zeta}(
         \tilde{J}(\bx_i))}\right]-\int_{\MX}f(\bx)\pi(d\bx)\right|\\
        \leq &\underbrace{\E\left[\left|\frac{\sum_{i=1}^k\theta_{i}^{\zeta}
         (\tilde{J}(\bx_i))f(\bx_i)}
         {\sum_{i=1}^k\theta_{i}^{\zeta}(\tilde{J}(\bx_i)) }-\frac{\sum_{i=1}^k\theta_{i}^{\zeta}
         ({J}(\bx_i))f(\bx_i)}
         {\sum_{i=1}^k\theta_{i}^{\zeta}({J}(\bx_i)) }\right|\right]}_{\text{I}_1}\\
         &\ \ +\underbrace{\E\left[\left|\frac{\sum_{i=1}^k\theta_{i}^{\zeta}
         ({J}(\bx_i))f(\bx_i)}
         {\sum_{i=1}^k\theta_{i}^{\zeta}({J}(\bx_i)) }-\frac{Z_{\btheta_{\star}}\sum_{i=1}^k\theta_{i}^{\zeta} ({J}(\bx_i)) f(\bx_i)}{k}\right|\right]}_{\text{I}_2}\\
        \ \ + &\underbrace{\E\left[\frac{Z_{\btheta_{\star}}}{k}\sum_{i=1}^k\left|\theta_i^{\zeta} ({J}(\bx_i))-\theta_{\star}^{\zeta}
      ({J}(\bx_i))  \right| \cdot |f(\bx_i)|\right]}_{\text{I}_3} +\underbrace{\left|\E\left[\frac{Z_{\btheta_{\star}}}{k}\sum_{i=1}^k\theta_{\star}^{\zeta}
     ({J}(\bx_i)) f(\bx_i)\right]-\int_{\MX}f(\bx)\pi(d\bx)\right|}_{\text{I}_4}.
    \end{split}
\end{equation*}

For the term $\text{I}_1$, consider the bias $\delta_n(\btheta)=\E[\widetilde H(\btheta, \bx)- H(\btheta, \bx)]$ as defined in the proof of Lemma 
(\ref{convex_appendix}), which decreases to 0 as $n\rightarrow N$. By applying mean-value theorem, we have 
\begin{equation}
\footnotesize
\begin{split}
    \text{I}_1&=\E\left[\left|\frac{\left(\sum_{i=1}^k\theta_{i}^{\zeta}(
         \tilde{J}(\bx_i))f(\bx_i)\right)\left(\sum_{i=1}^k\theta_{i}^{\zeta}(
         {J}(\bx_i))\right)-\left(\sum_{i=1}^k\theta_{i}^{\zeta}(
         {J}(\bx_i))f(\bx_i)\right)\left(\sum_{i=1}^k\theta_{i}^{\zeta}(
         \tilde{J}(\bx_i))\right)}
         {\left(\sum_{i=1}^k\theta_{i}^{\zeta}(
         \tilde{J}(\bx_i))\right)\left(\sum_{i=1}^k\theta_{i}^{\zeta}(
         {J}(\bx_i))\right)}\right|\right]\\
         &\lesssim \sup_i \delta_n(\btheta_i) \E\left[\frac{\left(\sum_{i=1}^k\theta_{i}
     ^{\zeta}({J}(\bx_i)) f(\bx_i) \left(\sum_{i=1}^k\theta_{i}^{\zeta}(
         {J}(\bx_i))\right)\right)}{\left(\sum_{i=1}^k\theta_{i}^{\zeta}(
         {J}(\bx_i))\right)\left(\sum_{i=1}^k\theta_{i}^{\zeta}(
         {J}(\bx_i))\right)}\right]
         =\mathcal{O}\left(\sup_i\delta_n(\btheta_i)\right).
\end{split}
\end{equation}

For the term $\text{I}_2$, 
by the boundedness of $\bTheta$ and $f$ and the assumption  $\inf_{\btheta, i}\theta^{\zeta}(i)>0$, we have
\begin{equation*}
\small
\begin{split}
    \text{I}_2=&\E\left[\left|\frac{\sum_{i=1}^k\theta_{i}^{\zeta}({J}(\bx_i))  f(\bx_i)}{\sum_{i=1}^k\theta_{i}^{\zeta}
    ({J}(\bx_i))
    }\left(1-\sum_{i=1}^k\frac{\theta_i^{\zeta}
    ({J}(\bx_i))
    }{k}Z_{\btheta_{\star}}\right)\right|\right]\\
    \lesssim & \E\left[\left|Z_{\btheta_{\star}}\frac{{\sum_{i=1}^k\theta_{i}^{\zeta}
    ({J}(\bx_i))
    }}{k}-1\right|\right]\\
    =&\E\left[\left|Z_{\btheta_{\star}}\sum_{i=1}^m \frac{\sum_{j=1}^k\left( \theta_j^{\zeta}(i)-\theta_{\star}^{\zeta}(i)+\theta_{\star}^{\zeta}(i)\right)1_{
    {J}(\bx_j)=i}}{k}-1\right|\right]\\
    \leq & \underbrace{\E\left[Z_{\btheta_{\star}}\sum_{i=1}^m \frac{\sum_{j=1}^k\left| \theta_j^{\zeta}(i)-\theta_{\star}^{\zeta}(i)\right| 1_{{J}(\bx_j)=i}}{k} \right]}_{\text{I}_{21}} + \underbrace{\E\left[\left| Z_{\btheta_{\star}}\sum_{i=1}^m \frac{\theta_{\star}^{\zeta}(i)\sum_{j=1}^k  1_{{J}(\bx_j)=i}}{k}-1\right|
    \right]}_{\text{I}_{22}}.\\
\end{split}
\end{equation*}

For $\text{I}_{21}$, by first applying the inequality $|x^{\zeta}-y^{\zeta}|\leq \zeta |x-y| z^{\zeta-1}$ for any $\zeta>0, x\leq y$ and $z\in[x, y]$ based on the mean-value theorem and then applying the Cauchy–Schwarz inequality, we have 
\begin{equation}\label{ii_21}
    \text{I}_{21}\lesssim \frac{1}{k}\E\left[ \sum_{j=1}^k\sum_{i=1}^m\left| \theta_j^{\zeta}(i)-\theta_{\star}^{\zeta}(i)\right| \right]\lesssim  \frac{1}{k}\E\left[ \sum_{j=1}^k\sum_{i=1}^m\left| \theta_j(i)-\theta_{\star}(i)\right| \right]\lesssim  \frac{1}{k}\sqrt{\sum_{j=1}^k\E\left[\left\| \btheta_j-\btheta_{\star}\right\|^2\right]},
\end{equation}
where the compactness of $\Theta$ has been 
used in deriving the second inequality. 

For $\text{I}_{22}$, considering the following relation $$
    1=\sum_{i=1}^m\int_{\MX_i} \pi(\bx)d\bx=\sum_{i=1}^m\int_{\MX_i} \theta_{\star}^{\zeta}(i) \frac{\pi(\bx)}{\theta_{\star}^{\zeta}(i)}d\bx
    =Z_{\btheta_{\star}}\int_{\MX} \sum_{i=1}^m \theta_{\star}^{\zeta}(i) 1_{{J}(\bx)=i}\varpi_{\widetilde{\Psi}_{\btheta_{\star}}}(\bx)(\bx)d\bx,$$ then we have
\begin{equation}
\begin{split}
    \text{I}_{22}&=\E\left[\left| Z_{\btheta_{\star}}\sum_{i=1}^m \frac{\theta_{\star}^{\zeta}(i)\sum_{j=1}^k  1_{{J}(\bx_j)=i}}{k}-Z_{\btheta_{\star}}
    \int_{\MX} \sum_{i=1}^m \theta_{\star}^{\zeta}(i) 1_{{J}(\bx)=i}\varpi_{\widetilde{\Psi}_{\btheta_{\star}}}(\bx)(\bx)d\bx\right|\right]\\
    &=Z_{\btheta_{\star}} \E\left[\left| \frac{1}{k}\sum_{j=1}^k \left(\sum_{i=1}^m\theta_{\star}^{\zeta}(i)  1_{{J}(\bx_j)=i}\right)-\int_{\MX} \left(\sum_{i=1}^m \theta_{\star}^{\zeta}(i) 1_{{J}(\bx)=i}\right)\varpi_{\widetilde{\Psi}_{\btheta_{\star}}}(\bx)(\bx)d\bx\right|\right]\\
    &= \mathcal{O}\left(\frac{1}{k\epsilon}+\sqrt{\epsilon}+\sqrt{\frac{\sum_{i=1}^k \omega_k}{k}}+\frac{1}{\sqrt{m}}+\sup_{i\geq k_0} \sqrt{\delta_n(\btheta_i)} \right),
\end{split}
\end{equation}
where the last equality follows  from Lemma \ref{avg_converge_appendix} as the \textcolor{black}{step 
function $\sum_{i=1}^m \theta_{\star}^{\zeta}(i) 1_{{J}(\bx)=i}$} is integrable.

For $\text{I}_3$, by the boundedness of $f$,  
the mean value theorem and Cauchy-Schwarz inequality, 
we have 
\begin{equation}\label{ii_3}
\small
    \begin{split}
        \text{I}_3&\lesssim \E\left[\frac{1}{k}\sum_{i=1}^k\left|\theta_{i}
        ^{\zeta}({J}(\bx_i)) -\theta_{\star}^{\zeta}(
        {J}(\bx_i))\right|\right]\lesssim  \frac{1}{k}\E\biggl[ \sum_{j=1}^k\sum_{i=1}^m\bigl| \theta_j(i)-\theta_{\star}(i)\bigr| \biggr]\lesssim  \frac{1}{k}\sqrt{\sum_{j=1}^k\E\left[\left\| \btheta_j-\btheta_{\star}\right\|^2\right]}.\\
    \end{split}
\end{equation}

For the last term $\text{I}_4$, we first decompose $\int_{\MX} f(\bx) \pi(d\bx)$ into $m$ disjoint regions to facilitate the analysis
\begin{equation}
\label{split_posterior}
\footnotesize
\begin{split}
      \int_{\MX} f(\bx) \pi(d\bx)&=\int_{\cup_{j=1}^m \MX_j}  f(\bx) \pi(d\bx)=\sum_{j=1}^m\int_{\MX_j}\theta_{\star}^{\zeta}(j)  f(\bx) \frac{\pi(d\bx)}{\theta_{\star}^{\zeta}(j)}\\
      &=Z_{\btheta_{\star}}\int_{\MX} \sum_{j=1}^m \theta_{\star}(j)^{\zeta}f(\bx) 1_{
        {J}(\bx_i)=j 
        }\varpi_{\widetilde{\Psi}_{\btheta_{\star}}}(\bx)(d\bx).\\
\end{split}
\end{equation}

Plugging (\ref{split_posterior}) into the last term $\text{I}_4$, we have
\begin{equation}
\label{final_i2}
\small
    \begin{split}
        \text{I}_4&=\left|\E\left[\frac{Z_{\btheta_{\star}}}{k}\sum_{i=1}^k\sum_{j=1}^m\theta_{\star}(j)^{\zeta} f(\bx_i)1_{  {J}(\bx_i)=j
        }\right]-\int_{\MX}f(\bx)\pi(d\bx)\right|\\
        &= Z_{\btheta_{\star}}\left|\E\left[\frac{1}{k}\sum_{i=1}^k \left(\sum_{j=1}^m\theta_{\star}^{\zeta}(j) f(\bx_i)1_{
        {J}(\bx_i)=j 
        }\right)\right]-\int_{\MX}  \left(\sum_{j=1}^m\theta_{\star}^{\zeta}(j) f(\bx_i)1_{
        {J}(\bx_i)=j 
        }\right) \varpi_{\widetilde{\Psi}_{\btheta_{\star}}}(\bx)(d\bx)\right|\\
    \end{split}
\end{equation}

Applying the function \textcolor{black}{$\sum_{j=1}^m\theta_{\star}^{\zeta}(j) f(\bx_i)1_{
        {J}(\bx_i)=j 
        }$ }
to Lemma \ref{avg_converge_appendix} yields
\begin{equation}
\label{almost_i2}
\small
\begin{split}
      \left|\E\left[\frac{1}{k}\sum_{i=1}^k f(\bx_i)\right]-\int_{\MX}  f(\bx) \varpi_{\widetilde{\Psi}_{\btheta_{\star}}}(\bx)(d\bx)\right| = \mathcal{O}\left(\frac{1}{k\epsilon}+\sqrt{\epsilon}+\sqrt{\frac{\sum_{i=1}^k \omega_k}{k}}+\frac{1}{\sqrt{m}}+\sup_{i\geq k_0} \sqrt{\delta_n(\btheta_i)} \right).\\
\end{split}
\end{equation}

Plugging (\ref{almost_i2}) into (\ref{final_i2}) and combining $\text{I}_{1}$, $\text{I}_{21}$, $\text{I}_{22}$, $\text{I}_3$ and Theorem \ref{latent_convergence}, we have
\begin{equation*}
\small
\begin{split}
      \left|\E\left[\frac{\sum_{i=1}^k\theta_{i}
     ^{\zeta}(\tilde{J}(\bx_i)) f(\bx_i)}{\sum_{i=1}^k\theta_{i}^{\zeta}( 
      \tilde{J}(\bx_i))}\right]-\int_{\MX}f(\bx)\pi(d\bx)\right| = \mathcal{O}\left(\frac{1}{k\epsilon}+\sqrt{\epsilon}+\sqrt{\frac{\sum_{i=1}^k \omega_k}{k}}+\frac{1}{\sqrt{m}}+\sup_{i\geq k_0} \sqrt{\delta_n(\btheta_i)} \right),\\
\end{split}
\end{equation*}
which concludes the proof of the theorem.

\end{proof}

\section{Hyperparameter Settings}
\label{ext}


In some complex experiments for high-loss functions, the equilibrium $\btheta_{\star}$ can be extremely close to the vector $(1, 0, ..., 0)$, where the lowest energy region contains almost all the probability mass. As a result, the estimations of $\theta(i)$ for large $i$ can be quite difficult due to the limitation of floating points. If we set a small $\zeta$, the gradient multiplier $1+ 
   \zeta\tau\frac{\log {\theta}_{\star}(i) - \log{\theta}_{\star}((i-1)\vee 1)}{\Delta u}$ is close to $1$ for any $i$ and the algorithm performs similar to SGLD. When we set a high $\zeta$, this leads to the training of $\btheta_{\star}$ too slow. To handle this issue, we include a regularizer item in the stochastic approximation as follows
\begin{equation} \label{regularizer_cslgd}
{\theta}_{k+1}(i)={\theta}_{k}(i)+\omega_{k+1}\left({\theta}_{k}^{\zeta}(\tilde J(\bx_{k+1})+\omega_{k+1} 1_{i\geq \tilde J(\bx_{k+1})}\rho)\right)\left(1_{i= \tilde J(\bx_{k+1})}-{\theta}_{k}(i)\right), 
\end{equation} 
where $\rho$ is a constant and the regularizer $\omega_{k+1} 1_{i= \tilde J(\bx_{k+1})}\rho$ acts as a prior count (see page 217 in \cite{liujun}) to smooth out the estimation and accelerate the computations of stochastic approximation in the early phase. Since the regularizer is decayed at a fast rate $\omega_{k+1}^2$, the convergence theory still applies.

In what follows, for all the algorithms, the momentums were set to 0.9. The weight decay was set to $25$ and data augmentation was employed following \cite{Zhong17}. Regarding the step sizes and the regularizer for CSGHMC and saCSGHMC, we set $\omega_k=\frac{10}{k^{0.75}+1000}$ and $\rho=1$ in (\ref{regularizer_cslgd}) for both datasets. As to the hyperparameter, we set $\zeta$ as $1\times 10^6$ to CIFAR10 and $3\times 10^6$ for CIFAR100 experiments. 







\bibliography{myref, mybib}
\bibliographystyle{plain}